\newcommand{\blueversion}[1]{{\color{blue} #1}}
\renewcommand{\blueversion}[1]{#1}
\newcommand{\showalgcaption}[1]{#1}
\renewcommand{\showalgcaption}[1]{} % uncomment to turn off
\newtheorem{theorem}{Theorem}
\newtheorem{corollary}{Corollary}[theorem]
\newtheorem{lemma}[theorem]{Lemma}
\newtheorem{proposition}[theorem]{Proposition}
\theoremstyle{definition}
\newtheorem{definition}{Definition}
\newtheorem*{definition*}{Definition}
\newtheorem{assumption}{Assumption}
\newtheorem{remark}{Remark}
\DeclareMathOperator*{\argmax}{\arg\max}
\newcommand{\R}{\mathbb{R}}
\newcommand{\D}{\mathcal{D}}
\newcommand{\T}{\mathcal{T}}
\newcommand{\U}{\mathcal{U}}
\newcommand{\quadq}{quad query}
\newcommand{\quadqs}{quad queries}
\newcommand{\Quadq}{Quad query}
\newcommand{\Rvqtctc}{Real-valued query (\tctc~model)}
\newcommand{\Tqtctc}{Triplet query (\tctc~model)}
\newcommand{\Quadqtctc}{Quad query(\tctc~model)}
\newcommand{\submetric}{submetric}
\newcommand{\submetrics}{submetrics}
\newcommand{\maxmerge}{\mathsf{maxmerge}}
\newcommand{\minmerge}{\mathsf{minmerge}}
\newcommand{\threshunder}{f_r^\T}
\newcommand{\orderingtosubmetric}{\mathsf{OrderingToSubmetric}}
\newcommand{\orderingtosubmetrictctc}{\mathsf{OrderingToSubmetricTCTC}}
\newcommand{\multipleorderingtosubmetric}{\mathsf{MultipleOrderingsToSubmetric}}
\newcommand{\multipleorderingtosubmetrictctc}{\mathsf{MultipleOrderingsToSubmetricTCTC}}
\newcommand{\mergeorderings}{\mathsf{MergeOrderings}}
\newcommand{\combiner}{\mathsf{Combiner}}
\newcommand{\submetriclearner}{\mathsf{SubmetricLearner}}
\newcommand{\submetriclearnertctc}{\mathsf{SubmetricLearnerTCTC}}
\newcommand{\linearvote}{\mathsf{LinearVote}}
\newcommand{\thresholdcombiner}{\mathsf{ThresholdCombiner}}
\newcommand{\thresholdcombinertctc}{\mathsf{ThresholdCombinerTCTC}}
\newcommand{\mreal}{\mathsf{O}_{\mathsf{REAL}}}
\newcommand{\mtrip}{\mathsf{O}_{\mathsf{TRIPLET}}}
\newcommand{\mquad}{\mathsf{O}_{\mathsf{QUAD}}}
\newcommand{\mrealtctc}{\mathsf{O}_{\mathsf{REAL}}^{\mathsf{TCTC}}}
\newcommand{\mtriptctc}{\mathsf{O}_{\mathsf{TRIPLET}}^{\mathsf{TCTC}}}
\newcommand{\mquadtctc}{\mathsf{O}_{\mathsf{QUAD}}^{\mathsf{TCTC}}}
\newcommand{\tctclabeledsamples}{\mathcal{M}_t^r}
\newcommand{\midpoint}{\mathsf{mid}}
\newcommand{\midpointof}{\mathsf{MidpointOf}}
\newcommand{\size}{\mathsf{Size}}
\newcommand{\unifu}{\mathcal{U}^*}
\newcommand{\Dconst}{\mathcal{D}_{r}'}
\newcommand{\DconstR}{\mathcal{D}_{R}'}
\newcommand{\Dr}{\mathcal{D}_{r}}
\newcommand{\DR}{\mathcal{D}_{R}}
\newcommand{\maxcontraction}{c_{max}}
\newcommand{\arbiters}{arbiters}
\newcommand{\arbiter}{arbiter}
\newcommand{\Arbiter}{Arbiter}
\newcommand{\humans}{human fairness arbiters}
\newcommand{\human}{human fairness arbiter}
\newcommand{\Human}{Human fairness arbiter}
\newcommand{\diffuse}{diffuse}
\newcommand{\diffusion}{diffusion}
\newcommand{\density}{density}
\newcommand{\zsubmetric}{$0-$submetric}
\newcommand{\asubmetric}{$\alpha-$submetric}
\newcommand{\et}{\varepsilon_t}
\newcommand{\dt}{\delta_t}
\newcommand{\er}{\varepsilon_r}
\newcommand{\dr}{\delta_r}
\newcommand{\eR}{\varepsilon_R}
\newcommand{\dR}{\delta_R}
\newcommand{\eL}{\varepsilon}
\newcommand{\dL}{\delta}
\newcommand{\w}{w}
\newcommand{\ord}{\mathcal{O}}
\newcommand{\alphH}{\alpha_H}
\newcommand{\alphL}{\alpha_L}
\newcommand{\binaryinsert}{\mathsf{BinaryInsert}}
\newcommand{\binaryinsertpair}{\mathsf{BinaryInsertPair}}
\newcommand{\binaryinserttctc}{\mathsf{BinaryInsertTCTC}}
\newcommand{\binaryinsertpairtctc}{\mathsf{BinaryInsertPairTCTC}}
\newcommand{\leftovers}{\mathsf{NearCollisionList}}
\newcommand{\splitlist}{\mathsf{SplitList}}
\newcommand{\maxalphlog}{\max\{\frac{1}{\alpha},\log(N)\}}
\newcommand{\maxalphlogR}{\log(|R|N)}
\newcommand{\numrepsb}{\frac{1}{b}\ln(\frac{1}{b\delta})}
\newcommand{\maxalphlogRb}{\log(\hat{N} \numrepsb )}
\newcommand{\exactarbiter}{exact arbiter}
\newcommand{\tctc}{too close to call}
\newcommand{\exact}{exact}
\newcommand{\Utr}{\U_t^r}
\newcommand{\Utir}{\U_{t_i}^r}
\newcommand{\labelbound}{2\alphH}
\newcommand{\threshspace}{2\alphH}
\newcommand{\blueerr}{4\alpha_\T}
\newcommand{\LineComment}[1]{\Statex \textit{#1}}
\newcommand{\thresholdout}{unambiguous threshold distribution}
\newcommand{\Thresholdout}{Unambiguous threshold distribution}
\title{Metric Learning for Individual Fairness}
\author{
  Christina Ilvento \\
   \thanks{This work was supported in part by Microsoft Research and the Smith Family Fellowship. The author is grateful for the comments of Cynthia Dwork in the preparation of this manuscript.}
%   %\thanks{Use footnote for providing further
%   %  information about author (webpage, alternative
%   %  address)---\emph{not} for acknowledging funding agencies.} \\
   John A Paulson School of Engineering and Applied Science\\
   Harvard University\\
   Cambridge, MA 02138 \\
   \texttt{cilvento@g.harvard.edu} \\
%   %% examples of more authors
%   %% \And
%   %% Coauthor \\
%   %% Affiliation \\
%   %% Address \\
%   %% \texttt{email} \\
%   %% \AND
%   %% Coauthor \\
%   %% Affiliation \\
%   %% Address \\
%   %% \texttt{email} \\
%   %% \And
%   %% Coauthor \\
%   %% Affiliation \\
%   %% Address \\
%   %% \texttt{email} \\
%   %% \And
%   %% Coauthor \\
%   %% Affiliation \\
%   %% Address \\
%   %% \texttt{email} \\
 }
\date{ }
\begin{document}
% \nipsfinalcopy is no longer used
%\input{biblio.bbl}
\maketitle

\begin{abstract}
    There has been much discussion concerning how ``fairness'' should be measured or enforced in classification.
    Individual Fairness [Dwork, Hardt, Pitassi, Reingold, Zemel, 2012], which requires that similar individuals be treated similarly, is a highly appealing definition as it gives strong treatment guarantees for individuals. Unfortunately, the need for a task-specific similarity metric has prevented its use in practice.
    In this work, we propose a solution to the problem of approximating a metric for Individual Fairness based on human judgments.
    Our model assumes access to a \human~who 
    is free of explicit biases and possesses sufficient domain knowledge to evaluate similarity.
    Our contributions include definitions for metric approximation relevant for Individual Fairness, constructions for approximations from a limited number of realistic queries to the \arbiter~on a  sample of individuals, and learning procedures to construct hypotheses for metric approximations which generalize to unseen samples under certain assumptions of learnability of distance threshold functions.

\end{abstract}

\clearpage
\setcounter{tocdepth}{2}
% Table of Contents
\tableofcontents
\clearpage

%\chapter{Extended Introduction}

\section{Introduction}\label{nips:section:intro}
Determining what it means for an algorithm or classifier to be ``fair'' and how to enforce any such determination has become a subject of considerable interest as automated decision-making increasingly takes the place of direct human judgment. One attractive definition proposed is Individual Fairness \cite{Dwork-FTA}, which states that similar individuals should be treated similarly, where similarity is encoded in a task-specific \textit{metric}.

\begin{definition}[Individual Fairness  \cite{Dwork-FTA}]\label{def:individualfairness} Given a universe $U$, a metric $\D: U \times U \rightarrow [0,1]$ for a classification task with outcome set $O$, and a distance measure $d:\Delta(O) \times \Delta(O) \rightarrow [0,1]$, a randomized classifier $C: U \rightarrow \Delta(O)$ is \textit{Individually Fair}
%with respect to $\D$ and $T$ for $U$
if and only if for all $u,v \in U$, $\D(u,v) \geq d(C(u),C(v))$.
\end{definition}

%Given such a metric $\D$, a fair classifier for a set of individuals can be constructed by optimizing the relevant objective function subject to the distance constraints of $\D$, i.e. solving a linear program subject to the constraint for every pair that their difference in treatment is bounded by their distance under $\D$.
Individual Fairness is appealing because each person is assured that her treatment is similar to that of any person similar to her.\footnote{By way of contrast, notions of fairness based on group level statistics can only provide individuals with the guarantee that if they are treated poorly, either someone in a different group is also treated poorly or someone in their group is treated well. Furthermore, many popular notions of statistical group fairness conflict with each other and cannot be satisfied simultaneously \cite{chouldechova2017fair,DBLP:journals/corr/KleinbergMR16}.}
However, the value of this assurance critically depends on the extent to which the
similarity metric $(\D)$
faithfully represents society's best understanding of what constitutes similarity for a given task.
Thus, the most significant barrier to implementing Individual Fairness in practice is the need to construct a similarity metric for each classification setting. %\footnote{This difficulty was also noted in \cite{Dwork-FTA}.} 
%We stress that
% the metric need not be entirely determined by predictive accuracy or based on arbitrary or sensitive group membership labels, is not assumed to be determined on the same set of variables readily available for classification, and is flexible enough to encode nuanced judgments all the way down to the individual level.
% The only requirement is that the metric encodes judgments of who is similar to whom with which fair-minded members of society agree.

In this work we set out a path for constructing metrics for Individual Fairness based on judgments made by a qualified, fair-minded ``\human.'' Our contributions include:
(1) a framework for useful approximations to a metric for Individual Fairness;
(2) a limited, realistic query model for determining the \arbiter's judgments of who is similar to whom;
(3) a method for constructing approximations to the true metric with limited queries to the \arbiter~by using distances from a (set of) representative individual(s);
(4) a procedure for generalizing these approximations to unseen samples based on limited learnability assumptions.
Throughout this work we make no assumption on the form of the metric or the features included in the learning procedure % or the \humans' determinations, 
with the clearly stated exception of Assumption \ref{assumption:pacthreshold} concerning learnability of threshold functions. 
%Our aim is not optimality in any particular setting, rather it is to show positive results for a general case to provide a foundation for future theoretical or experimental work. 
As our results are built upon a series of sequential steps including new terminology and machinery, we first present an extended introduction to highlight the key concepts, logic and results. In Sections \ref{section:preliminaries}-\ref{section:tctc} these results are discussed in greater detail and formal theorem statements and proofs are presented. Related work is discussed in Section \ref{section:related} Extended discussion of human fairness arbiters and the model is included in Section \ref{section:discussion}.
%In this section, we present an overview of our results, omitting most technical details. Sections \ref{section:related} expand on the intuition presented in the introduction, and give the complete technical details.
%Due to space considerations, we omit the majority of formal proofs and theorem statements from this extended abstract. Proofs of all claims are included in the Appendix. 

\subsection{Model}\label{nips:section:model}
In this work, we take the viewpoint that fairness is not well described by either accuracy or group statistics alone.
Instead, we view fairness as a highly contextual property one can identify but not necessarily describe.\footnote{ \cite{gillen2018online} takes a similar approach in which a judge ``knows it when she sees it,'' but is not required to articulate why a decision is unfair.} %\footnote{Gillen et al. take a similar approach in \cite{gillen2018online}, in which the judge ``knows it when she sees it,'' but is not required to explicitly articulate why the decision is unfair. Dasgupta and Luby also consider a similar human feedback model in the context of clustering \cite{dasgupta2017learning}.}
Our goal is to produce a metric which results in similarity judgments with which fair-minded people would agree, rather than satisfying any particular statistical properties.\footnote{We discuss different types of agreement, and the extent to which we fully achieve this goal, in Section \ref{section:arbiteragreement}.}
%Thus our goal is not to produce a metric with any particular statistical properties, but instead to produce a metric which results in similarity judgments with which most fair-minded people would agree. 
%\footnote{We discuss different types of agreement, and the extent to which we fully achieve this goal, in Section \ref{section:discussion}.}
The core of our model is the \human, 
a fair-minded individual who is free from explicit biases or arbitrary preferences, is motivated to engage ethically and honestly in the query protocol, and has sufficient knowledge and contextual understanding of who is similar to whom for a particular task. The arbiter is not expected to provide us a description or specification of the distance metric. %Our strategy is to elicit their understanding of similarity via individual judgments.

A critical part of learning metrics based on human judgments is determining the type of queries to ask in order to solicit consistent, fast responses.
To that end, we assume that we cannot ask the \arbiter~to consider more than a few individuals at a time, e.g., it is not realistic to ask the \arbiter~to find the closest pair of elements in the universe.

We ask the \arbiter~to answer two types of queries in this work: relative distance queries, (e.g., is $a$ closer to $b$ or $c$), and real-valued distance queries. 
%More formally, we define two query types:
\begin{definition}[Real-valued distance query]\label{def:realquery} $\mreal(u,v):=\D(u,v)$.%, i.e. the \human~provides a real-valued distance between $u$ and $v$.
\end{definition}
% A more natural query we might pose is a relative distance query, e.g. asking the \human~to decide `is $a$ closer to $b$ or $c$?' or `are $a$ and $b$ closer than $x$ and $y$?'
\begin{definition}[Triplet query]\label{def:tripletquery}
$\mtrip(a,b,c):=\{
1 \text{ if }\D(a,b) < \D(a,c)\text{, }0 \text{ if }\D(a,c)\leq \D(a,b)\}$.
\end{definition}
%\paragraph{Triplet queries.} Triplet queries ask the human expert to decide `is $a$ closer to $b$ or $c$?'
%Specifically, we define the oracle
%$\mtrip^\D(a,b,c):=\{b \text{ if }\D(a,b) < \D(a,c)\text{, }c \text{ if }\D(a,c)\leq \D(a,b)\}$.
%A more generic version of the triplet query is the \quadq, which asks the \human~to compare a two pairs and determine which is closer.
% \begin{definition}[\Quadq]
% $\mquad(a,b,x,y):= \{1 $ if $\D(a,b)>\D(x,y)$,  $0 $ otherwise$\}$.
% \end{definition}

%From the perspective of the \arbiter, real-valued distances are ``expensive'' to generate, as producing a consistent set of real values is time consuming and not a natural judgment many humans are accustomed to making.
Producing a consistent set of real-valued distances is not a natural judgment most people are accustomed to making, so we assume that real-valued queries are very ``expensive'' for the \arbiter~to answer. 
%Furthermore, maintaining internal consistency may \textit{increase} the query cost as the number of queries increases. % and difficulty of merging between human judges may likewise increase in difficulty.
Furthermore, maintaining internal consistency may \textit{increase} the query cost as the number of queries increases.
Relative distance queries have been used successfully for human evaluation in image processing and computer vision, e.g. \cite{van2012stochastic,wilber2014cost}, and 
we anticipate they will be significantly easier for the \arbiter~to evaluate. % than providing an internally consistent set of real-valued distances. %A \quadq~does require the \human~to consider an additional element compared with a triplet query. This may result in additional overhead for the \human, particularly in cases where examining each individual requires significant time. As such, we consider \quadqs~slightly more costly than triplet queries, but still significantly less costly than real-valued distance queries.
%Furthermore, the binary nature of the response makes parallelizing relative distance queries between several \arbiters~(who are in agreement) straightforward.
% Furthermore, we do not expect the cost of the \quadqs~to increase with the number of queries.
Demonstrating how to replace difficult queries with easy queries is a significant part of our contribution.

We make several simplifying assumptions about the nature of the \human~in the main results of this work. %\footnote{We re-examine, critique and discuss these assumptions in Section \ref{nips:section:discussion}.}
    (1) There is either one \arbiter~or all \arbiters~agree on all decisions. % (although scaling of real-valued query responses may be necessary). 
    (2) The \arbiter~does not change her opinion over the query period. %, i.e., she will not answer the same query differently in the time it takes to make sufficient queries to learn a submetric.
    (3) The \arbiter's responses are consistent, i.e., if  she answers that $a$ is closer to $b$ than it is to $c$, her responses to real-valued queries will also reflect this relative judgment.\footnote{Please see Section \ref{section:discussion} for additional details.}
%For brevity in algorithms and theorem statements we will refer to $\mreal$ and $\mtrip$ as the interfaces to the \arbiter.
%\note{Explain what a \human~is. Fair-minded individual. Assumption that there's 1, or that they agree (see later section for discussion).}
For the majority of this work, we focus on the query model specified above, which requires the \arbiter~to answer with arbitrary precision. We also present a relaxed model which allows the \arbiter~to answer real-valued queries with bounded noise and does not require arbitrarily small distinctions in relative distances queries. The main results presented are replicated in the relaxed model. % with improvements in query complexity at the cost of increased error magnitude.
As the results are similar, we focus on the more simple exact model in the main presentation of our results.\footnote{Extended discussion of the exact query model and a more general definition of relative queries is included in Section \ref{section:preliminaries}. The relaxed query model is discussed in detail in Section \ref{section:tctc}.}

%Our primary ``exact'' query model always requires precise answers from the \arbiter. To model the more realistic case in which the \arbiter~may not be able to answer queries with arbitrary precision, we discuss a ``relaxed'' model in Section \ref{nips:section:relaxed}. In the relaxed model, the \arbiter~may answer ``too close to call" when differences are too small to distinguish and may answer real-valued queries with bounded noise. Our results in the exact model are replicated in the relaxed model with improvements in query complexity at the cost of increased error magnitude. As the results are similar, we focus on the exact model in the main presentation of our results.

\subsection{Contributions}\label{nips:section:contributions}
\textbf{Approximating the metric by contracting.} Our first key observation is that Individual Fairness only requires that we do not \textit{overestimate} distances. This motivates our definition of a \textit{submetric}, which is a contraction of the original metric and can be substituted for the original metric and still maintain Individual Fairness. %As long as the contraction is not too extreme, i.e. a sufficient fraction of the original distances between elements are preserved, there should still be significant freedom to select a useful fair classifier.
%The formal definitions of submetrics and the distance preservation properties are introduced in Section \ref{section:preliminaries}.

\noindent \textbf{Constructing submetrics based on distances from representative elements.} %Much as radar is used to determine the speed of a car by measuring distance from a fixed location at two points in time, distances from a single representative element can be used to determine underestimates of distances between any pair of individuals.
Taking the difference in distance to a single reference or ``representative'' point is one of the simplest ways to produce an underestimate of the distance between two elements. 
%To determine an underestimate of the distance between any pair of individuals we simply take the difference in their distances from a representative element, and underestimation follows from triangle inequality.
Submetrics based on distances from representative elements form the basis of all of our constructions, and although this may seem simplistic, it has a significant advantage when it comes to deciding which queries to ask the \arbiter: \textit{ordering}.
An ordering of elements by increasing distance from the representative can be constructed  with relative distance (easy) queries used as a comparator. Once this ordering is established, real-valued distances at a given granularity can be layered on top in a \textit{sublinear} number of real-valued (hard) queries.
%A useful property of submetrics is that their distance information can be combined to better approximate the original distances without increasing the magnitude of overestimates. This leads us to consider combining distance information from different representatives. %Again, we demonstrate how to use relative distance queries to reduce the dependence of the number of real-valued queries on the number of representatives from linear to logarithmic.

\noindent \textbf{Choosing representatives.}
A single representative may not be sufficient to capture all relevant distance information, but %We also want to ensure that the submetrics we learn are sufficiently rich, i.e. that they preserve enough of the original distances, to be useful for subsequent classification tasks. B
 combining the information from multiple representative elements can produce a more complete picture of the distances between all pairs of individuals.
But which representatives should we choose to maximize distance preservation?
%We discuss two approaches to this problem. First, if some structure of the metric is known, e.g. it is Euclidean distance with small dimension, then distances from representatives can be used in a sort of triangulation procedure.
We discuss a general, randomized approach and show that given certain properties of the metric, i.e. how tightly packed individuals are, a random set of representatives of reasonable size will have good distance preservation properties. 
%This argument follows from tying the contraction of the distance between any pair of individuals to how close one of the pair is to a representative element. We then use this bound to argue that a set of representatives which are ``close to'' most of the elements in the universe will preserve distances well. %Representative choice is covered in Section \ref{section:choosingreps}, and is presented after the basic generalization results (Section \ref{section:generalizinghumans}) to take advantage of the terminology and preliminary lemmas to express the required number of representatives,
%although the principles may also be applied to selection of representatives for a fixed sample.

\noindent \textbf{Generalizing submetrics to unseen samples.}
%Once we have established how to construct a submetric for a fixed sample of elements, our next step is to learn submetrics which generalize to unseen samples.
Once we have established how to construct a submetric for a fixed sample of elements, our next step is to generalize to unseen samples.
Our results are based on an assumption that threshold functions, i.e. binary indicators of whether an element is closer to a representative than a given threshold, are efficiently learnable. We show how to combine threshold functions to simulate rounding distances to a representative and then exhibit appropriate parameters to construct an efficient combined learning procedure.

%This assumption is based on the intuition that learning whether an element is near or far, for various settings of ``nearness'' is simpler than trying to recover the exact distance function for each representative. 

\noindent \textbf{Relaxing arbiter requirements.} Finally, we present a relaxation of the arbiter query model in which the arbiter (1) may respond to real-valued queries with arbitrary bounded noise and (2) is not required to make arbitrarily precise distinctions between distances and may instead declare relative comparisons to be ``too close to call.'' This model more closely matches the reality of human arbiters, and our results extend with improvements in query complexity at the cost of increased error magnitude. 

\subsection{Preliminary terminology and definitions}\label{nips:section:preliminaries}
%Recall Individual Fairness, defined below, requires that similar individuals be treated similarly \cite{Dwork-FTA}.
%First, we include the formal definition of Individual Fairness \cite{Dwork-FTA}.
%Recall that Individual Fairness requires that similar individuals, where similarity is determined by the task-specific metric, be treated similarly.

\noindent We refer to the universe of individuals as $U$, a distribution over the universe of individuals as $\U$, and the size of the universe as $|U|=N$. We write $\unifu$ for the uniform distribution over $U$. We assume $\D: U \times U \rightarrow [0,1]$ for simplicity.
Individual Fairness does not require that distances between individuals be maintained exactly, only that they not be exceeded.
This observation motivates our definition of a \textit{submetric} which is a contraction of the true metric, i.e., it does not \textit{overestimate} any distance  beyond a small additive error term.\footnote{This relaxation is very similar to the notion of $(d,\tau)$ metric fairness of \cite{kim2018fairness} and approximate metric fairness of \cite{rothblum2018probably}.}
\begin{definition}[$\alpha-$\submetric]
Given a metric $\D$, $\D': U \times U \rightarrow [0,1]$ is an $\alpha$-\submetric~of $\D$ if for all $u,v \in U$, $\D'(u,v) \leq \D(u,v) + \alpha$.
\end{definition}
% \begin{definition}[Submetric]\label{def:submetric}
% Given a metric $\D: U \times U \rightarrow [0,1]$, $\D': U \times U \rightarrow [0,1]$ is a \textit{\submetric}~of $\D$ if for all $u,v \in U$, $\D'(u,v) \leq \D(u,v)$.
% \end{definition}
Any classifier which satisfies the distance constraints of the submetric $\D'$ will also satisfy those of $\D$, modulo small additive error.\footnote{As originally noted in \cite{Dwork-FTA}, the distance measure need not be a true metric, i.e. it does not strictly need to obey triangle inequality or distinguish unequal elements.}
%Allowing small errors in the submetric will be very useful for keeping query complexity to \human~under control. An $\alpha-$submetric is a relaxation\footnote{} which allows a submetric to overestimate by at most an additive error of $\alpha$ on a distance between any pair of individuals.
%We  use ``submetric'' to refer to submetrics in general with unspecified $\alpha$, and \zsubmetric~to explicitly reference submetrics with no additive error.
Given an $\alpha$-submetric it is possible to eliminate the additive error by taking $\max\{0,\D'( x, y)-\alpha\}$. 
On the other hand, we want to avoid contracting distances to the point of triviality. 
We say that a submetric is $(\beta,c)-$nontrivial if a $\beta$ fraction of distances between pairs preserve at least a $c-$fraction of their original distance.\footnote{Nontriviality is defined over a product of identical distributions of elements in the universe. There is no general obstacle to extending our results to more complicated scenarios, but definitions of density (in Section \ref{section:choosingreps}) would need to be adjusted.} %Notice that nontriviality is defined with respect to a particular distribution over the universe, which is important when considering generalization to unseen samples.
\begin{definition}[$(\beta,c)-$nontrivial]\label{def:nontrivial}
Given a metric $\D$,
a submetric $\D'$ of $\D$ is $(\beta,c)$-nontrivial for the distribution $\mathcal{U}$ if
$\Pr_{u,v \sim \mathcal{U} \times \mathcal{U}} \Big [\frac{\D'(u,v)}{\D(u,v)} \geq c \Big ] \geq \beta$.
\end{definition}

% \begin{definition}[Representative Consistent Underestimator Submetric]\label{def:repunderestimator}
% Given a representative $r$ and $f_r$, an $\alpha-$consistent underestimator for $\Dr$, we define the $\alpha-$submetric $\Dconst(u,v) := |f_r(u) - f_r(v)|$ for all $u,v \in U$.
% \end{definition}
%In the following lemma and proof, we explicitly state and show that the construction of $\Dconst$, as specified in Definition \ref{def:repunderestimator}, results in an $\alpha-$submetric.

% \begin{proof}
% Notice that $|f_r(u) - f_r(v)| \leq |\D(r,u) - \D(r,v)| + \alpha$ by Definition \ref{def:consistentunderestimator} (3), and that $|\D(r,u) - \D(r,v)| \leq \D(u,v)$ by triangle inequality.
% \end{proof}

\begin{algorithm}[t]
  \caption{(Pseudocode)}
  \label{nips:alg:orderingtometric}
\begin{algorithmic}[1]
  \LineComment{Inputs: the representative $r$, a set of elements $U$, error parameter $\alpha$, interfaces $\mtrip$ and $\mreal$.}
\LineComment{Output: an $\alpha-$submetric $\Dr'$.}
  \State Initialize the submetric $\D_r'(x,y)\leftarrow 0$ for all $x,y \in U\times U$.
  \State Order the elements of $U$ by distance from $r$ using $\mtrip$ as a comparator.
  \State Designate the entire ordered list as the first continuous range.
  \State \textbf{while} there are still ranges left to be labeled \textbf{do}
  %\State \quad \textbf{for} each continuous range of unlabeled elements \textbf{do}
  \State \quad Select a range left to be labeled.
  \State \quad Query $\mreal(r,$ first$)$ and $\mreal(r,$ last$)$ for the first and last elements in the range. 
  \State \quad \quad \textbf{if} the difference in distances is $>\alpha$ \textbf{then}
%  \While{there are still unlabeled elements}
%  \For{each continuous range of unlabeled elements}
  %\If{the difference in distances in the range is $>\alpha$}
  %\State Compare the real-valued distance between the beginning of the range to $r$ and the end of the range to $r$. 
  \State \quad \quad \quad Split into two continuous ranges, each with half of the elements in the current range.
%  \Else 
  \State \quad \quad \textbf{else} set $\D'_r(r,x)$ to $\mreal(r,$ first$)$ for each element $x$ in the range.
  %\State 
  %\State Set the distance between every element in the range and $r$ to the distance between the beginning element and $r$.
  %\EndIf %\EndFor %\EndWhile
  %\State End if, end for, end while
  \State Set $\D_r'(x,y)=|\D_r'(r,x) - \D_r'(r,y)|$ for all $x,y$ in the ordering.
  \State \textbf{return} $\D_r'$.
%   \Procedure{$\orderingtosubmetric$}{$\ord,r,\alpha,\mreal$}
%   \State Initialize $f_r(x_i) = 0$ for all $x_i \in \ord$
%   \State $\mathsf{SplitList}(\ord,r,\alpha,\mreal,f_r)$
% %   \For{$x_i, x_j \in \ord$}
% %   \State Set $\D_r'(x_i,x_j)=|f_r(x_i) - f_r(x_j)|$
% %   \EndFor
%   \State \textbf{return} $\D_r'(x,y):= |f_r(x) - f_r(y)|$
%   \EndProcedure
%   \State
%   \Function{$\mathsf{SplitList}$}{$\ord,r,\alpha, \mreal,f_r$}
%   \State $d_{bottom} = \mreal(\ord[0],r)$
%   \State $d_{top} = \mreal(\ord[\size(\ord)-1],r)$
%   \If{$d_{top} - d_{bottom} \leq \alpha$}
%   \ForAll{$x_i \in \ord$}
%   \State Set $f_r(x_i)=d_{bottom}$
%   \EndFor
%   \Else
%   \State $\midpoint \leftarrow \midpointof(\ord)$
%   \State $\mathsf{SplitList}(\ord[0,\midpoint], r, \alpha, \mreal,f_r)$
%   \State $\mathsf{SplitList}(\ord[\midpoint,\size(\ord)-1], r, \alpha, \mreal,f_r)$
%   \EndIf
%   \EndFunction

  \end{algorithmic}
\end{algorithm}
\subsection{Constructing submetrics from \arbiter~judgments}\label{nips:section:submetrics}
A core component of this work is constructing submetrics based on distance information (either exact or underestimated) from a single representative element. We define the \textit{representative submetric} $\Dr$ in the following Lemma. (The proof of follows from triangle inequality.)
\begin{lemma}\label{nips:lemma:repsubmetric}
Given a representative $r$, $\Dr(x,y):=|\D(r,x) - \D(r,y)|$ is a 0-submetric of $\D$.
\end{lemma}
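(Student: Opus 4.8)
The statement to prove is that $\Dr(x,y) := |\D(r,x) - \D(r,y)|$ is a $0$-submetric of $\D$, meaning $\Dr(x,y) \le \D(x,y)$ for all $x,y \in U$. The plan is a direct application of the triangle inequality for the metric $\D$, together with symmetry.

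First I would observe that, by the triangle inequality applied to the triple $r, x, y$, we have $\D(r,x) \le \D(r,y) + \D(y,x)$, which rearranges to $\D(r,x) - \D(r,y) \le \D(x,y)$ after using symmetry $\D(y,x) = \D(x,y)$. Next I would apply the triangle inequality again with the roles of $x$ and $y$ swapped, namely $\D(r,y) \le \D(r,x) + \D(x,y)$, which rearranges to $\D(r,y) - \D(r,x) \le \D(x,y)$. Together these two inequalities give $|\D(r,x) - \D(r,y)| \le \D(x,y)$, i.e.\ $\Dr(x,y) \le \D(x,y) = \D(x,y) + 0$, so $\Dr$ is a $0$-submetric by definition.

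I would also briefly note that $\Dr$ indeed maps into $[0,1]$: it is nonnegative by construction as an absolute value, and it is bounded above by $\D(x,y) \le 1$, so the codomain requirement in the definition of submetric is satisfied. Strictly speaking the definition of $\alpha$-submetric only constrains the upper bound $\Dr(x,y) \le \D(x,y) + \alpha$, so this range check is a minor bookkeeping point rather than part of the core argument.

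There is essentially no obstacle here: the only thing to be careful about is invoking symmetry of $\D$ when converting $\D(y,x)$ to $\D(x,y)$, which is valid since $\D$ is assumed to be a metric (and even in the relaxed setting discussed later, $\D$ is symmetric). The entire proof is two lines of the triangle inequality, so the "hard part" is merely making sure both directions of the absolute value are handled.
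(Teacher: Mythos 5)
Your proof is correct and follows exactly the same route as the paper's: two applications of the triangle inequality, one for each sign of $\D(r,x)-\D(r,y)$, yielding $|\D(r,x)-\D(r,y)|\leq\D(x,y)$. The extra remarks about symmetry and the $[0,1]$ codomain are fine but not needed.
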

Given a sample of $N$ individuals, $\Dr$ can be constructed from $O(N)$ queries to $\mreal$.
%Given a sample of $N$ individuals, we could naively construct $\Dr$ on the sample from $O(N)$ queries to $\mreal$. 
Although $O(N)$ may seem good compared with the $O(N^2)$ queries required to reconstruct the whole metric, it can be improved to $O(\log(N))$ by supplementing with relative distance queries.
Our general strategy will be to show that (1) an \textit{ordering} of elements by distance from a representative can be constructed using $\mtrip$ as a comparator, and (2) given this ordering, the real-valued distances between each element and the representative can be closely approximated by labeling the ordering with distances at granularity $\alpha$, which requires a sublinear number of real-valued queries.
Algorithm \ref{nips:alg:orderingtometric} outlines this process.\footnote{See Section \ref{section:humansubmetrics} Algorithms \ref{alg:tripletordering} and \ref{alg:orderingtometric} for the detailed specifications and analysis.}
%The ordering is constructed using $\mtrip$ as a comparator, as $\mtrip(r,x,y)$ indicates which of $x$ or $y$ has greater distance from $r$. 
%We then repeatedly split the ordering into ranges of elements until the difference in distances between the first and last elements in each range to the representative are at most $\alpha$. The distance between each element and the representative is then set to the minimum distance in its range, which maintains a weak ordering of distances from the representative and corresponds to rounding $\D(r,x)$ down by no more than $\alpha$.
%Given this ordering, we can label distances at granularity $\alpha$ (rounding down) using a sub-linear number of queries by repeatedly eliminating ranges which represent differences of less than $\alpha$.
%The ordering allows us to make fewer queries to $\mreal$ by labeling elements in the ordering with distances at granularity $\alpha$ from $r$ and rounding intermediate elements to produce an $\alpha-$consistent underestimator, which is then used to construct an $\alpha-$submetric. 

Theorem \ref{nips:theorem:orderingtometric} states that Algorithm \ref{nips:alg:orderingtometric} produces an $\alpha-$submetric, which follows from observing that rounding $\D(r,x)$ and $\D(r,y)$ down by at most $\alpha$ results in an increase (or decrease) of at most $\alpha$ in $|\D(r,x) - \D(r,y)|$. 
The bound of $O(N\log(N))$ relative distance queries follows from a straightforward analysis of sorting.
The bound of $O(\maxalphlog)$ real-valued queries is included in Section \ref{section:humansubmetrics}. Briefly, the analysis considers the maximum number of continuous ranges that, when split, result in one range with difference greater than $\alpha$ and one with less. In the worst case, this results in logarithmic dependency on $N$ or $\frac{1}{\alpha}$.
%with $O(\maxalphlog)$ queries to $\mreal$ and $O(N\log(N))$ queries to $\mtrip$,
%explain why we get the \asubmetric.

% \begin{lemma}\label{lemma:consistenterror}
% Given an $\alpha-$consistent underestimator $f_r$ for $r$ with respect to $\D$,  $\Dconst(u,v) := |f_r(u) - f_r(v)|$ is an $\alpha-$submetric of $\D$, and $\Dr:=|\D(r,u)-\D(r,v)|$ is a \zsubmetric~of $\D$.
% \end{lemma}

\begin{theorem}\label{nips:theorem:orderingtometric}
%Given a universe $U$ of size $N$ and a representative $r$, an ordering of the elements of $U$ by distance from $r$ can be constructed from $O(N\log(N))$ triplet queries. 
Algorithm \ref{nips:alg:orderingtometric} %, parametrized with this ordering 
produces an \asubmetric~of $\D$ which preserves $\D(r,u)$ for each $u \in U$ (with additive error $\leq\alpha$) from $O(\maxalphlog)$ queries to $\mreal$ and $O(N\log(N))$ queries to $\mtrip$.
\end{theorem}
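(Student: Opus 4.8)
The plan is to establish the three claims of the theorem separately: correctness (the output is an $\alpha$-submetric preserving $\D(r,u)$ up to additive error $\alpha$), the bound on $\mtrip$ queries, and the bound on $\mreal$ queries. For correctness, first I would argue that the ordering step is valid: since $\mtrip(r, x, y)$ reveals whether $\D(r,x) < \D(r,y)$, using it as a comparator produces a genuine sorted list of $U$ by distance from $r$ (consistency of the arbiter guarantees the comparator is well-defined and transitive). Next, I would analyze the labeling phase. The key invariant is that whenever a range is labeled, its first and last elements have distances to $r$ differing by at most $\alpha$; since the range is sorted, \emph{every} element $x$ in the range satisfies $\D(r,\text{first}) \le \D(r,x) \le \D(r,\text{last})$, so setting $\D'_r(r,x) = \D(r,\text{first})$ underestimates $\D(r,x)$ by at most $\alpha$ and never overestimates. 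Every element eventually lands in some labeled range (the splitting process terminates because ranges shrink and a singleton range has first $=$ last with difference $0 \le \alpha$), so $\D'_r(r,x) \in [\D(r,x) - \alpha, \D(r,x)]$ for all $x$. Then, invoking the observation already flagged in the text, $\D'_r(x,y) = |\D'_r(r,x) - \D'_r(r,y)|$ differs from $\Dr(x,y) = |\D(r,x) - \D(r,y)|$ by at most $\alpha$, and since $\Dr$ is a $0$-submetric by Lemma \ref{nips:lemma:repsubmetric}, we get $\D'_r(x,y) \le \D(x,y) + \alpha$, i.e. an $\alpha$-submetric. (One should also check $\D'_r$ stays in $[0,1]$, which is immediate.)

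For the $\mtrip$ bound, the ordering is produced by a comparison sort on $N$ elements, which uses $O(N\log N)$ comparator calls; each comparator call is one $\mtrip$ query, and no further $\mtrip$ queries are made. For the $\mreal$ bound, I would bound the total number of ranges ever created. Each range that gets split contributes two children; each range either gets labeled (using $2$ $\mreal$ queries on its endpoints, though these can be cached so each element's distance is queried at most $O(1)$ times across the run) or gets split. Model the execution as a binary tree whose leaves are labeled ranges and whose internal nodes are split ranges; the number of $\mreal$ queries is $O(\#\text{nodes})$. I would bound the depth: a range only splits if its endpoint distances differ by more than $\alpha$, and splitting halves the number of elements, so after $\log N$ levels a range is a singleton and cannot split — giving depth $O(\log N)$. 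Separately, since distances lie in $[0,1]$ and a split range spans an interval of length $> \alpha$ in $[0,1]$, the split ranges at any fixed level of the tree have pairwise-disjoint (or near-disjoint) distance-intervals of length $> \alpha$, so there are at most $O(1/\alpha)$ of them per level; hence the total node count is $O(\min\{N, \frac{1}{\alpha}\} \cdot \log N)$, and more crudely $O(\max\{\frac{1}{\alpha}, \log N\})$ as claimed after folding constants — I would reconcile the precise form with the $\maxalphlog$ macro, noting the worst case is either logarithmic in $N$ or linear in $\frac{1}{\alpha}$.

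The main obstacle I expect is the $\mreal$ query bound — specifically, getting the clean $O(\maxalphlog)$ form rather than the easier $O(\min\{N,1/\alpha\}\log N)$. The subtlety is that the two structural constraints (depth $\le \log N$ from halving, and $\le 1/\alpha$ split-ranges per level from the bounded distance range) interact: if $1/\alpha$ is small the width constraint dominates and the tree is short and thin; if $N$ is small the halving constraint dominates. Making the accounting tight requires carefully charging each split to either "a halving step toward a singleton" or "one of the at-most-$1/\alpha$ disjoint $\alpha$-gaps," and arguing these don't double count — essentially that the tree has at most $O(1/\alpha)$ leaves that are split-ancestors and each root-to-such-leaf path has length $O(\log N)$, or symmetrically at most $N$ leaves total. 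I would also need to be careful that "half of the elements" splitting keeps ranges contiguous in the ordering (it does, by construction) so that the sortedness invariant used in the correctness argument is maintained after every split. The remaining steps are routine: the sorting bound is textbook, and the submetric inequality is the one-line rounding argument already sketched in the surrounding text.
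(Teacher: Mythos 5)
Your correctness argument (every element's distance to $r$ is rounded down by at most $\alpha$ because its range's endpoints differ by at most $\alpha$ and the list is sorted, followed by the one-line triangle-inequality step through Lemma \ref{nips:lemma:repsubmetric}) and your $O(N\log N)$ bound on $\mtrip$ queries match the paper's proof (Lemmas \ref{lemma:tripletordering} and \ref{lemma:orderingtometric}) essentially verbatim. Those parts are fine.

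The gap is exactly where you flagged it, and it is a genuine one: the $\mreal$ bound. What you actually establish is $O(\min\{N,\frac{1}{\alpha}\}\cdot\log N)$ --- depth $O(\log N)$ from halving, times $O(\frac{1}{\alpha})$ split-ranges per level --- and this does not ``fold'' into $O(\maxalphlog)$: for $\frac{1}{\alpha}=\sqrt{N}$ the product is $\sqrt{N}\log N$ while the claimed bound is $\sqrt{N}$. The charging scheme you gesture at is the entire nontrivial content of the theorem and is left unexecuted. The paper carries it out by classifying each call to $\splitlist$ into three types --- (1) both children immediately terminate, (2) neither does, (3) exactly one does --- and then arguing: type-2 calls number $O(\frac{1}{\alpha})$ because each creates two disjoint distance intervals of length $>\alpha$ inside $[0,1]$; type-3 calls form chains that halve the element count and hence have length $O(\log N)$; and, via an exchange argument (a type-3 node with a type-2 child can be replaced by a type-2 node with two type-3 children without decreasing the call count), the worst-case recursion tree has all type-2 nodes above all type-3 nodes, which is the shape on which the final sum is evaluated. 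You should also be aware that even with that structure the closing arithmetic is delicate: the paper's own final expression $O(2^{\rho+1}(\log N - \log 2^{\rho+1}))$ with $2^{\rho}=O(\frac{1}{\alpha})$ reads as $O(\frac{1}{\alpha}\log(N\alpha))$, so recovering the stated $O(\maxalphlog)$ form requires more care than either your sketch or a casual reading of the last line of the paper's proof supplies. This step cannot be treated as routine bookkeeping.
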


The submetric produced by Algorithm \ref{nips:alg:orderingtometric} preserves distances between $r$ and other elements well, as $\D'_r(r,x)$ is rounded down by at most $\alpha$, but we cannot make guarantees on distance preservation between arbitrary pairs  without further information.
For example, with only the information that $u$ and $v$ are equally distant from $r$, it is impossible to distinguish whether the distance between $u$ and $v$ is zero or equal to twice their distance from $r$. (See Figure \ref{fig:repcomparison}). % illustrates an example of similar issues for Euclidean distances.
%(For example, $u$ and $v$ may be equally distant from $r$, so $\D_r'(u,v)=0$, but they may also be equally distant from each other.) 
%Although the overestimate magnitude $\alpha$ is independent of $r$, the distances \textit{preserved} are highly dependent on the choice of $r$, and s
Submetrics constructed based on different representatives preserve different information about the underlying metric,  so we can construct more expressive submetrics by aggregating information from multiple representatives. Taking $\maxmerge(\{\D_i\},x,y):= \max_i\D_i(x,y)$, it's straightforward to show that if all $\D_i$ are submetrics of $\D$, then the $\maxmerge$ of the set is also a submetric of $\D$, and that the merge preserves the ``best'' distance known for each pair.\footnote{Formal analysis of $\maxmerge$ and the proof of Lemma \ref{nips:lemma:repsubmetric} appear in Section \ref{section:preliminaries}. The proof of Theorem \ref{nips:theorem:orderingtometric} as well as a precise description of Algorithm \ref{nips:alg:orderingtometric} appear in Section \ref{section:humansubmetrics}.}

%Throughout this work, we will use the $\maxmerge$ of a set of representative submetrics as a core of our constructions.
% \begin{definition}[Representative Set Submetric] 
% Given a set of representatives $R \subseteq U$, we define the representative set submetric $\D_R(u,v):=\maxmerge(\{\Dr | r \in R\},u,v)$ and representative set consistent underestimator submetric $\DconstR(u,v):\maxmerge(\{\Dconst | r \in R\},u,v)$ for all $u,v \in U$, where  $\maxmerge(\{\D_i | i \in [k]\},u,v) := \max_{i \in [k]}\D_i(u,v)$.
% \end{definition}

% Up until Section \ref{nips:section:choosingreps}, we will conservatively consider non-triviality only as a function of distances preserved between pairs in $U \times \{r\}$ to focus our attention on learning approximations of $\D_r$ and $\D_r'$ which generalize to unseen samples. 
% In Section \ref{nips:section:choosingreps}, we return to this question and formulate the necessary properties of $\U$ and $\D$ to reason more generally about distance preservation and non-triviality.

% MERGE PROPERTY
% Below, we define $\DR$ and $\DconstR$ based on a set of representatives $R$.

%As previously noted, we can increase the expressiveness of the submetric by using $\maxmerge$, while still maintaining the same small additive error bound. Naively, this could be accomplished in $O(|R|\maxalphlog)$ queries to $\mreal$ given orderings for a set of representatives $R$ by applying Algorithm \ref{alg:orderingtometric} independently on each representative's ordering. However, the linear dependence on $|R|$ can be improved by using our third query type, \quadqs.

\begin{figure}
    \centering

    \begin{subfigure}[b]{0.4\textwidth}
     \includegraphics[width=\textwidth]{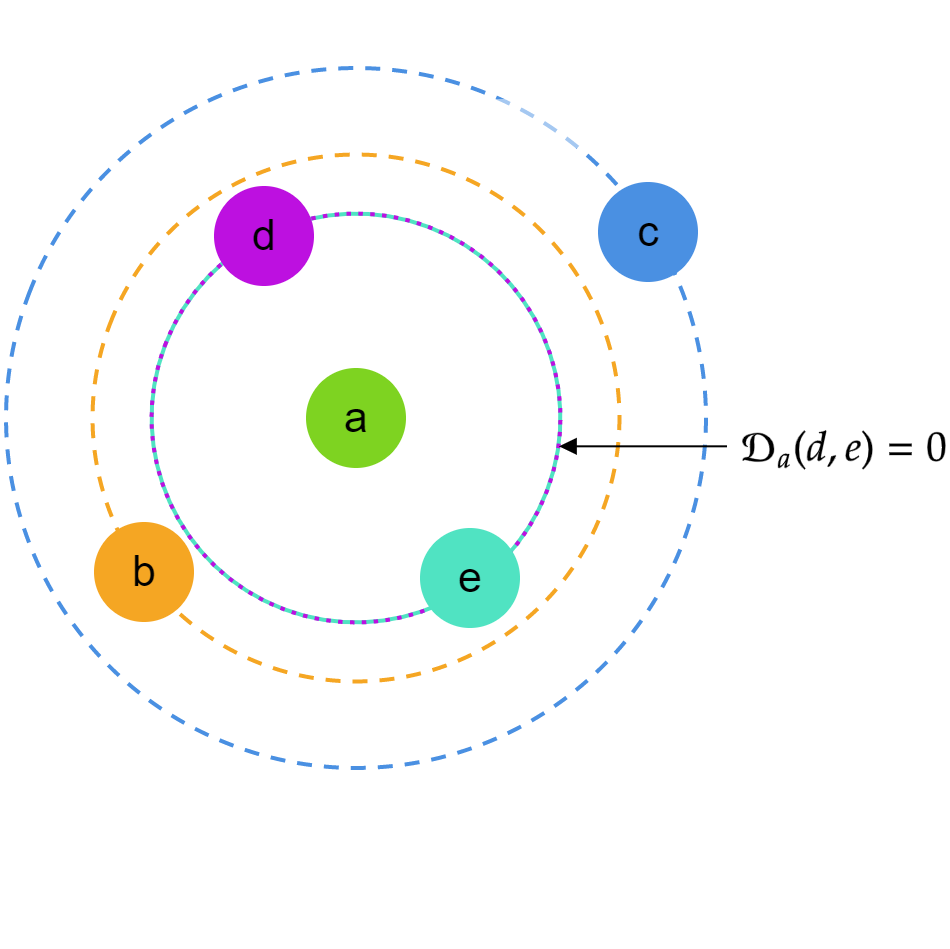}
        \caption{$a$ chosen as representative.} \label{fig:repa}
    \end{subfigure}
    \quad \qquad
 \begin{subfigure}[b]{0.4\textwidth}
    \includegraphics[width=\textwidth]{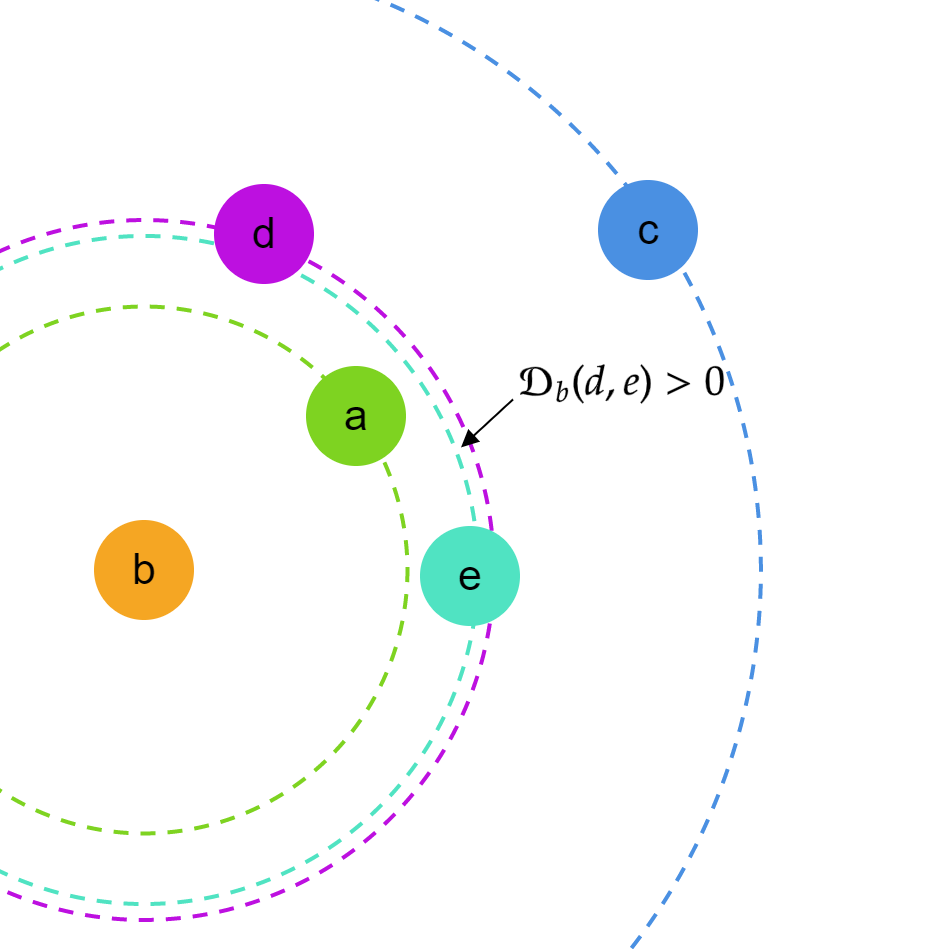}
        \caption{$b$ chosen as representative.} \label{fig:repe}
    \end{subfigure}
  
     %add desired spacing between images, e. g. ~, \quad, \qquad, \hfill etc.
      %(or a blank line to force the subfigure onto a new line)
      \setlength{\belowcaptionskip}{-15pt}
    \caption{The impact of representative choice on distance preservation. The
 distance between each element and the chosen representative is the radius of the shell containing the element. The difference in radii of each pair of shells indicates the distance between the pair of elements under $\D_a$ or $\D_b$. If $a$ is chosen as a representative, notice that $d$ and $e$ are indistinguishable using distance from $a$ alone.
    Choosing representative $b$ preserve distances better than $a$, but still does not distinguish $d$ and $e$ very well. %for a two dimensional Euclidean metric.
    } \label{fig:repcomparison}
\end{figure}
\subsection{Choosing good representative elements}\label{nips:section:choosingreps}

Although the $\maxmerge$ of submetrics based on multiple representatives is an improvement over a single representative, we still cannot make any guarantees about distances between pairs which do not include a representative.
There are two approaches one might take to give non-triviality guarantees for arbitrary pairs: (1) develop specialized strategies for combining representative submetrics which depend on the structure of the metric, e.g., Euclidean distance, or (2) characterize generic randomized representative selection strategies. In this extended introduction, we focus on the randomized strategies for full generality.

% First, we formalize the definition of a $\gamma -$net to capture the notion of a set of representatives ``covering'' a fraction of the distribution (subset of the universe) and relate the size of $\gamma$ to the non-triviality properties of the submetric.
% Next, we formally define the \density~and \diffusion~parameters for a metric and distribution over the universe, and show how the non-triviality properties of $\gamma $-nets relate to these parameters.
% Roughly speaking, \density~describes how closely packed elements are and characterizes how easy it is to construct a $\gamma$-net, whereas \diffusion~describes how many distances are large enough to tolerate the contraction introduced by the net.
% %Intuitively, more closely packed points (high density) will make it easier to find a representative closer to those points, but the tradeoff is additional small absolute distances  between points (lower diffusion), which will be more impacted by the underestimate error of the net. (For example, a universe with all points except one clustered together with distances less than $\alpha$ will be easy to cover with representatives at distance at most $\alpha$ from each element, but any contraction of size approximately $ \alpha$ will destroy any distinguishing power between the clustered points.)
% Finally, we characterize the number of randomly sampled representatives needed to form a $\gamma-$net, given the \density~and\diffusion~characteristics of the metric and distribution.

\textbf{Distance preservation via $\gamma-$nets.}
The crux of the argument for nontriviality with random representatives is (1) a random set of representatives is likely to be ``close to'' a significant portion of the distribution $\U$, and (2) we can bound the magnitude of underestimates based on the distance from a representative.
 Below, we formally define a $\gamma-$net to capture the notion of being ``close to'' or ``covering'' a set of elements.
 \begin{definition}\label{def:gammanet}
 A set $R \subseteq U$ is said to form a \textit{$\gamma-$net} for a subset $V \subseteq U$ under $\D$ if for all balls of radius $\gamma$ (determined by $\D$) containing at least one element $v \in V$, the ball also contains $r \in R$.
 \end{definition}

%To reason about nontriviality of a set of representatives which form a $\gamma-$net, we derive a bound on the contraction of distances between pairs based on their distances to a representative.
Intuitively,
the distance between $r$ and $x$ will be nearly identical to the distance between a close neighbor of $r$ and $x$, 
%a representative and another element in the universe will be nearly identical to the distance between a close neighbor of the representative and that element, 
so we can conclude that if a set of representatives forms a $\gamma-$net for a subset of $U$, then pairs with at least one element in the net will have their original distance preserved up to a $2\gamma$ contraction.
%distances will be preserved up to a $2\gamma$ contraction. 
(Proofs of Lemmas \ref{nips:lemma:generalbound} and \ref{nips:lemma:generalnet} follow from triangle inequality.)
 %Lemma \ref{nips:lemma:generalbound} below states that, given a representative $r$, $\Dr$ underestimates $\D(u,v)$ by at most $\min\{2\D(r,u),2\D(r,v)\}$.
\begin{lemma}\label{nips:lemma:generalbound}
For all $u,v \in U \backslash \{r\}$, $\D(u,v) - \Dr(u,v) \leq \min\{2\D(r,u),2\D(r,v)\}$, where $\Dr(u,v):=|\D(r,u)-\D(r,v)|$.
\end{lemma}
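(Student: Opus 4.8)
The plan is to prove the inequality $\D(u,v) - \Dr(u,v) \leq \min\{2\D(r,u), 2\D(r,v)\}$ directly from the triangle inequality, by bounding $\D(u,v)$ from above and $\Dr(u,v) = |\D(r,u) - \D(r,v)|$ from below. By symmetry in $u$ and $v$ it suffices to establish the bound $\D(u,v) - \Dr(u,v) \leq 2\D(r,u)$; applying the same argument with the roles of $u$ and $v$ exchanged gives the bound with $2\D(r,v)$, and then taking the minimum yields the claim.

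First I would write $\D(u,v) \leq \D(u,r) + \D(r,v) = \D(r,u) + \D(r,v)$ by the triangle inequality. Next I would lower-bound $\Dr(u,v)$: since $\Dr(u,v) = |\D(r,u) - \D(r,v)| \geq \D(r,v) - \D(r,u)$ (the absolute value dominates either signed difference). Subtracting, we get
\[
\D(u,v) - \Dr(u,v) \leq \bigl(\D(r,u) + \D(r,v)\bigr) - \bigl(\D(r,v) - \D(r,u)\bigr) = 2\D(r,u).
\]
Swapping $u$ and $v$ and using $\Dr(u,v) \geq \D(r,u) - \D(r,v)$ gives $\D(u,v) - \Dr(u,v) \leq 2\D(r,v)$ by the identical computation. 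Combining the two bounds, $\D(u,v) - \Dr(u,v) \leq \min\{2\D(r,u), 2\D(r,v)\}$, as desired.

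There is essentially no obstacle here — the entire argument is two applications of the triangle inequality together with the elementary fact that $|a - b| \geq \pm(a-b)$. The only mild subtlety worth a sentence in the writeup is confirming that $\Dr$ as defined ($|\D(r,u) - \D(r,v)|$) is the same object referenced earlier as a $0$-submetric in Lemma \ref{nips:lemma:repsubmetric}, so that the statement is consistent with the surrounding development; this is immediate from the definitions. One could also remark that the bound is tight: equality in the first step requires $r$ to lie "on a geodesic" between $u$ and $v$, and in that configuration $\Dr(u,v)$ can indeed be as small as $0$ while $\D(u,v) = 2\D(r,u)$ when $\D(r,u) = \D(r,v)$, matching the intuition illustrated in Figure \ref{fig:repcomparison} about indistinguishable equidistant points.
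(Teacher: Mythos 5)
Your proposal is correct and uses essentially the same argument as the paper: one application of the triangle inequality to bound $\D(u,v)$ plus the lower bound $|\D(r,u)-\D(r,v)| \geq \pm(\D(r,u)-\D(r,v))$. The only cosmetic difference is that the paper assumes $\D(r,u)\leq\D(r,v)$ without loss of generality to reach the minimum directly, whereas you prove both bounds symmetrically and then take the minimum.
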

\begin{lemma}\label{nips:lemma:generalnet}
If a set of representatives $R \subseteq U$ forms a $\gamma-$net for $V \subseteq U$, then for every pair $x,y \in V \times U$ there exists $r \in R$ such that $\D(x,y) - \Dr(x,y) \leq 2 \gamma$, where $\Dr(x,y):=|\D(r,x) - \D(r,y)|$.
\end{lemma}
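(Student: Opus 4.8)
The plan is to combine the covering guarantee of the $\gamma$-net with the pointwise underestimate bound of Lemma \ref{nips:lemma:generalbound}. Fix an arbitrary pair $x \in V$, $y \in U$. The first step is to observe that the ball of radius $\gamma$ centered at $x$ (with respect to $\D$) contains at least one element of $V$, namely $x$ itself. Hence, by Definition \ref{def:gammanet}, this ball must also contain some $r \in R$; that is, there exists a representative $r \in R$ with $\D(r,x) \leq \gamma$.

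The second step is to apply Lemma \ref{nips:lemma:generalbound} to the pair $x,y$ with this choice of $r$, in the case $x \neq r$ and $y \neq r$. It yields $\D(x,y) - \Dr(x,y) \leq \min\{2\D(r,x),\, 2\D(r,y)\} \leq 2\D(r,x) \leq 2\gamma$, which is exactly the claimed inequality. It remains to dispose of the degenerate cases: if $x = r$, then $\Dr(x,y) = |\D(r,r) - \D(r,y)| = \D(r,y) = \D(x,y)$, so the left-hand side is $0 \leq 2\gamma$, and the argument is symmetric if $y = r$.

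I do not expect any real obstacle here. The only points that require a moment of care are (i) noticing that a radius-$\gamma$ ball around a point of $V$ is automatically nonempty in $V$, which is what licenses invoking the net property to produce a nearby representative, and (ii) separating out the boundary cases where the chosen representative coincides with $x$ or $y$, so that the hypothesis $u,v \in U \setminus \{r\}$ of Lemma \ref{nips:lemma:generalbound} is legitimately satisfied before that lemma is applied.
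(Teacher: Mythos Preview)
Your proposal is correct and matches the paper's intended approach: the paper simply notes that the proof follows from the triangle inequality (via Lemma \ref{nips:lemma:generalbound}), and you have spelled out precisely that argument, including the careful handling of the degenerate cases $x=r$ or $y=r$ that the paper leaves implicit.
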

Of course, forming a $\gamma-$net for an \textit{arbitrary} $\gamma$ isn't enough to give a good nontriviality guarantee. %\footnote{ For example, if all of the elements in $U$ are contained in two well separated balls of radius $\gamma$, a $\gamma-$net will preserve distances between pairs with one element in each ball well, but distances between pairs within the \textit{same} ball may not be. This issue is a significant motivation for defining nontriviality as a relative distance preservation guarantee, rather than an absolute maximum contraction. Notice that the absolute contraction in this case is potentially very small, only $2\gamma$, but the relative contraction may be significantly higher for pairs contained in the same ball. Later applications seeking to use the submetric as constraints on a classifier will not be able to make nuanced decisions between elements in the same ball, which may be problematic for some settings.}
To understand how representatives which form a $\gamma -$net will preserve distances, we define \textit{\density}~and \textit{\diffusion}~below to characterize the relevant properties of the metric and distribution.
The notion of $(\gamma,a,b)-$dense is intended to capture the weight ($a$) of elements that have a significant weight ($b$) on their close neighbors (distance $\gamma$) under $\U$ as a way to characterize how likely it is that a randomly chosen representative will be $\gamma$-close to a significant fraction of elements.
\begin{definition}[$(\gamma,a,b)-$dense]\label{def:density}
%A metric $\D$ is $(r,s,t)-$dense for a universe $U$ if for an $s$ fraction of $U$ there exists at least a $t$ fraction of $U$ such that $\D(u,v)\leq r$.
Given a distribution $\mathcal{U}$ over $U$, a metric $\D$ is $(\gamma,a,b)-$dense for $\mathcal{U}$ if there exists a subset $A\subseteq U$ with weight $a$ under $\mathcal{U}$ such that for all $u \in A$
$\Pr_{v \sim \mathcal{U}}[\D(u,v) \leq \gamma] \geq b$.
\end{definition}

%Figure \ref{fig:avsb} illustrates the tradeoff between $a$ and $b$ for a particular choice of $\gamma$ for $\unifu$ on an example universe in $\R^2$.

% \begin{figure}

%     \centering
%         \includegraphics[width=.4\textwidth]{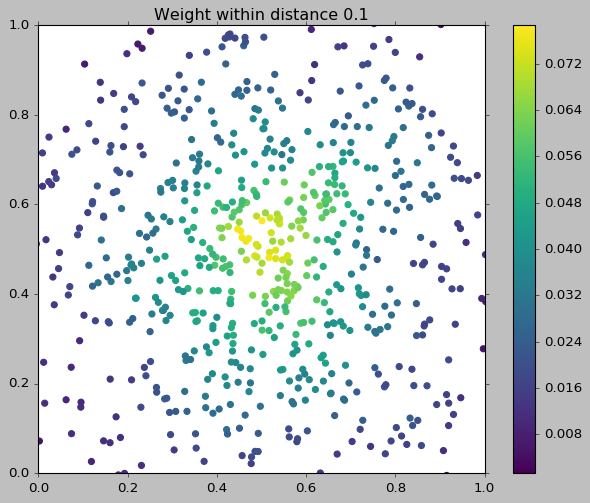}
%         \caption{A visualization of the weight of elements, $b$, within distance $\gamma=.1$ of each element under $\unifu$. The universe consists of a set of points in the unit square. The color assigned to each point indicates the weight of elements in the universe which are within distance $\gamma=0.1$ from the element under the uniform distribution $\unifu.$} \label{fig:avsb}
% \end{figure}
%To build intuition, consider the Euclidean metric over a set of points drawn from the uniform distribution on the unit sphere. Every element (except those in the border ``shell'') has the same expected number of points within a $\gamma-$radius ball. Thus ..... \note{Come up with some example here.}

% In addition to density, we will also frequently consider the fraction of distances larger than a given constant. This allows us to reason about how much the contraction in the submetric will affect the distances preserved. We formalize this notion below as \diffusion.
$(p, \zeta)-$\diffuse, defined below, captures what fraction of distances can tolerate a contraction proportional to $\zeta$ without becoming trivial.
\begin{definition}[$(p,\zeta)-$\diffuse]\label{def:diffuse}
Given a distribution $\U$, a metric $\D$ is $(p,\zeta)-$\diffuse~if the fraction of distances between pairs of elements in $\U \times \U$ greater than $\zeta$ is $p$, i.e.
$\Pr_{u,v \sim \U \times \U}[\D(u,v)\geq \zeta]\geq p$.
\end{definition}

%Definition \ref{def:diffuse} is highly reminiscent of nontriviality (Definition \ref{def:nontrivial}), and we formally relate diffusion to nontriviality in Lemma \ref{nips:lemma:fullnetc}. 
%Notice that, although there are five parameters describing a metric and distribution across the two definitions, these parameters are highly related.

\begin{figure}
\begin{centering}
    \begin{subfigure}[b]{0.4\textwidth}
        \includegraphics[width=\textwidth]{b_vs_a.png}
        \label{nips:fig:avsbgen}
    \end{subfigure}
    ~ \quad %add desired spacing between images, e. g. ~, \quad, \qquad, \hfill etc.
      %(or a blank line to force the subfigure onto a new line)
    \begin{subfigure}[b]{0.35\textwidth}
        \includegraphics[width=\textwidth]{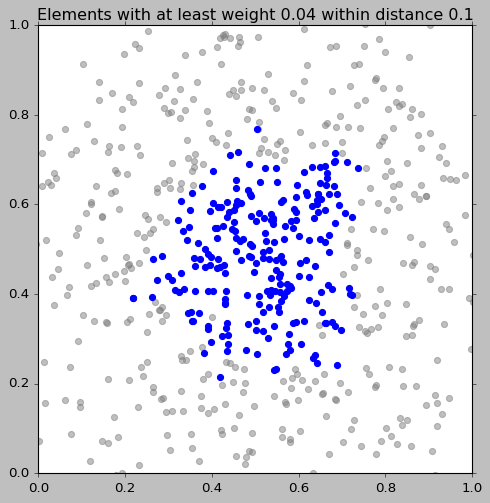}
        \label{nips:fig:avsb4}
    \end{subfigure}
    \setlength{\belowcaptionskip}{-12pt}
    \setlength{\abovecaptionskip}{-5pt}
    \caption{A visualization of the weight of elements, $b$, within distance $\gamma=.1$ of each element under $\unifu$ for an example universe of points in $[0,1]^2$ where $\D$ is taken as Euclidean distance. %The universe consists of a set of points in the unit square.
    The color assigned to each point on the left indicates the weight of elements in the universe which are within distance $\gamma=0.1$ from the element under the uniform distribution $\unifu.$
    On the right, the points with at least weight $b=0.04$ of $\unifu$ within distance $\gamma=0.1$ are highlighted in blue.
    This example is $(\gamma=0.1, a=.31,b=0.04)-$dense for $\unifu$.
    That is, 31\% of elements in the universe are within distance 0.1 of 4\% of the rest of the universe.
    } \label{nips:fig:avsb}
    %add desired spacing between images, e. g. ~, \quad, \qquad, \hfill etc.
      %(or a blank line to force the subfigure onto a new line)
 \end{centering}   
\end{figure}
A metric can be described by many combinations of density and diffusion parameters, as illustrated in Figure \ref{nips:fig:avsb}.
These parameters are highly related, and we generally consider the combination of $(\gamma,a,b)-$dense and $(p,\frac{2\gamma}{1-c})-$\diffuse.
Although $\frac{2\gamma}{1-c}$ initially seems an arbitrary quantity, it indicates that a $p-$fraction of pairs will have distances preserved by a factor of $c$ if the maximum contraction for those pairs is no more than $2\gamma$. 
Thus the values of $\gamma$ and $c$, which in turn dictate $p$, $a$, and $b$, (assuming $\zeta=\frac{2\gamma}{1-c}$) can loosely be seen as a tradeoff between how many pairs will have distance preservation guarantees and how significant the guarantees will be. %In the case of the example in Figure \ref{fig:avsb}, we could describe the uniform distribution as $(.88, .4)-$\diffuse, or $(.88, \frac{2\gamma}{1-c})-$\diffuse, where $c=\frac{1}{2}$ and $\gamma=0.1$.
%That is, with contraction $2\gamma$ at least 88\% of the pairs in $\unifu\times\unifu$ will have at least half of their distances preserved.

% \begin{figure}

%     \begin{subfigure}[b]{0.45\textwidth}
%         \includegraphics[width=\textwidth]{b_vs_a.png}
%         \label{fig:avsbgen}
%     \end{subfigure}
%     ~ %add desired spacing between images, e. g. ~, \quad, \qquad, \hfill etc.
%       %(or a blank line to force the subfigure onto a new line)
%     \begin{subfigure}[b]{0.375\textwidth}

%         \includegraphics[width=\textwidth]{b_vs_a_4.png}
%         \label{fig:avsb4}
%     \end{subfigure}

%     ~ %add desired spacing between images, e. g. ~, \quad, \qquad, \hfill etc.
%       %(or a blank line to force the subfigure onto a new line)

%     \caption{A visualization of the weight of elements, $b$, within distance $\gamma=.1$ of each element under $\unifu$ for an example universe of points in $[0,1]^2$ where $\D$ is taken as Euclidean distance. %The universe consists of a set of points in the unit square.
%     The color assigned to each point on the left indicates the weight of elements in the universe which are within distance $\gamma=0.1$ from the element under the uniform distribution $\unifu.$
%     On the right, the points with at least weight $b=0.04$ of $\unifu$ within distance $\gamma=0.1$ are highlighted in blue.
%     This example is $(\gamma=0.1, a=.31,b=0.04)-$dense for $\unifu$.
%     That is, 31\% of elements in the universe are within distance 0.1 of 4\% of the rest of the universe.
%     } \label{fig:avsb}
% \end{figure}

\textbf{Nontriviality properties of $\gamma -$nets.}
Next, we relate the magnitude of $\gamma $ to the non-triviality properties of the  $\maxmerge$ of a set of representative submetrics.
Lemma \ref{nips:lemma:fullnetc} states that a submetric based on a set of representatives which form a $\gamma-$net for a subset of $U$ will have nontriviality properties related to the \diffusion~properties of $\D$ and the weight of the subset in $\U$.
\begin{lemma}\label{nips:lemma:fullnetc}
If a set of representatives $R \subseteq U$ form a $\gamma-$net for weight $\w$ of $\U$
%and a $p$ fraction of distances between elements in $\mathcal{U}\times \mathcal{U}$ are greater than $\frac{2\gamma}{1-c}$, for some $0 < c \leq 1$
and $\D$ is $(p,\frac{2\gamma}{1-c})-$\diffuse~on $\U$,
then the submetric $\DR(x,y):=\maxmerge(\{\Dr | r \in R\},x,y)$ is $(p', c)-$nontrivial for $\U$, where $p' \geq p - (1-\w)^2$.
\end{lemma}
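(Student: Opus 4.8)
The plan is to combine the $\gamma$-net covering property (via Lemma \ref{nips:lemma:generalnet}) with the diffuseness hypothesis by a union-bound argument over pairs. First I would fix notation: let $V \subseteq U$ be a set of weight $\w$ under $\U$ for which $R$ forms a $\gamma$-net, and let $E_{\text{hit}}$ be the event that a random pair $(x,y) \sim \U \times \U$ has at least one coordinate in $V$, and $E_{\text{far}}$ the event that $\D(x,y) \geq \frac{2\gamma}{1-c}$. By the diffuseness assumption, $\Pr[E_{\text{far}}] \geq p$. The complement of $E_{\text{hit}}$ is the event that \emph{both} $x \notin V$ and $y \notin V$; since $x$ and $y$ are drawn independently each from $\U$, this has probability $(1-\w)^2$, so $\Pr[E_{\text{hit}}] \geq 1 - (1-\w)^2$. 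A union bound then gives $\Pr[E_{\text{hit}} \cap E_{\text{far}}] \geq p - (1-\w)^2 =: p'$.

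Next I would argue that on the event $E_{\text{hit}} \cap E_{\text{far}}$, the pair $(x,y)$ satisfies $\frac{\DR(x,y)}{\D(x,y)} \geq c$, which is exactly the nontriviality condition. Since $(x,y) \in E_{\text{hit}}$, WLOG $x \in V$; by Lemma \ref{nips:lemma:generalnet} there is some $r \in R$ with $\D(x,y) - \Dr(x,y) \leq 2\gamma$, hence $\DR(x,y) = \maxmerge(\{\Dr\}_{r \in R}, x, y) \geq \Dr(x,y) \geq \D(x,y) - 2\gamma$. Dividing by $\D(x,y)$ (which is positive, being at least $\frac{2\gamma}{1-c} > 0$) gives $\frac{\DR(x,y)}{\D(x,y)} \geq 1 - \frac{2\gamma}{\D(x,y)}$, and since $\D(x,y) \geq \frac{2\gamma}{1-c}$ by $E_{\text{far}}$, we get $\frac{2\gamma}{\D(x,y)} \leq 1-c$, so $\frac{\DR(x,y)}{\D(x,y)} \geq c$. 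Combining with the probability bound from the previous paragraph yields $\Pr_{x,y \sim \U \times \U}[\DR(x,y)/\D(x,y) \geq c] \geq p'$, which is the claim.

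I should also note that $\DR$ is indeed a submetric of $\D$: each $\Dr$ is a $0$-submetric by Lemma \ref{nips:lemma:repsubmetric}, and $\maxmerge$ of submetrics is a submetric (the elementwise maximum of functions each bounded above by $\D$ is still bounded above by $\D$), so the nontriviality statement is about a genuine submetric.

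The main obstacle — really the only subtle point — is handling the edge cases and the direction of the $\maxmerge$ inequality cleanly: one must be careful that Lemma \ref{nips:lemma:generalnet} applies to pairs in $V \times U$ (so the "WLOG $x \in V$" reduction is legitimate since the pair event is symmetric), and that $\maxmerge$ can only \emph{increase} the estimated distance toward the true one, so taking the max over all $r \in R$ preserves the good bound obtained from the single good representative guaranteed by the net. The probability bookkeeping is otherwise a routine union bound, and one should double-check that $p'$ as defined may be negative, in which case the statement is vacuous — the inequality $p' \geq p - (1-\w)^2$ in the lemma statement is the meaningful content.
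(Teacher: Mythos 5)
Your proposal is correct and follows essentially the same route as the paper: the paper likewise bounds the weight of pairs that are both $\frac{2\gamma}{1-c}$-far and have at least one element covered by the net via the worst-case subtraction $p - (1-\w)^2$, and then invokes the $2\gamma$ contraction bound for covered pairs (as in Lemma \ref{lemma:fullnet}) to get the relative preservation factor $c$. Your extra care about the symmetry of the pair event, the direction of the $\maxmerge$ inequality, and the possible vacuity when $p' < 0$ is consistent with, and slightly more explicit than, the paper's argument.
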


The proof follows from a worst-case analysis of the fraction of pairs with at least one element in the net with distance large enough that a $2\gamma$ contraction leaves at least a $c$-fraction of the original distance. The nontriviality guarantees of Lemma %Lemma \ref{lemma:fullnet} and
\ref{nips:lemma:fullnetc} are conservative, and %, and follow from several worst-case assumptions. 
we stress that our goal is to show the possibility of positive results, rather than achieving optimal performance or guarantees.

% Corollary \ref{cor:fullnetunderestmateknownp} restates the Lemma directly in terms of the probability that at least one element in the pair sampled is covered by the $\gamma-$net and the distance is greater than $\frac{2\gamma + \alpha'}{1-c}$ in order to get a tighter characterization of nontriviality.
% \begin{corollary}\label{cor:fullnetunderestmateknownp}
% If a set of representatives $R \subseteq U$ form a $\gamma-$net for a subset $V \subseteq U$,
% %for weight $\w$ of $\U$
% %and a $p$ fraction of distances between elements in $\mathcal{U}\times \mathcal{U}$ are greater than $\frac{2\gamma + \alpha'}{1-c}$,
% %and $\D$ is $(p,\frac{2\gamma+ \alpha'}{1-c})-$\diffuse~on $\U$,
% and $\Pr_{u,v \sim \U \times \U}[(u \in V \lor v \in V) \wedge (\D(u,v) > \frac{2\gamma + \alpha'}{1-c})]\geq p$,
% %where $\Gamma$ is the set of elements covered by the $\gamma-$net formed by $R$,
% then the $\alpha-$submetric $\Dconst$, formed from $\alpha-$consistent underestimators with maximum contraction $\alpha'$, is $(p, c)-$nontrivial for $\U$.%, where $p' \geq p - (1-\w)^2$.
% \end{corollary}

% Given a set of representatives, it is possible to empirically measure $p$ on a sample to improve the bounds given by Lemma \ref{lemma:fullnetc} or Corollary \ref{cor:fullnetcunderestmate}. For maximum generality, we will rely only on the density and \diffusion~properties of the metric and distribution, but we include Corollary \ref{cor:fullnetunderestmateknownp} as a reminder that the bounds given are by no means tight.

\textbf{Representative set size.}
%With the basic characterization of nontriviality properties of $\gamma-$nets in place, w
We now consider how likely it is that a set of random representatives drawn from $\U$ will form a $\gamma-$net for a significant fraction of $\U$. %given the density properties of $\D$ on $\U$. 
Lemma \ref{nips:lemma:randomrepgeneralization} characterizes the necessary representative set size based on the density and diffusion properties of the metric.
%states that a random set of size $O(\frac{1}{b}\ln(\frac{1}{b\delta}))$ will be sufficient. 
The proof follows from characterizing the probability of ``hitting'' a sufficient weight of the distribution with a sample of a given size, and arguing that no element in our subset of interest can be more than $3\gamma$ far from any of the ``hitting'' elements. % to guarantee with high probability that the submetric $\D_R$ constructed from exact evaluations of $\Dr$ via queries to the \human~on new samples from $\U\times \U$ will have nontriviality properties related to the density and \diffusion~of $\D$ for $\U$.

\begin{lemma}\label{nips:lemma:randomrepgeneralization}
Given access to unlimited queries to the \arbiter, if a metric $\D$ is $(\gamma, a, b)-$dense and
$(p,\frac{6\gamma}{1-c})-$\diffuse~on $\U$,
then a random set of representatives $R$ of size at least $\frac{1}{b}\ln(\frac{1}{b\delta})$ will produce a $(p - (1-a)^2, c)$-nontrivial submetric %$\DR$
%with access to $\mreal$
for $\U$ with probability at least $1-\delta$. %, where
%$\DR$ is constructed from exact evaluations of $\Dr$ via queries to the \human.
\end{lemma}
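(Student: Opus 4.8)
The plan is to combine two probabilistic facts: (i) a random sample of size $\frac{1}{b}\ln(\frac{1}{b\delta})$ from $\U$ almost surely ``hits'' the dense region $A$ in a way that $\gamma$-covers almost all of the weight near $A$, and (ii) once we have such a $\gamma$-net we can invoke Lemma \ref{nips:lemma:fullnetc} to convert net-coverage into a nontriviality guarantee. The only gap between these two pieces is bridging the $(\gamma,a,b)$-density hypothesis to an actual $\gamma$-net (or a slightly larger $3\gamma$-net) for a large-weight subset, and accounting for the change of radius by requesting $(p,\frac{6\gamma}{1-c})$-diffusion instead of $(p,\frac{2\gamma}{1-c})$-diffusion.

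\textbf{Step 1: Covering the dense region.} By $(\gamma,a,b)$-density there is a set $A$ with $\U(A)\geq a$ such that every $u\in A$ has $\Pr_{v\sim\U}[\D(u,v)\leq\gamma]\geq b$. Think of each $u\in A$ as owning the ball $B(u,\gamma)$, which has $\U$-weight at least $b$. I would argue that a random representative set $R$ of size $m=\frac{1}{b}\ln(\frac{1}{b\delta})$ is, with probability $\geq 1-\delta$, such that \emph{every} $u\in A$ has some $r\in R$ with $\D(u,r)\leq\gamma$ — equivalently, $R$ hits every ball $B(u,\gamma)$ for $u\in A$. The standard way: the ``bad'' event for a fixed $u$ is that all $m$ samples miss $B(u,\gamma)$, which happens with probability $\leq(1-b)^m\leq e^{-bm}$. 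Plugging $m=\frac{1}{b}\ln\frac{1}{b\delta}$ gives $e^{-bm}=b\delta$. The subtlety is that $A$ may be infinite, so a naive union bound over $u\in A$ fails; I would instead use a net/covering argument on $A$ itself at scale $\gamma$ (or note that there are at most $1/b$ disjoint $\gamma$-balls of weight $\geq b$, hence one needs to hit only $O(1/b)$ ``representative'' balls, absorbing the extra $1/b$ factor into the logarithm — this is exactly where the $\ln\frac{1}{b\delta}$ versus $\ln\frac1\delta$ discrepancy and the ``no element more than $3\gamma$ from a hitting element'' remark come from). Concretely: pick a maximal set of points in $A$ that are pairwise $>2\gamma$ apart; their $\gamma$-balls are disjoint and each has weight $\geq b$, so there are at most $1/b$ of them; a union bound over these gives failure probability $\leq \frac{1}{b}e^{-bm}=\delta$; and any $u\in A$ is within $2\gamma$ of one of these special points whose $\gamma$-ball got hit, so $u$ is within $3\gamma$ of some $r\in R$.

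\textbf{Step 2: From covering $A$ to a $3\gamma$-net for a large-weight set.} Let $V=\{v\in U: \D(v,A)\leq\gamma\}$ be the set of elements within $\gamma$ of the dense region; note $A\subseteq V$ so $\U(V)\geq a$, and in fact $V$ contains all the weight that the density condition promises lies near $A$. I would show that, on the good event of Step 1, $R$ forms a $3\gamma$-net for... hmm, more carefully: I want a set $W$ with $\U(W)\geq a$ (matching the $(1-a)^2$ in the conclusion) for which $R$ is a $3\gamma$-net in the sense of Definition \ref{def:gammanet}. Taking $W=A$ works directly: every element of $A$ is within $3\gamma$ of some $r\in R$, hence every $3\gamma$-ball meeting $A$ contains a representative. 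So $R$ is a $3\gamma$-net for $A$, and $\U(A)\geq a$.

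\textbf{Step 3: Invoke Lemma \ref{nips:lemma:fullnetc}.} Apply Lemma \ref{nips:lemma:fullnetc} with net-radius $3\gamma$ and covered-weight $\w=a$: since $\D$ is $(p,\frac{2(3\gamma)}{1-c})=(p,\frac{6\gamma}{1-c})$-diffuse by hypothesis, the submetric $\DR(x,y)=\maxmerge(\{\Dr\mid r\in R\},x,y)$ is $(p',c)$-nontrivial with $p'\geq p-(1-\w)^2=p-(1-a)^2$. Combining with the probability $\geq 1-\delta$ from Step 1 that $R$ is indeed such a net, we conclude that with probability at least $1-\delta$ the submetric is $(p-(1-a)^2,c)$-nontrivial, as claimed.

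\textbf{Main obstacle.} The routine parts (the $e^{-bm}$ tail bound, the triangle-inequality radius bookkeeping $\gamma\to 3\gamma\to 6\gamma$) are easy; the one genuinely delicate point is Step 1's union bound when $A$ is infinite or continuous. The packing argument (at most $1/b$ disjoint weight-$b$ balls) is what makes $\frac{1}{b}\ln\frac{1}{b\delta}$ — rather than something depending on $|A|$ or $N$ — the right sample size, and getting that reduction cleanly, together with correctly tracking that it costs us a factor of $3$ in the net radius, is the crux of the proof.
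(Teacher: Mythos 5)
Your proof is correct and follows the paper's strategy: hit the weight-$b$ balls with a random sample, reduce the union bound to at most $1/b$ disjoint balls, inflate the net radius to $3\gamma$ via the triangle inequality, and then invoke Lemma \ref{nips:lemma:fullnetc} with the $(p,\frac{6\gamma}{1-c})$-diffusion hypothesis. The one place you diverge is the reduction to disjoint balls: the paper removes a minimal collection of overlapping weight-$b$ subsets and argues that every removed subset overlaps a surviving one (so its associated element of $A$ is within $3\gamma$ of a hitting representative), whereas you take a maximal $2\gamma$-separated packing of $A$, observe its $\gamma$-balls are disjoint and hence number at most $1/b$, and use maximality to place every $u \in A$ within $2\gamma$ of a packing point. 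Both yield the same $\frac{1}{b}\ln(\frac{1}{b\delta})$ bound and the same $3\gamma$ radius; your packing argument is the more standard and arguably cleaner of the two.
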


Random sampling is not the only method to construct a $\gamma-$net, and our strategy is motivated by simplicity as much as generality. In practice it may be more efficient to use the distance information from previously selected representatives to inform the selection of the next representative. For example, omitting or down-weighting any candidates that are already very close to existing representatives, or using a greedy strategy.\footnote{Section \ref{section:choosingreps} contains proofs for Lemmas \ref{nips:lemma:generalbound}-\ref{nips:lemma:randomrepgeneralization} and extended discussion of specialized strategies for representative selection, in particular strategies taking advantage of known metric structure.}

%\begin{remark}\label{remark:greedygamma}
%Choosing a set at random to form a $\gamma-$net ignores the information provided by each of the representatives.  
%A $\gamma-$net for a fixed sample, or some weight of a fixed sample, can be constructed via a greedy algorithm rather than random sampling. 
%The key obstacle to analyzing the effectiveness of a greedy procedure is that the choice of the next representative, based on the weight of elements it may add to the net, can be based only on the existing incomplete distance information. In some cases, this incomplete information may lead to very sub-optimal choices.
% However, there may be procedures which take advantage of \quadqs~and rough ordering information to reduce the number of mistakes made, at the cost of additional queries to the \human.
% For example, \quadqs~can be used to check a small sample of the elements a candidate representative $r_c$ is expected to add against a known distance pair of approximately distance $\gamma$ in order to better estimate the expected contribution to the net.
% We anticipate that such strategies may be useful in practice, even if a rigorous theoretical analysis for arbitrary metrics is pessimistic. It may also be useful to characterize the set of metrics which have bounded error in this incomplete information scenario, and we pose this as an open question for future work.
% \end{remark}

\subsection{Generalizing \arbiter~judgments}\label{nips:section:generalization}
% In this section, we consider the problem of learning how to predict the \arbiter's judgments on unseen samples from $\U$. % (We consider how to pick the set of representatives in Section \ref{section:choosingreps}.) 
Now that we have shown how to construct a nontrivial submetric with ongoing access to the \arbiter, we consider the problem of generalizing the \arbiter's responses to unseen samples.
%Now that we have shown how to construct a nontrivial submetric for a fixed sample, we consider the problem of generalization.
%Below we include the definition of Probably Approximately Correct (PAC) learning of [cite?].
%\note{Add PAC learning definition.}
Our goal is to construct efficient learners for submetrics as in Valiant's Probably Approximately Correct (PAC) model of learning \cite{valiant1984theory}. However, we do not want to be too prescriptive about the submetric concept class, particularly about the representation of elements. Instead, we will make an assumption about the learnability of \textit{threshold functions} and construct learning procedures for submetrics using threshold functions as building blocks without any additional direct access to labeled or unlabeled samples.
%\subsection{Learning Consistent Underestimators}
More formally, our goal is to produce an efficient submetric learner, defined below. 
\begin{definition}[Efficient submetric learner]\label{def:efficientlearner}
A learning procedure is an \textit{efficient $\alpha-$submetric learner} if for all $\varepsilon,\delta \in (0,1]$, given access to labeled examples, with probability at least $1-\delta$ over the randomness of the sampling and the learning procedure produces a hypothesis $h_r$ such that
$\Pr_{x,y \sim \U \times \U}[h_r(x,y) - \D(x,y) \geq \alpha] \leq \varepsilon$ 
in time  $O(poly(\frac{1}{\varepsilon},\frac{1}{\delta}))$.
\end{definition}

% In our formal definition, we are again purposefully vague about the type of the labeled examples, and all of our subsequent constructions will use labeled examples for the threshold functions and set $\alpha $ corresponding to the maximum difference between adjacent thresholds.
% Whenever we use a set of ordered thresholds, $\T$, we will write $\alpha_{\T} =\max_{t_{i}\in \T}\{t_{i}-t_{i-1}\}$ to denote the maximum difference between adjacent thresholds.
%In the remainder of this section, we formalize the relatively weak assumption that there exist a set of efficient learners for a set of binary threshold functions (Definition \ref{def:thresholdfunction}). Second,
%we show a voting mechanism among the threshold functions to determine an element's distance from the representative and
%bound the error of the voting mechanism based on the error of the original threshold function learners to show the construction of a PAC learner for a consistent underestimator.
%we show the construction of an efficient learner for a submetric with additive error dependent on the set of thresholds based on voting by hypotheses produced by each threshold function learner.
%Finally, we show how to combine a set of learners for submetrics as a first step to improving nontriviality as a warm-up for Section \ref{section:choosingreps}.
To show how to construct an efficient submetric learner, we first formalize our assumption of learnability of threshold functions. Next, we show how to combine the threshold function hypotheses for each representative to simulate rounding the distance between the representative and each element down to the nearest threshold. Finally, we specify the appropriate parameters for each component to achieve the desired bounds.

\textbf{Learnability of threshold functions.}
Assumption \ref{assumption:pacthreshold} (below) states that for every representative, there exists a set of thresholds
and a learner for each threshold which,
%and a mechanism for learning,
with high probability,
produces an accurate hypothesis for the threshold function  which generalizes to unseen samples.\footnote{The formal statement of Assumption \ref{assumption:pacthreshold} is included in Section \ref{section:learnability}.} 
(``With high probability'' always refers to the probability over the randomness of sampling and the learner.)
We first formally define a threshold function, which is a binary indicator of whether a particular element $u \in U$ is within distance %$t$ of $r$ for a threshold 
$t \in [0,1]$ of a representative $r$ as $   T_t^r(u) := \{
      1 $ if $ \D(r,u) \leq t, $ $
      0 $ otherwise$\}$.

% \begin{definition}[Threshold function]\label{def:thresholdfunction}
% Given a representative $r$ and a metric $\D$, the 
%  threshold function $T_t^r(u): U \rightarrow \{0,1\}$ is defined
% $   T_t^r(u) := \{
%       1 $ if $ \D(r,u) \leq t, $ $
%       0 $ otherwise$\}$.
% %with respect to a representative $r$ and metric $\D$.%: U \times U \rightarrow [0,1]$.
% \end{definition}
\begin{assumption}\label{assumption:pacthreshold}\emph{(Informal)}
Given a metric $\D$ and a representative $r$, there exists a set of thresholds $\T$ such that
     $t \in [0,1]$ for all $t \in \T$,
 $0 \in \T$,
%    (3) $\alpha_\T = \max_{i \in [|\T|-1]}t_{i+1} - t_{i}$,
and $|\T|= O(1)$,
%$ \subseteq [0,1]$ such that $0 \in \T$
and for every $t \in \T$ there exists an efficient learner $L_t^r(\et,\dt)$ which for all $\et,\dt \in (0,1]$, with probability at least $1-\dt$, %over the randomness of the sample and the learning procedure, 
produces a hypothesis $h_t^r$ such that
$\Pr_{x \sim \U}[h_t^r(x) \neq T_t^r(x)] \leq \et$ in time $O(poly(\frac{1}{\et},\frac{1}{\dt}))$ with access to labeled samples of $T_t^r(u \sim \U)$ for any distribution $\U$.% over the universe. % drawn from $\U$.
%That is, the concept class $T_t^r$ is efficiently learnable for all $t \in \T$.% and for all $r \in R$.
\end{assumption}
%As noted before, we are intentionally vague about the representation of $U$ because we tuck any issues of representation away into the assumption of learnability of threshold functions. All of our subsequent constructions will only interact with samples from $\U$ through the learners for the threshold functions, and as such, the representation can be completely abstracted away.
%In Assumption \ref{assumption:pacthreshold}, the choice of $r$ is also not explicitly specified. In this work, we will take Assumption \ref{assumption:pacthreshold} to apply to every $r \in U$.
\textbf{Constructing submetric learners from threshold learners.}
Given Assumption \ref{assumption:pacthreshold}, our next step is to determine how to combine the threshold learners into a learner for the representative submetric. 
(Notice that training data for the threshold function learners can be produced by post-processing the outputs of Algorithm \ref{nips:alg:orderingtometric}.) 
Our strategy is similar to the rounding strategy used in Algorithm \ref{nips:alg:orderingtometric}, using the threshold functions to identify the largest threshold which underestimates the distance between the representative and the element under consideration.
The $\linearvote$ mechanism takes in a set of hypotheses for the thresholds and outputs the threshold with which the most hypotheses agree.
When all hypotheses output the correct value of their corresponding threshold function, $\linearvote$ is equivalent to rounding $\D(r,x)$ down to the nearest threshold.
\begin{definition}[$\linearvote$]\label{def:linearvote}
Given an ordered set of thresholds, $\T = \{t_1, t_2, \ldots, t_{|T|}\}$, and a set of hypotheses $H_\T^r = \{h_{t_1}^r, h_{t_2}^r, \ldots, h_{t_{|T|}}^r\}$, one corresponding to each threshold function, % $\linearvote$ outputs the threshold that the most hypotheses agree with.
$\linearvote(\T,H_\T^{r},x) := \argmax_{t_i}\sum_{t_j < t_i} (1-h_{t_j}^r(x)) + \sum_{t_j \geq t_i} h_{t_j}^r(x)$.
%$\linearvote$ is equivalent to $\threshunder$ (as in lemma 2blah) when all of the $h_{t_i}^r$ output the correct value.
\end{definition}

\begin{algorithm}[]
  \caption{Pseudocode}
  \label{nips:alg:learner}
\begin{algorithmic}[1]
\LineComment{Inputs: error and failure probability parameters $\varepsilon,\delta$, density parameter $b$, a set of threshold function learners, the threshold set $\T$, and interfaces to the \arbiter.}

\State Sample a set of representatives $R \sim \U$ of size $\frac{1}{b}\ln(\frac{2}{b\dL})$ to produce $\gamma-$net with Pr $\geq 1 - \frac{\dL}{2}$.

\State Generate labeled training data for the threshold learners via Algorithm \ref{nips:alg:orderingtometric}.

\State Run each threshold learner $L_{t_i}^r$ with error parameters $\et \leftarrow \frac{\eL}{2|R||\T|}$ and $\dt \leftarrow \frac{\dL}{2|R||\T|}$ to produce threshold function hypotheses $h_{t_i}^r$ for $r \in R$ and $t_i \in \T$.
\State For each representative, produce a hypothesis for distance from the representative by taking $h_r(x,y) := |\linearvote(\T,\{h_{t_i}^r | t_i \in \T \},x) - \linearvote(\T,\{h_{t_i}^r | t_i \in \T \},y)|$.

\State Combine the hypotheses for each representative into $h_R(x,y):= \maxmerge(\{h_r | r \in R\}, x,y)$.
\State \textbf{return} $h_R$.
\end{algorithmic}
\end{algorithm}

%Given error parameters, a set of thresholds and access to the \arbiter,
Algorithm \ref{nips:alg:learner} combines all of our constructions thus far to create an efficient submetric learner: it chooses a set of representatives, learns threshold functions for each threshold for each representative, and combines the resulting hypotheses using $\linearvote$ and $\maxmerge$ to produce a single submetric hypothesis.\footnote{Algorithm \ref{nips:alg:learner} summarizes Algorithms \ref{alg:thresholdcombiner}-\ref{alg:submetriclearner}, see Sections \ref{section:generalizinghumans} and \ref{section:choosingreps}.} Theorem \ref{nips:theorem:maingeneralization} builds on the result of Lemma \ref{nips:lemma:randomrepgeneralization} and concludes that the parametrization of Algorithm \ref{nips:alg:learner} results in an efficient submetric learner. 
%Theorem \ref{nips:theorem:querycomplexity} formally states that the query complexity to the \arbiter.  
\begin{theorem}\label{nips:theorem:maingeneralization}[Informal]
Given a distance metric $\D$, and a distribution $\U$ over the universe,
if
there exist a set of thresholds $\T$ with maximum gap $\alpha_\T$ and efficient learners $\{L_{t_i \in \T}^r\}$ as in Assumption \ref{assumption:pacthreshold}, and
$\D$ is $(\gamma, a, b)-$dense and
$(p,\frac{6\gamma + \alpha_\T}{1-c})-$\diffuse~on $\U$, 
then there exists an efficient $\alpha_\T$-submetric learner which produces a hypothesis $h_R$ %with probability greater than $1- \dL$ 
such that
% $\Pr_{x,y \sim \U \times \U}[h_R(x,y) \geq \D(x,y) + \alpha_\T] \leq \eL$ and 
 $h_R$ is $(p - (1-a)^2 - \eL, c)-$nontrivial for $\U$. 
 %The learner
 %runs in time $O(poly(\frac{1}{\eL},\frac{1}{\dL}))$ for all $\eL,\dL \in (0,1]$.
\end{theorem}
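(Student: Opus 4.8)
The plan is to assemble the pieces already in hand. We want to show Algorithm \ref{nips:alg:learner}, with the stated parametrization, is an efficient $\alpha_\T$-submetric learner producing a hypothesis $h_R$ that is $(p-(1-a)^2-\eL, c)$-nontrivial. I would split the argument into three parts: (i) correctness of the submetric property of $h_R$, (ii) the nontriviality bound, and (iii) efficiency/sample complexity. The backbone for (i) and (ii) is the following observation: if every threshold learner $L_{t_i}^r$ outputs the \emph{correct} threshold function $T_{t_i}^r$ on a given pair of points $x,y$, then $\linearvote(\T, \{h_{t_i}^r\}, x)$ equals $\D(r,x)$ rounded down to the nearest element of $\T$ (this is exactly the remark following Definition \ref{def:linearvote}), so $h_r(x,y) = |\lfloor\D(r,x)\rfloor_\T - \lfloor\D(r,y)\rfloor_\T|$, which differs from $\Dr(x,y) = |\D(r,x)-\D(r,y)|$ by at most the maximum gap $\alpha_\T$ in exactly the same rounding argument used for Theorem \ref{nips:theorem:orderingtometric}. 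Hence on such a pair $h_r(x,y) \le \Dr(x,y) + \alpha_\T \le \D(x,y) + \alpha_\T$ by Lemma \ref{nips:lemma:repsubmetric}, and taking $\maxmerge$ over $r \in R$ preserves this (the max of $\alpha_\T$-submetrics is an $\alpha_\T$-submetric), so $h_R(x,y) \le \D(x,y) + \alpha_\T$ whenever all threshold hypotheses are correct at $x$ and $y$.

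Next I would control the failure event. There are two sources of failure. First, the random representative set $R$ of size $\tfrac{1}{b}\ln(\tfrac{2}{b\dL})$ fails to form a $\gamma$-net for weight $a$ of $\U$; by (the proof of) Lemma \ref{nips:lemma:randomrepgeneralization} this happens with probability at most $\dL/2$. Second, some threshold learner $L_{t_i}^r$ fails its PAC guarantee; by a union bound over the $|R||\T|$ learners, each run with $\dt = \tfrac{\dL}{2|R||\T|}$, this happens with probability at most $\dL/2$. So with probability at least $1-\dL$ both good events hold: $R$ is a $\gamma$-net for weight $a$, and every hypothesis $h_{t_i}^r$ satisfies $\Pr_{x\sim\U}[h_{t_i}^r(x)\neq T_{t_i}^r(x)] \le \et = \tfrac{\eL}{2|R||\T|}$. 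Conditioned on this, by a union bound over the $|R||\T|$ threshold functions and over the two coordinates of a pair, the probability that some threshold hypothesis errs at $x$ or at $y$ when $(x,y)\sim\U\times\U$ is at most $2|R||\T|\cdot\et = \eL$. On the complementary event — call it the ``clean pair'' event, of probability $\ge 1-\eL$ — every $h_r$ behaves as the idealized rounded representative submetric above.

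For nontriviality I would then re-run the worst-case counting argument behind Lemma \ref{nips:lemma:fullnetc} / Lemma \ref{nips:lemma:randomrepgeneralization}, but now accounting for the extra $\alpha_\T$ rounding. On a clean pair $(x,y)$ with $x$ in the $\gamma$-net's covered set, Lemma \ref{nips:lemma:generalnet} gives an $r\in R$ with $\D(x,y)-\Dr(x,y)\le 2\gamma$; combined with the net construction losing up to $\gamma$ more (the $3\gamma$ bound in the proof of Lemma \ref{nips:lemma:randomrepgeneralization}) and the $\alpha_\T$ rounding, the total contraction against $h_R$ is at most $6\gamma+\alpha_\T$. So if additionally $\D(x,y) \ge \tfrac{6\gamma+\alpha_\T}{1-c}$, then $h_R(x,y)/\D(x,y) \ge c$. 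The set of pairs with at least one coordinate in the covered weight-$a$ set has measure $\ge 1-(1-a)^2$; intersecting with the $(p,\tfrac{6\gamma+\alpha_\T}{1-c})$-diffuse pairs and with the clean-pair event, a union bound gives nontriviality fraction $\ge p - (1-a)^2 - \eL$, as claimed. Finally, efficiency: $|R|$ is $O(\tfrac{1}{b}\log\tfrac{1}{b\dL})$, $|\T|=O(1)$, and each threshold learner runs in $\mathrm{poly}(\tfrac{1}{\et},\tfrac{1}{\dt}) = \mathrm{poly}(\tfrac{1}{\eL},\tfrac{1}{\dL},|R|,|\T|)$ time by Assumption \ref{assumption:pacthreshold}, with labeled data supplied by Algorithm \ref{nips:alg:orderingtometric}; these compose to $\mathrm{poly}(\tfrac1\eL,\tfrac1\dL)$.

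I expect the main obstacle to be bookkeeping the error budget cleanly — in particular, being careful that the ``clean pair'' argument is a statement about a \emph{random} pair $(x,y)\sim\U\times\U$ (so the per-coordinate error probabilities add as $2|R||\T|\et$) rather than a uniform statement over all pairs, and that this $\eL$ loss is genuinely additive with the $(1-a)^2$ net-miss term rather than interacting with it. Getting the constants in the learner's parameter settings ($\et,\dt$) to line up with these two union bounds is the one place where the informal statement hides real (if routine) care. The geometric contraction accounting ($6\gamma+\alpha_\T$ and the $\tfrac{1}{1-c}$ diffuseness scaling) is a direct transcription of the Lemma \ref{nips:lemma:randomrepgeneralization} proof with one extra additive term, so it should present no new difficulty.
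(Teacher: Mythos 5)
Your proposal is correct and takes essentially the same approach as the paper: the paper simply factors the same union bounds through its modular intermediate results (Theorem \ref{theorem:singlerepgeneral} for a single representative and Theorem \ref{theorem:simplegeneralization} for the $\maxmerge$ combiner) before invoking Lemma \ref{nips:lemma:randomrepgeneralization}, whereas you carry them out in one flat pass over all $|R||\T|$ learners. The parameter settings ($\et=\eL/2|R||\T|$, $\dt=\dL/2|R||\T|$), the clean-pair error accounting, the even split of the failure budget, and the $6\gamma+\alpha_\T$ contraction bookkeeping all match the paper's argument.
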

The proof of Theorem \ref{nips:theorem:maingeneralization} %and \ref{nips:theorem:querycomplexity} 
follows from an analysis of the error parameter propagation.\footnote{See the proofs of Theorems \ref{theorem:maingeneralization} and \ref{thm:querycomplexity}  for detailed analysis.}  We briefly give some intuition for the analysis and implications of the theorem. 
First, the magnitude $\alpha_\T$ error follows from the same single direction rounding argument as for Algorithm \ref{nips:alg:orderingtometric}. 
The error probability follows from noticing that at least one threshold function must be in error for one of the elements to result in an error in $\linearvote$.
The failure probability ``budget'' is split evenly between failure to choose a good set of representatives (Line 1) as specified in Lemma \ref{nips:lemma:randomrepgeneralization}, and failure of the underlying learning procedures (Line 3) derived by union bound. 
Compared with Lemma \ref{nips:lemma:randomrepgeneralization}, the diffusion and nontriviality parameters are adjusted to take into account the additional rounding error magnitude of $\alpha_\T$ introduced by $\linearvote$  %(compared with direct \arbiter~query evaluation) 
and %the resulting nontriviality guarantee must also take 
the combined hypothesis error probability $\varepsilon$. % into account.
%If the set of thresholds has a large maximum gap but small average gap, post-processing can be used to reduce the magnitude of $\alpha_\T$, but at the cost of worse non-triviality guarantees.
In practice, we expect that the set of thresholds which are learnable are unlikely to occur at regular intervals. Post-processing is a valuable tool to reduce the magnitude of $\alpha_\T$ (by re-mapping the threshold values in step 4 to reduce the maximum gap), but comes at the cost of reduced nontriviality guarantees.

The desired query complexity to the \arbiter~follows from basic analysis of the parameters. However, the query complexity bound can be improved significantly by observing that no independence of errors between threshold functions is assumed, allowing a single call to Algorithm \ref{nips:alg:orderingtometric} for each representative (rather than $|\T|$ calls). The dependence on $|R|$ can also be improved to logarithmic by sorting a single merged list of (representative, element) pairs, but we defer detailed discussion to Sections  \ref{section:generalizinghumans} and \ref{section:choosingreps}.

%Finally, the query complexity to the \arbiter~of Algorithm \ref{nips:alg:orderingtometric} need not be increased by a factor of $|R||T|$ to produce training data. No independence of errors between threshold functions is assumed, so a single labeled sample for each representative can be used for all thresholds. There are also further improvements to the query protocol which reduce the dependence to $\log(|R|)$, which we omit for lack of space. %Furthermore, there are additional improvements to the query protocol which reduce the dependence on $|R|$ to logarithmic in queries to $\mreal$ which we omit for lack of space. %(The improved mechanism and full analysis are included in the supplemental material.)
% \begin{theorem}\label{nips:theorem:querycomplexity}[Informal]
% Sufficient labeled training data for Algorithm \ref{nips:alg:learner} can be produced from  $O(\maxalphlogRb)$ queries to $\mreal$ and $\frac{1}{b} \ln(\frac{1}{b\delta})\hat{N}\log(\hat{N})$ relative distance queries, %to $\mtrip$, and $O(\frac{1}{b} \ln(\frac{1}{b\delta})\hat{N}\log(\frac{1}{b} \ln(\frac{1}{b\delta})))$ queries to $\mquad$, 
% where $\hat{N} = O(poly(\frac{1}{b}\ln(\frac{1}{b\delta}), \frac{1}{\alpha}, \frac{1}{\varepsilon}, \frac{1}{\delta})$ is the number of samples required to train a single threshold learner given a set of evenly spaced thresholds $\T$ such that $t_i - t_{i-1} = \alpha=O(1)$.
% \end{theorem}

\subsection{Relaxing the query model}\label{nips:section:relaxed}
Our results extend to a relaxed model in which \arbiters~are not expected to make arbitrarily small distinctions between distances or individuals and may answer real-valued queries with bounded noise.
%or to provide  precise real-valued distances. 
%This model matches the intuition that there are pairs of individuals who, upon detailed inspection, can be distinguished for a particular task, but the difference is so small as to be unimportant for any practical measure. %Differences in distances between pairs may also be so small as to warrant no practical distinction.
%We consider two query types based on a small number of elements: real-valued distance queries and relative distance (triplet or quad) queries. \blueversion{We do not assume that the \arbiter~can provide arbitrarily precise real valued distances, or answer relative distance queries exactly when the difference in distances is small.
%In order to capture this intuition, 
The relaxed model assumes that there are two fixed constants, $\alphL$, the minimum precision with which the \arbiter~can distinguish elements or distances, and $\alphH$, a bound on the magnitude of the (potentially biased) noise in the \arbiter's real-valued responses. %The \arbiter~responds to real-valued queries with (potentially biased) noise with magnitude bounded by $\alphH$. 
For any comparisons with difference smaller than $\alphL$, the \arbiter~declares the elements indistinguishable or the difference ``too close to call.'' The model allows for a ``gray area'' between $\alphL$ and $\alphH$ in which the \arbiter~may either respond with the true answer or ``too close to call.'' For any differences larger than $\alphH$, the arbiter responds with the true answer. % to the query.  

% We assume that there is a minimum distance granularity $\alphL$ at which the \arbiter~can distinguish distances and elements and a ``gray area'' between $\alphL $ and $\alphH$ in which differences may or may not be distinguishable. 
% We assume that $\alphL$ and $\alphH$ are fixed constants for each task and cannot be manipulated, i.e., we cannot parametrize $\alphL$ or $\alphH$ to use the \arbiter~as an arbitrary threshold function to improve query complexity and accuracy tradeoffs.
% %\footnote{A single strict cutoff $\alphH$ is not strictly necessary, and some ``fuzziness'' of the cutoff for ``too close to call'' can be tolerated. See Section \ref{section:fuzzytctc}.}
% The \arbiter~may answer real-valued queries with noise up to magnitude $\alphH$. For relative queries with differences of less than $\alphL$, the \arbiter~always answers ``too close to call'',  and for differences between $\alphL$ and $\alphH$ the \arbiter~may either answer ``to close to call'' or with the true result of the comparison.

For the most part, our results translate to the relaxed model with minimal modification to the logic of the proofs to handle two-sided error in real-valued queries. Interestingly, the real-value query complexity improves to constant, as the worst-case behavior in Algorithm \ref{nips:alg:orderingtometric} is avoided as the \arbiter~``knows'' not to worry about inconsequentially small distances. However, this does result in additional error magnitude, so the improved query complexity does not come for free. 
Furthermore, %to extend the generalization results, notice that 
unlike the \exact~model we won't necessarily be able to label a sample with perfect accuracy for every threshold function learner due to the bi-directional error.
To handle this labeling problem, we modify the distribution of samples presented to each learner, eliminating samples whose labels are ambiguous, again resulting in increased error. %We then reason about the error of the combination of hypotheses for distributions with disjoint sets of ambiguous points removed. 
Formal results in the relaxed model are discussed in Section \ref{section:tctc}.

\section{Related Work}\label{section:related}
Metric learning is a richly studied area. Two surveys \cite{DBLP:journals/corr/BelletHS13,kulis2013metric} provide an overview of the literature unrelated to Individual Fairness.
There is a significant body of literature concerned with learning distance metrics from human feedback in practice with heuristic optimization for applications like image similarity, feature identification and other applications including \cite{frome2007learning}, \cite{adaptively-learning-crowd-kernel}, \cite{zou2015crowdsourcing}, \cite{jamieson2011low}, \cite{wilber2014cost}, \cite{van2012stochastic}.

With respect to constructing metrics for Individual Fairness or generalizing individually fair classifiers to unseen samples, we highlight four recent works.
\cite{gillen2018online} considers an online linear contextual bandits setting, and imposes a fairness constraint that similar contexts should be treated similarly, where similarity is assumed to be a Mahalanobis distance. 
\cite{gillen2018online} takes a similar view of human feedback for learning fairness to ours, but their online setting, fairness model, metric assumptions and goals are different. The work most similar to ours of Jung et al. (\cite{jung2019eliciting}) has very similar motivation, but their model is restricted to equivalence (or near equivalence) queries. They consider the problem of arbiter consistency and explicitly consider the multiple arbiter model in their empirical work. The equivalence model considered in \cite{jung2019eliciting} can be expressed in the relaxed arbiter model of this work, (i.e., allowing a large too-close-too-call region and allowing for an appropriately sized noise parameter to place no requirement on arbiters reporting values other than ``not equal''). Applying the results of this work to the multi-arbiter empirical model proposed by Jung et al., either by attempting to elicit more nuanced judgments beyond equivalence or to better understand the properties of the equivalence-only model versus the more general relaxed model, is an exciting direction for future work.
\cite{rothblum2018probably} and \cite{kim2018fairness} consider the problem of generalizing Individual Fairness with differing levels of oracle access to the metric, and  one could view our results as providing a path for efficiently generating metric samples for these settings. Our notion of a submetric is similar to $(d,\tau)$ metric fairness of \cite{kim2018fairness}, and our definition of efficient submetric learner is very close to the definition of ``approximately metric-fair'' of \cite{rothblum2018probably}.
We view the present work as a complement to these directions.

With respect to query types and human fairness judges, as previously noted Gillen et al, \cite{gillen2018online}, consider a similar model in which a human judge `knows unfairness when she sees it.' Dasgupta and Luby, \cite{dasgupta2017learning} also consider the benefits of ``partial feedback'' from a human expert in clustering applications with very similar motivation to our query type choices.

The problems of ranking and scoring are closely related to the problem of combining arbiter judgments to construct orderings based on relative queries. In this work, we did not address how to handle differences in orderings between arbiters. However, there is a significant body of work concerned with aggregating or combining orderings or rankings from multiple sources. For example, Dwork, Kumar, Naor and Sivakumar consider the problem of combining rankings from multiple sources in \cite{dwork2001rank}. Volkovs, Larochelle and Zemel consider rank aggregation as a supervised learning problem, and consider questions of crowd-sourcing in \cite{volkovs2012learning}. Dwork, Kim, Reingold, Rothblum and Yona consider the fairness and accuracy properties of rankings in \cite{DKRRY2019}.

%%%%%%%%%%%%%%%%%%%%%%%%%%% Full Paper %%%%%%%%%%%%%%%%%%%%%%
%%%%%%%%%%%%%%%%%%%%%%%%%%% Full Paper %%%%%%%%%%%%%%%%%%%%%%
%%%%%%%%%%%%%%%%%%%%%%%%%%% Full Paper %%%%%%%%%%%%%%%%%%%%%%
%%%%%%%%%%%%%%%%%%%%%%%%%%% Full Paper %%%%%%%%%%%%%%%%%%%%%%
%%%%%%%%%%%%%%%%%%%%%%%%%%% Full Paper %%%%%%%%%%%%%%%%%%%%%%
%%%%%%%%%%%%%%%%%%%%%%%%%%% Full Paper %%%%%%%%%%%%%%%%%%%%%%
%%%%%%%%%%%%%%%%%%%%%%%%%%% Full Paper %%%%%%%%%%%%%%%%%%%%%%
%%%%%%%%%%%%%%%%%%%%%%%%%%% Full Paper %%%%%%%%%%%%%%%%%%%%%%
%%%%%%%%%%%%%%%%%%%%%%%%%%% Full Paper %%%%%%%%%%%%%%%%%%%%%%

%\section{\Humans~and the query model}\label{section:query}

\section{Additional definitions and terminology}\label{section:preliminaries}
In addition to the preliminary terminology and definitions introduced in Section \ref{nips:section:intro}, there are several additional definitions and dilemmas which will prove useful in the more technical discussion of later sections. In particular, we introduce one additional relative query type, the concept of a consistent underestimator () and explicit characterization of representative submetrics and representative set submetrics.

\paragraph{Expanded Query Model}

As mentioned in Section \ref{nips:section:intro}, we restrict ourselves to queries involving a limited number of elements. In addition to the triplet query, we consider a second type of relative query, the quad query, which asks the arbiter to compare distances between two distinct pairs. 

% A more natural query we might pose is a relative distance query, e.g. asking the \human~to decide `is $a$ closer to $b$ or $c$?' or `are $a$ and $b$ closer than $x$ and $y$?'

\newtheorem*{def:realquery}{Definition \ref{def:realquery}}
\begin{def:realquery}[Real query]
$\mreal(u,v):=\D(u,v)$
\end{def:realquery}

\newtheorem*{def:tripletquery}{Definition \ref{def:tripletquery}}
\begin{def:tripletquery}[Triplet query]
$\mtrip(a,b,c):=\{
1 \text{ if }\D(a,b) < \D(a,c)\text{, }0 \text{ if }\D(a,c)\leq \D(a,b)\}$.
\end{def:tripletquery}

%A more generic version of the triplet query is the \quadq, which asks the \human~to compare a two pairs and determine which is closer.
\begin{definition}[\Quadq]
$\mquad(a,b,x,y):= \{1 $ if $\D(a,b)>\D(x,y)$,  $0 $ otherwise$\}$.
\end{definition}

Relative distance queries have been used successfully for human evaluation in image processing and computer vision, e.g. \cite{van2012stochastic,wilber2014cost}. %\note{Add something more here!}
%We anticipate,  particularly for comparisons of more distant individuals (e.g., distances between individuals with high qualification and low qualification), that relative distance queries will be significantly easier for the \human~to evaluate than providing an internally consistent set of real-valued distances. 
A \quadq~does require the \human~to consider an additional element compared with a triplet query. This may result in additional overhead for the \human, particularly in cases where examining each individual requires significant time. As such, we consider \quadqs~slightly more costly than triplet queries, but still significantly less costly than real-valued distance queries.
Furthermore, the binary nature of the response makes parallelizing relative distance queries between several \humans~(who are in agreement) straightforward.

For brevity in algorithms and theorem statements we will refer to $\mreal$, $\mtrip$ and $\mquad$ as the interfaces to the \arbiter.

\paragraph{Additional definitions and lemmas.}
We now introduce several additional definitions and lemmas which simplify discussion in later technical sections.

%\todo{Transition sentence about how being useful requires that we maintain some distances, so we won't always try to get \zsubmetric.}

% To be useful, submetrics should closely approximate the original metric. We say that a submetric is $(\beta,c)-$nontrivial if a $\beta$ fraction of distances between pairs preserve at least a $c-$fraction of their original distance. Notice that nontriviality is defined with respect to a particular distribution over the universe.
% \begin{definition}[$(\beta,c)-$nontrivial submetric]\label{def:nontrivial}
% Given a universe $U$ and a metric $\D: U \times U \rightarrow [0,1]$,
% a submetric $\D'$ of $\D$ is $(\beta,c)$-nontrivial for the distribution $\mathcal{U}$ if
% $$\Pr_{u,v \sim \mathcal{U} \times \mathcal{U}} \Big [\frac{\D'(u,v)}{\D(u,v)} \geq c \Big ] \geq \beta$$
% \end{definition}

A core component of this work is that submetrics can be constructed based on distance information from a single representative element. We refer to the submetric constructed from differences in distance to a particular representative $r$ as $\Dr$.
\begin{definition}[Representative Submetric]\label{def:repsubmetric}
Given a representative $r \in U$, we define the submetric $\Dr(u,v) := |\D(r,u) - \D(r,v)|$ for all $u,v \in U$.
\end{definition}
The following straightforward lemma and proof explicitly, restated from the introduction, show that $\D_r$ as defined is a \zsubmetric~of $\D$.

\newtheorem*{lemma:repsubmetric}{Lemma \ref{nips:lemma:repsubmetric}}
\begin{lemma:repsubmetric}[Restatement]
%\begin{lemma}\label{lemma:repsubmetric}
$\D_r(u,v):=|\D(r,u)-\D(r,v)|$ for $r,u,v \in U$ is a \zsubmetric~of $\D$.
%\end{lemma}
\end{lemma:repsubmetric}
% \begin{lemma}\label{lemma:repsubmetric}
% $\D_r(u,v):=|\D(r,u)-\D(r,v)|$ for $r,u,v \in U$ is a \zsubmetric~of $\D$.
% \end{lemma}
\begin{proof}
The proof follows from triangle inequality. $\D(r,u) \leq \D(r,v)+\D(u,v)$, thus $\D(r,u) - \D(r,v) \leq \D(u,v)$. Likewise, $\D(r,v) - \D(r,u) \leq \D(u,v)$. Thus $|\D(r,u)-\D(r,v)| \leq \D(u,v)$.
\end{proof}

Lemma \ref{nips:lemma:repsubmetric} shows that $\Dr(x,y)$ constructed from \textit{exact} evaluations of $|\D(r,x)-\D(r,y)|$ is a submetric, but in practice we will want to construct submetrics from approximate evaluations of $\D(r,x)$ and $\D(r,y)$.
Just as a submetric is a contraction of the true metric, a \textit{consistent underestimator} is a contraction of the distance between a representative and other elements of the universe. The key property of a consistent underestimator is that distances are contracted \textit{consistently}, i.e. a weak ordering of distances from $r$ is preserved.
\begin{definition}[Consistent Underestimator]\label{def:consistentunderestimator}
Given a universe $U$ and a metric $\D: U \times U \rightarrow [0,1]$,
a function $f_r: U \rightarrow [0,1]$ is said to be an \textit{$\alpha-$consistent underestimator} for $r$ with respect to $\D$ if for all $u,v \in U$
\begin{align*}
\text{1. }f_r(u) &\leq \D(r,u)  &  \text{2. } f_r(u) &\leq f(v) \text{ iff } \D(r,u) \leq \D(r,v)    & \text{3. }
|f_r(u) - f_r(v)| &\leq |\D(r,u) - \D(r,v)| + \alpha\end{align*}
We define the maximum contraction of a consistent underestimator $\maxcontraction := \max_{u,v \in U \times U}\D(u,v) - |f_r(u) - f_r(v)|$.
\end{definition}

Analogous to the construction of $\D_r$, an $\alpha-$consistent underestimator for a representative $r$ can also be used to construct an $\alpha-$submetric, denoted $\Dconst$. Figure \ref{fig:subcomparison} illustrates the difference in the exact evaluation of $\D(r,x)$ versus the consistent underestimator.
\begin{definition}[Representative Consistent Underestimator Submetric]\label{def:repunderestimator}
Given a representative $r$ and $f_r$, an $\alpha-$consistent underestimator for $\Dr$, we define the $\alpha-$submetric $\Dconst(u,v) := |f_r(u) - f_r(v)|$ for all $u,v \in U$.
\end{definition}
In the following lemma and proof, we explicitly state and show that the construction of $\Dconst$, as specified in Definition \ref{def:repunderestimator}, results in an $\alpha-$submetric.
\begin{lemma}\label{lemma:consistenterror}
Given an $\alpha-$consistent underestimator $f_r$ for $r$ with respect to $\D$,  $\Dconst(u,v) := |f_r(u) - f_r(v)|$ is an $\alpha-$submetric of $\D$.
\end{lemma}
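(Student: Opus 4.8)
The plan is to verify directly that $\Dconst$ satisfies the definition of an $\alpha$-submetric, namely that $\Dconst(u,v) \leq \D(u,v) + \alpha$ for all $u,v \in U$. Since $\Dconst(u,v) = |f_r(u) - f_r(v)|$ by definition, this reduces to bounding $|f_r(u) - f_r(v)|$ in terms of $\D(u,v)$. The natural route is to chain property 3 of a consistent underestimator together with the fact (Lemma~\ref{nips:lemma:repsubmetric}) that $\D_r(u,v) = |\D(r,u) - \D(r,v)|$ is a $0$-submetric of $\D$.

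First I would invoke property 3 of Definition~\ref{def:consistentunderestimator}, which gives $|f_r(u) - f_r(v)| \leq |\D(r,u) - \D(r,v)| + \alpha$. Next I would observe that the right-hand side is exactly $\D_r(u,v) + \alpha$, and apply Lemma~\ref{nips:lemma:repsubmetric}, which states $\D_r(u,v) = |\D(r,u) - \D(r,v)| \leq \D(u,v)$ (it is a $0$-submetric, hence overestimates nothing). Combining the two inequalities yields $\Dconst(u,v) = |f_r(u) - f_r(v)| \leq \D(u,v) + \alpha$, which is precisely the claim. The proof is essentially a two-line chain of inequalities.

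There is no real obstacle here — the only thing to be careful about is citing the correct property of the consistent underestimator (property 3, the one with the additive $\alpha$, rather than property 1 or 2) and noting that properties 1 and 2 are not actually needed for this particular statement; they become relevant elsewhere, e.g.\ when arguing that the ordering structure is preserved or when bounding $\maxcontraction$. It is also worth a remark that $\Dconst$ maps into $[0,1]$ as required: since $f_r: U \to [0,1]$, the difference $|f_r(u) - f_r(v)|$ lies in $[0,1]$ automatically, so $\Dconst$ is a well-defined function $U \times U \to [0,1]$. Thus the argument is complete.
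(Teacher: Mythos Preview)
Your proposal is correct and matches the paper's own proof essentially line for line: the paper also invokes property~3 of Definition~\ref{def:consistentunderestimator} and then the triangle-inequality bound $|\D(r,u)-\D(r,v)|\le \D(u,v)$ (you cite it via Lemma~\ref{nips:lemma:repsubmetric}, the paper cites triangle inequality directly). Your additional remarks about the codomain and about properties~1 and~2 being unused here are accurate but go beyond what the paper records.
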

\begin{proof}
Notice that $|f_r(u) - f_r(v)| \leq |\D(r,u) - \D(r,v)| + \alpha$ by Definition \ref{def:consistentunderestimator} (3), and that $|\D(r,u) - \D(r,v)| \leq \D(u,v)$ by triangle inequality.
\end{proof}

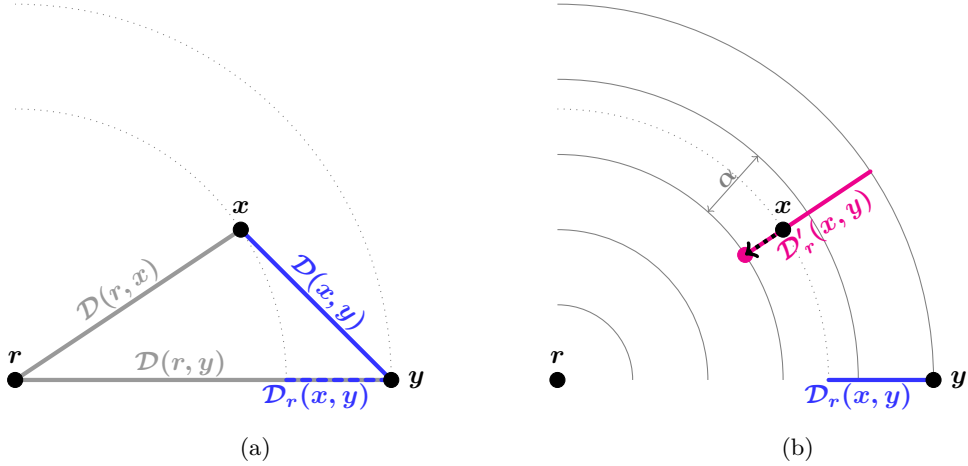
\begin{figure}
    \centering
    \begin{subfigure}[b]{0.45\textwidth}

\begin{tikzpicture}[font=\boldmath]

    % Points
    \coordinate (A) at (0,0) {};
    \coordinate (B) at (5,0) {};
    \coordinate (C) at (3,2) {};
    \coordinate (D) at (3.605,0) {};

    % Label Points
    \node at (A) [above = 1mm] {$r$};
    \node at (B) [right = 1mm ] {$y$};
    \node at (C) [above = 1mm ] {$x$};

    % Draw the triangle
    %\path[fill=blue!10, fill=blue!5]  (A) -- (B) -- (C) -- (A);
    \draw[ultra thick, gray!80, arrows={}, line cap=round]  (A) -- (C) node[sloped,midway,above=-0.1cm] {$\D(r,x)$};
    \draw[ultra thick, blue!80,      arrows={}, line cap=round]  (C) -- (B) node[sloped,midway,above=-0.1cm] {$\D(x,y)$};
    \draw[ultra thick, gray!80, arrows={},line cap=round]  (A) -- (B) node[sloped,midway,right=-0.3cm,above=-0.1cm] {$\D(r,y)$};
    %\draw [gray] (0,0) circle [radius=3.605];
    %\draw [gray] (0,0) circle [radius=5];
     \draw [gray,dotted] (5,0) arc [radius=5, start angle=0, end angle= 90];
     \draw [gray,dotted] (3.605,0) arc [radius=3.605, start angle=0, end angle= 90];
    \draw[dashed, ultra thick, blue!80, line cap=round] (D) -- (B) node[sloped,midway,right=-0.3cm,below=-0.1cm] {$\D_r(x,y)$};

    \draw [fill] (A) circle [radius=.1];
    \draw [fill] (B) circle [radius=.1];
    \draw [fill] (C) circle [radius=.1];
\end{tikzpicture}
\caption{}
        \label{fig:exact}
    \end{subfigure}
    ~ %add desired spacing between images, e. g. ~, \quad, \qquad, \hfill etc.
      %(or a blank line to force the subfigure onto a new line)
    \begin{subfigure}[b]{0.45\textwidth}

\begin{tikzpicture}[font=\boldmath]

    % Points
    \coordinate (A) at (0,0) {};
    \coordinate (B) at (5,0) {};
    \coordinate (C) at (3,2) {};
    \coordinate (D) at (2.4961,1.6641) {};
    \coordinate (E) at (3,0) {};
    \coordinate (F) at (3.605,0) {};
    \coordinate (G) at (4.1603,2.7735) {};
    % Alpha terminus
    \coordinate (a1) at (2,2.236) {};
    \coordinate (a2) at (2.666,2.981) {};

    % Label Points
    \node at (A) [above = 1mm ] {$r$};
    \node at (B) [right = 1mm ] {$y$};
    \node at (C) [above = 1mm ] {$x$};
    \node at (D) [left = 1mm ] {};

    % Draw the triangle
    %\path[fill=blue!10, fill=blue!5]  (A) -- (B) -- (C) -- (A);
    % \draw[ultra thick, gray!50, arrows={}, line cap=round]  (A) -- (C) node[sloped,midway,above=-0.1cm] {$\D(r,x)$};
    % \draw[ultra thick, blue!50,      arrows={}, line cap=round]  (C) -- (B) node[sloped,midway,above=-0.1cm] {$\D(x,y)$};
    % \draw[ultra thick, gray!50, arrows={},line cap=round]  (A) -- (B) node[sloped,midway,right=-0.5cm,below=-0.1cm] {$\D(r,y)$};
    %\draw [gray] (0,0) circle [radius=3.605];
    %\draw [gray] (0,0) circle [radius=5];
    %\draw [gray] (0,-6) arc [radius=6, start angle=-90, end angle= 90];
     \draw [gray] (5,0) arc [radius=5, start angle=0, end angle= 90];
     \draw [gray] (4,0) arc [radius=4, start angle=0, end angle= 90];
     \draw [gray] (3,0) arc [radius=3, start angle=0, end angle= 90];
     \draw [gray] (2,0) arc [radius=2, start angle=0, end angle= 90];
     \draw [gray] (1,0) arc [radius=1, start angle=0, end angle= 90];
     \draw [gray,dotted] (3.605,0) arc [radius=3.605, start angle=0, end angle= 90];
     %\draw[ultra thick, magenta, line cap=round] (E) -- (B) node[sloped,midway,right=-0.3cm,above=-0.1cm] {$\D_r'(x,y)$};
    \draw [fill, magenta] (D) circle [radius=.1];

    \draw[ultra thick, blue!80, line cap=round] (F) -- (B) node[sloped,midway,right=-0.3cm,below=-0.1cm] {$\D_r(x,y)$};

    \draw[ultra thick, magenta, line cap=round] (G) -- (D) node[sloped,midway,right=0.1cm,below=-0.05cm] {$\D_r'(x,y)$};
    \draw[ultra thick, black, dotted, arrows={->}] (C) -- (D);
    \draw[gray, arrows={<->}] (a1) -- (a2) node[sloped, midway, above=-0.1cm] {$\alpha$};
    \draw [fill] (A) circle [radius=.1];
    \draw [fill] (B) circle [radius=.1];
    \draw [fill] (C) circle [radius=.1];

\end{tikzpicture}
\caption{}
    \label{fig:approx}
    \end{subfigure}

    ~ %add desired spacing between images, e. g. ~, \quad, \qquad, \hfill etc.
      %(or a blank line to force the subfigure onto a new line)

    \caption{(a) illustrates that the $0-$submetric $\Dr(x,y):=|\D(r,x) - \D(r,y)|$ never overestimates $\D(x,y)$ by triangle inequality.
    (b) illustrates the $\alpha-$submetric $\Dconst$, which is based on rounding the distances between each element and $r$ down to fixed thresholds at $\alpha$ granularity. Notice that $x$ is rounded down from its original distance to the nearest threshold, whereas $y$ is already exactly on a threshold and is not changed by the rounding procedure. In this case, although $\Dconst(x,y) >\Dr(x,y)$, $\Dconst(x,y)$ is still less than $\D(r,y)$.
    } \label{fig:subcomparison}
\end{figure}

We  use ``submetric'' to refer to submetrics in general with unspecified $\alpha$, and \zsubmetric~to explicitly reference submetrics with no additive error.
Of course, given an $\alpha$-submetric it is possible to produce a \zsubmetric~via postprocessing.
\begin{proposition}\label{prop:post}
Given an $\alpha$-submetric $\D'$ of $\D$, the submetric $\D_{1}'(x, y):= \max\{0,\D'( x, y)-\alpha\}$ is a \zsubmetric~of $\D$.
\end{proposition}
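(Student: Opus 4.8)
The plan is to verify directly that $\D_1'$ meets the definition of a \zsubmetric~of $\D$: namely that it maps $U \times U$ into $[0,1]$ and that $\D_1'(u,v) \leq \D(u,v)$ for every pair $u,v \in U$ (a \zsubmetric~being an $\alpha$-\submetric~with $\alpha = 0$). Both requirements should follow immediately from the hypothesis that $\D'$ is an $\alpha$-\submetric, so the proof will be a couple of lines; there is no real obstacle here, and the only point that deserves a word of care is the codomain check.

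First I would record the defining inequality of an $\alpha$-\submetric: for all $u,v \in U$, $\D'(u,v) \leq \D(u,v) + \alpha$, hence $\D'(u,v) - \alpha \leq \D(u,v)$. Since also $0 \leq \D(u,v)$ (distances are nonnegative), taking the maximum of the two left-hand quantities preserves the bound, giving $\D_1'(u,v) = \max\{0, \D'(u,v) - \alpha\} \leq \D(u,v)$. This is exactly the \zsubmetric~distance condition.

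For the codomain, note that $\D'$ takes values in $[0,1]$ by definition of a \submetric, so $\D'(u,v) - \alpha \leq 1$ and therefore $\D_1'(u,v) = \max\{0,\D'(u,v)-\alpha\} \in [0,1]$. Combining the two observations, $\D_1' : U \times U \to [0,1]$ satisfies $\D_1'(u,v) \leq \D(u,v)$ for all $u,v$, i.e.\ $\D_1'$ is a \zsubmetric~of $\D$, completing the argument. (As remarked elsewhere in the paper, no triangle-inequality or identity-of-indiscernibles properties need to be checked, since the distance measure is not required to be a true metric.)
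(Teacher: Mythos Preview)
Your proposal is correct. The paper states this proposition without an explicit proof (it is treated as immediate from the definition of an $\alpha$-submetric), and your direct verification---subtracting $\alpha$ from the defining inequality and taking the max with $0$---is exactly the argument that is implicit there.
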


Now that we have specified submetrics based on exact distances from a representative and a consistent underestimator for a representative, we consider the nontriviality properties of these submetrics.
Notice that although the overestimate magnitude $\alpha$ is independent of $r$, the distances \textit{preserved} are highly dependent on the choice of $r$. (See Figure \ref{fig:repcomparison}.)
$\D_r$ exactly preserves the distance between $r$ and every $u \in U$, so we can conclude that $\D_r$ is $(\frac{1}{N},1)-$nontrivial for $\unifu$. (Notice that $r$ has a $\frac{1}{N}$ probability of selection in $\unifu$).
Likewise, $\D_r'$ with maximum contraction $\maxcontraction$ will preserve $\D(r,u) - \maxcontraction$ for all $u \in U$. Thus we can relate nontriviality for a distribution $\U$ to $\Pr_{u \sim \U}[\D(r,u)>\maxcontraction]$.
However we cannot make guarantees on distance preservation for distances between arbitrary pairs in $\U \times \U$ under $\Dr$ or $\Dconst$ without further information.
For example, $u$ and $v$ may be equally distant from $r$, so $\D_r(u,v)=0$, but may also be equally distant from each other.
%Thus, without further information concerning the distribution $\U$ and the properties of $\D$, it is difficult to state more general nontriviality guarantees.
Up until Section \ref{section:choosingreps}, we will conservatively consider nontriviality only as a function of distances preserved between pairs in $U \times \{r\}$ to focus our attention on learning approximations of $\D_r$ and $\D_r'$ which generalize to unseen samples. In Section \ref{section:choosingreps}, we return to this question and formulate the necessary properties of $\U$ and $\D$ to reason more generally about distance preservation and nontriviality.

As previously illustrated in Figure \ref{fig:repcomparison}, submetrics constructed based on different representatives will preserve different information about the true underlying metric. %For example, given a metric corresponding to Euclidean distance in two dimensions, a the submetric based on a representative $r_1$ at coordinates $(0,0)$ may consider $u$ at $(0,1)$ and $v$ at $(1,0)$ to be identical as they are equally distant from $r_1$, but an alternative representative $r_2$ at $(0,0.5)$ would distinguish them.
We therefore consider constructing submetrics by aggregating information from multiple representatives.
The following lemma and corollaries state that arbitrary mixing of submetrics, for example taking the maximum distance between a pair of elements in a set of submetrics, will result in a valid submetric and furthermore, the resulting submetric will have at most the additive error of the input submetrics.
\begin{lemma}\label{lemma:submerge}
Given a set of submetrics $\{\D_i | i \in [k]\}$ for $\D$, for any arbitrary mapping $F: U \times U \rightarrow [k]$,
$\D_{merge}(u,v):=\D_{F(u,v)}(u,v)$
is a submetric of $\D$.
\end{lemma}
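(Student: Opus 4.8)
The plan is to verify the single defining inequality of a submetric directly from the definition: for every $u,v \in U$, we need $\D_{merge}(u,v) \leq \D(u,v) + \alpha$ for the appropriate $\alpha$. Since $\D_{merge}(u,v) = \D_{F(u,v)}(u,v)$ by construction, the value of $\D_{merge}$ at the pair $(u,v)$ is literally the value of one particular input submetric $\D_{F(u,v)}$ at that same pair. So the claim reduces to invoking the submetric property of that specific $\D_i$, namely $\D_i(u,v) \leq \D(u,v) + \alpha_i$ where $\alpha_i$ is the additive error of $\D_i$.

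Concretely, first I would fix an arbitrary pair $u,v \in U$ and set $i = F(u,v)$. By hypothesis each $\D_i$ is a submetric of $\D$, say an $\alpha_i$-submetric, so $\D_{merge}(u,v) = \D_i(u,v) \leq \D(u,v) + \alpha_i \leq \D(u,v) + \max_{j \in [k]} \alpha_j$. Since $u,v$ were arbitrary, this shows $\D_{merge}$ is a $(\max_j \alpha_j)$-submetric of $\D$; in particular it is a submetric of $\D$, which is all the lemma asserts. This also gives the ``at most the additive error of the input submetrics'' remark from the surrounding text for free. The corollaries about $\maxmerge$ follow by taking $F(u,v) := \argmax_i \D_i(u,v)$, which is a valid (if data-dependent) mapping into $[k]$, so no separate argument is needed.

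There is essentially no obstacle here — the only thing to be careful about is that $F$ is genuinely arbitrary, so one cannot assume any structure (continuity, measurability, triangle-inequality compatibility) on $\D_{merge}$; but the definition of a submetric is purely pointwise in $(u,v)$ and imposes no such structure, so the pointwise argument suffices. The one thing worth stating explicitly is that the bound is uniform: the same $\alpha = \max_j \alpha_j$ works for all pairs even though different pairs may ``use'' different input submetrics, because we bound each $\alpha_i$ by the maximum before quantifying over pairs.
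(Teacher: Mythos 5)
Your proof is correct and takes essentially the same approach as the paper's one-line argument: fix the pair, note that $\D_{merge}(u,v)$ equals the value of the selected input submetric at that pair, and invoke that submetric's defining inequality pointwise. You are in fact slightly more careful than the paper, which writes $\D_i(u,v)\le\D(u,v)$ without the additive error term and defers the $\alpha$-tracking to Corollary \ref{cor:maxmergeconst}, whereas you handle $\max_j \alpha_j$ explicitly.
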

\begin{proof}
Notice that for all $i \in [k]$, $\D_{i}(u,v) \leq \D(u,v)$, and thus $\D_{merge}$ is also a submetric.
\end{proof}
\begin{corollary}
\label{cor:maxmerge} Given a set of submetrics $\{\D_i | i \in [k]\}$ for $\D$, define $\maxmerge(\{\D_i | i \in [k]\},u,v) := \max_{i \in [k]}\D_i(u,v)$. The $\maxmerge$ of a set of submetrics of $\D$ is a submetric of $\D$.
\end{corollary}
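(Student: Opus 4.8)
The plan is to observe that $\maxmerge$ is nothing more than a special case of the generic merge operation $\D_{merge}$ from Lemma \ref{lemma:submerge}, so the corollary is immediate. Concretely, I would define the selection map $F : U \times U \rightarrow [k]$ by $F(u,v) := \argmax_{i \in [k]} \D_i(u,v)$, breaking ties by smallest index. This map is well-defined precisely because the index set $[k]$ is finite, so the maximum is attained. With this choice, $\D_{F(u,v)}(u,v) = \max_{i \in [k]} \D_i(u,v) = \maxmerge(\{\D_i \mid i \in [k]\}, u, v)$ for every pair $(u,v)$, and Lemma \ref{lemma:submerge} applied to the arbitrary mapping $F$ then yields directly that $\maxmerge(\{\D_i \mid i \in [k]\}, \cdot, \cdot)$ is a submetric of $\D$.

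For completeness I would also give the one-line self-contained argument: for every $i \in [k]$ and all $u,v \in U$ we have $\D_i(u,v) \leq \D(u,v)$ by the definition of a submetric; taking the maximum over the finitely many indices $i$ preserves this inequality, so $\maxmerge(\{\D_i\}, u, v) = \max_{i \in [k]} \D_i(u,v) \leq \D(u,v)$. Since each $\D_i$ maps into $[0,1]$ and the max is over a finite set, the output also lies in $[0,1]$, so the function has the right type to be a submetric.

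There is essentially no obstacle here; the only point requiring (minimal) care is the well-definedness of the $\argmax$, which holds because $k$ is finite. If one also wants the stronger inheritance statement hinted at in the surrounding text — that $\maxmerge$ carries the largest additive error among its inputs, not worse — I would add: if each $\D_i$ is an $\alpha_i$-submetric, then $\D_i(u,v) \leq \D(u,v) + \alpha_i \leq \D(u,v) + \max_{j} \alpha_j$, and taking the maximum over $i$ gives $\maxmerge(\{\D_i\}, u, v) \leq \D(u,v) + \max_j \alpha_j$, so $\maxmerge$ is a $(\max_j \alpha_j)$-submetric of $\D$.
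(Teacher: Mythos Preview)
Your proposal is correct and matches the paper's intended argument exactly: the corollary is stated immediately after Lemma~\ref{lemma:submerge} with no separate proof, and your instantiation of $F(u,v) = \argmax_i \D_i(u,v)$ is precisely the reduction the paper has in mind. Your additional remarks on tie-breaking, the self-contained one-line bound, and the $\alpha$-submetric inheritance are all fine and in fact anticipate Corollary~\ref{cor:maxmergeconst}.
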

\begin{corollary}
\label{cor:maxmergeconst} Given a set of $\alpha-$submetrics $\{\D_i' | i \in [k]\}$ for $\D$, The $\maxmerge$ of $\{\D_i'\}$ is an $\alpha-$submetric of $\D$.
\end{corollary}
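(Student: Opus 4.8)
The plan is to reduce the claim to the defining pointwise inequality of an $\alpha$-submetric and simply push the maximum through it. First I would fix an arbitrary pair $u,v \in U$ and recall that, by hypothesis, each $\D_i'$ is an $\alpha$-submetric of $\D$, so $\D_i'(u,v) \le \D(u,v) + \alpha$ for every $i \in [k]$, and each $\D_i'$ takes values in $[0,1]$. Since $[k]$ is finite, $\maxmerge(\{\D_i'\},u,v) = \max_{i \in [k]} \D_i'(u,v)$ is attained at some index $i^\star$, whence $\maxmerge(\{\D_i'\},u,v) = \D_{i^\star}'(u,v) \le \D(u,v) + \alpha$; moreover the value lies in $[0,1]$, being a maximum of numbers in $[0,1]$. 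As $u,v$ were arbitrary, this is exactly the statement that $\maxmerge(\{\D_i'\})$ is an $\alpha$-submetric of $\D$.

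Alternatively, I would observe that $\maxmerge$ is the special case of the arbitrary merge in Lemma \ref{lemma:submerge} obtained by taking $F(u,v) := \argmax_{i \in [k]} \D_i'(u,v)$, so validity as a submetric is already given by Corollary \ref{cor:maxmerge}; the only additional point is that the additive error does not accumulate under the merge, and the one-line bound above shows the merged submetric inherits an additive error no larger than the worst of the input errors, which is $\alpha$ here since all inputs share that parameter. There is no genuine obstacle in this corollary — the only thing worth stating with care is that the maximum ranges over a finite set, so it is attained and the bound transfers with no supremum/limit subtlety.
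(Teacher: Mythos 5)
Your proof is correct and takes essentially the same route as the paper, which leaves this corollary implicit as an instance of Lemma \ref{lemma:submerge}: the pointwise bound $\D_i'(u,v) \leq \D(u,v) + \alpha$ passes through the (finite, hence attained) maximum exactly as you describe. Nothing further is needed.
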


Throughout this work, we will use the $\maxmerge$ of a set of representative submetrics as a core of our constructions. Below, we define $\DR$ and $\DconstR$ based on a set of representatives $R$.
\begin{definition}[Representative Set Submetric]
Given a set of representatives $R \subseteq U$, we define the representative set submetric $\D_R(u,v):=\maxmerge(\{\Dr | r \in R\},u,v)$ and representative set consistent underestimator submetric $\DconstR(u,v):\maxmerge(\{\Dconst | r \in R\},u,v)$ for all $u,v \in U$.
\end{definition}

\section{From human judgments to submetrics}\label{section:humansubmetrics}
In this section we consider the problem of determining which and how many queries to ask of our \human~in order to construct a submetric for a pre-specified universe $U$.
This setting can be viewed either as the problem of learning a metric over a fixed universe (e.g., determining a metric over the entire set of college applicants in a particular year) or as a process for generating training data to learn a submetric which generalizes to unseen samples as in Section \ref{section:generalizinghumans}, or as input to any other method for learning fairly with access to a sample of distance, e.g. \cite{kim2018fairness,rothblum2018probably}.

Naively, we could construct $\Dr$ with $O(N)$ queries to $\mreal$ by simply querying for every distance from the representative $r$. Furthermore, $\DR$ can be constructed in the same way by issuing $O(|R|N)$ queries to $\mreal$ to constuct each representative submetric and then merging.
Although the linear dependence on $N$ may seem good compared with $O(N^2)$, we anticipate that the cost of real-valued queries is high and increases with the number of queries. Although the number of queries is linear in $N$, the \textit{cost} in terms of human effort may not be.

We now work towards constructing a submetric from a sublinear
%achieving sub-linear dependence on $N$ in the
number of real-valued queries by supplementing with $O(N \log(N))$ triplet queries, at the cost of introducing bounded additive error.
%\note{Point out that even $N$ queries is too much.}
Our general strategy will be to show that given an \textit{ordering} consistent with the metric, we can learn a \submetric~from a constant or sublinear number of queries to $\mreal$ by rounding distances from the each representative down to fixed thresholds.
More concretely,
a representative consistent ordering for $r$ is an ordered list of elements from smallest to largest distance from $r$.
\begin{definition}[Representative-consistent ordering]\label{defn:consistentordering}
An ordering $\ord=\{r,x_1,x_2, \ldots \}$  of %subsets of
elements of $U$ is consistent with the representative element $r$ with respect to the metric $\D$ if for all  $i<j$, $\D(r,x_i) \leq \D(r,x_j)$.
% \begin{enumerate}
%     \item For all $i\neq j$, $s_i \cap s_j =\emptyset$,
%     \item For all $x,y \in s_i$, $|\D(r,x) - \D(r,y)| \leq \alphH$,
%     \item For all $i < k$, $\exists x \in s_i$ and $y \in s_{i+1}$ such that $|\D(r,x) - \D(r,y)| \geq \alphH$,
%     \item For $i<j$ if $x \in s_i$, $y \in s_j$, $\D(r,x) \leq \D(r,y)$.
% \end{enumerate}

%for all $x_i$, $x_j \in \ord$, for $i<j$, $\D(r,x_i)\leq \D(r,x_j)$.
\end{definition}

%\subsubsection{Thresholds}
Given the notion of a representative consistent ordering, we now show that rounding down
to ``threshold'' distances at granularity $\alpha$ is sufficient to produce an $\alpha-$submetric. Threshold rounding is also very useful for \textit{preserving} distances and can be helpful for generalization, as we will see in Sections \ref{section:generalizinghumans} and \ref{section:choosingreps}. We now formally define a Threshold Consistent Underestimator and prove a bound on the maximum contraction and additive error.

A threshold consistent underestimator is the function which rounds down the distance between and element $x \in U$ and a fixed representative $r$ to the nearest threshold in a prespecified set.
\begin{definition}[$\alpha-$consistent threshold underestimator]\label{def:tcunderestimator}
Given a universe $U$, a metric $\D: U \times U \rightarrow [0,1]$, a representative $r \in U$, and an ordered set of distinct thresholds $\T = \{0,t_1,\ldots,t_k\}$ (for constant $k$) where $t_i \in [0,1]$,
\[\threshunder(x) := \argmax_{t_i \in \T}\{t_i \leq \D(r,x) \}\]
is the threshold consistent underestimator wrt $\D,r,$ and $\T$. We refer to the maximum distance between any adjacent thresholds in $\T$ as $\alpha_\T := \max_{i \in [|\T|-1]}t_{i+1} - t_{i}$.
\end{definition}

It is simplest to consider $\T$ to consist of a set of evenly spaced thresholds at granularity $\alpha_\T$, although the analysis does not depend on this and certainly allows varied threshold spacing.
%\note{Say something about the case where it's harder to learn at good granularity far away?}
Lemma \ref{lemma:tunderestimator} formally states that an $\alpha_\T$-consistent threshold underestimator $\threshunder$ is an $\alpha_\T$-consistent underestimator that has contraction of distances between an element and the representative $r$ of at most $\maxcontraction=\alpha_\T$. %\note{add something about $\Dconst$} \note{possibly change to $\D_r^{\alpha_\T}$}
\begin{lemma}\label{lemma:tunderestimator}
Given 
$\threshunder(u)$ for $u \in U$, an $\alpha_\T-$consistent underestimator, where $\alpha_\T = \max_{i \in [|\T|-1]}t_{i+1} - t_{i}$. The submetric $\Dconst:= |\threshunder(u) - \threshunder(v)|$, is an $\alpha_\T-$submetric with maximum contraction with respect to $\Dr$ bounded by $\alpha_\T$, ie
$\Dr - \Dconst \leq \alpha_\T$.
\end{lemma}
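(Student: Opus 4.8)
The plan is to verify directly that $\threshunder$ satisfies the three conditions of Definition \ref{def:consistentunderestimator} with parameter $\alpha_\T$, and then to extract \emph{both} claims — that $\Dconst$ is an $\alpha_\T$-submetric and that its contraction relative to $\Dr$ is at most $\alpha_\T$ — from the single estimate that establishes condition (3). The only real work is careful accounting of one rounding step, so I do not expect the proof to be long.

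First I would isolate the properties of the rounding map $\rho(t):=\argmax_{t_i\in\T}\{t_i\le t\}$, which is well defined on $[0,1]$ because $0\in\T$, and (taking $\T$ to span the range of $\D(r,\cdot)$, e.g.\ evenly spaced at granularity $\alpha_\T$) satisfies $t-\alpha_\T\le\rho(t)\le t$ for all $t$, since the largest threshold not exceeding $t$ lies within the maximum gap $\alpha_\T$ of $t$. It is also monotone non-decreasing. Since $\threshunder(x)=\rho(\D(r,x))$, condition (1) ($\threshunder(x)\le\D(r,x)$) is immediate, and condition (2) follows from monotonicity of $\rho$: if $\D(r,u)\le\D(r,v)$ then $\threshunder(u)\le\threshunder(v)$, and equal distances from $r$ map to equal values, so the weak ordering by distance from $r$ is preserved.

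For condition (3) and the contraction bound I would argue simultaneously. Fix $u,v$ and assume without loss of generality $\D(r,u)\ge\D(r,v)$, so by monotonicity $\threshunder(u)\ge\threshunder(v)$ and both absolute-value expressions open with the same sign. Write $\threshunder(u)=\D(r,u)-\varepsilon_u$ and $\threshunder(v)=\D(r,v)-\varepsilon_v$ with $\varepsilon_u,\varepsilon_v\in[0,\alpha_\T]$ by the two-sided estimate on $\rho$. Then
\[
\Dconst(u,v)-\Dr(u,v)=\big(\threshunder(u)-\threshunder(v)\big)-\big(\D(r,u)-\D(r,v)\big)=\varepsilon_v-\varepsilon_u,
\]
hence $|\Dconst(u,v)-\Dr(u,v)|=|\varepsilon_u-\varepsilon_v|\le\alpha_\T$. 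Recalling $\Dr(u,v)=|\D(r,u)-\D(r,v)|$, this is exactly condition (3), and in particular it yields the claimed one-sided bound $\Dr(u,v)-\Dconst(u,v)\le\alpha_\T$. Combining with $\Dr(u,v)\le\D(u,v)$ (triangle inequality, Lemma \ref{nips:lemma:repsubmetric}) — or equivalently just invoking Lemma \ref{lemma:consistenterror} now that $\threshunder$ is known to be an $\alpha_\T$-consistent underestimator — gives $\Dconst(u,v)\le\D(u,v)+\alpha_\T$, so $\Dconst$ is an $\alpha_\T$-submetric of $\D$.

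The subtleties worth flagging are all bookkeeping. The main one is the direction of the bound: because $u$ and $v$ can be rounded down by different amounts, $\Dconst$ may \emph{exceed} $\Dr$ on some pairs (by at most $\alpha_\T$), so the lemma correctly asserts only $\Dr-\Dconst\le\alpha_\T$, and one must resist claiming $\Dconst\le\Dr$ pointwise. Second, condition (2) of Definition \ref{def:consistentunderestimator} must be read as preservation of the \emph{weak} order — ties go to ties, strict order need not be strictly preserved — which is exactly what monotonicity of a non-injective rounding map delivers. Finally, one should be explicit about where the assumption that $\T$ reaches to the top of the range of $\D(r,\cdot)$ is used, namely in the lower estimate $\rho(t)\ge t-\alpha_\T$; without it the contraction at the top of the range is not controlled by $\alpha_\T$.
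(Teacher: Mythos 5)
Your proof is correct and follows essentially the same route as the paper's: conditions (1) and (2) of Definition \ref{def:consistentunderestimator} hold directly from the definition of $\threshunder$, and condition (3) together with the contraction bound both come from the observation that each of $\D(r,u)$ and $\D(r,v)$ is rounded down by at most $\alpha_\T$, so $|\threshunder(u)-\threshunder(v)|$ lies within $\pm\alpha_\T$ of $\Dr(u,v)$. Your explicit $\varepsilon_u,\varepsilon_v$ bookkeeping and the remark about $\T$ covering the top of the range of $\D(r,\cdot)$ are just more careful spellings of what the paper states tersely.
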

\begin{proof}
By definition, $\threshunder(u)$ satisfies conditions 1 and 2 of a consistent underestimator.
%\[\Dconst(u,v) = |\threshunder(u) - \threshunder(v)|\]
Notice that $u$ and $v$ have distances from $r$ reduced from $\D(r,u)$ and $\D(r,v)$ by rounding down by at most $\alpha_\T$, and thus $|\threshunder(u) - \threshunder(v)| \in [\Dr(u,v) - \alpha_\T,\Dr(u,v) + \alpha_\T]$, satisfying the third condition of a consistent underestimator and the bound on the maximum contraction of $\Dconst$.

\end{proof}

This property of threshold consistent under estimators implies that if we can construct an ordering of the elements with respect to their distance from the representative and then label the elements at regular intervals, then we can produce a consistent underestimator.
%This property of threshold consistent underestimators implies that if we can order the elements with respect to a representative and  identify a set of elements corresponding to thresholds at a regular interval $\alpha$, and round down the distances of all other elements to the nearest threshold element, then we can construct an $\alpha-$consistent underestimator.

%The first step is to produce the ordering.

\subsection{Constructing metric consistent orderings}\label{section:orderings}
%Given Lemma \ref{lemma:orderingtometric}, our task is now to show how to construct metric consistent orderings with a small number of triplet queries. In particular, w
We can construct a metric consistent ordering by using $\mtrip$ as a comparator, as $\mtrip(r,x,y)$ indicates which of $x$ or $y$ has greater distance from $r$. Using such a comparator, we can build an ordered list via binary search.\footnote{In practice, there may be many other simple to evaluate query types which can also be used to produce an ordering. We focus on Triplet Queries as they have some existing usage in the literature, but these results can generalize to any query type which can be used to generate an ordering or as a comparator.} This procedure is detailed in Algorithm \ref{alg:tripletordering}.

% \paragraph{Proof of Lemma \ref{lemma:orderingtometric}.}
% \begin{proof}
% Notice that at most 3 queries to $\mreal$ are made per recursive step. There are at most $1/\alpha$ recursive steps, as there can only be $1/\alpha$ regions of at least size $\alpha$ in $[0,1]$. However, in the worst case, even if $1/\alpha < \log(N)$, there may be a large number of points within the region  $[\frac{i}{\alpha},\frac{i+1}{\alpha}]$ which result in multiple `unnecessary' recursive steps. However, the total number of recursive steps is bounded by $O(\log(N))$ as each step cuts the number of elements in half.

% To reason about the error, recall the placement of elements in Algorithm \ref{alg:singlerepresentative}. Denote the distance that would have been chosen by Algorithm \ref{alg:singlerepresentative} by $p_u^*$. Notice that in Algorithm \ref{alg:orderingtometric} each element $u \in U$ is placed at distance $p_u \in [p_u^*,p_u^*+\alpha]$ from $r$. Therefore, each element is distance at most $\alpha$ from a position under which the resulting metric was individually fair, and so the distance between any pair can be stretched by at most an addition of $\alpha$.

% \end{proof}

%\subsubsection*{Metric Consistent Orderings}\label{appendix:consistentorderings}
\begin{algorithm}[]
  \caption{\showalgcaption{$\mathsf{TripletOrdering}(\mtrip,U,r)$}}
  \label{alg:tripletordering}
\begin{algorithmic}[1]
  \State {\bfseries Input:} the universe $U$, a representative $r \in U$.
  \Procedure{$\mathsf{TripletOrdering}$}{$\mtrip,U,r$}
  \State Initialize an empty list $\ord$%$\Order \leftarrow \{r\}$
  \State Append $\{r\}$ to $\ord$
%  \State $\insertleft(\Order,r,pop(U))$
  \While{$U$ is not empty}
    \State $\mathsf{BinaryInsert}(r,\ord,0,\size(\ord),pop(U),\mtrip)$
  \EndWhile
  \State \textbf{return} $\ord$
  \EndProcedure
  \State
  \LineComment{Inputs: $L$ an ordered list of element sets, the list range delimiters $b$ and $e$, the element to insert $x$, an interface to the \human, $\mtrip$.}
  \Function{$\mathsf{BinaryInsert}$}{$r,L,b,e, x,\mtrip$}
   \If{$b=e$}
     %\State $\mathsf{MergeSet}(r,L,b,x,\mtrip)$
     \State check relative position compared with $L[b]$.
     \If{$\mtrip(r,x,L[b])$=0}
         \State Insert $x$ at position $b+1$ in $L$
     \Else
         \State Insert $x$ at position $b-1$ in $L$
     \EndIf
     \State insert at relative position
     \State \textbf{return}
   \EndIf
  \State $\midpoint \leftarrow \midpointof(L)$
  \State $c\leftarrow \mtrip(r,x,L[\midpoint]))$
  \If{$c =0$}
    \State $\mathsf{BinaryInsert}(r,L,\midpoint,e,(r,x),\mtrip)$
  \Else
    \State $\mathsf{BinaryInsert}(r,L,b,\midpoint,(r,x),\mtrip)$
  \EndIf
  \EndFunction

%   \State
%   \LineComment{Inputs: a set of elements $s$ all with difference in distance from $r$ less than $\alphH$, the representative $r$, the element to insert $x$, an interface to the \human, $\mtrip$.}
%   \Function{$\mathsf{CompareElementWithSet}$}{$s,r,x,\mtrip$}
%     \State $c \leftarrow 0$
%     \For{$y \in s$}
%         \State $c \leftarrow \mtrip(r,x,y)$
%         \LineComment{$x$ is $\alphH$ indistinguishable from at least one element in the set, so it should be merged with this set, and the set may have to be split.}
%         \If{$c < 0$}
%             \State \textbf{return} $c$
%         \EndIf
%     \EndFor
%     \State \textbf{return} $c$
%   \EndFunction
  %\algstore{tripalg}
\end{algorithmic}
\end{algorithm}

\begin{lemma}\label{lemma:tripletordering}
Given a universe $U$ and a representative $r \in U$, Algorithm \ref{alg:tripletordering} produces a representative consistent ordering $L$ for $r$ from $O(N \log(N))$ queries to $\mtrip$.
\end{lemma}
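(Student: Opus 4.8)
\textbf{Proof proposal for Lemma \ref{lemma:tripletordering}.}

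The plan is to establish two things: correctness (the list $L$ returned is a representative-consistent ordering for $r$) and the query bound ($O(N\log N)$ calls to $\mtrip$). For correctness, I would argue by a loop invariant on the outer \textbf{while} loop: before and after each iteration, $\ord$ is a representative-consistent ordering of the elements inserted so far, i.e.\ for all positions $i<j$ in $\ord$, $\D(r,\ord[i])\le \D(r,\ord[j])$. The invariant holds trivially after initialization, since $\ord=\{r\}$ is a single-element list. The inductive step reduces to showing $\binaryinsert$ places the popped element $x$ into a representative-consistent list $L$ so that the result is again representative-consistent. Here I would use the fact that $\mtrip(r,x,y)$ returns $1$ exactly when $\D(r,x)<\D(r,y)$ and $0$ when $\D(r,y)\le\D(r,x)$, so each comparison against $L[\midpoint]$ correctly tells us whether $x$'s distance from $r$ is below or at-least the midpoint element's distance; standard binary-search reasoning then shows $x$ lands in a position consistent with the (weak) ordering by distance from $r$. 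I would note that ties ($\D(r,x)=\D(r,L[i])$) are fine because Definition \ref{defn:consistentordering} only requires a weak ordering, so either side of an equal-distance element is an acceptable insertion point.

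For the query bound, each call to $\binaryinsert$ on a list of current size $m$ performs $O(\log m)$ triplet queries (one per level of the binary recursion, plus a constant for the base case). Since the list grows from size $1$ to size $N$, the total is $\sum_{m=1}^{N} O(\log m) = O(N\log N)$ queries to $\mtrip$, and no other calls to the arbiter interfaces are made. I would also remark that all the work besides the queries (list insertions, midpoint computation) is polynomial and does not affect the query count.

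The main obstacle I expect is bookkeeping in the base case of $\binaryinsert$ rather than anything deep: when $b=e$, the pseudocode compares $x$ against $L[b]$ and inserts at position $b+1$ or $b-1$, and one must check this does not produce an off-by-one error or violate the ordering at the boundaries of the list (e.g.\ inserting before position $0$ or when the recursion has narrowed to the leftmost/rightmost slot). I would handle this by carefully stating the precondition maintained by the recursion — that $x$'s correct insertion point lies within the current $[b,e]$ window — and verifying it is preserved by each recursive call, so that when the window collapses to a single index the final placement is forced and consistent. A secondary subtlety is that the arbiter's responses are assumed consistent with $\D$ (model assumption (3) and the definition of $\mtrip$), which is what licenses treating $\mtrip(r,\cdot,\cdot)$ as a faithful comparator throughout; I would invoke this explicitly.
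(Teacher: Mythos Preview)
Your proposal is correct and follows essentially the same approach as the paper: both argue correctness via binary-search insertion with $\mtrip$ as the comparator and obtain the $O(N\log N)$ query bound from $N$ insertions each costing $O(\log N)$ triplet queries. Your treatment is simply more detailed than the paper's (explicit loop invariant, handling of ties, base-case bookkeeping), which gives a terse two-paragraph argument.
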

The proof follows from a straightforward analysis of the binary search procedure with $\mtrip$ used for comparisons.
\begin{proof}
We consider correctness and query complexity separately.

\paragraph{Query complexity.} Notice that $\binaryinsert$ is called $N$ times, once for each element. Each recursive call to $\binaryinsert$ eliminates at least half of the sets in $L$ under consideration, and so $\binaryinsert$ has recursion depth of $O(\log(N))$. Each recursive call to $\binaryinsert$ makes a single call to $\mtrip$.
Thus the total number of queries to $\mtrip$ is $O(N\log(N))$.

\paragraph{Correctness.} Each element is inserted into an ordered list via binary search, and as such every element earlier in the list is at least as close to $r$ as any element later in the list.
\end{proof}

\subsection{Constructing $\alpha-$submetrics from orderings}

Algorithm \ref{alg:orderingtometric}, below, outlines the process of labeling and ordering by distance from the representative at a particular granularity, $\alpha$. Algorithm \ref{alg:orderingtometric} repeatedly splits the input ordering into contiguous ranges of elements until the difference in distances between the first and last elements in the range to the representative are at most $\alpha$. Once each  range has reached the appropriate size, the distance between each element in the range and the representative is then set to the minimum distance in its range, which maintains a weak ordering of distances from the representative and corresponds to rounding $\D(r,x)$ down by no more than $\alpha$.\footnote{All arrays are indexed from $0$ in all algorithms.}

\begin{algorithm}[]
  \caption{\showalgcaption{$\orderingtosubmetric(\ord, r, \alpha, \mreal)$}}
  \label{alg:orderingtometric}
\begin{algorithmic}[1]
  \LineComment{Inputs: the representative $r$, a representative consistent ordering $\ord$ consistent with $r$, an error parameter $\alpha$, an interface to the \human, $\mreal$. Returns a submetric $D_r': U\times U \rightarrow [0,1]$.} 
  \Procedure{$\orderingtosubmetric$}{$\ord,r,\alpha,\mreal$}
  \State Initialize $f_r(x_i) = 0$ for all $x_i \in \ord$
  \State $\mathsf{SplitList}(\ord,r,\alpha,\mreal,f_r)$
%   \For{$x_i, x_j \in \ord$}
%   \State Set $\D_r'(x_i,x_j)=|f_r(x_i) - f_r(x_j)|$
%   \EndFor
  \State \textbf{return} $\D_r'(x,y):= |f_r(x) - f_r(y)|$
  \EndProcedure
  \State
  \Function{$\mathsf{SplitList}$}{$\ord,r,\alpha, \mreal,f_r$}
  \State $d_{bottom} = \mreal(\ord[0],r)$
  \State $d_{top} = \mreal(\ord[\size(\ord)-1],r)$
  \If{$d_{top} - d_{bottom} \leq \alpha$}
  \ForAll{$x_i \in \ord$}
  \State Set $f_r(x_i)=d_{bottom}$
  \EndFor
  \Else
  \State $\midpoint \leftarrow \midpointof(\ord)$
  \State $\mathsf{SplitList}(\ord[0,\midpoint], r, \alpha, \mreal,f_r)$
  \State $\mathsf{SplitList}(\ord[\midpoint,\size(\ord)-1], r, \alpha, \mreal,f_r)$
  \EndIf
  \EndFunction

  \end{algorithmic}
\end{algorithm}

Lemma \ref{lemma:orderingtometric} states that given a representative consistent ordering,
an $\alpha-$submetric can be constructed via Algorithm \ref{alg:orderingtometric} with $O(\maxalphlog)$ queries to $\mreal$.
%the number of queries to $\mreal$ can be made constant.
Algorithm \ref{alg:orderingtometric} utilizes the representative consistent ordering to make fewer queries to $\mreal$ by labeling elements in the ordering with distances at granularity $\alpha$ from $r$ and rounding intermediate elements to produce an $\alpha-$consistent threshold underestimator, which is then used to construct an $\alpha-$submetric.\footnote{In Algorithm \ref{alg:orderingtometric} we use ``Set'' and ``Initialize'' to mean setting or initializing a global copy of $f_r$ to avoid tedious bookkeeping. We also use $\midpointof$ to specify the midpoint function, which chooses the midpoint for odd length lists and rounds down for even length lists.}

\begin{lemma}\label{lemma:orderingtometric}
Given a universe $U$, and an ordering $\ord$ consistent with a representative $r \in U$ for a metric $\D$, Algorithm \ref{alg:orderingtometric} produces an \asubmetric~of $\D$ which preserves the distance between each element $u \in U$ and $r$ (with additive error $\alpha$) from $O(\maxalphlog)$ queries to $\mreal$.
\end{lemma}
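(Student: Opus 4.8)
The plan is to break the statement into three pieces: (i) the output is an $\alpha$-submetric, (ii) the distance between each $u \in U$ and $r$ is preserved up to additive error $\alpha$, and (iii) the real-valued query complexity is $O(\maxalphlog)$. For (i) and (ii), the key claim is that the global function $f_r$ produced by $\mathsf{SplitList}$ is exactly an $\alpha_\T$-consistent threshold underestimator in the sense of Definition \ref{def:tcunderestimator}, or at least satisfies the conditions of an $\alpha$-consistent underestimator from Definition \ref{def:consistentunderestimator} with maximum contraction at most $\alpha$; then Lemma \ref{lemma:consistenterror} (equivalently Lemma \ref{lemma:tunderestimator}) immediately gives that $\Dconst(x,y) := |f_r(x) - f_r(y)|$ is an $\alpha$-submetric. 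So the real content is verifying the three properties of $f_r$.

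First I would argue correctness of $f_r$ by induction on the recursion tree of $\mathsf{SplitList}$. Because $\ord$ is a representative-consistent ordering, every contiguous sub-range $\ord[i,j]$ passed to a recursive call is itself sorted by distance from $r$, so $d_{bottom} = \D(r,\ord[i])$ is the minimum and $d_{top} = \D(r,\ord[j])$ the maximum distance in that range. When a range is "finalized" (the $d_{top} - d_{bottom} \le \alpha$ branch), every element $x$ in it gets $f_r(x) = d_{bottom}$, which satisfies: (1) $f_r(x) \le \D(r,x)$ since $d_{bottom}$ is the range minimum; and the rounding-down amount $\D(r,x) - f_r(x) \le d_{top} - d_{bottom} \le \alpha$. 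For property (2), the weak ordering: ranges are finalized in a way that respects the global order (a left sub-range's elements all precede a right sub-range's in $\ord$, hence have no-larger true distance, hence get a no-larger $d_{bottom}$ label — this needs a small monotonicity observation across sibling recursive calls, namely that the $d_{bottom}$ assigned to a later range is $\ge$ the $d_{bottom}$ assigned to an earlier one because $d_{bottom}$ for the later range is the true distance of an element that comes later in $\ord$). For property (3), since each of $\D(r,u)$ and $\D(r,v)$ is decreased by at most $\alpha$, $|f_r(u) - f_r(v)| \in [\Dr(u,v) - \alpha, \Dr(u,v) + \alpha]$, exactly as in the proof of Lemma \ref{lemma:tunderestimator}. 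Properties (1) and (3) together with $|\D(r,u)-\D(r,v)| \le \D(u,v)$ (triangle inequality) give the $\alpha$-submetric conclusion and the "preserves $\D(r,u)$ up to $\alpha$" conclusion (take $v = r$, noting $f_r(r) = 0 = \D(r,r)$).

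The step I expect to be the main obstacle is the query-complexity bound of $O(\maxalphlog) = O(\max\{\frac{1}{\alpha}, \log N\})$. The recursion splits a range into two halves, so the recursion tree has depth $O(\log N)$ and $\mathsf{SplitList}$ makes $O(1)$ calls to $\mreal$ per node; a naive bound would be $O(N)$ nodes. The improvement comes from observing that a node only recurses if its range has $d_{top} - d_{bottom} > \alpha$, and the ranges at any fixed depth are disjoint intervals of $\ord$ whose associated distance-intervals $[d_{bottom}, d_{top}]$ are also essentially disjoint (they can share at most an endpoint since the ranges share at most one boundary element in the $\ord[\midpoint,\cdot]$ overlap — I'd want to be slightly careful about the off-by-one in how $\midpointof$ splits, but it's inconsequential). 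Since these distance-intervals live in $[0,1]$ and each recursing node's interval has length $> \alpha$, at most $O(\frac{1}{\alpha})$ nodes at each depth can recurse; combined with depth $O(\log N)$ this naively gives $O(\frac{1}{\alpha}\log N)$, so to get the claimed $O(\max\{\frac1\alpha, \log N\})$ I would instead count more carefully: the total number of recursing nodes across the whole tree is $O(\frac1\alpha)$ (each corresponds to a distinct "still-too-wide" interval, and once you pass below width $\alpha$ you stop, so the count of internal nodes is governed by how many times you can bisect the interval $[0,1]$ before pieces drop below length $\alpha$, i.e. $O(\frac1\alpha)$), while the number of leaves is at most (number of internal nodes) $+1$ but also bounded by $N$ — and a leaf that is reached purely because the range has shrunk to a single element contributes to the $\log N$ term. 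Making this accounting precise — distinguishing leaves cut off by the $\alpha$-width criterion from leaves that are singletons forced by exhausting $\ord$ — is the delicate part, and I would organize it as: (\#internal nodes) $= O(\frac1\alpha)$ from the width argument, (\#leaves) $= O(\frac1\alpha) + O(\log N)$ — the $\log N$ accounting for the chains of bisections needed to isolate the $O(1/\alpha)$ relevant intervals down to their endpoints — hence $O(\max\{\frac1\alpha,\log N\})$ total $\mreal$ queries.
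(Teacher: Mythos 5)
Your treatment of the error magnitude and of the ``preserves $\D(r,u)$'' claim is correct and is essentially the paper's argument: each finalized range assigns $f_r(x)=d_{bottom}$, which rounds $\D(r,x)$ down by at most $d_{top}-d_{bottom}\leq\alpha$, so $f_r$ is an $\alpha$-consistent (threshold) underestimator and Lemma \ref{lemma:consistenterror} finishes the job. No complaints there.

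The query-complexity accounting, however, has a genuine gap, and it is exactly at the step you flagged as delicate. Your central claim --- that the total number of recursing (internal) nodes is $O(\frac{1}{\alpha})$ because ``the count is governed by how many times you can bisect the interval $[0,1]$ before pieces drop below length $\alpha$'' --- is false. $\splitlist$ bisects the \emph{element list}, not the distance interval: one child can inherit essentially the entire distance-width of its parent. Take one element at distance $0$ and $N-1$ elements at distance $\frac{1}{2}$, with $\alpha<\frac{1}{2}$: every left child down to the singleton $\{r\text{-side element}\}$ has width $\frac{1}{2}>\alpha$, giving a chain of $\Theta(\log N)$ internal nodes whose distance intervals are nested, not disjoint (disjointness holds only within a fixed depth, which yields the weaker $O(\frac{1}{\alpha}\log N)$ bound you correctly dismissed as insufficient). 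Your subsequent attempt to recover by charging ``chains of bisections'' to the leaf count does not repair this, since those chains consist of internal nodes. The real difficulty --- and the part the paper's proof is almost entirely devoted to --- is bounding the \emph{composition} of the two node types: nodes where both children recurse (at most $O(\frac{1}{\alpha})$ in total, since these genuinely split a width-$>\alpha$ interval into two disjoint width-$>\alpha$ intervals) and nodes where only one child recurses (which form chains of length $O(\log N)$ because the recursing child has half the elements). The paper handles this with an exchange argument on the recursion tree showing the worst case puts all two-recursing-children nodes above all one-recursing-child nodes, and then sums the resulting chain lengths; your proposal never confronts this interaction, which is precisely where a bound of the form $\frac{1}{\alpha}\log(\alpha N)$ (many separated clusters, each requiring a long chain) threatens to exceed $O(\maxalphlog)$. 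As written, your argument does not establish the claimed bound.
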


%\paragraph{Proof of Lemma \ref{lemma:orderingtometric}.}
\begin{proof}
We address query complexity and error magnitude separately for clarity.

\textbf{Query complexity.}
At most 2 queries to $\mreal$ are made per call of $\mathsf{SplitList}$.
Thus to analyze query complexity, it is sufficient to analyze the number of calls to $\splitlist$.
There are three conditions in which $\splitlist$ makes additional recursive calls:
\begin{enumerate}
    \item $\splitlist$ makes two calls which immediately terminate, i.e. both sides of the split represented ranges with $d_{top}-d_{bottom} \leq \alpha$.
    \item $\splitlist$ makes two calls which do not immediately terminate,  i.e. both sides of the split represented ranges with $d_{top} - d_{bottom}> \alpha$.%distances of $\geq \alpha$.
    \item $\splitlist$ makes one immediately terminating call and one not immediately terminating call, i.e. one side of the split represented a range of $> \alpha$ and the other $\leq\alpha$
\end{enumerate}

Notice that at any point we have identified some number of ranges of size at least $\alpha$, call this number $k$, and some number of elements left to be labeled, call this number $m$. There are at most $\frac{1}{\alpha}+1$ disjoint continuous ranges of at least size $\alpha$ in $[0,1]$, so $k \leq \frac{1}{\alpha} + 1$.
Likewise, for additional calls to be made $m$ must be greater than $0$.

Consider how each call type changes $k$ and $m$:
\begin{enumerate}
    \item Type 1 calls decrease $m$ by $\frac{N}{2^{i}}$ where $i$ is the recursion depth of the call, as every element in the current range will be labeled in the next step, and may increase $k$ by at most 1.
    \item Every type 2 call increases $k$ by 1, as an existing range of size $>\alpha$ is split into two disjoint ranges of size $>\alpha$.
    \item Type 3 calls decrease $m$ by $\frac{N}{2^{i+1}}$, as $\frac{1}{2}$ of the current range will be labeled in the next step.
\end{enumerate}

If all of the calls to $\splitlist$ are type 1 or 2, then at most $O(\frac{1}{\alpha})$ calls are made as there are at most $O(\frac{1}{\alpha})$ disjoint continuous ranges of length $\alpha$ in $[0,1]$. %as each one increases $k$ by 1.
If all calls to $\splitlist$ are type 1 or type 3, then at most $\log(N)$ calls are made as at least half of the elements in the range are labeled by the subsequent terminating call(s).

If there are a mix of all three types, notice that there can still be at most $O(\frac{1}{\alpha})$ calls in the entire recursive tree of type 2. Thus it remains to consider how mixing type 3 calls with type 2 calls impacts the total number of calls.

As a warm-up, suppose a type 3 call is issued at depth $i$. If all of its children are type 1 or 3 calls, then there can be at most $O(\log(\frac{N}{2^{i+1}}))$ children, as the parent call and each child call labels at least $\frac{1}{2}$ of its range.
Now suppose a type 2 call is issued at depth $i$. If its children are all type 1 or 3 calls, then there can be at most $O(2\log(\frac{N}{2^{i+1}}))$ children, as the parent call spawns \textit{two} sub-trees with initial size $\frac{N}{2^{i+1}}$ as opposed to just one. % Notice that for any $i+1 < \log(N)/2$, that the total number of potential child type 3 calls of a type 2 call is larger than for a type 3 call at the same depth.
Therefore we know the worst case sequence of calls includes both type 2 calls and type 3 calls.

We now show that the worst case recursion tree has no type 2 calls as children of type 3 calls.
Suppose for the sake of contradiction that a valid recursion tree $T$ has a node $A$ at depth $i$ of type 3 with a child $B$ of type 2.
Recall that type 3 nodes have only one child which does not immediately terminate, but type 3 nodes have two.
Call $B$'s children $B_1$ and $B_2$. At depth $i$, $m$ increases by $\frac{N}{2^{i+1}}$, as half of its elements are set to be labeled by the type 3 node A. At depth $i+1$, no additional elements are set to be labeled, but $k$ increases to $k+1$.

Now consider an alternative tree, $T'$ which is identical to $T$ in every way except: (1) Node $A$ is changed to type 2, (2) Node $A$ has two new children $A'_1$ and $A'_2$ of type 3, (3) $A'_1$'s non-terminating child is $B_1$ and $A'_2$'s non-terminating child is $B_2$. At depth $i$, $k$ increases to $k+1$. At depth $i+1$, $A'_1$ and $A'_2$ each set $\frac{N}{2^{i+2}}$ elements to be labeled, so $m$ increases by $\frac{N}{2^{i+1}}$. Thus, $T'$ is a valid recursion tree, but it exceeds the number of calls in $T$ by one.

Thus the worst case recursion tree will have some constant number of type 2 nodes in the highest levels which transition to type 3 and 1 nodes in the deeper levels. Suppose the type 2 nodes reach depth $\rho$, where $2^\rho$ is bounded by $O(\frac{1}{\alpha})$ as the number of type 2 nodes is a constant bounded by $O(\frac{1}{\alpha})$. Then there will be $2^\rho$ nodes at depth $\rho$ with $\frac{N}{2^\rho}$ elements in the range of each node. Each node can have at most $O(2\log(\frac{N}{2^{\rho+1}}))$ type 1 or 3 descendants, so the total number of nodes in the recursion tree is $O(2^{\rho+1}(\log(N) - \log(2^{\rho + 1})) = O(\log(N))$.

% Thus in the most pathological case, we could use $\frac{1}{\alpha}$ type 2 calls to split the list into $\frac{1}{\alpha}-1$ segments with range slightly larger than $\alpha$ with $\alpha N = \frac{N}{2^{\log(\frac{1}{\alpha})}}$ elements in each segment. For each segment, recall that we can execute at most $O(2\log(\frac{N}{2^{i+1}}))$ type 3 calls. Thus we could perform a total of $O(\frac{2}{\alpha}\log(\frac{N}{2^{\log(\frac{1}{\alpha})}})) = O(\frac{2}{\alpha}(\log(N) - \log(\frac{1}{\alpha}) ) )=O(\log(N))$ type 3 calls, as $\alpha$ is constant.
% Thus in the worst case, we make $O(\max\{\frac{1}{\alpha},\log(N)\})$ queries to $\mtrip$.

However, the worst case analysis above must consider the most pathological cases. Notice that for every type 3 query made, there must have been half of the elements in the range in a clump of distances from $r$ with less than $\alpha$ difference and the other half with distance greater than $\alpha$. If distances from each representative are distributed more smoothly, then this is unlikely to happen too many times.

\paragraph{Overestimate Error}
To reason about the error, notice that $f_r(x_i) \in [\D(r,x_i)-\alpha, \D(r,x_i)]$, as each element's distance from $r$ is rounded down by at most $\alpha$. Thus $f_r$ is an $\alpha-$consistent underestimator (and also a threshold consistent underestimator) and the final construction of $\D_r'$ is an $\alpha-$submetric by Lemma \ref{lemma:consistenterror}.
\end{proof}
%}
% A complete proof is included in the Appendix, but the intuition for the query complexity follows from analysis of the number of queries required to produce the grid. The error follows from the observation that each element is placed at most $\alpha$ closer to the representative than its true distance.

The primary benefit of a sublinear number of queries to $\mreal$ is that the \human~needs to maintain consistency with a smaller set of previous outputs. Furthermore, \humans~may only be able to answer real-valued queries to within some minimum granularity, and stating the granularity up front may help them avoid wasting time verifying the consistency of ultimately inconsequentially small distance adjustments.\footnote{See Section \ref{section:tctc} For more complete treatment of a model in which the \arbiter~has limited distinguishing power.}
The following theorem, which states that a representative consistent underestimator submetric $\Dconst$ can be constructed in a sublinear number of queries to $\mreal$ and $O(N\log(N))$ queries to $\mtrip$, is an immediate consequence of Lemmas \ref{lemma:tripletordering} and \ref{lemma:orderingtometric}.

\begin{theorem}\label{theorem:labelqueries}
Given access to $\mreal$ and $\mtrip$, an $\alpha-$\submetric~can be constructed from $O(\maxalphlog)$ queries to $\mreal$ and $O(N\log(N))$ queries to $\mtrip$ which preserves distances (up to the additive error) from a representative $r$.
\end{theorem}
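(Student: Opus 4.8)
The plan is to obtain this theorem as a direct composition of Lemmas~\ref{lemma:tripletordering} and~\ref{lemma:orderingtometric}, with essentially the only work being to check that the output of the first procedure meets the input contract of the second. First I would run Algorithm~\ref{alg:tripletordering} ($\mathsf{TripletOrdering}$) on the universe $U$ and the representative $r$. Lemma~\ref{lemma:tripletordering} guarantees that this produces an ordering $\ord$ that is representative-consistent for $r$ in the sense of Definition~\ref{defn:consistentordering}, using $O(N\log(N))$ queries to $\mtrip$ and no queries to $\mreal$.

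Next I would pass $\ord$ to Algorithm~\ref{alg:orderingtometric}, i.e.\ compute $\orderingtosubmetric(\ord, r, \alpha, \mreal)$. Since $\ord$ is representative-consistent, Lemma~\ref{lemma:orderingtometric} applies directly: its output $\D_r'$ is an $\alpha$-submetric of $\D$ built from $O(\maxalphlog)$ queries to $\mreal$. For the distance-preservation claim I would invoke the error analysis inside that lemma's proof, which shows $f_r(x) \in [\D(r,x)-\alpha, \D(r,x)]$ for every $x \in U$; applying this at $x$ and noting $f_r(r) = 0 = \D(r,r)$ gives $|\,\D_r'(r,x) - \D(r,x)\,| = |\,f_r(x) - \D(r,x)\,| \le \alpha$, so distances to $r$ are preserved up to additive error $\alpha$ and are only ever underestimated, consistent with the consistent-underestimator framework of Definition~\ref{def:consistentunderestimator} and Lemma~\ref{lemma:consistenterror}.

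Finally I would add up the query costs over the two sequential phases: $O(N\log(N))$ triplet queries in the ordering phase plus $O(\maxalphlog)$ real-valued queries in the labeling phase, with no further queries since the second algorithm consumes only the ordering produced by the first. This yields exactly the stated bounds.

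I do not expect a genuine obstacle here: the substantive content---the binary-search query count for the ordering, and in particular the recursion-tree argument bounding the number of $\mathsf{SplitList}$ calls by $O(\maxalphlog)$---is already discharged in the two cited lemmas, so the theorem is essentially bookkeeping. The only point needing a moment's care is to state precisely which quantities the phrase ``preserves distances from $r$'' refers to, namely $\D_r'(r,\cdot)$ versus $\D(r,\cdot)$, and to note that the rounding is one-directional so the error remains on the submetric side.
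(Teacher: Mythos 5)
Your proposal is correct and matches the paper exactly: the paper derives Theorem \ref{theorem:labelqueries} as an immediate consequence of composing Lemma \ref{lemma:tripletordering} (the triplet-query ordering) with Lemma \ref{lemma:orderingtometric} (the ordering-to-submetric labeling), which is precisely your two-phase argument with the same query accounting. Your extra care about the one-directional rounding and the meaning of ``preserves distances from $r$'' is consistent with the paper's error analysis and adds nothing that conflicts with it.
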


As before, we can also expand the expressiveness of the submetric by using $\maxmerge$, while still maintaining the same small additive error bound. Naively, this could be accomplished in $O(|R|\maxalphlog)$ queries to $\mreal$ given orderings for a set of representatives $R$ by applying Algorithm \ref{alg:orderingtometric} independently on each representative's ordering. However, the linear dependence on $|R|$ can be improved by using our third query type, \quadqs.

To see this, notice that the orderings, $\{\ord_r | r \in R\}$ can be merged into a \textit{single} ordering by distance from representative using \quadqs. To compare two elements from different lists, $\mquad((r_i,x),(r_j,y))$ will suffice to determine which is closer to its respective representative. Thus, we can use any standard sorted list merging approach to combine the sorted lists with respect to each specific representative $\{\ord_r| r \in R\}$ into a single sorted list $\ord_R$ of (element, representative) pairs sorted by distance of the element from its corresponding representative with $O(|R|N\log(|R|))$ queries to $\mquad$. The logic of Algorithm \ref{alg:orderingtometric} operating on this list of pairs goes through unchanged except for the representative used in the query to $\mreal$, and some bookkeeping to separate the labeled and rounded list of pairs back into individual representative orderings. Algorithm \ref{alg:listmerge} outlines this process.

The following theorem summarizes this combined result and states that given a set of representatives $R$, $\DconstR$ can be constructed with $O(|R|N\log(N))$ queries to $\mtrip$, $O(|R|N\log(|R|))$ queries to $\mquad$ and $O(\maxalphlogR)$ queries to $\mreal$.
\begin{theorem}\label{theorem:multiplerepquad}
Given a set of representatives $R$ and access to $\mreal,\mtrip,$ and $\mquad$, an $\alpha-$submetric can be constructed from $O(\maxalphlogR)$ queries to $\mreal$, $|R|N\log(N)$ queries to $\mtrip$ and $|R|N\log(|R|)$ queries to $\mquad$ which preserves distances (up to the additive error) from the set of representatives $R$.
\end{theorem}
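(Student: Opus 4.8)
The plan is to chain together the three ingredients already in hand: Algorithm~\ref{alg:tripletordering} to produce one representative-consistent ordering per $r \in R$, a $|R|$-way merge driven by $\mquad$ to fuse these orderings into a single list of (element, representative) pairs sorted by distance of the element from its own representative, and a reduction to Algorithm~\ref{alg:orderingtometric} run on that merged list to attach rounded distance labels; finally we recover a per-representative $\alpha$-consistent underestimator by projecting the labeled list back onto each $r$ and take $\maxmerge$ (cf.\ Algorithm~\ref{alg:listmerge}).

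Concretely, I would first invoke Lemma~\ref{lemma:tripletordering} once for each $r \in R$, producing orderings $\{\ord_r\}_{r\in R}$ using $O(N\log N)$ queries to $\mtrip$ apiece, hence $O(|R|N\log N)$ in total. Next I would merge the $|R|$ sorted lists --- each of length $N$, so $|R|N$ pairs overall --- into one list $\ord_R$ in which pair $(x,r)$ carries the sort key $\D(r,x)$. The only comparator needed is $\mquad((r_i,x),(r_j,y))$, which by definition is $1$ exactly when $\D(r_i,x) > \D(r_j,y)$ and $0$ otherwise (ties resolved consistently, which is harmless for producing a weak order), so a key comparison is a single $\mquad$ query; any standard $|R|$-way merge (a heap or tournament over the $|R|$ list heads) uses $O(|R|N\log|R|)$ such comparisons, giving the claimed $\mquad$ bound. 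Then I would run Algorithm~\ref{alg:orderingtometric} on $\ord_R$, modified only so that the call $\mreal(\cdot,r)$ uses the representative stored in the current pair; this labels every pair $(x,r)$ with a value $f_r(x) \in [\D(r,x)-\alpha,\D(r,x)]$. Projecting $\ord_R$ onto pairs whose second coordinate is a fixed $r$ recovers, for each $r$, an $\alpha$-consistent underestimator $f_r$ and hence the $\alpha$-submetric $\Dconst(u,v):=|f_r(u)-f_r(v)|$ by Lemma~\ref{lemma:consistenterror}; taking $\DconstR := \maxmerge(\{\Dconst\}_{r\in R},\cdot,\cdot)$ yields an $\alpha$-submetric by Corollary~\ref{cor:maxmergeconst}. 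Distance preservation from $R$ is then immediate: $f_r(r)=0$, so $\DconstR(r,u) \ge \Dconst(r,u) = f_r(u) \in [\D(r,u)-\alpha,\D(r,u)]$.

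The step I expect to be the main obstacle --- everything else is assembling prior lemmas --- is verifying that running Algorithm~\ref{alg:orderingtometric} on the \emph{merged} list, then projecting back, really does yield a valid $\alpha$-consistent underestimator for each individual $r$. The two points to check: (i) the sublist of $\ord_R$ with second coordinate $r$ is a subsequence of $\ord_R$, and since the merge is stable and the input list for $r$ was already sorted by $\D(r,\cdot)$, this sublist stays sorted by $\D(r,\cdot)$; and (ii) Algorithm~\ref{alg:orderingtometric} assigns each pair the minimum key of its contiguous range, and because the ranges are disjoint and appear left-to-right in a sorted list, the assigned values are non-decreasing along $\ord_R$ and therefore along each sublist --- so the weak ordering from $r$ is preserved, $f_r(x)\le\D(r,x)$, and the contraction is at most $\alpha$, exactly as in the proof of Lemma~\ref{lemma:orderingtometric}. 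For the $\mreal$ count I would observe that the recursion-tree analysis in the proof of Lemma~\ref{lemma:orderingtometric} never used that all elements share one representative, only that the list is sorted by the quantity being queried; it therefore transfers verbatim with $N$ replaced by $|R|N$, giving $O(\maxalphlogR)$ queries to $\mreal$. Collecting the three query bounds completes the proof; the remaining bookkeeping (threading the representative through the recursive calls, re-partitioning the labeled list into per-representative orderings) is routine and belongs in Algorithm~\ref{alg:listmerge}.
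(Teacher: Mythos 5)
Your proposal is correct and follows essentially the same route as the paper: per-representative orderings via Algorithm~\ref{alg:tripletordering} ($O(|R|N\log N)$ triplet queries), a $\mquad$-driven merge of the $|R|$ sorted lists into a single list of (element, representative) pairs ($O(|R|N\log|R|)$ quad queries), and the splitting/labeling procedure of Algorithm~\ref{alg:orderingtometric} run on the merged list of length $|R|N$ ($O(\log(|R|N))$ real-valued queries), followed by projection back to per-representative underestimators and $\maxmerge$. Your explicit verification that the projected sublists remain sorted and that the range-minimum labeling yields a valid $\alpha$-consistent underestimator per representative is a point the paper leaves implicit, but the argument is the same.
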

The proof of Theorem \ref{theorem:multiplerepquad} follows from a straightforward analysis of list merging, detailed in Algorithm \ref{alg:listmerge}.

%\paragraph{Proof of Theorem \ref{theorem:multiplerepquad}}
\begin{proof}
A representative consistent ordering for each $r \in R$ can be constructed via Algorithm \ref{alg:tripletordering} in $O(N\log(N))$ queries to $\mtrip$ for each representative, $O(|R|N\log(N))$ queries to $\mtrip$ total.

Algorithm \ref{alg:listmerge} given such a set of orderings constructs $\DconstR$, an $\alpha-$submetric which preserves distances up to the additive error from each representative $r \in R$. The modified version of $\orderingtosubmetric$ still requires $O(\maxalphlogR)$ queries to $\mreal$ as the list length has increased to $|R|N$.
Thus all that remains to prove the theorem is to reason about the number of queries to $\mquad$. Algorithm \ref{alg:listmerge} makes $N|R|$ calls to $\mathsf{BinaryInsert}$ as each element in each ordering is inserted at most once into a list of length at most $|R|$. Each $\mathsf{BinaryInsert}$ into a list of length $k$ requires $O(\log(k))$ queries to $\mquad$ as a comparator. Thus the total number of queries to $\mquad$ is $O(|R|N\log(|R|))$.
\end{proof}

\begin{algorithm}[]
  \caption{\showalgcaption{$\mergeorderings(\{\ord_r|r\in R\}, \alpha, \mquad)$}}
  \label{alg:listmerge}
\begin{algorithmic}[1]
\LineComment{Inputs: a set of orderings $\{\ord_r | r \in R\}$, an error parameter $\alpha$, a interfaces to the \human, $\mquad$ and $\mreal$.}
  \Procedure{$\multipleorderingtosubmetric$}{$\{\ord_r|r\in R\}, \alpha, \mquad, \mreal$}
  \State $\ord_{merge} \leftarrow \mergeorderings(\{\ord_r|r\in R\}, \alpha, \mquad, \mreal)$
%   \For{$r \in R$}
%     \State Initialize $f_r(x_i)=0$ for all $x_i \in \ord_r$
%   \EndFor
  \State $\{\Dconst | r \in R\} \leftarrow \mathsf{CreateSubmetrics}(\ord_{merge}, R, \alpha, \mreal)$ %modified to account for distances from different representatives and to return a consistent underestimator for each representative separately.
  %\For{$r \in R$}
  %  \State $\Dconst(x,y) := |f_r(x) - f_r(y)|$
  %\EndFor
  \State \textbf{return } $\DconstR(x,y):= \maxmerge(\{\Dconst | r \in R\}, x,y)$
  \EndProcedure

\State
  \LineComment{Inputs: a set of orderings $\{\ord_r | r \in R\}$, an interface to the \human, $\mquad$.}
  \Function{$\mergeorderings$}{$\{\ord_r|r\in R\}, \alpha, \mquad, \mreal$}
  \State Initialize an empty list $F$ for the final output
  %\State Initialize an ordered list $L$ with the first element in each ordering via binary search with $\mquad$
  \State Initialize an empty list $L$
  \For{$r \in R$}
    \State $\mathsf{BinaryInsert}(L, 0, \size(L),(\ord_r[0],r),\mquad)$
    \State Remove $\ord_r[0]$
  \EndFor
  \While{all $\ord_r$ are not empty}
  \State Remove the first element from $L$ and append it to $F$
  \If{$\ord_r$ is not empty}
    %\State Add the smallest remaining element from $\ord_r$ to $L$ via binary search with $\mquad$
    \State $\mathsf{BinaryInsert}(L, 0, \size(L),(\ord_r[0],r),\mquad)$
    \State Remove $\ord_r[0]$
  \EndIf
  \EndWhile
  \State \textbf{return} $F$
  \EndFunction

  \State
  \LineComment{Inputs: $L$ an ordered list of element representative pairs, the list range delimiters $b$ and $e$, the element to insert $(x,r)$, an interface to the \human, $\mquad$.}
  \Function{$\mathsf{BinaryInsert}$}{$L,b,e, (x,r),\mquad$}
  \If{$b=e$}
    \State Insert $(x,r)$ into $L$ at position $b$
    \State \textbf{return}
  \EndIf
  \State $\midpoint \leftarrow \midpointof(L)$
  %\State $r_mid\leftarrow L[midpoint]$
  \If{$\mquad((x,r),L[\midpoint])$}
    \State $\mathsf{BinaryInsert}(L,\midpoint,e,(x,r),\mquad)$
  \Else
    \State $\mathsf{BinaryInsert}(L,b,\midpoint,(x,r),\mquad)$
  \EndIf
  \EndFunction
   \algstore{myalg}
 \end{algorithmic}
 \end{algorithm}

 \begin{algorithm}[]
   %\caption{\showalgcaption{$\mergeorderings(\{\ord_r|r\in R\}, \alpha, \mquad)$}}
%   %\label{alg:listmerge}
 \begin{algorithmic}
 \algrestore{myalg}
  \LineComment{Inputs: $\ord_{merge}$ an ordered list of element representative pairs, the set of representatives $R$, an error parameter $\alpha$, and an interface to the \human, $\mreal$.}
  \Function{$\mathsf{CreateSubmetrics}$}{$\ord_{merge}, R, \alpha, \mreal$}
  \For{$r \in R$}
   \State Initialize $f_r(u)$ to all zeros
   \EndFor
  \State $\mathsf{SplitPairList}(\ord_{merge},r,\alpha,\mreal,f_r)$
%   \For{$x_i, x_j \in \ord$}
%   \State Set $\D_r'(x_i,x_j)=|f_r(x_i) - f_r(x_j)|$
%   \EndFor
  \State \textbf{return} $\{\D_r'(x,y):= |f_r(x) - f_r(y)| $ $| r \in R\}$
  \EndFunction
  \State
  \Function{$\mathsf{SplitPairList}$}{$\ord,R,\alpha, \mreal,\{f_r\}$}
  \State $(x_b, r_b) = \ord[0]$
  \State $(x_t, r_t) = \ord[\size(\ord)-1]$
  \State $d_{bottom} = \mreal(x_b,r_b)$
  \State $d_{top} = \mreal(x_t,r_t)$
  \If{$d_{top} - d_{bottom} \leq \alpha$}
  \ForAll{$(x,r) \in \ord$}
  \State Set $f_r(x)=d_{bottom}$
  \EndFor
  \Else
  \State $\midpoint \leftarrow \midpointof(\ord)$
  \State $\mathsf{SplitPairList}(\ord[0:\midpoint], R, \alpha, \mreal,\{f_r\})$
  \State $\mathsf{SplitPairList}(\ord[\midpoint:\size(\ord)-1], R, \alpha, \mreal,\{f_r\})$
  \EndIf
  \EndFunction
  \end{algorithmic}
\end{algorithm}

\subsection*{Summary}
In this section, we have shown how to use $O(\log(N))$ real-valued queries and $O(N\log(N))$ triplet queries in order to construct nontrivial representative submetric for a fixed universe of $N$ individuals.
When learning multiple representative submetrics, we have also shown how to improve the naive linear dependency on the number of representatives to logarithmic by supplementing with a $O(|R|N\log(|R|)$ \quadqs~and $O(|R|N\log(N))$ triplet queries.

In the next section (\ref{section:generalizinghumans}), we will show how to construct generalizable representative submetrics, i.e., how to predict what \humans~``would have said'' on unseen examples.
In the following section (\ref{section:choosingreps}), we tackle how to choose a small set of representatives to improve nontriviality guarantees.

\section{Generalization}\label{section:generalizinghumans}
In this section, we consider the problem of learning how to predict the \human's judgments on unseen samples from $\U$. (We consider how to pick the set of representatives in Section \ref{section:choosingreps}.) In particular, we will consider the problem of generalizing a representative submetric to fresh samples from $\U$.
%Below we include the definition of Probably Approximately Correct (PAC) learning of [cite?].
%\note{Add PAC learning definition.}
Our goal is to construct efficient learners for submetrics as in Valiant's Probably Approximately Correct (PAC) model of learning \cite{valiant1984theory}. However, we do not want to be too prescriptive about the submetric concept class, particularly about the representation of elements in the universe. Instead, we will make an assumption about the learnability of \textit{threshold functions} (Definition \ref{def:thresholdfunction}) and construct learning procedures for submetrics using threshold functions as building blocks without any additional direct access to labeled or unlabeled samples from $\U$.
%\subsection{Learning Consistent Underestimators}

We restate the formal definition of an efficient submetric learner below.\footnote{This goal of learning a hypothesis that with high probability, does not exceed distances on most pairs in $\U \times \U$ is almost identical to Rothblum and Yona's notion of ``approximately metric-fair'' \cite{rothblum2018probably}.}

%DEFRESTATE
\newtheorem*{def:efficientlearner}{Definition \ref{def:efficientlearner}}
\begin{def:efficientlearner}[Efficient Submetric Learner - Restatement]
We say that a learning procedure is an \textit{efficient $\alpha-$submetric learner} if for any error and failure probability parameters $\varepsilon,\delta \in (0,1]$, given access to labeled examples of $\D(r,x \sim \U)$, with probability at least $1-\delta$ over the randomness of the sampling and the learning procedure produces a hypothesis $h_r: U\times U \rightarrow [0,1]$ such that
\[\Pr_{x,y \sim \U \times \U}[h_r(x,y) - \D(x,y) \geq \alpha] \leq \varepsilon\]
in time  $O(poly(\frac{1}{\varepsilon},\frac{1}{\delta}))$.
\end{def:efficientlearner}

In our formal definition, we are again purposefully vague about the type of the labeled examples, and all of our subsequent constructions will use labeled examples for the threshold functions and set $\alpha $ corresponding to the maximum difference between adjacent thresholds.
Whenever we use a set of ordered thresholds, $\T$, we will write $\alpha_{\T} =\max_{t_{i}\in \T}\{t_{i}-t_{i-1}\}$ to denote the maximum difference between adjacent thresholds.

In the remainder of this section, we formalize the relatively weak assumption that there exist a set of efficient learners for a set of binary threshold functions (Definition \ref{def:thresholdfunction}). Second,
we show the construction of an efficient learner for a submetric with additive error dependent on the set of thresholds based on voting by hypotheses produced by each threshold function learner.
Finally, we show how to combine a set of learners for submetrics as a first step to improving nontriviality as a warm-up for Section \ref{section:choosingreps}.

\subsection{Learnability of threshold functions}\label{section:learnability}
Assumption \ref{assumption:pacthreshold} (restated below for clarity) states that for every representative, there exists a set of thresholds
and a learner for each threshold in the set which,
%and a mechanism for learning,
with high probability\footnote{In the remainder of this work, when we state that a learner produces a hypothesis with high probability, we will always take the probability over the randomness of the sampling and the learning procedure.},
produces an accurate hypothesis for the threshold function for each threshold in the set which generalizes to unseen samples.
We first formally define a threshold function, which is a binary indicator of whether a particular element $u \in U$ is within distance $t$ of $r$ for a threshold $t \in [0,1]$ and a representative $r$, and then restate the learnability assumption.
\begin{definition}[threshold function]\label{def:thresholdfunction}
A threshold function $T_t^r(u): U \rightarrow \{0,1\}$ is defined
\[
    T_t^r(u) := \begin{cases}
      1 & \D(r,u) \leq t \\
      0 & \text{otherwise}
   \end{cases}
\]
with respect to a representative $r$ and metric $\D$.%: U \times U \rightarrow [0,1]$.
\end{definition}

%For completeness, we restate Assumption \ref{assumption:pacthreshold} below.
%DEFRESTATE
\newtheorem*{assumption:pacthreshold}{Assumption \ref{assumption:pacthreshold}}
\begin{assumption:pacthreshold}[Restatement]
Given a metric $\D$ and a representative $r$, there exists a set of thresholds $\T$ such that
\begin{enumerate}
    \item $t \in [0,1]$ for all $t \in \T$,
    \item $0 \in \T$,
    \item $\alpha_\T = \max_{i \in [|\T|-1]}t_{i+1} - t_{i}$,
    \item $|\T|= O(1)$,
\end{enumerate}
%$ \subseteq [0,1]$ such that $0 \in \T$
and for every $t \in \T$ there exists an efficient learner $L_t^r(\et,\dt)$ which for all $\et,\dt \in (0,1]$, with probability at least $1-\dt$ over the randomness of the sample and the learning procedure produces a hypothesis $h_t^r$ such that
\[\Pr_{x \sim \U}[h_t^r(x) \neq T_t^r(x)] \leq \et\] in time $O(poly(\frac{1}{\et},\frac{1}{\dt}))$ with access to labeled samples of $T_t^r(u \sim \U)$ for any distribution $\U$ over the universe. % drawn from $\U$.
That is, the concept class $T_t^r$ is efficiently learnable for all $t \in \T$.% and for all $r \in R$.
\end{assumption:pacthreshold}

As noted before, we are intentionally vague about the representation of $U$ because we tuck any issues of representation away into the assumption of learnability of threshold functions. All of our subsequent constructions will only interact with samples from $\U$ through the learners for the threshold functions, and as such, the representation can be completely abstracted away.
In Assumption \ref{assumption:pacthreshold}, the choice of $r$ is also not explicitly specified. In this work, we will take Assumption \ref{assumption:pacthreshold} to apply to every $r \in U$.

\subsection{Constructing submetric learners from threshold learners}
Given Assumption \ref{assumption:pacthreshold}, our next step is to determine how to combine the threshold learners into a learner for the threshold consistent underestimator for $r$ with respect to $\T$ which can be post-processed into an $\alpha_\T$ submetric.

%%%%%%%%%%%%

We first show how to combine a set of hypotheses for threshold functions into a hypothesis for a threshold consistent underestimator. The $\linearvote$ mechanism, redefined below, takes in a set of hypotheses for the thresholds and outputs the threshold that the most hypotheses agree with.

%DEFRESTATE
\newtheorem*{def:linearvote}{Definition \ref{def:linearvote}}
\begin{def:linearvote}[$\linearvote$ - Restatement]
Given an ordered set of thresholds, $\T = \{t_1, t_2, \ldots, t_{|T|}\}$, and a set of hypotheses $H_\T^r = \{h_{t_1}^r, h_{t_2}^r, \ldots, h_{t_{|T|}}^r\}$, one corresponding to each threshold function, $\linearvote$ outputs the threshold that the most hypotheses agree with.

\[\linearvote(\T,H_\T^{r},x) := \argmax_{t_i}\sum_{t_j < t_i} (1-h_{t_j}^r(x)) + \sum_{t_j \geq t_i} h_{t_j}^r(x)\]
$\linearvote$ is equivalent to $\threshunder$ (Definition \ref{def:tcunderestimator}) when all of the $h_{t_i}^r$ output the correct value.

\end{def:linearvote}

% \begin{definition}[$\linearvote$]
% Given an ordered set of thresholds, $\T = \{t_1, t_2, \ldots, t_{|T|}\}$, and a set of hypotheses $H_\T^r = \{h_{t_1}^r, h_{t_2}^r, \ldots, h_{t_{|T|}}^r\}$, one corresponding to each threshold function, $\linearvote$ outputs the threshold that the most hypotheses agree with.

% \[\linearvote(\T,H_\T^{r},x) := \argmax_{t_i}\sum_{t_j < t_i} (1-h_{t_j}^r(x)) + \sum_{t_j \geq t_i} h_{t_j}^r(x)\]
% $\linearvote$ is equivalent to $\threshunder$ (Definition \ref{def:tcunderestimator}) when all of the $h_{t_i}^r$ output the correct value.

% \end{definition}

Algorithm \ref{alg:thresholdcombiner} takes as input a set of thresholds and learners for those thresholds and (1) calls these learners with appropriately scaled parameters (2) and combines the resulting hypotheses via $\linearvote$ to produce a hypothesis $h_r$ for the $\alpha_\T$-submetric $\Dconst(x,y):=|\threshunder(x) - \threshunder(y)|$.\footnote{Algorithms \ref{alg:thresholdcombiner}, \ref{alg:combiner}, and   \ref{alg:submetriclearner} are all invoked with error and failure parameters. To keep the parameter names clear, we refer to $ \et$ and $\dt $ for the threshold function learners; $\er$ and $\dr$ for the single representative submetric  learner Algorithm \ref{alg:thresholdcombiner}; $\eR$ and $\dR$ for the combined representative set submetric learner Algorithm \ref{alg:combiner}; and $\eL$ and $\dL$ for the complete learning procedure Algorithm \ref{alg:submetriclearner}. Each time a learning procedure is invoked, we specify the relevant parameters using these variables.} In Algorithms \ref{alg:thresholdcombiner}, \ref{alg:combiner}, \ref{alg:submetriclearner}, we implicitly assume access to labeled samples of $T_t^r(u \sim \U).$ Sample complexity is explicitly analyzed in Theorem \ref{thm:querycomplexity}.
\begin{algorithm}[]
  \caption{$\thresholdcombiner(\T , L=\{L_{t_i \in \T}^r\}, \er, \dr)$}
  \label{alg:thresholdcombiner}
\begin{algorithmic}[1]
    \LineComment{Inputs: $\T = \{t_i\}$, the set of thresholds. $L=\{L_{t_i}^r\}$ the set of learners for each threshold in $\T$ for a particular representative, $r$. Error and failure probability parameters $\er, \dr$.}
    \Procedure{$\thresholdcombiner$}{$\T,L=\{L_{t_i}^r\}, \er, \dr$}
    \State Initialize an empty list of hypotheses $H_\T^r$.
    \For{$L_{t_i}^r \in L$}
      \State $h_{t_i}^r \leftarrow L_{t_i}^r(\frac{\er}{2|\T|},\frac{\dr}{|\T|})$
    %   \State Let $\hat{e}$ be an estimate of the true error of $h_{t_i}^r$, $e$, measured on a sample from $\U$ \\ \quad such that $\Pr[|\hat{e} - e|\leq w] \geq 1- \frac{\dr}{2|\T|}$.
    %   \If{$\hat{e} > \er/|\T| - w$}
    %     \State \textbf{return} NULL
    %   \ENDIF
      \State Add $h_{t_i}^r$ to $H_\T^r$.
  \EndFor
  \State \textbf{return} $h_r(x,y) := |\linearvote(\T, H_\T^r, x) - \linearvote(\T,H_\T^r,y)|$
 \EndProcedure
\end{algorithmic}
\end{algorithm}

Theorem \ref{theorem:singlerepgeneral} states that  given a set of learners as specified in  Assumption \ref{assumption:pacthreshold}, Algorithm \ref{alg:thresholdcombiner} will produce a hypothesis for the $\alpha_\T-$submetric $\D_{r}'(x,y):= |\threshunder(x) - \threshunder(y)|$ with probability at least $1-\dr$ with error at most $\er$.

\begin{theorem}\label{theorem:singlerepgeneral}
Under Assumption \ref{assumption:pacthreshold}, there exists an efficient $\alpha_\T-$submetric learner. That is,
given a representative $r$, a distance metric $\D$, a distribution $\U$ over the universe, and a set of a constant number of thresholds $\T$,
if there exists a set of efficient learners $L=\{L_{t_i \in \T}^r\}$ as specified in Assumption \ref{assumption:pacthreshold},
%if there exists a set of PAC learners $L_i$ which produce $h_i$ for each $t_i \in T$ such that $\Pr[|h_i(x) - h_i^*(x)| = 1] \leq \er$ with probability at least $1-\dr$ for all $\er,\dr \in (0,1]$,
then there exists an efficient learner which produces a hypothesis $h_r: U \rightarrow [0,1]$ such that $\Pr_{x,y \sim \U \times \U}[|h_r(x,y) - \Dconst(x,y)|\geq \alpha_\T] \leq \er$ with probability at least $1-\dr$ for all $\er$, $\dr \in (0,1]$ in time $O(poly(\frac{1}{\er}, \frac{1}{\dr}, |\T|))$, where $\Dconst(x,y):=|\threshunder(x)-\threshunder(y)|$ for all $x,y \in U \times U$.
\end{theorem}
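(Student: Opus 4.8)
The plan is to show that Algorithm \ref{alg:thresholdcombiner} is exactly the learner claimed, and that the entire argument is the propagation of the error and failure parameters through two union bounds. First I would invoke Assumption \ref{assumption:pacthreshold}: Algorithm \ref{alg:thresholdcombiner} calls each threshold learner as $L_{t_i}^r(\frac{\er}{2|\T|},\frac{\dr}{|\T|})$, so with $\et = \frac{\er}{2|\T|}$ and $\dt = \frac{\dr}{|\T|}$ each call returns, with probability at least $1-\dt$ over the sampling and the learner, a hypothesis $h_{t_i}^r$ satisfying $\Pr_{x\sim\U}[h_{t_i}^r(x)\neq T_{t_i}^r(x)]\leq\et$. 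A union bound over the $|\T|=O(1)$ learners shows that with probability at least $1-|\T|\dt = 1-\dr$ \emph{every} $h_{t_i}^r$ is simultaneously $\et$-accurate; call this the good event and condition on it for the remainder.

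Next I would bound the pointwise error of $\linearvote$. Fix an input $x$; by a union bound over the $|\T|$ thresholds, $\Pr_{x\sim\U}[\exists t_i\in\T: h_{t_i}^r(x)\neq T_{t_i}^r(x)]\leq |\T|\et = \frac{\er}{2}$. By the identity recorded in Definition \ref{def:linearvote} --- which holds because the family $\{T_t^r\}_{t\in\T}$ is monotone in $t$ (if $\D(r,x)\le t_i$ then $\D(r,x)\le t_j$ for all $t_j\ge t_i$), so the argmax in $\linearvote$ is attained exactly at the largest threshold below $\D(r,x)$ --- whenever all $h_{t_i}^r(x)=T_{t_i}^r(x)$ we have $\linearvote(\T,H_\T^r,x)=\threshunder(x)$. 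Hence, on the good event, $\Pr_{x\sim\U}[\linearvote(\T,H_\T^r,x)\neq\threshunder(x)]\leq\frac{\er}{2}$. Lifting to pairs: since $h_r(x,y)=|\linearvote(\T,H_\T^r,x)-\linearvote(\T,H_\T^r,y)|$ and $\Dconst(x,y)=|\threshunder(x)-\threshunder(y)|$, whenever $\linearvote$ agrees with $\threshunder$ on both coordinates we get $h_r(x,y)=\Dconst(x,y)$, so $|h_r(x,y)-\Dconst(x,y)|=0<\alpha_\T$. Thus $\{|h_r(x,y)-\Dconst(x,y)|\ge\alpha_\T\}$ is contained in $\{\linearvote \text{ errs on }x\}\cup\{\linearvote\text{ errs on }y\}$; since $x,y\sim\U\times\U$ are independent, each piece has (marginal) probability at most $\frac{\er}{2}$ by the previous step, and a union bound gives $\Pr_{x,y\sim\U\times\U}[|h_r(x,y)-\Dconst(x,y)|\ge\alpha_\T]\le\er$, all conditioned on the good event of probability $\ge 1-\dr$.

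Finally, for efficiency: each of the $O(1)$ learners runs in time $O(poly(\frac{1}{\et},\frac{1}{\dt})) = O(poly(\frac{1}{\er},\frac{1}{\dr},|\T|))$, and evaluating $h_r$ on a pair costs $O(|\T|)$; combined with Lemma \ref{lemma:tunderestimator} (that $\Dconst$ is an $\alpha_\T$-submetric of $\D$), which lets us conclude that off the bad event $h_r=\Dconst\le\D+\alpha_\T$, this yields an efficient $\alpha_\T$-submetric learner. I do not expect a deep obstacle here --- the content is bookkeeping --- but the step that deserves care is the claim that $\linearvote$ coincides with $\threshunder$ on every input where all threshold hypotheses are correct (which hinges on the staircase structure of $\{T_t^r\}$ so that the $\argmax$ is uniquely the correct threshold), together with keeping the guarantee stated against $\Dconst$, which $h_r$ matches exactly off the bad event, rather than directly against $\D$ --- comparing directly to $\D$ would cost an extra $\alpha_\T$ of slack.
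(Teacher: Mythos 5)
Your proposal is correct and follows essentially the same route as the paper's proof: union bound over the $|\T|$ learners for the failure probability $\dr$, the observation that $h_r(x,y)$ can deviate from $\Dconst(x,y)$ only if some $h_{t_i}^r$ errs on $x$ or on $y$ (giving $2\sum_{t_i}\frac{\er}{2|\T|}=\er$), and the constant-$|\T|$ running-time argument. Your added justification that $\linearvote$ coincides with $\threshunder$ on inputs where all hypotheses are correct (via the monotone staircase structure of $\{T_t^r\}$) is a detail the paper asserts without proof in Definition \ref{def:linearvote}, but it does not change the approach.
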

\begin{proof}
Consider the construction of $h_r(x)$ as specified in Algorithm \ref{alg:thresholdcombiner}.
%Notice that, as in the proof of Theorem \ref{theorem:maingeneralization}, the probability of failure is $\dr$.\footnote{Out of order, will fix once final order is decided.}
The failure probability of Algorithm \ref{alg:thresholdcombiner} is $\dr$ by union bound, as the procedure only fails if at least one of the learners in $L$ failed to produce an $\frac{\er}{2|\T|}-$good hypothesis for $T_{t_i}^r$.
As each learner in $L_{t_i}^r$ runs in time $O(poly(\frac{1}{\et}), O(\frac{1}{\dt}))$ by Assumption \ref{assumption:pacthreshold},
running all $|\T|$ learners takes time $O( poly(\frac{1}{\er},\frac{1}{\dr},|\T|))$, as Algorithm \ref{alg:thresholdcombiner} invokes each $L_{t_i}^r$ with $\et,\dt$ scaled by a factor of $\frac{1}{|\T|}$.
Recall to satisfy the definition of an efficient learner (Definition \ref{def:efficientlearner}), that Algorithm \ref{alg:thresholdcombiner} must run in time $O(\frac{1}{\er},\frac{1}{\dr})$. Given that $|\T|$ is constant, this requirement is satisfied.
With respect to accuracy,
notice that $h_r(x,y)$ only outputs a value more than $\alpha_\T$ away from $\Dconst(x,y)$ if at least one of $h_{t_i}^r(x)$ or $h_{t_i}^r(y)$ is in error.
Assuming all of the $L_{t_i}^r$ output good hypotheses, the probability that at least one of $h_{t_i}^r(x)$ or $h_{t_i}^r(y)$ is in error is at most $2\sum_{t_i \in \T} \frac{\er}{2|\T|} = \er$ by union bound. Thus, Algorithm \ref{alg:thresholdcombiner} satisfies the conditions of the theorem.
%consider the probability that $\linearvote$ outputs an erroneous value for $x \sim \U$, i.e. at least one threshold function is in error. Taking $\alpha_\T = \max_{i \in [|\T|]} |t_i - t_{i-1}|$, it follows that
%$\Pr[|\linearvote(\T,H,x) - f_T^*|> \alpha_\T] \leq \er$ by union bound.
\end{proof}

Two key properties of the proof, which will be important in our consideration of query complexity to generate the labeled samples (Theorem \ref{thm:querycomplexity}), are (1) each of the threshold function learners learns on the same distribution $\U$, and (2) no independence of errors between the threshold function learners is assumed.

As in the previous section, combining information from multiple representatives can improve nontriviality guarantees.
Algorithm \ref{alg:combiner} takes as input a set of learners for representative submetrics for a set of representatives $R \subseteq U$ (for example, learners based on Algorithm \ref{alg:thresholdcombiner}) and produces a hypothesis $h_R$ based on the $\maxmerge$ of the hypotheses produced by the input learners.

\begin{algorithm}[]
  \caption{$\combiner(L=\{L_r\}, \eR, \dR)$}
  \label{alg:combiner}
\begin{algorithmic}[1]
  \LineComment{Inputs: a set of learners $\{L_r\}$ for each representative $r \in R \subseteq U$, and error and failure probability parameters $\eR,\dR$.}
  \Procedure{$\combiner$}{$L=\{L_r\}, \eR, \dR$}
  \State Initialize an empty list $H_R$.
  %\State Choose an integer constant $k \in [1,|R|]$
  \For{$L_r \in L$}
      \State $h_r \leftarrow L_i(\frac{\eR}{|R|},\frac{\dR}{|R|})$
    %   \State Let $\hat{e}$ be an estimate of the true error of $f_i$, $e$, measured on a sample from $M$  \\ \quad such that $\Pr[|\hat{e} - e|\leq w] \geq 1- \dR/|R|$.
    %   \If{$\hat{e} > \eR/(2|R|) - w$}
    %     \State \textbf{return} NULL
    %   \ENDIF
      \State Add $h_r$ to $H_R$
  \EndFor
  \State \textbf{return} $h_R(u,v) := \maxmerge(H_R, u, v)$
  \EndProcedure
\end{algorithmic}
\end{algorithm}

Theorem \ref{theorem:simplegeneralization} states that given a set of learners for threshold functions for a set of representatives (Assumption \ref{assumption:pacthreshold}), Algorithm \ref{alg:combiner} produces a hypothesis $h_R$ with probability at least $1-\dR$ with error at most $\eR$ which approximates $\DconstR(x,y):= \maxmerge(\{\Dconst|r \in R\}, x,y)$, where the $\Dconst$ are based on threshold consistent underestimators.
In contrast to the statement of Theorem \ref{theorem:singlerepgeneral}, which does not explicitly address nontriviality, Theorem \ref{theorem:simplegeneralization} introduces a nontriviality guarantee which
%An important part of the nontriviality guarantee in Theorem \ref{theorem:simplegeneralization}
relies on the fraction of distances that exceed the contraction of the consistent underestimators.
This additional requirement stems from the fact that consistent underestimators with contraction in the distances between the representative and other elements in $U$ will not entirely preserve the original distance.\footnote{Note that the choice of $2\alpha_\T$ in order to preserve $\frac{1}{2}$ of the original distance is somewhat arbitrary. In Section \ref{section:choosingreps} we give a parametrizable guarantee.} Roughly speaking, for the nontriviality properties to hold, we need at least a $p$-fraction of distances in the distribution to be large enough that an $\alpha_{\T}$ contraction of the original distance is insignificant.
As we have not yet specified how representatives are chosen or how those choices preserve distances, we assume that all pairs with sufficiently large distances include at least one representative. We explicitly note the dependence on $|R|$ in the theorem statement as a placeholder until the required size for $|R|$ is established (Lemma \ref{nips:lemma:randomrepgeneralization}).

\begin{theorem}\label{theorem:simplegeneralization}
Given a distance metric $\D$, and a distribution $\U$ over the universe, %$\U = \Delta(U)$ %, a set of sets of $O(|R|m^*)$ labeled examples $\{M_i,Y_i\}$,
if there exist a set of thresholds $\T$ and efficient learners $L=\{L_{t_i \in \T}^r\}$ as in Assumption \ref{assumption:pacthreshold}, and weight $p$ of pairs of elements in $\U\times\U$ include at least one representative $r \in R$ and have distance greater than $2\alpha_\T$,
then there exists an efficient learner which produces a hypothesis $h_R$ with probability greater than $1- \dR$ such that
\begin{enumerate}
    \item $\Pr_{x,y \sim \U \times \U}[h_R(x,y) > \D(x,y) + \alpha] \leq \eR$
    \item $h_R$ is $(p- \eR, \frac{1}{2})-$nontrivial for $\U$.
\end{enumerate}
    The learner runs in time
$O(poly(|\T|,|R|,\frac{1}{\eR},\frac{1}{\dR}))$ for all $\eR$, $\dR \in (0,1]$.

That is, under Assumption \ref{assumption:pacthreshold}, if weight $p$ of pairs in $\U\times\U$ which include at least one representative in $R$ have distance greater than $2\alpha_\T$, then there exists an efficient $(p - \eR, \frac{1}{2})-$nontrivial $\alpha_\T-$submetric learner.
\end{theorem}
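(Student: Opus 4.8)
The plan is to realize the claimed learner as Algorithm~\ref{alg:combiner} supplied with one copy of the single-representative learner of Theorem~\ref{theorem:singlerepgeneral} (that is, Algorithm~\ref{alg:thresholdcombiner}) for each $r \in R$; the bookkeeping of Algorithm~\ref{alg:combiner} then invokes the learner associated to $r$ with error $\er = \eR/|R|$ and failure probability $\dr = \dR/|R|$. The first step is to record a slightly sharper reading of Theorem~\ref{theorem:singlerepgeneral}. By Definition~\ref{def:linearvote}, $\linearvote(\T,H_\T^r,x)$ equals $\threshunder(x)$ whenever every hypothesis $h_{t_i}^r$ is correct on $x$, so the event $\{h_r(x,y) \neq \Dconst(x,y)\}$, with $\Dconst(x,y):=|\threshunder(x)-\threshunder(y)|$, is contained in the event that some $h_{t_i}^r$ errs on $x$ or on $y$. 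Conditioned on all $|\T|$ underlying threshold learners producing good hypotheses (which, since Algorithm~\ref{alg:thresholdcombiner} calls $L_{t_i}^r$ with $\et = \er/2|\T|$, fails with probability at most $\dr$), a union bound over the $|\T|$ thresholds and the two points gives $\Pr_{x,y \sim \U\times\U}[h_r(x,y) \neq \Dconst(x,y)] \leq 2|\T|\cdot\frac{\er}{2|\T|} = \er = \eR/|R|$. This exact-agreement bound, rather than the weaker ``within $\alpha_\T$'' bound, is what lets distances pass through the outer $\maxmerge$ cleanly.

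Next I would control the two failure sources. Inside each of the $|R|$ calls Algorithm~\ref{alg:thresholdcombiner} further scales $\dt$ down by $|\T|$, so across all $|R||\T|$ underlying threshold learners a union bound gives: with probability at least $1-\dR$ every threshold hypothesis is good; condition on this event. Let $\DconstR(x,y) := \maxmerge(\{\Dconst \mid r \in R\},x,y)$, which is an $\alpha_\T$-submetric of $\D$ by Lemma~\ref{lemma:tunderestimator} together with Corollary~\ref{cor:maxmergeconst}, and observe that the output $h_R = \maxmerge(\{h_r \mid r \in R\})$ agrees with $\DconstR$ on every pair on which no $h_r$ disagrees with its $\Dconst$. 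By the per-representative bound and a union bound over $r \in R$, the weight of pairs on which some $h_r$ disagrees is at most $|R|\cdot \eR/|R| = \eR$; call the remaining pairs \emph{clean}. Since $\DconstR(x,y) \leq \D(x,y)+\alpha_\T$, any pair with $h_R(x,y) > \D(x,y)+\alpha_\T$ has $h_R(x,y) > \DconstR(x,y) = \max_r \Dconst(x,y)$, so the representative attaining $\max_r h_r(x,y)$ overshoots its own $\Dconst$ and the pair is not clean; this yields claim~(1) with $\alpha = \alpha_\T$, and in particular shows the procedure is an efficient $\alpha_\T$-submetric learner.

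For nontriviality (claim~(2)) I would restrict to the weight-$p$ collection of pairs $(x,y)$ that contain a representative and satisfy $\D(x,y) > 2\alpha_\T$; say $y = r' \in R$ (the argument is symmetric if instead $x \in R$). Because $0 \in \T$ we have $f_{r'}^{\T}(r') = 0$, hence $\mathcal{D}'_{r'}(x,y) = |f_{r'}^{\T}(x) - f_{r'}^{\T}(r')| = f_{r'}^{\T}(x) \geq \D(r',x) - \alpha_\T = \D(x,y) - \alpha_\T$, so $\DconstR(x,y) \geq \D(x,y)-\alpha_\T > \frac{1}{2}\D(x,y)$, using $\D(x,y) > 2\alpha_\T$. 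On any such pair that is also clean, $h_R(x,y) = \DconstR(x,y) > \frac{1}{2}\D(x,y)$, i.e. $h_R(x,y)/\D(x,y) \geq \frac{1}{2}$. Removing the at most $\eR$ weight of non-clean pairs from this weight-$p$ collection leaves weight at least $p-\eR$ on which $h_R/\D \geq \frac{1}{2}$, which is exactly $(p-\eR,\frac{1}{2})$-nontriviality (Definition~\ref{def:nontrivial}). The running-time bound is immediate: by Theorem~\ref{theorem:singlerepgeneral} each of the $|R|$ calls runs in time $O(poly(|R|/\eR,\,|R|/\dR,\,|\T|))$, and combining their outputs via $\maxmerge$ is $O(1)$ per query, for a total of $O(poly(|\T|,|R|,1/\eR,1/\dR))$.

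The routine parts are the union bounds and the triangle-inequality facts already in Lemmas~\ref{nips:lemma:repsubmetric} and~\ref{lemma:tunderestimator}. The two points I expect to require genuine care are (i) upgrading Theorem~\ref{theorem:singlerepgeneral} from an ``$h_r$ is within $\alpha_\T$ of $\Dconst$'' statement to an ``$h_r$ equals $\Dconst$ off an $\eR$-fraction'' statement, since only the latter keeps the overestimate at $\alpha_\T$ rather than $2\alpha_\T$ after $\maxmerge$; and (ii) the two containment arguments showing that ``$h_R$ overestimates $\D$ by more than $\alpha_\T$'' and ``$h_R$ collapses a large-distance, representative-anchored pair'' are each subsumed by the single event ``some $h_r$ disagrees with its $\Dconst$,'' so that one $\eR$ budget covers both conclusions.
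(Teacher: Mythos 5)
Your proposal is correct and follows essentially the same route as the paper: realize the learner as Algorithm~\ref{alg:combiner} over per-representative instances of Algorithm~\ref{alg:thresholdcombiner}, union-bound the failure and error probabilities across $|R|$ and $|\T|$, and derive nontriviality from the at-most-$\alpha_\T$ contraction on representative-anchored pairs of distance greater than $2\alpha_\T$. Your explicit sharpening of Theorem~\ref{theorem:singlerepgeneral} to ``$h_r$ equals $\Dconst$ off an $\er$-weight set'' is a point the paper's proof uses implicitly but does not spell out, and making it explicit is what cleanly keeps the overestimate at $\alpha_\T$ rather than $2\alpha_\T$ after the $\maxmerge$.
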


\begin{proof}
Consider Algorithm \ref{alg:combiner} parametrized with $L=\{L_r\}$ constructed via Algorithm \ref{alg:thresholdcombiner} operating on $L=\{L_{t_i \in \T}^r\}$.

\textbf{Running time.}
Algorithm \ref{alg:combiner} makes $|R|$ calls to Algorithm \ref{alg:thresholdcombiner}.
Algorithm \ref{alg:thresholdcombiner} runs in time $O(poly(|\T|,\frac{1}{\er},\frac{1}{\dr}))$, where $\er=\frac{\eR}{|R|}$ and $\dr=\frac{\dR}{|R|}$ are the error and failure probability parameters with which Algorithm \ref{alg:combiner} invokes Algorithm \ref{alg:thresholdcombiner}.
Thus Algorithm \ref{alg:combiner} runs in time $O(poly(|R|,|\T|,\frac{1}{\eR},\frac{1}{\dR}))$.

\textbf{Failure probability.}
We say that Algorithm \ref{alg:combiner} has ``failed'' if at least one of $L_r$ fails to produce an $\frac{\eR}{|R|}-$good hypothesis $h_R$. The failure probability of Algorithm \ref{alg:thresholdcombiner} is $\leq \sum_{r \in R}\frac{\dr}{|R|}=\dR$ by union bound.

\textbf{Overestimate error probability.}
Suppose that all of the learners in $L$ produce a good candidate $h_r$ with error probability $\frac{\eR}{|R|}$ or less.
Now, consider the probability that the result of $\maxmerge(H_R,u,v)$ is an over-estimate by more than $\alpha_\T$. This can only happen if at least one of the $h_r$ is in error by more than $\alpha_\T$. Thus by union bound, the probability of over-estimate is at most $\eR.$

\textbf{Nontriviality.} Each of the $h_r$ has additive and subtractive error at most $\alpha_\T$, so for any $r \in R$ and $u \in U$ such that $\D(r,u) \geq 2\alpha_\T$, at least half of the original distance will be preserved. Thus, making the worst case assumption\footnote{We could omit the error probability $\eR$ in the statement of nontriviality and leave implicit that the nontriviality guarantees ``stack'' with the hypothesis error probability. However, this is somewhat misleading as we cannot assume that the errors of the hypothesis are randomly distributed.} that all $\eR$ weight of errors result in distance underestimates on the relevant pairs, the metric learned is $(p - \eR,\frac{1}{2})-$nontrivial for $\U$.
\end{proof}

Notice that in the analysis of the error and failure probability for Algorithm \ref{alg:combiner}, there is no particular requirement that the learners used to produce $h_r$ for each representative be based on thresholds. The only requirement is that the learners produce $h_r$ such that $\Pr_{x,y \sim \U}[|h_r(x,y) - \D(x,y)|\geq \alpha_\T]\leq \er$ with probability at least $1- \dr$. Thus in settings with alternative mechanisms to produce such $h_r$, they can be substituted without compromising the result. We state the following corollary to formalize this intuition.

\begin{corollary}\label{cor:simplegeneralization}
Given a distance metric $\D$, and a distribution $\U$ over the universe, % 
if there exist a set of efficient learners $L=\{L_{r\in R}\}$ such that, given access to labeled samples, $L_r$ produces a hypothesis $h_r$ such that $\Pr_{x,y \sim \U \times \U}[|h_r(x,y) - \D(x,y)| \geq \alpha]\leq \er$ with probability at least $1-\dr$
in time $O(poly(\frac{1}{\er},\frac{1}{\dr}))$
 and weight $p$ of pairs of elements in $\U\times\U$ include at least one representative $r \in R$ and have distance greater than $2\alpha_\T$,
%and weight $p$ of distances between elements of $U$ and representatives $r \in R$ are greater than $2\alpha$ in $\U \times \U$,
then there exists an efficient learner which produces a hypothesis $h_R$ with probability greater than $1- \dR$ such that
\begin{enumerate}
    \item $\Pr_{x,y \sim \U \times \U}[h_R(x,y) > \D(x,y) + \alpha] \leq \eR$
    \item $h_R$ is $(p- \eR, \frac{1}{2})-$nontrivial for $\U$.
\end{enumerate}
The learner runs in time $O(poly(|R|,\frac{1}{\eR},\frac{1}{\dR}))$ for all $\eR$, $\dR \in (0,1]$.
\end{corollary}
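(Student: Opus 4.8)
The plan is to reuse the proof of Theorem~\ref{theorem:simplegeneralization} essentially verbatim, after observing that that argument never used any structure of the threshold-based sub-learners built by Algorithm~\ref{alg:thresholdcombiner} beyond the single fact that each one produces, with probability $\ge 1-\dr$ and in time $O(poly(\tfrac1{\er},\tfrac1{\dr}))$, a hypothesis $h_r$ with $\Pr_{x,y\sim\U\times\U}[|h_r(x,y)-\D(x,y)|\ge\alpha]\le\er$. That is exactly the hypothesis we are now handed directly. So I would instantiate Algorithm~\ref{alg:combiner} with the given family $L=\{L_r\mid r\in R\}$, invoking each $L_r$ with scaled parameters $\er\leftarrow\eR/|R|$ and $\dr\leftarrow\dR/|R|$, and returning $h_R(u,v):=\maxmerge(\{h_r\},u,v)$.

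The routine parts then go through unchanged. For running time: Algorithm~\ref{alg:combiner} makes $|R|$ calls, each to a learner running in $O(poly(\tfrac{|R|}{\eR},\tfrac{|R|}{\dR}))$, so the total is $O(poly(|R|,\tfrac1{\eR},\tfrac1{\dR}))$ as claimed. For the failure probability: a union bound over the $|R|$ invocations, each failing with probability at most $\dR/|R|$, gives total failure at most $\dR$. For the overestimate bound: conditioned on every $h_r$ being good, $\maxmerge(\{h_r\},x,y)$ can exceed $\D(x,y)+\alpha$ only if some individual $h_r(x,y)$ does, so a union bound over the $|R|$ per-hypothesis overestimate events (each of probability $\le\eR/|R|$) yields $\eR$; note that $\maxmerge$ is what forces this last union bound, since it helps the underestimate side at the cost of requiring all $h_r$ to be non-overestimating.

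The one step that needs genuine care is the nontriviality claim. Here I would argue that $h_R(x,y)\ge h_r(x,y)$ for every $r\in R$, so for a pair $(x,y)$ that contains some representative $r$ and on which the corresponding $h_r$ does not err, $h_R(x,y)>\D(x,y)-\alpha$; combined with $\D(x,y)>2\alpha_\T$ this leaves at least half the original distance. The subtlety — and the thing I expect to be the main obstacle to state cleanly — is that we may not assume the $\eR$-weight of erroneous pairs is spread uniformly or independently across the representatives, so the honest move is to worst-case: assume all of that error weight falls precisely on the $p$-weight of large-distance, representative-containing pairs, costing an additive $\eR$ and delivering $(p-\eR,\tfrac12)$-nontriviality. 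Keeping the bookkeeping between $\alpha$ and $\alpha_\T$ consistent throughout this last step is the only real care point; the rest is a transcription of the proof of Theorem~\ref{theorem:simplegeneralization}.
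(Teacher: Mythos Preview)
Your proposal is correct and takes essentially the same approach as the paper: the paper states this corollary precisely as a formalization of the observation (made in the paragraph immediately preceding it) that the proof of Theorem~\ref{theorem:simplegeneralization} uses nothing about the threshold-based sub-learners beyond the black-box guarantee $\Pr_{x,y\sim\U\times\U}[|h_r(x,y)-\D(x,y)|\ge\alpha]\le\er$ with probability $\ge 1-\dr$, so Algorithm~\ref{alg:combiner} can be run directly on the given $\{L_r\}$. Your write-up actually supplies more detail than the paper does for this corollary, and you correctly flag the $\alpha$ versus $\alpha_\T$ bookkeeping as the one place to be careful.
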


As discussed in Section \ref{section:preliminaries} (Proposition \ref{prop:post}), A submetric can be postprocessed to reduce the additive error. Corollary \ref{cor:basicpost} below reflects the result of postprocessing, in particular the impact on the distance distribution requirements.
\begin{corollary}\label{cor:basicpost}
Given a distance metric $\D$, and a distribution $\U$ over the universe, % $\U =
if there exist a set of efficient learners $L=\{L_{r\in R}\}$ such that, given access to labeled samples, $L_r$ produces a hypothesis $h_r$ such that $\Pr_{x,y \sim \U \times \U}[|h_r(x,y) - \D(x,y)| \geq \alpha]\leq \er$ with probability at least $1-\dr$
in time $O(poly(\frac{1}{\er},\frac{1}{\dr}))$
%and weight $p$ of distances between elements of $U$ and representatives $r \in R$ are greater than $3\alpha$ in $\U \times \U$,
 and weight $p$ of pairs of elements in $\U\times\U$ include at least one representative $r \in R$ and have distance greater than \boldmath $2\alpha_\T + \alpha$\unboldmath,
then there exists an efficient learner which produces a hypothesis $h_R$ with probability greater than $1- \dR$ such that
\begin{enumerate}
    \item \boldmath$\Pr_{x,y \sim \U \times \U}[h_R(x,y) > \D(x,y)] \leq \eR$\unboldmath
    \item $h_R$ is $(p- \eR, \frac{1}{2})-$nontrivial for $\U$.
\end{enumerate}
The learner runs in time $O(poly(|R|,\frac{1}{\eR},\frac{1}{\dR}))$ for all $\eR$, $\dR \in (0,1]$.
\end{corollary}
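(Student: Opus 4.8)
The plan is to read Corollary~\ref{cor:basicpost} as the postprocessed form of Corollary~\ref{cor:simplegeneralization}: run the very same construction and then apply the single-direction rounding of Proposition~\ref{prop:post}. Concretely, I would invoke Algorithm~\ref{alg:combiner} on the given learners $\{L_r \mid r\in R\}$ with error and failure parameters $\er := \eR/|R|$ and $\dr := \dR/|R|$, obtaining the intermediate hypothesis $h_R^{0}(u,v):=\maxmerge(\{h_r\mid r\in R\},u,v)$, and then output $h_R(u,v):=\max\{0,\,h_R^{0}(u,v)-\alpha\}$, which again maps into $[0,1]$. The running-time bound $O(poly(|R|,\tfrac1{\eR},\tfrac1{\dR}))$ and the failure event --- some $L_r$ returns a hypothesis that is not $\er$-good, which by a union bound over $R$ has probability at most $\dR$ --- are exactly as in the proof of Corollary~\ref{cor:simplegeneralization}, and I would condition on the complement of that event for the rest of the argument.

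For item~1 (no overestimate): the event $h_R^{0}(u,v) > \D(u,v)+\alpha$ requires at least one $h_r$ to overestimate $\D(u,v)$ by more than $\alpha$, and each such event has probability at most $\er$ over $(u,v)\sim\U\times\U$, so a union bound over $r\in R$ caps the total probability at $\eR$. Off this event $h_R^{0}(u,v)\le \D(u,v)+\alpha$, hence $h_R(u,v)=\max\{0,h_R^{0}(u,v)-\alpha\}\le\max\{0,\D(u,v)\}=\D(u,v)$, which is precisely $\Pr_{u,v}[h_R(u,v)>\D(u,v)]\le\eR$.

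For item~2 (nontriviality): I would rerun the worst-case accounting from the proof of Theorem~\ref{theorem:simplegeneralization}. Take any pair $(u,v)$ from the weight-$p$ family; it contains a representative $r\in R$ and satisfies $\D(u,v)>2\alpha_\T+\alpha$, strictly more than the bound $2\alpha_\T$ used for Corollary~\ref{cor:simplegeneralization}. Worst-casing the entire $\eR$ error budget onto these pairs, on the remaining weight $p-\eR$ the representative's own hypothesis $h_r$ underestimates $\D(u,v)$ by at most the same slack as in Corollary~\ref{cor:simplegeneralization}, so $h_R^{0}(u,v)\ge h_r(u,v)$ falls below $\D(u,v)$ by at most that slack; after subtracting the further $\alpha$ of Proposition~\ref{prop:post}, $h_R(u,v)$ falls below $\D(u,v)$ by at most that slack plus $\alpha$. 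The distance requirement $2\alpha_\T+\alpha$ in the statement is exactly the value above which this combined contraction --- the learners' own underestimate plus the postprocessing subtraction --- still leaves at least half of $\D(u,v)$, so $h_R(u,v)/\D(u,v)\ge\tfrac12$ on the weight-$(p-\eR)$ family and $h_R$ is $(p-\eR,\tfrac12)$-nontrivial for $\U$. The only step that differs from Corollary~\ref{cor:simplegeneralization} is this last one, and the sole (mild) obstacle is the bookkeeping: keeping the learners' error and the $\alpha$ removed by Proposition~\ref{prop:post} as separate contributions to the contraction so that the adjusted distance threshold is tracked correctly.
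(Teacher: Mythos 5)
Your proposal is correct and follows exactly the route the paper intends: Corollary~\ref{cor:basicpost} is stated without a separate proof precisely because it is Corollary~\ref{cor:simplegeneralization} composed with the postprocessing of Proposition~\ref{prop:post}, with the distance requirement bumped by $\alpha$ to absorb the extra subtraction, and your overestimate and union-bound arguments are the same as in the proof of Theorem~\ref{theorem:simplegeneralization}. The only caveat is your closing assertion that $2\alpha_\T+\alpha$ is ``exactly'' the threshold at which the combined contraction of $2\alpha$ (learner slack plus postprocessing) still leaves half of $\D(u,v)$ --- strictly that requires $\D(u,v)\geq 4\alpha$ unless $\alpha_\T\geq\tfrac{3}{2}\alpha$ --- but this looseness is inherited from the paper's own statement rather than introduced by your argument.
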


Theorem \ref{theorem:simplegeneralization} is the first step to learning submetrics which generalize to unseen samples, but the limited nontriviality guarantee is potentially problematic.
The next section considers how the choice of representatives and the properties of the metric on the distribution $\U$ impact nontriviality.

\section{Choosing Representatives}\label{section:choosingreps}
%The primary weakness of Theorem \ref{theorem:simplegeneralization} is the poor nontriviality guarantee.
There are two approaches one might take to improve the nontriviality guarantee of Theorem \ref{theorem:simplegeneralization}: (1) develop specialized strategies for combining representative submetrics which depend on the structure of the metric, or (2) characterize generic randomized strategies.
We briefly consider the first approach below, and then devote the remainder of the section to the second approach.

\subsection{Metric structure dependent strategies.}
First, one could propose a representative selection mechanism tailored to a particular problem setting.
This is a very reasonable strategy if some structure of the metric is known which can be exploited to better combine the representative submetrics, or there are specific distance preservation properties other than nontriviality which are deemed desirable.

For example, suppose that we had some understanding that the underlying metric we wish to learn is Euclidean distance in two dimensions. Even without knowing the features relevant to each dimension, we can propose a generic ``representative GPS'' submetric combination procedure. %``triangulation'' strategy.
We could choose $3$ representatives (with some additional conditions to ensure they form a basis) and use Algorithm \ref{alg:thresholdcombiner} to learn a representative submetric $\D_r$ for each representative with reasonably small contraction which generalizes to unseen samples. These distances can be used to build up a $2-$dimensional embedding of the representative points and any new points observed.
Notice that each new point can have at most one valid position in the embedding depending on its distance from the $3$ representatives.\footnote{This assumes that all distances are exact, there is some slack when the distance from each representative is an underestimate. We omit a full treatment of this problem in this work, both in terms of number of dimensions and approximation of representative distance, as it is not inherently important to understanding the motivation for setting specific strategies. Briefly, when distances from each representative are not exact it is possible that the region of possible locations is not contiguous for a new point. In terms of computing distances between two points of uncertain location, this can be ``fixed'' from an overestimate perspective by taking the minimum of the distances between all possible locations, but at the cost of weaker nontriviality guarantees.}
% As each new point's embedding must satisfy:
% \[\D_r(r,x) = \sqrt{\sum_{i \in [2]}(r_i - x_i)^2}\]
% for all $r \in R$, where $r_i$ is the $i^{th}$ coordinate of $r$ in the $2-$dimensional embedding.
%
Thus for any pair $u,v \in U \times U$ we can compute their distance based on their relative positions from the set of representatives with error probability proportional to the error of our hypotheses for $\{\Dr\}$. Essentially, with a strong assumption on the form of the metric, we may be able to propose a representative submetric combination strategy which gives very good nontriviality guarantees.
%\note{Add triangulation figure, add figure showing min dist.}

\subsection{Random representatives}
When little or no information is known about the structure of the metric, or the known structure dos not lend itself to a simple representative selection strategies, choosing a set of representatives at random is a reasonable alternative strategy.
When a set of representatives is chosen at random, a key component of the argument for how well the set will preserve distances is how distances between pairs are distributed in $\U\times \U$. For instance, if most of the weight in $\U\times \U$ is concentrated on pairs which are maximally distant, it may be more difficult to generate a set of good representatives compared with an alternative distribution over $U$ which results in a broader range of distances.
A set of randomly chosen representatives will have certain nontriviality properties which depend on the more generic ``density'' properties of the metric and distribution $\U$, which we define below. In contrast to a setting-specific strategy, we don't make any assumptions about how submetrics based on different representatives can be combined other than the universally applicable merges specified in Lemma \ref{lemma:submerge}.

We devote the remainder of this section to understanding the generalization properties of a random set of representatives.
First, we formalize the definition of a $\gamma -$net to capture the notion of a set of representatives ``covering'' a fraction of the distribution (subset of the universe) and prove several useful lemmas relating the size of $\gamma$ to the nontriviality properties of the submetric.
Next, we formally define the \density~and \diffusion~parameters for a metric and distribution over the universe, and show how the nontriviality properties of $\gamma-$nets relate to these parameters.
Roughly speaking, \density~describes how closely packed elements are and characterizes how easy it is to construct a $\gamma$-net, whereas \diffusion~describes how many distances are large enough to tolerate a contraction.
Intuitively, more closely packed points (high density) will make it easier to find a representative closer to those points, but the tradeoff is additional small absolute distances  between points (lower diffusion), which will be more impacted by the underestimate error of the net.\footnote{For example, a universe with all points except one clustered together with distances less than $\alpha$ will be easy to cover with representatives at distance at most $\alpha$ from each element, but any contraction of size approximately $ \alpha$ will destroy any distinguishing power between the clustered points.}
Finally, we characterize the number of randomly sampled representatives needed to form a $\gamma -$net, given the \density~and \diffusion~characteristics of the metric and distribution, and use this to prove our main generalization result.

 \subsection{Distance preservation via $\gamma-$nets}
 The crux of the argument for nontriviality with random representatives is (1) a random sample of representatives is likely to be ``close to'' a significant portion of $\U$, and (2) we can bound the magnitude of underestimates based on the distance from a representative for arbitrary metrics.
 Recall the definition of a $\gamma-$net, which captures the notion of being ``close to'' or ``covering'' a set of elements.
 %DEFRESTATE
 \newtheorem*{def:gammanet}{Definition \ref{def:gammanet}}
\begin{def:gammanet}[Restatement]
 A set $R \subseteq U$ is said to form a \textit{$\gamma-$net} for a subset $V \subseteq U$ under $\D$ if for all balls of radius $\gamma$ (determined by $\D$) containing at least one element $v \in V$, the ball also contains $r \in R$.
\end{def:gammanet}
 
%  \begin{definition}
%  A set $R \subseteq U$ is said to form a \textit{$\gamma-$net} for a subset $V \subseteq U$ under $\D$ if for all balls of radius $\gamma$ (determined by $\D$) containing at least one element $v \in V$, the ball also contains $r \in R$.
%  \end{definition}

 To reason about nontriviality of a set of representatives which form a $\gamma-$net, we derive a bound on the contraction of distances between pairs based on their distances to a representative.
 Intuitively,
 the distance between a representative and another element in the universe will be nearly identical to the distance between a close neighbor of the representative and that element.
 Lemma \ref{nips:lemma:generalbound} (restated below) states that, given a representative $r$, $\Dr$ underestimates $\D(u,v)$ by at most $\min\{2\D(r,u),2\D(r,v)\}$.
 %DEFRESTATE
\newtheorem*{nips:lemma:generalbound}{Lemma \ref{nips:lemma:generalbound}}
\begin{nips:lemma:generalbound}[Restatement]
For all $u,v \in U \backslash \{r\}$, $\D(u,v) - \Dr(u,v) \leq \min\{2\D(r,u),2\D(r,v)\}$, where $\Dr$ is the representative submetric for $r \in U$.
\end{nips:lemma:generalbound}
%  \begin{lemma}\label{lemma:generalbound}
%  For all $u,v \in U \backslash \{r\}$, $\D(u,v) - \Dr(u,v) \leq \min\{2\D(r,u),2\D(r,v)\}$, where $\Dr$ is the representative submetric for $r \in U$.
%  \end{lemma}
 \begin{proof}
 By construction, $\Dr(u,v) = |\D(r,v)-\D(r,u)|.$ Without loss of generality, assume $\D(r,u) \leq \D(r,v).$ By triangle inequality, $\D(r,v) \geq \D(u,v) - \D(r,u)$, so
 $\D(u,v) - \Dr(u,v) \leq 2\D(r,u)$.
 \end{proof}
 \begin{corollary}
 For all $u,v \in U \backslash \{r\}$, $\D(u,v) - \Dconst(u,v) \leq \min\{2\D(r,u),2\D(r,v)\} + \alpha$, where $\Dconst$ is the consistent underestimator representative submetric for $r \in U$ with maximum contraction $\alpha$.
 %Given a representative $r$, the consistent underestimator representative submetric $\Dconst$ with maximum contraction $\alpha$,
 %for all $u,v \in U \backslash \{r\}$, $\D(u,v) - \Dr(u,v) \leq 2\D(r,u) + \alpha$.
 \end{corollary}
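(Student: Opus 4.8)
The plan is to prove the bound by splitting the total underestimate of $\D(u,v)$ incurred by $\Dconst$ into two pieces: the underestimate already present in the exact representative submetric $\Dr$, and the additional loss introduced when $\Dr$ is replaced by the consistent underestimator. Concretely, I would write
\[
\D(u,v) - \Dconst(u,v) = \bigl(\D(u,v) - \Dr(u,v)\bigr) + \bigl(\Dr(u,v) - \Dconst(u,v)\bigr),
\]
where $\Dr(u,v) = |\D(r,u) - \D(r,v)|$, and then bound each summand separately.

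First, Lemma \ref{nips:lemma:generalbound} applied directly gives $\D(u,v) - \Dr(u,v) \le \min\{2\D(r,u), 2\D(r,v)\}$. Second, since $\Dconst(u,v) = |f_r(u) - f_r(v)|$ for an $\alpha$-consistent underestimator $f_r$ of $\Dr$ with maximum contraction $\alpha$ measured against $\Dr$, I would invoke the lower bound $\Dconst(u,v) \ge \Dr(u,v) - \alpha$, equivalently $\Dr(u,v) - \Dconst(u,v) \le \alpha$. This is exactly the content of the interval $|f_r(u) - f_r(v)| \in [\Dr(u,v) - \alpha_\T, \Dr(u,v) + \alpha_\T]$ established for threshold consistent underestimators in Lemma \ref{lemma:tunderestimator}, and more generally it is what ``maximum contraction $\alpha$'' asserts. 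Adding the two inequalities yields $\D(u,v) - \Dconst(u,v) \le \min\{2\D(r,u), 2\D(r,v)\} + \alpha$, which is the claim.

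The only point requiring care is which notion of ``contraction'' is in force: condition (3) of Definition \ref{def:consistentunderestimator} by itself only provides the \emph{upper} estimate $\Dconst(u,v) \le \Dr(u,v) + \alpha$, whereas the corollary needs the complementary \emph{lower} estimate $\Dconst(u,v) \ge \Dr(u,v) - \alpha$. That lower estimate is precisely what is being assumed when we say $\Dconst$ has maximum contraction $\alpha$ with respect to $\Dr$, and it holds automatically for the threshold-based $\Dconst$ used throughout this work by Lemma \ref{lemma:tunderestimator}. Once this bookkeeping is pinned down, the statement follows in one line from Lemma \ref{nips:lemma:generalbound}, so there is no substantive obstacle; the entire content is in stating the two bounds precisely and summing them.
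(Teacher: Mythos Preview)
Your proposal is correct and matches what the paper intends: the corollary is stated without proof as an immediate consequence of Lemma~\ref{nips:lemma:generalbound}, and your decomposition $\D(u,v) - \Dconst(u,v) = (\D(u,v) - \Dr(u,v)) + (\Dr(u,v) - \Dconst(u,v))$ followed by applying the lemma to the first term and the maximum-contraction bound to the second is exactly the one-line argument implicit in the paper. Your caveat about which notion of contraction is in play is well taken; the paper's formal Definition~\ref{def:consistentunderestimator} writes $\maxcontraction$ against $\D$, but the usage here (and in Lemma~\ref{lemma:tunderestimator}) is contraction against $\Dr$, which is the reading that makes the corollary both nontrivial and true.
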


 Lemma \ref{nips:lemma:generalbound} is very useful for understanding the distance contractions for sets of representatives which form $\gamma-$nets for $U$, as every pair is close to at least one representative.
 Of course, forming a $\gamma-$net for an arbitrary $\gamma$ isn't enough on its own to give a good nontriviality guarantee.\footnote{ For example, if all of the elements in $U$ are contained in two well separated balls of radius $\gamma$, a $\gamma-$net will preserve distances between pairs with one element in each ball well, but distances between pairs within the \textit{same} ball may not be. This issue is a significant motivation for defining nontriviality as a relative distance preservation guarantee, rather than an absolute maximum contraction. Notice that the absolute contraction in this case is potentially very small, only $2\gamma$, but the relative contraction may be significantly higher for pairs contained in the same ball. Later applications seeking to use the submetric as constraints on a classifier will not be able to make nuanced decisions between elements in the same ball, which may be problematic for some settings.}

\subsection{Density and \diffusion}
To understand how representatives which form a $\gamma -$net will preserve distances, we recall the definitions of  \textit{\density}~and \textit{\diffusion}~below to characterize the relevant properties of the metric and distribution.
The notion of $(\gamma,a,b)-$dense is intended to capture the weight ($a$) of elements that have a significant weight ($b$) on their close (distance $\gamma$) neighbors under $\U$ as a way to characterize how likely it is that a randomly chosen representative will be $\gamma$-close to a significant fraction of elements.

%DEFRESTATE
\newtheorem*{def:density}{Definition \ref{def:density}}
\begin{def:density}[$(\gamma,a,b)-$dense - Restatement]
Given a distribution $\mathcal{U}$ over the universe $U$, a metric $\D: U \times U \rightarrow [0,1]$ is said to be $(\gamma,a,b)-$dense for $\mathcal{U}$ if there exists a subset $A\subseteq U$ with weight $a$ under $\mathcal{U}$ such that for all $u \in A$
\[\Pr_{v \sim \mathcal{U}}[\D(u,v) \leq \gamma] \geq b\]
\end{def:density}

% \begin{definition}[$(\gamma,a,b)-$dense]
% %A metric $\D$ is $(r,s,t)-$dense for a universe $U$ if for an $s$ fraction of $U$ there exists at least a $t$ fraction of $U$ such that $\D(u,v)\leq r$.
% Given a distribution $\mathcal{U}$ over the universe $U$, a metric $\D: U \times U \rightarrow [0,1]$ is said to be $(\gamma,a,b)-$dense for $\mathcal{U}$ if there exists a subset $A\subseteq U$ with weight $a$ under $\mathcal{U}$ such that for all $u \in A$
% \[\Pr_{v \sim \mathcal{U}}[\D(u,v) \leq \gamma] \geq b\]
% \end{definition}

Figure \ref{nips:fig:avsb} illustrates the tradeoff between $a$ and $b$ for a particular choice of $\gamma$ for $\unifu$ on an example universe in $\R^2$.

% \begin{figure}

%     \centering
%         \includegraphics[width=.4\textwidth]{b_vs_a.png}
%         \caption{A visualization of the weight of elements, $b$, within distance $\gamma=.1$ of each element under $\unifu$. The universe consists of a set of points in the unit square. The color assigned to each point indicates the weight of elements in the universe which are within distance $\gamma=0.1$ from the element under the uniform distribution $\unifu.$} \label{fig:avsb}
% \end{figure}
%To build intuition, consider the Euclidean metric over a set of points drawn from the uniform distribution on the unit sphere. Every element (except those in the border ``shell'') has the same expected number of points within a $\gamma-$radius ball. Thus ..... \note{Come up with some example here.}

In addition to density, we will also frequently consider the fraction of distances larger than a given constant. This allows us to reason about how much the contraction in the submetric will affect the distances preserved, as in the statement of Theorem \ref{theorem:simplegeneralization}. This notion is formalized as \diffusion. 

%DEFRESTATE
\newtheorem*{def:diffuse}{Definition \ref{def:diffuse}}
\begin{def:diffuse}[$(p,\zeta)-$\diffuse - Restatement]
Given a distribution $\U$, a metric $\D$ is $(p,\zeta)-$\diffuse~if the fraction of distances between pairs of elements in $\U \times \U$ greater than $\zeta$ is $p$, ie
\[\Pr_{u,v \sim \U \times \U}[\D(u,v)\geq \zeta]\geq p\]
\end{def:diffuse}

% \begin{definition}[$(p,\zeta)-$\diffuse]\label{def:diffuse}
% Given a distribution $\U$, a metric $\D$ is $(p,\zeta)-$\diffuse~if the fraction of distances between pairs of elements in $\U \times \U$ greater than $\zeta$ is $p$, ie
% \[\Pr_{u,v \sim \U \times \U}[\D(u,v)\geq \zeta]\geq p\]
% \end{definition}

Definition \ref{def:diffuse} is highly reminiscent of nontriviality (Definition \ref{def:nontrivial}) and we formally relate diffusion to nontriviality in Lemma \ref{lemma:fullnet}. 
Notice that, although there are five parameters describing a metric and distribution across the two definitions, these parameters are highly related.
We will generally consider distributions which are $(\gamma,a,b)-$dense and $(p,\frac{2\gamma}{1-c})-$\diffuse.
Although $\frac{2\gamma}{1-c}$ initially seems an arbitrary quantity, it indicates that a $p-$fraction of pairs will have distances preserved by a factor of $c$ if the maximum contraction for those pairs is no more than $2\gamma$. Thus the values of $\gamma$ and $c$, which in turn dictate $p$, $a$, and $b$, (assuming $\zeta=\frac{2\gamma}{1-c}$) can loosely be seen as a tradeoff between how many pairs will have distance preservation guarantees and how large the guarantees will be. In the case of the example in Figure \ref{nips:fig:avsb}, we could describe the uniform distribution as $(.88, .4)-$\diffuse, or $(.88, \frac{2\gamma}{1-c})-$\diffuse, where $c=\frac{1}{2}$ and $\gamma=0.1$.
That is, with contraction $2\gamma$ at least 88\% of the pairs in $\unifu\times\unifu$ will have at least half of their distances preserved.

\subsubsection{Nontriviality properties of $\gamma -$nets}
Given the formalization of \diffusion, we can now relate the magnitude of $\gamma $ to the nontriviality properties of the merged representative set submetric.
Lemma \ref{lemma:fullnet} states that a set of representatives which form a $\gamma-$net for $U$ will have nontriviality properties related to the \diffusion~properties of $\D$.
\begin{lemma}\label{lemma:fullnet}
If a set of representatives $R \subseteq U$ form a $\gamma-$net for a universe $U$
and $\D$ is $(p,\frac{2\gamma}{1-c})-$\diffuse~on $\U$,
%a $p$ fraction of distances between elements in $\mathcal{U}\times \mathcal{U}$ are greater than $\frac{2\gamma}{1-c}$, for some $0 \leq c < 1$
then $\DR$ is $(p,c)-$nontrivial on $\mathcal{U}$.
\end{lemma}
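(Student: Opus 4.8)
The plan is to convert the $\gamma$-net hypothesis into a uniform pointwise lower bound $\DR(x,y) \ge \D(x,y) - 2\gamma$ valid for all pairs, and then combine this with $(p,\frac{2\gamma}{1-c})$-diffuseness via a single line of algebra. For the first step: since $R$ forms a $\gamma$-net for $U$, applying the definition to the radius-$\gamma$ ball centered at an arbitrary $x \in U$ (which contains $x$ itself) shows there is some $r \in R$ with $\D(r,x) \le \gamma$. Fixing a pair $x,y \in U$ and such an $r$ for $x$: if $r = x$ then $\Dr(x,y) = |\D(x,x) - \D(x,y)| = \D(x,y)$, and otherwise Lemma \ref{nips:lemma:generalbound} gives $\D(x,y) - \Dr(x,y) \le 2\D(r,x) \le 2\gamma$; either way $\Dr(x,y) \ge \D(x,y) - 2\gamma$. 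Since $\DR(x,y) = \maxmerge(\{\D_{r'} \mid r' \in R\}, x, y) \ge \Dr(x,y)$, this gives $\DR(x,y) \ge \D(x,y) - 2\gamma$ for every pair. (This is also exactly Lemma \ref{nips:lemma:generalnet} specialized to $V = U$.)

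For the second step, let $E$ denote the event $\D(x,y) \ge \frac{2\gamma}{1-c}$ over $x,y \sim \U \times \U$; by $(p,\frac{2\gamma}{1-c})$-diffuseness, $\Pr[E] \ge p$. On $E$ we have $(1-c)\D(x,y) \ge 2\gamma$, which rearranges to $\D(x,y) - 2\gamma \ge c\,\D(x,y)$; combining with the pointwise bound gives $\DR(x,y) \ge c\,\D(x,y)$, hence $\frac{\DR(x,y)}{\D(x,y)} \ge c$ (and $\D(x,y) > 0$ on $E$ since $\gamma > 0$). Therefore $\Pr_{x,y \sim \U \times \U}\big[\frac{\DR(x,y)}{\D(x,y)} \ge c\big] \ge \Pr[E] \ge p$, which is precisely $(p,c)$-nontriviality of $\DR$ for $\U$.

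I do not anticipate a real obstacle: once the $\gamma$-net becomes the per-element covering statement, the rest is a two-step inequality chain, and the only things to handle carefully are the degenerate case where the chosen representative equals $x$ (treated above), the fact that $\maxmerge$ only increases distances relative to any single $\Dr$, and the observation that pairs with $\D(x,y) = 0$ are automatically outside $E$ so the ratio is never evaluated at $0/0$. The same argument also yields the analogous statement for the consistent-underestimator submetric $\DconstR$ with maximum contraction $\alpha$: using the corollary to Lemma \ref{nips:lemma:generalbound} in place of Lemma \ref{nips:lemma:generalbound}, one replaces $2\gamma$ by $2\gamma + \alpha$ throughout and assumes $(p,\frac{2\gamma+\alpha}{1-c})$-diffuseness.
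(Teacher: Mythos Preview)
Your proof is correct and follows essentially the same route as the paper's: both use Lemma~\ref{nips:lemma:generalbound} together with the $\gamma$-net hypothesis to obtain the uniform contraction bound $\DR(x,y)\ge \D(x,y)-2\gamma$, and then combine this with $(p,\tfrac{2\gamma}{1-c})$-diffuseness via the one-line algebra $\D(x,y)\ge \tfrac{2\gamma}{1-c}\Rightarrow \D(x,y)-2\gamma\ge c\,\D(x,y)$. Your treatment is slightly more careful than the paper's (you explicitly handle the case $r=x$, invoke $\maxmerge$ to pass from $\Dr$ to $\DR$, and note that $\D(x,y)>0$ on the event so the ratio is well-defined), but the argument is the same.
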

\begin{proof}
Recall from the proof of Lemma \ref{nips:lemma:generalbound} that the distance between a pair $\Dr(u,v)$ has contraction at most $\min\{2\D(r,v),2\D(r,u)\}$.
Thus, the distance between any pair of elements is contracted by at most $2\gamma$. A $p$ fraction of distances between pairs are greater than $\frac{2\gamma}{1-c}$, so an absolute contraction of $2\gamma$ for these elements yields a ratio of at least $\frac{2\gamma/(1-c) - 2\gamma}{\frac{2\gamma}{1-c}} = 1 - \frac{2\gamma}{\frac{2\gamma}{1-c}} = c$ and thus a $2\gamma$ absolute contraction is at most a $c$ relative contraction for this set of elements. So we conclude that the max-merge of $\Dr$  for $r \in R$ is $(p, c)-$nontrivial for $\U$.
\end{proof}

% As the fraction $\frac{2\gamma}{1-c}$ may not be intuitive, we state a specific instance of $c$ as a corollary.
% \begin{corollary}
% If a set of representatives $R \subseteq U$ form $\gamma-$net for a universe $U$
% %and a $p$ fraction of distances between elements in $\mathcal{U}\times \mathcal{U}$ are greater than $4\gamma$,
% and $\D$ is $(p,4\gamma)-$\diffuse~on $\U$
% then $\DR$ is $(p,\frac{1}{2})-$nontrivial for $\U$.
% \end{corollary}
Corollary \ref{cor:fullnetunderestimate} states that in the case of consistent underestimators with $\maxcontraction=\alpha'$ that accounting for the potential underestimate error in the \diffusion~parameter is sufficient to yield the same nontriviality guarantees as in Lemma \ref{lemma:fullnet}.
Corollary \ref{cor:fullnetunderestimate} follows from observing the maximum possible contraction due to the underestimation from the $\gamma-$net placement and the underestimation of the consistent underestimators.
\begin{corollary}\label{cor:fullnetunderestimate}
If a set of representatives $R \subseteq U$ form a $\gamma-$net for a universe $U$
%and a $p$ fraction of distances between elements in $\mathcal{U}\times \mathcal{U}$ are greater than $\frac{2\gamma + \alpha'}{1-c}$, for some $0 < c \leq 1$
and $\D$ is $(p,\frac{2\gamma + \alpha'}{1-c})-$\diffuse~on $\U$
then $\Dconst$, produced from $\alpha-$consistent underestimators with maximum contraction $\alpha'$, is $(p,c)-$nontrivial for $\mathcal{U}$.
\end{corollary}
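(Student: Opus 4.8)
The plan is to mirror the proof of Lemma~\ref{lemma:fullnet} almost verbatim, substituting the sharper per-representative contraction bound supplied by the corollary immediately following Lemma~\ref{nips:lemma:generalbound}, which already accounts for the extra additive slack of a consistent underestimator. Throughout, I read $\Dconst$ in the statement as the representative set consistent underestimator submetric $\DconstR(u,v) := \maxmerge(\{\Dconst \mid r \in R\}, u, v)$, since a single-representative submetric cannot be nontrivial for arbitrary pairs in $\U \times \U$.

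First I would fix an arbitrary pair $u,v \in U$. Because $R$ forms a $\gamma$-net for $U$, the ball of radius $\gamma$ around $u$ contains some representative $r \in R$, i.e.\ $\D(r,u) \le \gamma$. Applying the corollary to Lemma~\ref{nips:lemma:generalbound} with this $r$ (and maximum contraction $\alpha'$) gives
\[
\D(u,v) - \Dconst(u,v) \;\le\; \min\{2\D(r,u), 2\D(r,v)\} + \alpha' \;\le\; 2\D(r,u) + \alpha' \;\le\; 2\gamma + \alpha'.
\]
Since $\DconstR(u,v) = \maxmerge(\{\Dconst \mid r \in R\}, u, v) \ge \Dconst(u,v)$ for this particular $r$, I get the uniform bound $\D(u,v) - \DconstR(u,v) \le 2\gamma + \alpha'$ for every pair, exactly analogous to the ``contracted by at most $2\gamma$'' step in Lemma~\ref{lemma:fullnet}.

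Next, invoke the diffusion hypothesis: a $p$-fraction of pairs $u,v \sim \U \times \U$ have $\D(u,v) \ge \frac{2\gamma + \alpha'}{1-c}$. For any such pair, the absolute contraction of at most $2\gamma + \alpha'$ yields a relative contraction of at most $\frac{2\gamma + \alpha'}{\D(u,v)} \le \frac{2\gamma + \alpha'}{(2\gamma + \alpha')/(1-c)} = 1-c$, hence $\frac{\DconstR(u,v)}{\D(u,v)} \ge c$. Therefore $\Pr_{u,v \sim \U \times \U}[\DconstR(u,v)/\D(u,v) \ge c] \ge p$, which is the definition of $(p,c)$-nontriviality. (That $\DconstR$ is an $\alpha$-submetric, so that the ratio is well defined and the claim is meaningful, follows from Corollary~\ref{cor:maxmergeconst} applied to Lemma~\ref{lemma:consistenterror}.)

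I do not expect a genuine obstacle here; the argument is a direct adaptation. The only point requiring care is bookkeeping: the additive term in the contraction bound must be the \emph{maximum contraction} $\alpha'$ of the consistent underestimator (as in Definition~\ref{def:consistentunderestimator}), not the submetric additive-error parameter $\alpha$; these coincide for threshold consistent underestimators by Lemma~\ref{lemma:tunderestimator} but could differ in general, and the diffusion parameter $\frac{2\gamma + \alpha'}{1-c}$ in the hypothesis is stated precisely in terms of $\alpha'$, so the two must be matched consistently.
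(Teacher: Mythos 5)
Your proposal is correct and matches the paper's intended argument: the paper gives no detailed proof, stating only that the corollary "follows from observing the maximum possible contraction due to the underestimation from the $\gamma$-net placement and the underestimation of the consistent underestimators," and your write-up fills in exactly that reasoning by combining the corollary to Lemma~\ref{nips:lemma:generalbound} with the relative-contraction computation from Lemma~\ref{lemma:fullnet}. Your bookkeeping remark about matching $\alpha'$ (the maximum contraction) rather than the submetric's additive-error parameter is also the right reading of the statement.
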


%\note{Placeholder: version of Lemma \ref{nips:lemma:fullnetc} with the fraction of distances with at least one element in the net is known.}

Returning to the example universe from Figure \ref{nips:fig:avsb}, Lemma \ref{lemma:fullnet} implies that if we selected a set of representatives $R$ which formed a $0.1-$net for the whole universe, then $\DR$ produced from exact evaluations of $\D(r,u)$ for all $u \in U$ and $r \in R$ would be $(0.88, \frac{1}{2})-$nontrivial for $\unifu$. That is, $\DR$ would preserve half of the original distance for almost $90\%$ of pairs in $\unifu\times\unifu$.

We now recall and present the proof for Lemma \ref{nips:lemma:fullnetc}, the weighted subset analog of Lemma \ref{lemma:fullnet}, which states that if a set of representatives form a $\gamma-$net for a subset of $U$, then the nontriviality properties depend on the weight of that subset in $\U$.

%DEFRESTATE
\newtheorem*{nips:lemma:fullnetc}{Lemma \ref{nips:lemma:fullnetc}}
\begin{nips:lemma:fullnetc}[Restatement]
If a set of representatives $R \subseteq U$ form a $\gamma-$net for weight $\w$ of $\U$
%and a $p$ fraction of distances between elements in $\mathcal{U}\times \mathcal{U}$ are greater than $\frac{2\gamma}{1-c}$, for some $0 < c \leq 1$
and $\D$ is $(p,\frac{2\gamma}{1-c})-$\diffuse~on $\U$,
then the submetric $\DR$ is $(p', c)-$nontrivial for $\U$, where $p' \geq p - (1-\w)^2$.
\end{nips:lemma:fullnetc}

% \begin{lemma}\label{lemma:fullnetc}
% If a set of representatives $R \subseteq U$ form a $\gamma-$net for weight $\w$ of $\U$
% %and a $p$ fraction of distances between elements in $\mathcal{U}\times \mathcal{U}$ are greater than $\frac{2\gamma}{1-c}$, for some $0 < c \leq 1$
% and $\D$ is $(p,\frac{2\gamma}{1-c})-$\diffuse~on $\U$,
% then the submetric $\DR$ is $(p', c)-$nontrivial for $\U$, where $p' \geq p - (1-\w)^2$.
% \end{lemma}
\begin{proof}
Consider the pairs in $\U \times \U$ which have distance at least $\frac{2\gamma}{1-c}$. The total weight of such pairs in $\U\times \U$ is $p$. Pairs with neither element in the net can have weight at most $(1-\w)^2$. 
Assuming the worst case scenario that all $(1-\w)^2$ weight of pairs with neither element in the net are also pairs with distance at least $\frac{2\gamma}{1-c}$, 
at least a $p' \geq p - (1-\w)^2$ weight in $\U \times \U$ have at least one element in the net and a distance of at least $\frac{2\gamma}{1-c}.$

By the same logic as in the proof of Lemma \ref{lemma:fullnet}, pairs with distance at least $\frac{2\gamma}{1-c}$ have relative contraction at most $c$ if at least one member is in the $\gamma-$net.
Thus the $\maxmerge$ of the submetrics from representatives in $R$ is $(p',c)-$nontrivial for $p' \geq p - (1-\w)^2$.
\end{proof}

Corollary \ref{nips:lemma:fullnetc}.1 restates Lemma \ref{nips:lemma:fullnetc} in terms of consistent underestimators, accounting for the maximum contraction in the \diffusion~parameters.

%DEFRESTATE
\newtheorem*{cor:fullnetcunderestmate}{Corollary \ref{nips:lemma:fullnetc}.1}
\begin{cor:fullnetcunderestmate}
If a set of representatives $R \subseteq U$ form a $\gamma-$net for weight $\w$ of $\U$
%and a $p$ fraction of distances between elements in $\mathcal{U}\times \mathcal{U}$ are greater than $\frac{2\gamma + \alpha'}{1-c}$,
and $\D$ is $(p,\frac{2\gamma+ \alpha'}{1-c})-$\diffuse~on $\U$,
then the $\alpha-$submetric $\Dconst$, formed from $\alpha-$consistent underestimators with maximum contraction $\alpha'$, is $(p', c)-$nontrivial for $\U$, where $p' \geq p - (1-\w)^2$.
\end{cor:fullnetcunderestmate}

% \begin{corollary}\label{cor:fullnetcunderestmate}
% If a set of representatives $R \subseteq U$ form a $\gamma-$net for weight $\w$ of $\U$
% %and a $p$ fraction of distances between elements in $\mathcal{U}\times \mathcal{U}$ are greater than $\frac{2\gamma + \alpha'}{1-c}$,
% and $\D$ is $(p,\frac{2\gamma+ \alpha'}{1-c})-$\diffuse~on $\U$,
% then the $\alpha-$submetric $\Dconst$, formed from $\alpha-$consistent underestimators with maximum contraction $\alpha'$, is $(p', c)-$nontrivial for $\U$, where $p' \geq p - (1-\w)^2$.
% \end{corollary}

The nontriviality guarantees of Lemmas \ref{lemma:fullnet} and \ref{nips:lemma:fullnetc} are conservative. They incorporate a worst-case assumption on the distribution of large distances in Lemma \ref{nips:lemma:fullnetc}, and entirely ignore the exact distance preservation from the representatives in both Lemmas. Again, we stress that our goal in this section is to show the possibility of positive results, and we do not attempt to achieve optimal performance or guarantees.

Corollary \ref{nips:lemma:fullnetc}.2 restates the Lemma directly in terms of the probability that at least one element in the pair sampled is covered by the $\gamma-$net and the distance is greater than $\frac{2\gamma + \alpha'}{1-c}$ in order to get a tighter characterization of nontriviality.

%DEFRESTATE
\newtheorem*{cor:fullnetcunderestmateknownp}{Corollary \ref{nips:lemma:fullnetc}.2}
\begin{cor:fullnetcunderestmateknownp}
If a set of representatives $R \subseteq U$ form a $\gamma-$net for a subset $V \subseteq U$,
%for weight $\w$ of $\U$
%and a $p$ fraction of distances between elements in $\mathcal{U}\times \mathcal{U}$ are greater than $\frac{2\gamma + \alpha'}{1-c}$,
%and $\D$ is $(p,\frac{2\gamma+ \alpha'}{1-c})-$\diffuse~on $\U$,
and $\Pr_{u,v \sim \U \times \U}[(u \in V \lor v \in V) \wedge (\D(u,v) > \frac{2\gamma + \alpha'}{1-c})]\geq p$,
%where $\Gamma$ is the set of elements covered by the $\gamma-$net formed by $R$,
then the $\alpha-$submetric $\Dconst$, formed from $\alpha-$consistent underestimators with maximum contraction $\alpha'$, is $(p, c)-$nontrivial for $\U$.%, where $p' \geq p - (1-\w)^2$.
\end{cor:fullnetcunderestmateknownp}

% \begin{corollary}\label{cor:fullnetunderestmateknownp}
% If a set of representatives $R \subseteq U$ form a $\gamma-$net for a subset $V \subseteq U$,
% %for weight $\w$ of $\U$
% %and a $p$ fraction of distances between elements in $\mathcal{U}\times \mathcal{U}$ are greater than $\frac{2\gamma + \alpha'}{1-c}$,
% %and $\D$ is $(p,\frac{2\gamma+ \alpha'}{1-c})-$\diffuse~on $\U$,
% and $\Pr_{u,v \sim \U \times \U}[(u \in V \lor v \in V) \wedge (\D(u,v) > \frac{2\gamma + \alpha'}{1-c})]\geq p$,
% %where $\Gamma$ is the set of elements covered by the $\gamma-$net formed by $R$,
% then the $\alpha-$submetric $\Dconst$, formed from $\alpha-$consistent underestimators with maximum contraction $\alpha'$, is $(p, c)-$nontrivial for $\U$.%, where $p' \geq p - (1-\w)^2$.
% \end{corollary}

Given a set of representatives, it is possible to empirically measure $p$ on a sample to improve the bounds given by Lemma \ref{nips:lemma:fullnetc} or Corollary \ref{nips:lemma:fullnetc}.2. For maximum generality, we will rely only on the density and \diffusion~properties of the metric and distribution, but we include Corollary \ref{nips:lemma:fullnetc}.2 as a reminder that the bounds given are by no means tight.

\subsubsection{Representative set size }\label{subsection:netgeneralization}
%With the basic characterization of nontriviality properties of $\gamma-$nets in place, w
We now consider how likely it is that a set of random representatives drawn from $\U$ will form a $\gamma-$net for $\U$ given the density properties of $\D$ on $\U$. Lemma \ref{nips:lemma:randomrepgeneralization} (restated below) states that a set of random representatives $R$ of size $O(\frac{1}{b}\ln(\frac{1}{b\delta}))$ will be sufficient to guarantee with high probability that the submetric $\D_R$ constructed from exact evaluations of $\Dr$ via queries to the \human~on new samples from $\U\times \U$ will have nontriviality properties related to the density and \diffusion~of $\D$ for $\U$.

%DEFRESTATE
\newtheorem*{nips:lemma:randomrepgeneralization}{Lemma \ref{nips:lemma:randomrepgeneralization}}
\begin{nips:lemma:randomrepgeneralization}[Restatement]
If a metric $\D$ is $(\gamma, a, b)-$dense and
$(p,\frac{6\gamma}{1-c})-$\diffuse~on $\U$,
then a random set of representatives $R$ of size at least $\frac{1}{b}\ln(\frac{1}{b\delta})$ will produce a $(p - (1-a)^2, c)$-nontrivial submetric $\DR$
%with access to $\mreal$
for $\U$ with probability at least $1-\delta$, where
$\DR$ is constructed from exact evaluations of $\Dr$ via queries to the \human.
\end{nips:lemma:randomrepgeneralization}

% \begin{lemma}\label{lemma:randomrepgeneralization}
% If a metric $\D$ is $(\gamma, a, b)-$dense and
% $(p,\frac{6\gamma}{1-c})-$\diffuse~on $\U$,
% then a random set of representatives $R$ of size at least $\frac{1}{b}\ln(\frac{1}{b\delta})$ will produce a $(p - (1-a)^2, c)$-nontrivial submetric $\DR$
% %with access to $\mreal$
% for $\U$ with probability at least $1-\delta$, where
% $\DR$ is constructed from exact evaluations of $\Dr$ via queries to the \human.
% \end{lemma}
\begin{proof}
Notice that if a set of representatives $R \subseteq U$ forms a $3\gamma-$net for an $a$ fraction of $\U$, then by Lemma \ref{nips:lemma:fullnetc} the submetric $\DR$ will be $(p',c)-$nontrivial for $p' \geq p - (1-a)^2$.

Suppose that a metric is $(\gamma,a,b)$ dense. Denote the weight $a$ subset of $U$ (with associated weight $b$ $\gamma-$close subsets) as $A$. Suppose that a random sample $R \sim \U$ of size $m$ does \textit{not} form an $\gamma-$net for $A$.  Then it must be the case that there is at least weight $b$ of $\U$ not included in $R$. That is, the associated weight $b$ subset of at least one element in $A$ is not ``hit'' by any representative. Thus, it is sufficient to bound the probability that weight $b$ of $\U$ corresponding to an element in $A$ is not hit by a sample of size $m$ to determine if our sample forms an $\gamma-$net for $A$, satisfying the conditions of the lemma.

As a warm-up, suppose that all of the weight $b$ subsets corresponding to elements in $A$ are disjoint.
The probability that all $m$ samples do not fall into a particular weight $b$ subset of $\U$ is $(1-b)^m$. %, where a weight $b$ subset is a subset of $U$ which has weight at least $b$ under $\U$.
Notice that if all elements in $A$ have disjoint associated weight $b$ subsets, then the probability that all $m$ samples do not fall into at least one of the disjoint weight $b$ subsets of $\U$ is at most  $\frac{1}{b}(1-b)^m$. (Notice, that there are at most $\frac{1}{b}$ disjoint weight $b$ subsets in total weight $1$.) Rearranging and substituting $1-b \leq e^{-b}$, any $m$ which satisfies:
\[\frac{1}{b}(1-b)^m \leq \delta\]
\[e^{-mb}\leq b\delta\]
%\[-mb \leq \ln(b\delta)\]
\[m \geq \frac{1}{b}\ln(\frac{1}{b\delta})\]
will fail to hit any subset of weight at least $b$ with probability at most $\delta$. Thus, if the associated weight $b$ subsets for $A$ are disjoint, a set of representatives of size $\frac{1}{b}\ln(\frac{1}{b\delta})$ is sufficient to produce a $\gamma-$net for $A$ with probability at least $1-\delta$.

Now, consider the (more likely) case that the weight $b$ subsets for elements in $A$ are not disjoint.
We will show that there is a set of disjoint weight $b$ subsets, $B_{remain}$, such that if every disjoint subset in $B_{remain}$ is ``hit'' by at least one element in $R$, then every element in $A$ is at most distance $3\gamma$ from a representative, i.e. $R$ forms a $3\gamma-$net for $A$.
%We will now show that there is a set of non-overlapping regions such that every element in $A$ in one of the original weight $b$ regions is distance at most $3\gamma$ from any hitting element.

Consider the entire set of weight $b$ subsets associated with elements in $A$. Now, suppose that we removed the minimum number of subsets such that the remaining weight $b$ subsets were all disjoint. Call the minimal set of removed subsets $B_{remove}$, and the set of remaining disjoint weight $b$ subsets $B_{remain}$.
%Consider the minimum set of weight $b$ subsets $B_{remove}$ which, if removed, leave a remaining set $B_{remain}$ which consists of disjoint weight $b$ subsets and $B_{remove}\cup B_{remain}$ is the set of all weight $b$ subsets corresponding to $A$. (Note that we don't need to know how to discover such a set, we simply need that it exists).
Consider removing each subset in $B_{remove}$ one at a time. The last subset removed must have overlap with at least one subset in $B_{remain}$, or there would be a smaller minimum set we could have removed which does not contain the last subset.
Notice that we may remove the subsets in $B_{remove}$ in any order, and yet this observation still holds for the final subset removed. Thus, each subset in $B_{remove}$ must have overlap with a subset in $B_{remain}$, so the furthest any element in a subset in $B_{remove}$ could be from a representative that ``hits'' a set in $B_{remain}$ is $4\gamma$.
However, an element in $A$ in associated with a weight $b$ subset in $B_{remove}$ can only be distance $3\gamma$ from the hitting representative, as it is at most distance $\gamma$ from at least one of the element(s) overlapping with $B_{remain}$, which are in turn at most distance $2\gamma$ from the hitting representative.

As in the disjoint case above, the size of $B_{remain}$ is bounded by $1/b$, and the same logic applies, but forming a $3\gamma-$net.
Thus for a set of randomly sampled representatives of size $m\geq \frac{1}{b}\ln(\frac{1}{b\delta})$, the probability of the representatives chosen not forming a $3\gamma-$net for weight $a$ of $\U$ is at most $1-\delta$.
\end{proof}

Corollary \ref{nips:lemma:randomrepgeneralization}.1 is the consistent underestimator analog of Lemma \ref{nips:lemma:randomrepgeneralization}.

%DEFRESTATE
\newtheorem*{cor:randomrepgeneralization}{Corollary \ref{nips:lemma:randomrepgeneralization}.1}
\begin{cor:randomrepgeneralization}
If a metric $\D$ is $(\gamma, a, b)-$dense and
$(p,\frac{6\gamma + \alpha'}{1-c})-$\diffuse~on $\U$,
then a random set of representatives $R$ of size at least $\frac{1}{b}\ln(\frac{1}{b\delta})$ will produce a $(p - (1-a)^2, c)$-nontrivial $\alpha-$submetric $\DconstR$, constructed from exact evaluations of $\alpha-$consistent underestimators, for each representative with maximum contraction $\alpha'$ for $\U$ with probability at least $1-\delta$.
\end{cor:randomrepgeneralization}

% \begin{corollary}\label{cor:randomrepgeneralization}
% If a metric $\D$ is $(\gamma, a, b)-$dense and
% $(p,\frac{6\gamma + \alpha'}{1-c})-$\diffuse~on $\U$,
% then a random set of representatives $R$ of size at least $\frac{1}{b}\ln(\frac{1}{b\delta})$ will produce a $(p - (1-a)^2, c)$-nontrivial $\alpha-$submetric $\DconstR$, constructed from exact evaluations of $\alpha-$consistent underestimators, for each representative with maximum contraction $\alpha'$ for $\U$ with probability at least $1-\delta$.
% \end{corollary}

Our strategy of using random representatives is motivated by a desire for as much generality as possible with respect to the form of the metric. However, random sampling is not the only method to construct a $\gamma-$net.

\begin{remark}\label{remark:greedygamma}
Choosing a set at random to form a $\gamma-$net ignores the information provided by each of the representatives.  
A $\gamma-$net for a fixed sample, or some weight of a fixed sample, can be constructed via a greedy algorithm rather than random sampling. 
The key obstacle to analyzing the effectiveness of a greedy procedure is that the choice of the next representative, based on the weight of elements it may add to the net, can be based only on the existing incomplete distance information. In some cases, this incomplete information may lead to very sub-optimal choices.
However, there may be procedures which take advantage of \quadqs~and rough ordering information to reduce the number of mistakes made, at the cost of additional queries to the \human.
For example, \quadqs~can be used to check a small sample of the elements a candidate representative $r_c$ is expected to add against a known distance pair of approximately distance $\gamma$ in order to better estimate the expected contribution to the net.
We anticipate that such strategies may be useful in practice, even if a rigorous theoretical analysis for arbitrary metrics is pessimistic. It may also be useful to characterize the set of metrics which have bounded error in this incomplete information scenario, and we pose this as an open question for future work.
\end{remark}

\subsection{Generalization with random representative sets}
Thus far we have shown that a random set of representatives can have good properties for new samples drawn from the distribution, assuming we construct the submetric from \textit{exact} evaluations of $\Dr$ or $\Dconst$, i.e. with \textit{unlimited} access to the \human.
%However, exact evaluations will not be possible for samples which we do not wish to ask our human judge to evaluate.
We now combine the results of Theorem \ref{theorem:simplegeneralization} and Lemma \ref{nips:lemma:randomrepgeneralization} to show how to construct an efficient submetric learner which produces submetrics with good nontriviality properties, given \textit{limited} query access to the \arbiter~for training data generation.

Algorithm \ref{alg:submetriclearner} picks a set of representatives which will form a $\gamma-$net for weight of a $(\gamma,a,b)-$dense metric with probability at least $1-\dL/2$. (Recall from Lemma \ref{nips:lemma:randomrepgeneralization} that the number of representatives required depends only on $b$ for a $(\gamma,a,b)-$dense metric.) These representatives are then used to specify a set of $\alpha_\T-$submetric learners (via Algorithm \ref{alg:thresholdcombiner}) which are passed to Algorithm \ref{alg:combiner} to construct a good final combined submetric with probability at least $1-\dL/2$. (That is, Algorithm \ref{alg:submetriclearner} splits its failure probability ``budget'' evenly between the choice of representatives and the learners for each representative.)

%We now consider the problem of learning $\DR$ and $\Dconst$, that is, predicting the human judges' responses from a set of labeled examples.

\begin{algorithm}[]
  \caption{}%$\submetriclearner(\varepsilon, \delta, b, L=\{L_{t_i}^r\},\T)$}
  \label{alg:submetriclearner}
\begin{algorithmic}[1]
  \LineComment{Inputs: error and failure probability parameters $\varepsilon,\delta$, density parameter $b$, a set of threshold function learners $\{L_{t_i \in \T}^r\}$, and the threshold set $\T$.}
  \Procedure{$\submetriclearner$}{$\varepsilon, \delta, b,\{L_{t_i \in \T}^r\}, \T$}
  \State Sample $R \sim \U$ such that $|R| = \frac{1}{b}\ln(\frac{2}{b\dL})$.
%  \State Choose a set of thresholds $\T$. \note{May want to have this as input, so that we can more cleanly state the assumption, but including it in the algorithm implies more flexibility (which is the real state of things).}
  \State Initialize an empty list $L$.
  \For{$r \in R$}
    \State $L_r(\er,\dr) := \thresholdcombiner(\T, \{L_{t_i \in \T}^r\},\er,\dr)$
    \State Add $L_r$ to $L$.
  \EndFor
  \State $\eR \leftarrow \varepsilon$
  \State $\dR \leftarrow \delta/2$
  \State \textbf{return} $h_R(u,v) = \combiner(L, \eR,\dR)$
  \EndProcedure
\end{algorithmic}
\end{algorithm}

\begin{theorem}\label{theorem:maingeneralization}
Given a distance metric $\D$, and a distribution $\U$ over the universe %$\U = \Delta(U)$ %, a set of sets of $O(|R|m^*)$ labeled examples $\{M_i,Y_i\}$,
if
\begin{enumerate}
    \item There exist a set of thresholds $\T$ and efficient learners $\{L_{t_i \in \T}^r\}$ as in Assumption \ref{assumption:pacthreshold}, and
    \item $\D$ is $(\gamma, a, b)-$dense and
$(p,\frac{6\gamma + \alpha_\T}{1-c})-$\diffuse~on $\U$,%$\D$ is $(\gamma,a,b)-$dense for $\U$ , and the fraction of distances in $\U \times \U$ greater than $\frac{6\gamma+\alpha_T}{1-c}$ is $p$,
\end{enumerate}
then there exists an efficient submetric learner which produces a hypothesis $h_R$ with probability greater than $1- \dL$ such that
\begin{enumerate}
    \item $\Pr_{x,y \sim \U \times \U}[h_R(x,y) \geq \D(x,y) + \alpha_\T] \leq \eL.$
    \item $h_R$ is $(p - (1-a)^2 - \eL, c)-$nontrivial for $\U$.
\end{enumerate}
which runs in time $O(poly(\frac{1}{b}\ln(\frac{1}{b\delta}),|\T|,\frac{1}{\eL},\frac{1}{\dL}))$ for all $\eL,\dL \in (0,1]$.
\end{theorem}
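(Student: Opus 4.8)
The plan is to analyze Algorithm~\ref{alg:submetriclearner} by composing three results it is built on: Theorem~\ref{theorem:singlerepgeneral} (the per-representative learner produced by $\thresholdcombiner$), Theorem~\ref{theorem:simplegeneralization} (the way $\combiner$ aggregates the per-representative hypotheses), and Corollary~\ref{nips:lemma:randomrepgeneralization}.1 (a random $R$ of size $\frac{1}{b}\ln(\frac{2}{b\dL})$ forms a $3\gamma$-net for weight $a$ of $\U$ with probability at least $1-\dL/2$ under $(\gamma,a,b)$-density, and yields a $(p-(1-a)^2,c)$-nontrivial $\alpha_\T$-submetric from \emph{exact} consistent underestimators with maximum contraction $\alpha_\T$). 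The first step is bookkeeping on the error parameters threaded through the nested calls: $\submetriclearner$ invokes $\combiner$ with $(\eR,\dR)=(\eL,\dL/2)$; $\combiner$ invokes each $\thresholdcombiner$ with $(\er,\dr)=(\eL/|R|,\dL/(2|R|))$; and each $\thresholdcombiner$ invokes its $|\T|$ threshold learners with $(\et,\dt)=(\er/(2|\T|),\dr/|\T|)$. A union bound over all $|R|\cdot|\T|$ threshold learners shows that with probability at least $1-\dL/2$ every hypothesis $h_{t_i}^r$ is $\et$-good; the remaining $\dL/2$ of the failure budget is reserved for the event that $R$ fails to form the $3\gamma$-net.

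Conditioned on all threshold learners succeeding, I would bound the overestimate probability exactly as in the proofs of Theorems~\ref{theorem:singlerepgeneral} and~\ref{theorem:simplegeneralization}: $\maxmerge$ of the per-representative hypotheses exceeds $\D(x,y)+\alpha_\T$ only if some $h_r$ does, which in turn happens only if some threshold hypothesis errs on $x$ or on $y$; summing the $|R|\cdot|\T|$ per-element rates (doubled for the two points) gives overestimate probability at most $\eL$, which is claim~1. Moreover, since $\linearvote$ coincides with the threshold consistent underestimator $\threshunder$ whenever all its input hypotheses are correct (Definition~\ref{def:linearvote}), on the good event $h_R$ equals exactly $\DconstR$, the representative-set consistent-underestimator submetric whose underlying underestimators have maximum contraction $\alpha_\T$ by Lemma~\ref{lemma:tunderestimator}.

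For nontriviality I would apply Corollary~\ref{nips:lemma:randomrepgeneralization}.1 with $\alpha'=\alpha_\T$: on the net event a pair with at least one element in the $3\gamma$-net has its distance contracted by at most $2\cdot 3\gamma+\alpha_\T=6\gamma+\alpha_\T$ (Lemma~\ref{nips:lemma:generalbound} for the $2\cdot 3\gamma$ term, plus the $\alpha_\T$ underestimator contraction), so $(p,\frac{6\gamma+\alpha_\T}{1-c})$-diffuseness forces $\DconstR$ to be $(p-(1-a)^2,c)$-nontrivial for $\U$. Folding in the learning error: on the good event $h_R$ agrees with $\DconstR$ except on a set of pairs of weight at most $\eL$, and taking the worst case that every such pair is one of the relevant nontrivial pairs yields that $h_R$ is $(p-(1-a)^2-\eL,c)$-nontrivial, which is claim~2. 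The running time follows from Theorem~\ref{theorem:simplegeneralization}: $|R|$ calls to $\thresholdcombiner$, each running in time polynomial in $1/\er$, $1/\dr$ and $|\T|$, and with $|\T|=O(1)$ and $|R|=\frac{1}{b}\ln(\frac{2}{b\dL})$ this is $O(poly(\frac{1}{b}\ln(\frac{1}{b\delta}),|\T|,\frac{1}{\eL},\frac{1}{\dL}))$.

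The main obstacle is not any single estimate but the disciplined propagation of the two distinct failure modes (a bad representative set versus a failed threshold learner) and the two distinct error magnitudes (the one-sided deterministic $\alpha_\T$ rounding error, present even when all hypotheses are correct, versus the $\eL$-probability event of a wrong hypothesis) through the three layers of algorithms, while simultaneously checking that the single diffusion parameter $\frac{6\gamma+\alpha_\T}{1-c}$ absorbs both the $6\gamma$ arising from the $3\gamma$-net argument of Lemma~\ref{nips:lemma:randomrepgeneralization} and the $\alpha_\T$ contraction of the threshold underestimators --- i.e., verifying that the parameter choices built into Algorithm~\ref{alg:submetriclearner} are precisely the ones under which Corollary~\ref{nips:lemma:randomrepgeneralization}.1 and Theorem~\ref{theorem:simplegeneralization} compose.
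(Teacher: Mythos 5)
Your proposal is correct and follows essentially the same route as the paper: it analyzes Algorithm~\ref{alg:submetriclearner} by composing Theorem~\ref{theorem:simplegeneralization} (via Theorem~\ref{theorem:singlerepgeneral}) with the random-representative net result, splitting the failure budget $\dL$ evenly between a bad representative set and failed threshold learners, and absorbing the $6\gamma$ net contraction plus the $\alpha_\T$ rounding into the diffusion parameter. Your write-up is in fact more explicit than the paper's about the parameter propagation $(\eL,\dL)\to(\eR,\dR)\to(\er,\dr)\to(\et,\dt)$ and the union bounds, but the decomposition and all key estimates coincide.
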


\begin{proof}
Claim: Algorithm \ref{alg:submetriclearner} parametrized with a set of thresholds and learners as specified in Assumption \ref{assumption:pacthreshold} and $b$ for a $(\gamma,a,b)-$dense metric is an efficient submetric learner as specified in the Theorem statement.
We prove the claim with respect to each aspect of the theorem separately for clarity.

\textbf{Running time.}
Algorithm \ref{alg:submetriclearner} makes a single call to Algorithm \ref{alg:combiner} which runs in time $O(poly(|R|,|\T|,\frac{1}{\eR},\frac{1}{\dR}))$ (per Theorem \ref{theorem:simplegeneralization}). The parameters are set such that $|R|=\frac{1}{b}\ln(\frac{2}{b\dL})$, $\eR=\eL$ and $\dR=\dL/2$. Thus Algorithm \ref{alg:submetriclearner} runs in time $O(poly(\frac{1}{b}\ln(\frac{1}{b\dL}), |\T|, \frac{1}{\eR}, \frac{1}{\dR} ))$.

%Thus Algorithm \ref{alg:combiner} runs in time $O(poly(|R|,|\T|,\frac{|R|}{\varepsilon_2},\frac{|R|}{\delta_2}))$, where $\varepsilon_2$ and $\delta_2$ are the error and failure probabilities with which Algorithm \ref{alg:submetriclearner} invokes Algorithm \ref{alg:combiner}.
%Finally, Algorithm \ref{alg:submetriclearner} specifies that $|R|= \frac{\ln(\delta/2)}{b\ln(1-b)}$, and thus runs in time $O(poly(\frac{1}{b},|\T|,\frac{1}{\varepsilon},\frac{1}{\delta}))$.

%As each learner runs in time $poly(\frac{2}{\varepsilon}, \frac{|R|}{\delta})$, the running time of all learners will also be $poly(\frac{|R|}{\varepsilon},\frac{|R|^2}{\delta})$.

\textbf{Failure probability.}
The failure probability $\dL$ is split evenly between the failure to produce a good set of representatives (per Lemma \ref{nips:lemma:randomrepgeneralization}) and the failure probability of Algorithm \ref{alg:combiner}.

\textbf{Overestimate error probability.}
Algorithm \ref{alg:combiner} is invoked directly with $\eL$, so the overestimate error probability is $\eL$.

\textbf{Nontriviality.} The probability that the set of randomly chosen representatives in Algorithm \ref{alg:submetriclearner} does not form a $3\gamma$-net for at least $a$ weight of $\U$ is less than or equal to $\dL/2$ per Lemma \ref{nips:lemma:randomrepgeneralization}. Given that the randomly chosen representatives do form a $3\gamma-$net, notice that
as in Corollary \ref{nips:lemma:fullnetc}.1,  %\ref{cor:fullnetcunderestmate}, 
if a $p-$fraction of distances in $\U \times \U$ have distance greater than $\frac{6\gamma + \alpha_\T}{1-c}$ that $h_R$ is $(p - (1-a)^2 - \eL,c)-$nontrivial.
\end{proof}

In the spirit of Corollary \ref{cor:simplegeneralization}, we can also re-state Theorem \ref{theorem:maingeneralization} in terms of arbitrary learners for $h_r$, rather than constructing directly from threshold function learners.

\begin{corollary}\label{cor:maingeneralization}
Given a distance metric $\D$ and a distribution $\U$ over the universe %$\U = \Delta(U)$ %, a set of sets of $O(|R|m^*)$ labeled examples $\{M_i,Y_i\}$,
if
\begin{enumerate}
    \item There exist a set of efficient learners $L=\{L_r\}$ such that given access to labeled samples, each $L_r$ produces a hypothesis $h_r$ such that $\Pr_{x,y \sim \U \times \U}[|h_r(x,y) - \D(u,v)| \geq \alpha] \leq \er$ with probability at least $1-\dr$, and
    \item $\D$ is $(\gamma, a, b)-$dense and
$(p,\frac{6\gamma + \alpha}{1-c})-$\diffuse~on $\U$,%$\D$ is $(\gamma,a,b)-$dense for $\U$ , and the fraction of distances in $\U \times \U$ greater than $\frac{6\gamma+\alpha_T}{1-c}$ is $p$,
\end{enumerate}
then there exists an efficient submetric learner which produces a hypothesis $h_R$ with probability greater than $1- \dL$ such that
\begin{enumerate}
    \item $\Pr_{x,y \sim \U \times \U}[h_R(x,y) \geq \D(x,y) + \alpha] \leq \eL.$
    \item $h_R$ is $(p - (1-a)^2 - \eL, c)-$nontrivial for $\U$.
\end{enumerate}
which runs in time $O(poly(\frac{1}{b}\ln(\frac{1}{b\delta}),\frac{1}{\eL},\frac{1}{\dL}))$ for all $\eL,\dL \in (0,1]$.
\end{corollary}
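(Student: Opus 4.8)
The plan is to prove the corollary by composition: it is essentially Lemma \ref{nips:lemma:randomrepgeneralization} (for sizing the representative set) stacked on top of Corollary \ref{cor:simplegeneralization} (for combining the per-representative learners), following the proof of Theorem \ref{theorem:maingeneralization} verbatim but deleting every reference to threshold functions and Assumption \ref{assumption:pacthreshold}. Concretely, I would run the variant of Algorithm \ref{alg:submetriclearner} in which the learners passed to $\combiner$ are the given $\{L_r\}$ themselves rather than $\thresholdcombiner$ outputs: sample $R \sim \U$ with $|R| = \frac{1}{b}\ln(\frac{2}{b\dL})$, and then return $\combiner(\{L_r\}, \eR, \dR)$ with $\eR \leftarrow \eL$ and $\dR \leftarrow \dL/2$.

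The running time, failure probability, and overestimate error all transfer directly from Corollary \ref{cor:simplegeneralization}. Since $\combiner$ invokes each $L_r$ with parameters $\er = \eR/|R|$ and $\dr = \dR/|R|$, and each $L_r$ runs in time $O(poly(\tfrac{1}{\er},\tfrac{1}{\dr}))$, the total running time is $O(poly(|R|,\tfrac{1}{\eL},\tfrac{1}{\dL}))$, and substituting $|R| = \frac{1}{b}\ln(\frac{2}{b\dL})$ yields the stated bound (no $|\T|$ factor appears, as no threshold learner is ever called). A union bound over the $|R|$ calls shows that with probability at least $1-\dL/2$ every $L_r$ outputs an $\frac{\eR}{|R|}$-good hypothesis $h_r$; conditioned on that, $\maxmerge$ can overestimate by more than $\alpha$ only if some $h_r$ does, so $\Pr_{x,y}[h_R(x,y) \geq \D(x,y)+\alpha] \leq \eR = \eL$.

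For nontriviality I would reuse the $3\gamma$-net argument inside the proof of Lemma \ref{nips:lemma:randomrepgeneralization} together with the diffusion bookkeeping of Corollary \ref{nips:lemma:fullnetc}.1. Because $\D$ is $(\gamma,a,b)$-dense and $|R| \geq \frac{1}{b}\ln(\frac{2}{b\dL})$, with probability at least $1-\dL/2$ the set $R$ forms a $3\gamma$-net for weight $a$ of $\U$; union-bounding against the learner-failure event gives both simultaneously with probability at least $1-\dL$. Conditioning on both events: for any pair $x,y$ with at least one element covered by the net, Lemma \ref{nips:lemma:generalbound} bounds the contraction of the exact representative submetric by $2\cdot 3\gamma = 6\gamma$, and the additional error $\alpha$ of $h_r$ raises the total contraction to at most $6\gamma + \alpha$; hence any such pair with $\D(x,y) > \frac{6\gamma+\alpha}{1-c}$ retains at least a $c$-fraction of its distance. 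By the $(p,\frac{6\gamma+\alpha}{1-c})$-diffuse assumption, weight $p$ of pairs clear that threshold, at most $(1-a)^2$ of those have neither element in the net, and, in the worst case, all $\eL$ of the hypothesis-error weight lands on the remaining good pairs, leaving $h_R$ $(p - (1-a)^2 - \eL, c)$-nontrivial for $\U$.

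I do not expect a genuine obstacle: the corollary is the black-box composition of two already-established results, so the real work is bookkeeping. The two points that need care are (i) the $\dL/2 + \dL/2$ split of the failure budget between net construction and the union bound over the learners, and (ii) the fact that, since nothing is assumed about independence of the hypotheses' errors, the nontriviality deficit is only a crude union bound — we must subtract the full $\eL$ rather than anything smaller, and we cannot claim the errors are "spread out" over $\U\times\U$. The mildest subtlety is checking that the diffusion parameter demanded is exactly $\frac{6\gamma+\alpha}{1-c}$: the $6\gamma$ comes from doubling the $3\gamma$-net radius in Lemma \ref{nips:lemma:generalbound}, and the $+\alpha$ from the learner's contraction, which is precisely the hypothesis of the corollary.
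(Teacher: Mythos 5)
Your proposal is correct and follows essentially the same route as the paper, which states this corollary without a separate proof as the ``arbitrary-learner'' analogue of Theorem \ref{theorem:maingeneralization}: substitute the given $\{L_r\}$ for the $\thresholdcombiner$ outputs in Algorithm \ref{alg:submetriclearner}, split the $\dL$ budget between the $3\gamma$-net event of Lemma \ref{nips:lemma:randomrepgeneralization} and the union bound inside $\combiner$, and carry the $6\gamma+\alpha$ diffusion bookkeeping through Corollary \ref{nips:lemma:fullnetc}.1. Your two flagged points of care (the $\dL/2+\dL/2$ split and subtracting the full $\eL$ from the nontriviality weight absent any independence assumption) are exactly the points the paper's proof of Theorem \ref{theorem:maingeneralization} relies on.
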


Theorem \ref{theorem:maingeneralization} can also be restated to take into account postprocessing to reduce the additive error. In particular, Corollary \ref{cor:maingeneralizationpost} trades an increase of $\alpha$ in \diffusion~for a reduction of $\alpha$ in the error guarantee.
\begin{corollary}\label{cor:maingeneralizationpost}
Given a distance metric $\D$ and a distribution $\U$ over the universe %$\U = \Delta(U)$ %, a set of sets of $O(|R|m^*)$ labeled examples $\{M_i,Y_i\}$,
if
\begin{enumerate}
    \item There exist a set of efficient learners $L=\{L_r\}$ such that given access to labeled samples, each $L_r$ produces a hypothesis $h_r$ such that $\Pr_{x,y \sim \U \times \U}[|h_r(x,y) - \D(u,v)| \geq \alpha] \leq \er$ with probability at least $1-\dr$, and
    \item $\D$ is $(\gamma, a, b)-$dense and
$(p,\frac{6\gamma + 2\alpha}{1-c})-$\diffuse~on $\U$,%$\D$ is $(\gamma,a,b)-$dense for $\U$ , and the fraction of distances in $\U \times \U$ greater than $\frac{6\gamma+\alpha_T}{1-c}$ is $p$,
\end{enumerate}
then there exists an efficient submetric learner which produces a hypothesis $h_R$ with probability greater than $1- \dL$ such that
\begin{enumerate}
    \item $\Pr_{x,y \sim \U \times \U}[h_R(x,y) \geq \D(x,y) ] \leq \eL$
    \item $h_R$ is $(p - (1-a)^2 - \eL, c)-$nontrivial for $\U$.
\end{enumerate}
which runs in time $O(poly(\frac{1}{b}\ln(\frac{1}{b\delta}),\frac{1}{\eL},\frac{1}{\dL}))$ for all $\eL,\dL \in (0,1]$.
\end{corollary}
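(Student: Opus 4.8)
The plan is to obtain $h_R'$ by running the learner of Corollary~\ref{cor:maingeneralization} (i.e.\ Theorem~\ref{theorem:maingeneralization} with the black-box learners $\{L_r\}$ substituted for the $\thresholdcombiner$ learners) and then composing its output with the postprocessing map of Proposition~\ref{prop:post}. Concretely, sample a representative set $R\sim\U$ of size $\frac{1}{b}\ln(\frac{2}{b\dL})$, run $\combiner$ on $\{L_r\mid r\in R\}$ with $\eR=\eL$ and $\dR=\dL/2$ to produce $h_R(u,v):=\maxmerge(\{h_r\mid r\in R\},u,v)$, and return the wrapped hypothesis $h_R'(u,v):=\max\{0,\,h_R(u,v)-\alpha\}$. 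As in the proof of Theorem~\ref{theorem:maingeneralization}, the failure budget $\dL$ splits as $\dL/2$ for the event that $R$ fails to form a $3\gamma$-net for weight $a$ of $\U$ (Lemma~\ref{nips:lemma:randomrepgeneralization}) and $\dL/2$ for the event that some underlying learner $L_r$ fails (union bound over the $|R|$ learners, each invoked with failure parameter $\dR/|R|$), so all claims below hold with probability at least $1-\dL$.

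Two of the three conclusions require no new work. For the no-overestimate guarantee, conditioned on every $L_r$ succeeding, the argument of Theorem~\ref{theorem:simplegeneralization}/\ref{theorem:maingeneralization} shows that on a $1-\eL$ fraction of pairs $(x,y)\sim\U\times\U$ none of the $h_r$ is in error, hence $h_R(x,y)\le\D(x,y)+\alpha$ there, and therefore $h_R'(x,y)=\max\{0,h_R(x,y)-\alpha\}\le\D(x,y)$ on all but an $\eL$-fraction of pairs; this is the per-pair form of Proposition~\ref{prop:post} and yields conclusion~1. The running time is inherited verbatim from Corollary~\ref{cor:maingeneralization}, since forming $h_R'$ only appends a constant-cost ``subtract $\alpha$ and clamp at $0$'' to each evaluation of $h_R$, leaving the bound $O(poly(\frac{1}{b}\ln(\frac{1}{b\delta}),\frac{1}{\eL},\frac{1}{\dL}))$ unchanged.

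The substantive step is nontriviality, and it amounts to tracking the \emph{absolute} contraction of $h_R'$. On a pair $(x,y)$ with at least one endpoint within $3\gamma$ of some $r\in R$, Lemma~\ref{nips:lemma:generalbound} gives $\D(x,y)-\Dr(x,y)\le 6\gamma$; on error-free pairs the learner $L_r$ underestimates $\Dr$ by at most a further $\alpha$, and the postprocessing subtracts one more $\alpha$, so $h_R'(x,y)\ge\D(x,y)-(6\gamma+2\alpha)$. Hence every such pair with $\D(x,y)\ge\frac{6\gamma+2\alpha}{1-c}$ satisfies $\frac{h_R'(x,y)}{\D(x,y)}\ge 1-\frac{6\gamma+2\alpha}{\D(x,y)}\ge c$. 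The $(p,\frac{6\gamma+2\alpha}{1-c})$-\diffuse~hypothesis supplies weight $p$ of pairs clearing that distance threshold; discarding at most $(1-a)^2$ weight of pairs with neither endpoint in the weight-$a$ covered subset and at most $\eL$ weight of error pairs leaves weight at least $p-(1-a)^2-\eL$ of pairs with relative preservation at least $c$, i.e.\ $h_R'$ is $(p-(1-a)^2-\eL,c)$-nontrivial for $\U$. Equivalently, $h_R'$ plays the role of a representative-set consistent underestimator submetric $\DconstR$ with maximum contraction $2\alpha$, so this is just the $\alpha'=2\alpha$ instance of the consistent-underestimator analog of Lemma~\ref{nips:lemma:randomrepgeneralization}, plus the extra $\eL$ slack from the learning error.

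The main (and mild) obstacle is precisely this last bookkeeping: one must charge the additional $\alpha$ of contraction introduced by postprocessing against the \diffusion~parameter rather than fold it back into an additive-error term, which is why the $6\gamma+\alpha$ of Corollary~\ref{cor:maingeneralization} is replaced by $6\gamma+2\alpha$ in the hypothesis here. Everything else is the error-parameter propagation already carried out for Theorem~\ref{theorem:maingeneralization}, run unchanged.
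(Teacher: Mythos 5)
Your proposal is correct and matches the paper's intended argument: the paper leaves this corollary unproved, noting only that it "trades an increase of $\alpha$ in diffusion for a reduction of $\alpha$ in the error guarantee," which is precisely the postprocessing-plus-bookkeeping you carry out on top of Theorem~\ref{theorem:maingeneralization}/Corollary~\ref{cor:maingeneralization} via Proposition~\ref{prop:post}. No gaps.
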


\subsubsection{\Human~Query Complexity}
We now formally reason about the query complexity to the \human~to generate training data for Algorithm \ref{alg:submetriclearner}.
\begin{theorem}\label{thm:querycomplexity}
Assuming the minimum gap between any pair of thresholds in $\T $ is at most a constant $\alpha$, sufficient labeled training data for Algorithm \ref{alg:submetriclearner} can be produced from  $O(\maxalphlogRb)$ queries to $\mreal$ and $\frac{1}{b} \ln(\frac{1}{b\delta})\hat{N}\log(\hat{N})$ queries to $\mtrip$, and $O(\frac{1}{b} \ln(\frac{1}{b\delta})\hat{N}\log(\frac{1}{b} \ln(\frac{1}{b\delta})))$ queries to $\mquad$, where $\hat{N} = O(poly(\frac{1}{b}\ln(\frac{1}{b\delta}), \frac{1}{\alpha}, \frac{1}{\varepsilon}, \frac{1}{\delta})$ is the number of samples required to train a single threshold learner. % given a set of evenly spaced thresholds $\T$ such that $t_i - t_{i-1} = \alpha=O(1)$.
\end{theorem}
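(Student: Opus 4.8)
The plan is to reduce the statement to Theorem~\ref{theorem:multiplerepquad}, applied not to a fixed universe but to the training sample, after first pinning down how many samples each threshold learner actually needs.

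First I would trace the accuracy and confidence parameters through Algorithms~\ref{alg:submetriclearner}, \ref{alg:combiner} and~\ref{alg:thresholdcombiner}. Algorithm~\ref{alg:combiner} invokes each per-representative learner $L_r$ with $(\er,\dr)=(\eL/|R|,\dL/(2|R|))$, and $\thresholdcombiner$ in turn invokes each threshold learner $L_{t_i}^r$ with $(\et,\dt)=(\eL/(2|\T||R|),\dL/(2|\T||R|))$. By Assumption~\ref{assumption:pacthreshold} a single threshold learner then consumes $\hat{N} = O(poly(1/\et,1/\dt)) = O(poly(|\T|,|R|,1/\eL,1/\dL))$ labeled examples; substituting $|R| = \frac1b\ln\frac2{b\dL}$ and $|\T|=O(1)$ (or, more generously, $|\T|=O(1/\alpha)$ for evenly spaced thresholds at gap $\alpha$) gives the claimed $\hat{N} = O(poly(\frac1b\ln\frac1{b\delta},\frac1\alpha,\frac1\varepsilon,\frac1\delta))$.

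The crucial structural point --- the one flagged right after Theorem~\ref{theorem:singlerepgeneral} --- is that, for a fixed $r$, all $|\T|$ threshold learners draw from the \emph{same} distribution $\U$ and the analysis never uses independence of their errors. Hence one independent sample $S_r\sim\U$ of size $\hat{N}$ can be shared by all of $L_{t_1}^r,\dots,L_{t_{|\T|}}^r$: the only data we must produce is, for each $x\in S_r$ and each $t_i\in\T$, the bit $T_{t_i}^r(x)$, and all of these bits are determined by $\D(r,x)$ rounded down to the nearest threshold, i.e.\ by $\threshunder(x)$. Producing $\threshunder(x)$ simultaneously over all $r\in R$ and all $x\in S_r$ is precisely the job of Algorithm~\ref{alg:listmerge} run on the ``universe'' $\bigcup_{r\in R} S_r$ with representative set $R$ (at granularity $\alpha_\T$; in the exact model the split points return exact distances, so one can even binary-search the merged ordering for the exact cutoff of each $t_i\in\T$ and obtain the labels with zero error). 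Applying Theorem~\ref{theorem:multiplerepquad} with $N\leftarrow\hat{N}$ then produces all the labels from $O(\maxalphlogR)=O(\log(|R|\hat{N}))$ queries to $\mreal$, $|R|\hat{N}\log\hat{N}$ queries to $\mtrip$ and $|R|\hat{N}\log|R|$ queries to $\mquad$; substituting $|R| = O(\frac1b\ln\frac1{b\delta})$ rewrites these as $O(\maxalphlogRb)$, $\frac1b\ln\frac1{b\delta}\cdot\hat{N}\log\hat{N}$ and $O(\frac1b\ln\frac1{b\delta}\cdot\hat{N}\log(\frac1b\ln\frac1{b\delta}))$, as stated.

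I expect the main obstacle to be the labeling argument rather than the counting: one must justify that a single execution of Algorithm~\ref{alg:listmerge} suffices to label \emph{all} $|\T|$ threshold functions of a representative at once --- so that the sample cost is $\hat{N}$ and not $|\T|\hat{N}$ --- which hinges entirely on the shared-distribution / no-independence observation above, and one must check that no sample is left ambiguous when its distance from a representative falls between two rounding anchors straddling a threshold. In the exact model the latter is handled by the exactness of $\mreal$ (binary-searching directly for each threshold cutoff costs only an additional $O(|\T|\log(|R|\hat{N}))$ real-valued queries, absorbed into $O(\maxalphlogRb)$). Everything beyond this is the parameter bookkeeping already carried out above.
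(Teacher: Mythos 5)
Your proposal is correct and follows essentially the same route as the paper's proof: trace $\et,\dt$ through Algorithms~\ref{alg:submetriclearner}, \ref{alg:combiner}, and~\ref{alg:thresholdcombiner} to bound $\hat{N}$, invoke the no-independence/shared-distribution observation to label one sample of size $\hat{N}$ per representative for all thresholds at once, and then apply Theorem~\ref{theorem:multiplerepquad} with $N\leftarrow\hat{N}$ and $|R|=O(\frac{1}{b}\ln(\frac{1}{b\delta}))$. Your extra remark about ambiguity near threshold cutoffs is a non-issue in the exact model (as you note) and is exactly the complication the paper defers to the relaxed model in Section~\ref{section:tctc}.
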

\begin{proof}
First, notice that no assumption is made in the proofs of error or failure probability which requires independence of error or failure probability of the threshold function hypotheses. Thus, labeling a single set of training data at granularity $\alpha$ is sufficient for all of the threshold function learners for a single representative, i.e. we do not need to label a new sample for each threshold function.

Call the number of samples needed to train a threshold function $\hat{N}$. Recall from Assumption \ref{assumption:pacthreshold} that the threshold function learners run in time $O(poly(\frac{1}{\et},\frac{1}{\dt}))$. The choice of parameters in Algorithms \ref{alg:submetriclearner}, \ref{alg:combiner} and \ref{alg:thresholdcombiner} result in \[\et = \frac{\er}{2|\T|} = \frac{\eR}{2|R||\T|}=\frac{\eL\alpha}{\frac{2}{b}\ln(\frac{2}{b\dL})}\]
and
\[\dt = \frac{\dr}{|\T|} = \frac{\dR}{|R||\T|} = \frac{\dL\alpha}{\frac{2}{b}\ln(\frac{2}{b\dL})}\]
Thus the threshold function learners run in time $O(poly(\frac{1}{\eL}, \frac{1}{\dL}, \frac{1}{\alpha}, \frac{1}{b}\ln(\frac{1}{b\dL})))$, and can use no more than that many samples.

Recall from Theorem %\ref{theorem:labelqueries}
\ref{theorem:multiplerepquad}
that to produce labels for a set of elements of size $N$ to granularity $\alpha$ for a set of representatives $R$, we required at most $O(\maxalphlogR)$ queries to $\mreal$ and $O(|R|N\log(N))$ queries to $\mtrip$ and $O(|R|N\log(|R|))$ queries to $\mquad$. Therefore to produce %$|R| = \frac{1}{b}\ln(\frac{2}{b\delta})$
labels for a universe of size $\hat{N}=O(poly(\frac{1}{\eL}, \frac{1}{\dL}, \frac{1}{\alpha}, \frac{1}{b}\ln(\frac{1}{b\dL})))$, for a set of representatives of size $|R|=O(\frac{1}{b}\ln(\frac{1}{b\delta}))$ we will require  $O(\maxalphlogRb)$ queries to $\mreal$ and $\frac{1}{b} \ln(\frac{1}{b\delta})\hat{N}\log(\hat{N})$ queries to $\mtrip$ and $\frac{1}{b} \ln(\frac{1}{b\delta})\hat{N} \log(\frac{1}{b}\ln(\frac{1}{b\delta}))$ queries to $\mquad$.
%the analysis of Theorem \ref{theorem:singlerepgeneral} that $\hat{N} = O(poly())$
\end{proof}

\section{Relaxed query model}\label{section:tctc}
In this section, we extend our results to a second, relaxed \arbiter~model, in which \arbiters~are not expected to make arbitrarily small distinctions between distances or individuals or to provide arbitrarily precise real-valued distances. 
The relaxed model assumes that there are two fixed constants, $\alphL$, the minimum precision with which the \arbiter~can distinguish elements or distances, and $\alphH$, a bound on the magnitude of the (potentially biased) noise in the \arbiter's real-valued responses. %The \arbiter~responds to real-valued queries with (potentially biased) noise with magnitude bounded by $\alphH$. 
For any comparisons with difference smaller than $\alphL$, the \arbiter~declares the elements indistinguishable or the difference ``too close to call.'' The model allows for a ``gray area'' between $\alphL$ and $\alphH$ in which the \arbiter~may either respond with the true answer or ``too close to call.'' For any differences larger than $\alphH$, the arbiter responds with the true answer. % to the 
%This model matches the intuition that there are pairs of individuals who, upon detailed inspection, can be distinguished for a particular task, but the difference is so small as to be unimportant for any practical measure. 
%Differences in distances between pairs may also be so small as to warrant no practical distinction.
%We consider two query types based on a small number of elements: real-valued distance queries and relative distance (triplet or quad) queries. \blueversion{We do not assume that the \arbiter~can provide arbitrarily precise real valued distances, or answer relative distance queries exactly when the difference in distances is small.
%In order to capture this intuition, we assume that there is a minimum distance granularity $\alphL$ at which the \arbiter~can distinguish distances and elements and a ``gray area'' between $\alphL $ and $\alphH$ in which differences may or may not be distinguishable. 
We assume that $\alphL$ and $\alphH$ are fixed constants for each task and cannot be manipulated.\footnote{i.e., we cannot parametrize $\alphL$ or $\alphH$ to use the \arbiter~as an arbitrary threshold distinguisher to improve query complexity and accuracy tradeoffs.}
%\footnote{A single strict cutoff $\alphH$ is not strictly necessary, and some ``fuzziness'' of the cutoff for ``too close to call'' can be tolerated. See Section \ref{section:fuzzytctc}.}
%The \arbiter~may answer rea%l-valued queries with %noise%up to magnitude $\alphH$. For  relative queries with differences of less than $\alphL$, the \arbiter~always answers ``too close to call'', (-1),  and for differences between $\alphL$ and $\alphH$ the \arbiter~may either answer ``to close to call'' or with the true result of the comparison. More formally, we define three new query interfaces:
%although we endeavor to frame our results in modular pieces where alternative query strategies could be substituted.

%The simplest query model we might imagine is to ask real-valued queries, ie
%$\mrealtctc^{\D}(u,v) := \D(u,v)$.

\begin{definition}[\Rvqtctc] $\mrealtctc(u,v):=\D(u,v)\blueversion{\pm \eta}$, \blueversion{where $|\eta| < \alphH$} i.e. the \human~provides a real-valued distance between $u$ and $v$\blueversion{~with error at most $\alphH$}. %\note{add round to $\alphH$}
\end{definition}
\begin{definition}[\Tqtctc] Given $a,b,c \in U$, define $\mathsf{diff}:=|\D(a,b) - \D(a,c)|$.
  \[
 \mtriptctc(a,b,c):=
  \begin{cases}
   \text{if }\mathsf{diff} \leq \alphL  & -1\\
   \text{if } \mathsf{diff} \in (\alphL,\alphH) & -1 \text{ or } [1 \text{ if }\D(a,b) < \D(a,c)\text{, }0 \text{ if }\D(a,c)\leq \D(a,b)]\\
   \text{if } \mathsf{diff}\geq \alphH & 1 \text{ if }\D(a,b) < \D(a,c)\text{, }0 \text{ if }\D(a,c)\leq \D(a,b)
  \end{cases}
\]
%$\mtriptctc(a,b,c):=\{\blueversion{-1 \text{ if } |\D(a,b) - \D(a,c)| \leq \alphH \text{, } }
%1 \text{ if }\D(a,b) < \D(a,c)\text{, }0 \text{ if }\D(a,c)\leq \D(a,b)\}$.
\end{definition}

\begin{definition}[\Quadqtctc]
  Given $a,b,x,y \in U$, define $\mathsf{diff}:=|\D(a,b) - \D(x,y)|$.
    \[
   \mquadtctc(a,b,c):=
    \begin{cases}
     \text{if }\mathsf{diff} \leq \alphL  & -1\\
     \text{if } \mathsf{diff} \in (\alphL,\alphH) & -1 \text{ or } [1 \text{ if }\D(a,b) < \D(x,y)\text{, }0 \text{ if }\D(x,y)\leq \D(a,b)]\\
     \text{if } \mathsf{diff}\geq \alphH & 1 \text{ if }\D(a,b) < \D(x,y)\text{, }0 \text{ if }\D(x,y)\leq \D(a,b)
    \end{cases}
  \]
% $\mquadtctc(a,b,x,y):= \{ \blueversion{-1 \text{ if }|\D(a,b) - \D(x,y)| \leq \alphH, \text{ }}1 $ if $\D(a,b)>\D(x,y)$,  $0 $ otherwise$\}$.
\end{definition}

In addition to the \arbiter~consistency assumptions enumerated in Section \ref{section:preliminaries} (all \arbiters~agree, query responses do not change over the learning period, real-valued and relative query responses are consistent), we also assume that if the \arbiter~answers that the distances between ($a$ and $b$) and ($x$ and $y$) are indistinguishable, then the real-valued distances will also be at most $\alphH$ apart, and analogously that if the distances are distinguishable, then the real-valued distances will be at least $\alphL$ apart.

In the remainder of this section, we extend our results from the original ``\exact'' model to this ``\tctc'' (TCTC) model. We show that the \tctc~model allows for a significant reduction in real-valued query complexity (from logarithmic to constant) but at the cost of always having perceivable additive error in the submetrics produced, i.e. no $\alpha-$submetric for $\alpha<2\alphH$ can be achieved without postprocessing and a corresponding trade-off in non-triviality parameters.
%We also show that the ``\tctc'' cut-off of $\alphH$ need not be exactly strict, at the cost of  increased (but still constant) query complexity or increased additive error.

\subsection{Submetrics from human judgements in the \tctc~model}
We first extend the results of Section \ref{section:humansubmetrics} to the \tctc~model. Algorithm \ref{alg:orderingtometricBLUE} is the \tctc~analog of Algorithm \ref{alg:orderingtometric}.\footnote{As before, we use $\midpointof$ to specify the midpoint function, which chooses the midpoint for audit length lists and rounds down for even length lists.} Algorithm \ref{alg:orderingtometricBLUE} follows the same basic recipe of sorting and then labeling, but the sorting step produces sorted \textit{sets} whose distances  from the representative are indistinguishable.
The elements in each set are then labeled with the distance between a distinguished element in the set and the representative. Thus the error of a distance label is a combination of the error of the real valued query for the distinguished element and the difference with the distinguished element's distance to the representative.

\begin{algorithm}[]
  \caption{\showalgcaption{$\orderingtosubmetrictctc(\ord, r, \alpha, \mrealtctc)$}}
  \label{alg:orderingtometricBLUE}
\begin{algorithmic}[1]
  \LineComment{Inputs: the universe $U$, the representative element $r$, interfaces to the \human, $\mtriptctc$ and $\mrealtctc$.}
  \Procedure{$\orderingtosubmetrictctc$}{$U,r,\mrealtctc$}
  \State $(\ord, \leftovers) \leftarrow \mathsf{TripletOrdering}(\mtriptctc, U, r)$
  \For{$x \in \ord$}
    \State $f_r(x) \leftarrow \mrealtctc(r,x)$
  \EndFor
  \For{$(y,x) \in \leftovers$}
    \State $f_r(y) \leftarrow f_r(x)$
  \EndFor
  \State \textbf{return} $\D_r'(x,y):= |f_r(x) - f_r(y)|$
  \EndProcedure
  \State
  \Function{$\mathsf{TripletOrdering}$}{$\mtriptctc,U,r$}
  \State Initialize an empty list $\ord$%$\Order \leftarrow \{r\}$
  \State Initialize an empty list $\leftovers$
  \State Append $\{r\}$ to $\ord$
%  \State $\insertleft(\Order,r,pop(U))$
\For{$x \in U$}
    %\State $x \leftarrow pop(U)$
    \State $\mathsf{\binaryinserttctc}(r,\leftovers,\ord,0,\size(\ord),x,\mtriptctc)$
  \EndFor
  \State \textbf{return} $\ord, \leftovers$
  \EndFunction
  \State
  \LineComment{Inputs: $r$ the representative, $\leftovers$ a list of ``indistinguishable'' element pairs, $L$ an ordered list of elements by distance from $r$, the list range delimiters $b$ and $e$, the element to insert $x$, an interface to the \human, $\mtriptctc$. Inserts the element $x$ into the appropriate position in the sorted list $L$ or adds $(x,y)$ to the $\leftovers$ if $x$ is indistinguishable from an element $y$ already contained in $L$.}
  \Function{$\binaryinserttctc$}{$r,\leftovers,L,b,e, x,\mtriptctc$}
   \If{$b=e$}
     %\State $\mathsf{MergeSet}(r,L,b,x,\mtriptctc)$
     %\State check relative position compared with $L[b]$.
     \State $c \leftarrow \mtriptctc(r,x,L[b])$
     \If{$c=0$}
         \State Insert $x$ at position $b+1$ in $L$
     \ElsIf{$c=1$}
         \State Insert $x$ at position $b-1$ in $L$
     \Else
        \State Append $(x, L[b])$ to $\leftovers$.
     \EndIf
     %\State insert at relative position
     \State \textbf{return}
   \EndIf
  \State $\midpoint \leftarrow \midpointof(L)$
  \State $c\leftarrow \mtriptctc(r,x,L[\midpoint]))$
  \If{$c =0$}
    \State $\binaryinserttctc(r,\leftovers,L,\midpoint,e,(r,x),\mtriptctc)$
  \ElsIf{$c=1$}
    \State $\binaryinserttctc(r,\leftovers,L,b,\midpoint,(r,x),\mtriptctc)$
  \Else
    \State Append $(x, L[\midpoint])$ to $\leftovers$.
    \State \textbf{return}
  \EndIf
  \EndFunction
  \end{algorithmic}
\end{algorithm}

Theorem \ref{theorem:labelqueriesBLUE} is the \tctc~analog of Theorem \ref{theorem:labelqueries}, and states that Algorithm \ref{alg:orderingtometricBLUE} requires only $O(\frac{1}{\alphL})$ real-valued queries and $O(N\log(N))$ triplet queries to produce a $4\alphH$-submetric.
The proof of the theorem follows from observing that there are at most $O(\frac{1}{\alphL})$ sets of elements with indistinguishable differences in distance (i.e. differences of less than $\alphL$) in $[0,1]$, and that order $\alphH$ errors in the distance labels accrue in the mapping to distinguish elements and in the real valued queries for the distinguished element. The primary difference between Algorithms \ref{alg:orderingtometricBLUE} and \ref{alg:orderingtometric} is that in Algorithm \ref{alg:orderingtometricBLUE} the \arbiter~identifies when elements or distances indistinguishable (difference less than $\alphL$) from relative queries alone. Thus Algorithm \ref{alg:orderingtometricBLUE} groups indistinguishable elements together and acts only on the subset of distinguishable elements, which has size bounded by $O(\frac{1}{\alphL})$. The removal of the $\log(N)$ dependency in the number of real-valued queries compared with Lemma \ref{lemma:orderingtometric} follows from the query model explicitly preventing any recursive calls on ranges of size less than $\alphL$.
\begin{theorem}\label{theorem:labelqueriesBLUE}
Given access to $\mrealtctc$ and $\mtriptctc$, \blueversion{Algorithm \ref{alg:orderingtometricBLUE} constructs a $4\alphH-$submetric} from \blueversion{$O(\frac{1}{\alphL}$)} queries to $\mrealtctc$ and \blueversion{$O(N\log(\frac{1}{\alphL}))$} queries to $\mtriptctc$
which preserves distances (up to the additive error) from a representative $r$.
\end{theorem}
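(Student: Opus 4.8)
The plan is to follow the template of Theorem~\ref{theorem:labelqueries} (the exact-model analogue, proved via Lemmas~\ref{lemma:tripletordering} and~\ref{lemma:orderingtometric}), handling the two-sided error of $\mtriptctc$ and $\mrealtctc$, and bounding the two query counts and the additive error of the returned $\D_r'$ separately. For query complexity: $\binaryinserttctc$ is invoked once per element of $U$, and each invocation is a binary search into the list $\ord$ of recursion depth $O(\log|\ord|)$ issuing exactly one $\mtriptctc$ query per level, so the triplet count is $O(N\log|\ord|)$; real-valued queries are issued only in the loop that sets $f_r(x)=\mrealtctc(r,x)$ for $x\in\ord$ (the $\leftovers$ pass only copies labels), so there are exactly $|\ord|$ of them. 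It therefore suffices to show $|\ord|=O(1/\alphL)$, which then yields $O(1/\alphL)$ real-valued queries and $O(N\log(1/\alphL))$ triplet queries.

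To bound $|\ord|$, the key point is that whenever an element $z$ is placed into $\ord$ it received a response $\neq -1$ on its final comparison $\mtriptctc(r,z,L[b])$, which by the query model means $|\D(r,z)-\D(r,L[b])|>\alphL$, and $L[b]$ is at that moment a neighbour of $z$ in $\ord$. Combined with correctness of binary search under the consistency assumption (every element is routed to its true sorted position), this prevents packing more than a constant number of $\ord$-elements into any interval of width $\alphL$: a third element falling strictly between two $\ord$-elements already within $\alphL$ of each other would, on its final comparison, be matched against whichever of the two it is adjacent to, be declared too-close-to-call, and hence be diverted to $\leftovers$. Since $\D(r,\cdot)\in[0,1]$, this gives $|\ord|=O(1/\alphL)$. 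I expect this to be the main obstacle: unlike the exact model, where the sorted list may contain all $N$ elements and the savings came from controlling recursion on short distance ranges, here the entire improvement rests on the arbiter collapsing near-ties, and making it rigorous requires care about how the (adversarial but consistent) too-close-to-call responses interact with the binary-insertion routine—in particular excluding the degenerate case in which many elements with sub-$\alphL$ mutual gaps all survive into $\ord$ because none is ever compared directly with a close neighbour.

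For the error analysis, for $x\in\ord$ we have $f_r(x)=\mrealtctc(r,x)=\D(r,x)\pm\eta$ with $|\eta|<\alphH$, while for an element $y$ occurring in a pair $(y,x)\in\leftovers$ the pair is recorded only when $\mtriptctc(r,y,x)$ returns $-1$, which forces $|\D(r,y)-\D(r,x)|<\alphH$; since $x\in\ord$ and $f_r(y)$ is set to $f_r(x)$, the triangle inequality gives $|f_r(y)-\D(r,y)|\le|f_r(x)-\D(r,x)|+|\D(r,x)-\D(r,y)|<2\alphH$, so $|f_r(u)-\D(r,u)|<2\alphH$ for every $u\in U$. Then for all $u,v\in U$,
\begin{align*}
\D_r'(u,v) &= |f_r(u)-f_r(v)| \\
&\le |f_r(u)-\D(r,u)| + |\D(r,u)-\D(r,v)| + |\D(r,v)-f_r(v)| \\
&< |\D(r,u)-\D(r,v)| + 4\alphH \le \D(u,v)+4\alphH,
\end{align*}
where the last inequality is the triangle-inequality bound of Lemma~\ref{nips:lemma:repsubmetric}; hence $\D_r'$ is a $4\alphH$-submetric. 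Distance preservation from $r$ follows in the same way: $r\in\ord$, so $f_r(r)=\mrealtctc(r,r)$ is within $\alphH$ of $\D(r,r)=0$ and $f_r(u)$ is within $2\alphH$ of $\D(r,u)$, giving $|\D_r'(r,u)-\D(r,u)|<3\alphH<4\alphH$, which completes the proof modulo the routine accounting sketched above.
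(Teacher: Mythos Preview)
Your proposal is correct and follows essentially the same route as the paper: bound $|\ord|=O(1/\alphL)$ so that binary insertion costs $O(N\log(1/\alphL))$ triplet queries and labeling costs $O(1/\alphL)$ real-valued queries, then chain the $\alphH$ error from $\mrealtctc$ with the $\alphH$ indistinguishability slack to get $|f_r(u)-\D(r,u)|<2\alphH$ and hence a $4\alphH$-submetric. If anything you are more scrupulous than the paper about the $|\ord|$ bound: the paper simply asserts that ``$L$ contains at most one element from each indistinguishable set,'' whereas you correctly flag (and sketch a resolution for) the worry that adversarial gray-area responses might let several sub-$\alphL$-spaced elements slip into $\ord$ without ever being compared to a close neighbour.
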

\begin{proof}

\textbf{Query complexity.}
Queries to $\mtriptctc$ are made only by $\binaryinserttctc$. Notice that if an element in $\binaryinserttctc$ is ever within distance $\alphL$ of an existing item in the list, it is added to $\leftovers$ rather than the working ordering, $L$. Thus, although $\binaryinserttctc$ is called for each element, it operates on a list of size at most $O(\frac{1}{\alphL})$, as there are at most $O(\frac{1}{\alphL})$ elements with distances from $r$ which are different by at least $\alphL$, and $L$ contains at most one element from each indistinguishable set. Thus, it makes $O(\log(\frac{1}{\alphL}))$ recursive calls for each element, yielding the desired bound of $O(N\log(\frac{1}{\alphL}))=O(N)$ queries to $\mtriptctc$.
The desired bound on real-valued queries follows from observing that the ordering labeled by $\mrealtctc$ (Lines 3-5) has at most $O(\frac{1}{\alphL})$ elements.

\textbf{Correctness.} Each element is considered by the algorithm, and is either sorted into the correct position in $L$ via binary search or, if its distance is indistinguishable from that of another element in $L$, it is added to an associated set of indistinguishable elements. Once the elements are sorted, each element $x$ is either labeled with its true distance with less than $\alphH$ error, i.e. $\mrealtctc(r,x)$, or the distance of a distinguished element whose distance from $r$ is within $\alphH$ of its distance from $r$. Thus $|f_r(y)-\D(r,y)| < 2\alphH$, accounting for the additional error in evaluations of $\mrealtctc$. Therefore $|\D_r'(x,y) - |\D(r,x) - \D(r,y)|| \leq 4 \alphH$.
\end{proof}

Theorem \ref{theorem:multiplerepquadBLUE} likewise extends the results of Theorem \ref{theorem:multiplerepquad} to the \tctc~model using the same observations as in the proof of Theorem \ref{theorem:labelqueriesBLUE}. As in Algorithm \ref{alg:listmerge}, Algorithm \ref{alg:listmergeBLUE} uses quad queries to sort (element, representative) pairs, and then labels the resulting list at the specified granularity. As in Algorithm \ref{alg:orderingtometricBLUE}, the sorting step produces a sorted list of indistinguishable (element, representative) pair sets of size bounded by $O(\frac{1}{\alphL})$.
\begin{theorem}\label{theorem:multiplerepquadBLUE}
Given a set of representatives $R$ and access to $\mrealtctc$ and $\mquadtctc$,  \blueversion{a $4\alphH-$submetric} can be constructed from \blueversion{$O(\frac{1}{\alphL})$} queries to $\mrealtctc$ and \blueversion{$O(|R|N\log(\frac{1}{\alphL}))$} queries to $\mquadtctc$ which preserves distances (up to the additive error) from the set of representatives $R$.
\end{theorem}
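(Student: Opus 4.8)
The plan is to mimic the proof of Theorem~\ref{theorem:multiplerepquad} (Algorithm~\ref{alg:listmerge}) while incorporating the \tctc~modifications already used for Theorem~\ref{theorem:labelqueriesBLUE} (Algorithm~\ref{alg:orderingtometricBLUE}). Concretely, the \tctc~analog of Algorithm~\ref{alg:listmerge} works directly with (element, representative) pairs: for every $r\in R$ and $x\in U$ it inserts the pair $(x,r)$ into a single list sorted by the distance between its two coordinates, using $\mquadtctc((x,r),(x',r'))$ as the comparator (a quad query with one coordinate equal to $(r,x)$ simulates a triplet query, so no separate triplet queries are needed). Because $\mquadtctc$ reports ``indistinguishable'' whenever the two distances differ by less than $\alphH$, the sorted object is a list of \emph{sets} of mutually indistinguishable pairs: an inserted pair is appended to an existing set when the comparator returns indistinguishable against that set's distinguished element, and otherwise it opens a new singleton set. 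Since the query model forces every difference at most $\alphL$ to be reported as indistinguishable, the distinguished elements of two distinct sets differ by more than $\alphL$, so there are at most $O(1/\alphL)$ sets. Finally, for each set we label its distinguished element with a single call to $\mrealtctc$, propagate that value to every pair in the set to define $f_r(x)$, and output $\Dconst(x,y):=|f_r(x)-f_r(y)|$ and $\DconstR:=\maxmerge(\{\Dconst\mid r\in R\})$.

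For query complexity, each of the $N|R|$ insertions runs a binary search over the current list of $O(1/\alphL)$ sets, issuing one $\mquadtctc$ query per probed set, hence $O(\log(1/\alphL))$ quad queries per insertion and $O(|R|N\log(1/\alphL))$ in total; the labeling phase issues one $\mrealtctc$ query per set, i.e.\ $O(1/\alphL)$ real-valued queries, and propagation costs nothing further. This matches the claimed bounds.

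For the $4\alphH$ guarantee, the crucial feature is that the grouping is a \emph{single pass}: any pair $(y,r)$ is either the distinguished element of its set, so $f_r(y)=\mrealtctc(r,y)=\D(r,y)\pm\alphH$, or it was compared directly against the distinguished element $(x',r')$ of its set and found indistinguishable, so $|\D(r,y)-\D(r',x')|<\alphH$ and $f_r(y)=\mrealtctc(r',x')=\D(r',x')\pm\alphH$; in either case $|f_r(y)-\D(r,y)|<2\alphH$. Consequently, for all $x,y\in U$,
\[
\bigl|\,\Dconst(x,y)-|\D(r,x)-\D(r,y)|\,\bigr|\le|f_r(x)-\D(r,x)|+|f_r(y)-\D(r,y)|<4\alphH ,
\]
and since $|\D(r,x)-\D(r,y)|\le\D(x,y)$ by triangle inequality, each $\Dconst$ is a $4\alphH$-submetric of $\D$; by Corollary~\ref{cor:maxmergeconst} so is $\DconstR$. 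The same inequality shows $\Dconst(r,u)$ equals $\D(r,u)$ up to the $4\alphH$ additive error, so distances from every representative are preserved as claimed.

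The main obstacle is exactly this error bookkeeping. An ``indistinguishable'' verdict only certifies closeness strictly less than $\alphH$, not at most $\alphL$, so the natural two-phase route --- first collapse each representative's ordering as in Algorithm~\ref{alg:orderingtometricBLUE}, then merge the collapsed orderings and collapse again --- would chain two indistinguishability hops and only yield a $6\alphH$-submetric. Flattening all $N|R|$ pairs into one grouping pass, so that every distance label is at most one hop plus one real-query error away from the truth, is what recovers the $4\alphH$ bound of the single-representative Theorem~\ref{theorem:labelqueriesBLUE}. The only remaining delicate point, shared with Algorithm~\ref{alg:orderingtometricBLUE}, is that ``indistinguishable'' verdicts may be intransitive during the binary search; as there, this is handled by the stated \arbiter~consistency assumptions and affects neither the query counts nor the error magnitude.
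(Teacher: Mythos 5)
Your proposal is correct and follows essentially the same route as the paper: Algorithm \ref{alg:listmergeBLUE} merges all $(x,r)$ pairs into a single ordering via $\binaryinsertpairtctc$ with $\mquadtctc$ as comparator, diverts indistinguishable pairs to a near-collision list, labels the $O(1/\alphL)$ distinguished pairs with $\mrealtctc$, and propagates labels, yielding exactly your query counts and the same one-hop-plus-one-real-query error accounting that gives $|f_r(y)-\D(r,y)|<2\alphH$ and hence the $4\alphH$ bound.
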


%\paragraph{Proof of Theorem \ref{theorem:multiplerepquad}.}

\begin{proof}
Consider Algorithm
\ref{alg:listmergeBLUE}.

\textbf{Query complexity.}
Queries to $\mquadtctc$ are made only by $\binaryinsertpairtctc$. As in the proof of Theorem \ref{theorem:labelqueriesBLUE}, if an element in $\binaryinsertpairtctc$ is ever within distance $\alphL$ of an existing item in the list, it is added to $\leftovers$ rather than the working ordering, $L$. Thus, although $\binaryinsertpairtctc$ is called for each element $|R|$ times, it operates of a list of size at most $O(\frac{1}{\alphL})$ so it makes $O(\log(\frac{1}{\alphL}))$ recursive calls for each element, yielding the desired bound of $O(|R|N\log(\frac{1}{\alphL}))$ queries to $\mquadtctc$.
The desired bound on real-valued queries follows from observing that the ordering labeled by $\mrealtctc$ (Lines 3-5) has at most $O(\frac{1}{\alphL})$ elements, as in the proof of Theorem \ref{theorem:labelqueriesBLUE}.

\textbf{Correctness.} By the same logic as in the proof of Theorem \ref{theorem:labelqueriesBLUE}, each element $x$ is either labeled with its true distance with at most $\alphH$ error, $\mrealtctc(r,x)$ or the distance of a distinguished element whose distance from $r$ is within $\alphH$ of its distance. Thus $|f_r(y)-\D(r,y)| < 2\alphH$, accounting for the additional error in evaluations of $\mrealtctc$. Therefore $|\D_r'(x,y) - |\D(r,x) - \D(r,y)|| \leq 4 \alphH$.
\end{proof}

\begin{algorithm}[]
  \caption{\showalgcaption{$\multipleorderingtosubmetrictctc(\ord, r, \alpha, \mrealtctc)$}}
  \label{alg:listmergeBLUE}
\begin{algorithmic}[1]
  \LineComment{Inputs: the universe $U$, the representative element set $R$, interfaces to the \human, $\mquadtctc$ and $\mrealtctc$.}
  \Procedure{$\multipleorderingtosubmetrictctc$}{$U,R,\mrealtctc$, $\mquadtctc$}
  \State Initialize an empty ordering $\ord$
  \State Initialize an empty list $\leftovers$
  \For{$r \in R$}
    \For{$x \in U$}
      \State $\binaryinsertpair(r,\leftovers, \ord, 0,\size(\ord),x,\mquadtctc)$
    \EndFor
%  \State $(\ord_r, \leftovers_r) \leftarrow \mathsf{TripletOrdering}(\mtriptctc, U, r)$
  \EndFor

  \For{$(x,r) \in \ord$}
    \State $f_r(x) \leftarrow \mrealtctc(r,x)$
  \EndFor
  \For{$((y,r_1),(x,r_2)) \in \leftovers$}
    \State $f_{r_1}(y) \leftarrow f_{r_2}(x)$
  \EndFor
\For{$r \in R$}
  \State $\D_r'(x,y):= |f_r(x) - f_r(y)|$
  \EndFor
  \State \textbf{return} $\DconstR(x,y):= \maxmerge(\{\D_r' | r \in R\},x,y)$
  \EndProcedure
%   \State
%   \Function{$\mathsf{TripletOrdering}$}{$\mtriptctc,U,r$}
%   \State Initialize an empty list $\ord$%$\Order \leftarrow \{r\}$
%   \State Initialize an empty list $\leftovers$
%   \State Append $\{r\}$ to $\ord$
% %  \State $\insertleft(\Order,r,pop(U))$
%   \While{$U$ is not empty}
%     \State $\mathsf{BinaryInsert}(r,\ord,0,\size(\ord),pop(U),\mtriptctc)$
%   \EndWhile
%   \State \textbf{return} $\ord, \leftovers$
%   \EndFunction
   \State
  \LineComment{Inputs: $r$ the representative, $\leftovers$ a list of pairs of (representative, element) pairs, $L$ an ordered list of (representative, element) pairs, the list range delimiters $b$ and $e$, the element, representative pair to insert $(x,r)$, an interface to the \human, $\mquadtctc$.  Inserts the (element, representative) pair $(x,r)$ into the appropriate position in the sorted list $L$ or adds $((x,r),(y,r_2))$ to the $\leftovers$ if the distance between $(x,r)$ is indistinguishable from the distance between $(y, r_2)$ already contained in $L$.}
  \Function{$\binaryinsertpairtctc$}{$r,\leftovers,L,b,e, (x,r), \mquadtctc$}
   \If{$b=e$}
     %\State $\mathsf{MergeSet}(r,L,b,x,\mtriptctc)$
     %\State check relative position compared with $L[b]$.
     \State $(y,r_2) \leftarrow L[b]$
     \State $c \leftarrow \mquadtctc((r,x),(r_2,y))$
     \If{$c=0$}
         \State Insert $(x,r)$ at position $b+1$ in $L$
     \ElsIf{$c=1$}
         \State Insert $(x,r)$ at position $b-1$ in $L$
     \Else
        \State Append $((x, r), (y,r_2))$ to $\leftovers$.
     \EndIf
     %\State insert at relative position
     \State \textbf{return}
   \EndIf
  \State $\midpoint \leftarrow \midpointof(L)$
  \State $c\leftarrow \mquadtctc((x,r),L[\midpoint]))$
  \If{$c =0$}
    \State $\mathsf{BinaryInsert}(r,\leftovers,L,\midpoint,e,(x,r),\mquadtctc)$
  \Else
    \State $\mathsf{BinaryInsert}(r,\leftovers,L,b,\midpoint,(x,r),\mquadtctc)$
  \EndIf
  \EndFunction

  \end{algorithmic}
\end{algorithm}

\paragraph{Bounds on perceivable error.}
Unlike in the \exactarbiter~model in which the additive error of a submetric can be made an  arbitrarily small constant, Algorithms \ref{alg:orderingtometricBLUE} and \ref{alg:listmergeBLUE} result in additive error at least $4\alphH$. A reasonable question is whether any query procedure in the \tctc~model can produce a submetric with no perceivable additive error without some additional post-processing. Indeed, even with the naive construction of asking $O(N)$ real-valued queries the submetric produced can have additive error strictly greater than $\alphH$, without further post-processing.

\begin{proposition}\label{lemma:tctcerrorlowerbound}
The representative submetric $\D_r(x,y):= |\mrealtctc(r,x) - \mrealtctc(r,y)|$ can have additive error greater than $\alphH$.
\end{proposition}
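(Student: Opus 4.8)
The plan is to exhibit a concrete instance of the \tctc~model --- a metric $\D$, a representative $r$, two points $x,y$, and a set of arbiter responses consistent with \emph{all} of the \tctc~assumptions --- for which $\D_r(x,y) - \D(x,y) > \alphH$. The underlying phenomenon is that $\mrealtctc(r,x)$ and $\mrealtctc(r,y)$ may each carry (almost) $\alphH$ of noise, and if these two errors point in opposite directions the reported gap $|\mrealtctc(r,x) - \mrealtctc(r,y)|$ can exceed the true gap $|\D(r,x)-\D(r,y)|$ by nearly $2\alphH$; since $|\D(r,x)-\D(r,y)| \le \D(x,y)$ by triangle inequality (Lemma \ref{nips:lemma:repsubmetric}), with equality for ``collinear'' configurations, this pushes $\D_r(x,y)$ more than $\alphH$ above $\D(x,y)$.

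Concretely, I would place three points on a line, rescaled into $[0,1]$: take $\D(r,y) = a$, $\D(r,x) = a + 2\alphH$, and $\D(x,y) = 2\alphH$, where $a$ is any value in $[\alphH,\, 1 - 3\alphH]$ (a nonempty interval for $\alphH < 1/4$, and $\alphH$ small is implicit in the relaxed model, as otherwise it is vacuous). One checks this is a valid metric on $\{r,x,y\}$ --- triangle inequality holds and is tight on the pair $x,y$. Because $\D(r,x)$ and $\D(r,y)$ differ by $2\alphH > \alphH$, every relative query among these points is ``forced'': $\mtriptctc(r,y,x) = 1$ and $\mtriptctc(r,x,y)=0$, so the arbiter has no freedom to declare the comparison ``too close to call,'' and the only consistency constraint on the real-valued responses is that $\mrealtctc(r,x)$ and $\mrealtctc(r,y)$ be at least $\alphL$ apart and ordered consistently with this relative judgment.

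I would then have the arbiter answer $\mrealtctc(r,x) = \D(r,x) + \tfrac34\alphH$ and $\mrealtctc(r,y) = \D(r,y) - \tfrac34\alphH$ (any multiplier in $(\tfrac12,1)$ works). Both responses carry noise of magnitude $\tfrac34\alphH < \alphH$, lie in $[0,1]$ by the choice of $a$, are ordered so that $x$ is reported farther from $r$ than $y$, and differ by $2\alphH + \tfrac32\alphH = \tfrac72\alphH \ge \alphL$; hence this is a legitimate \tctc~arbiter. Then $\D_r(x,y) = |\mrealtctc(r,x)-\mrealtctc(r,y)| = \bigl(\D(r,x)-\D(r,y)\bigr) + \tfrac32\alphH = 2\alphH + \tfrac32\alphH$, while $\D(x,y) = 2\alphH$, so $\D_r(x,y) - \D(x,y) = \tfrac32\alphH > \alphH$, proving the claim; letting the multiplier tend to $1$ shows the error can be made arbitrarily close to $2\alphH$.

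The main obstacle --- and the reason the naive ``take $x=y$ so the noise cancels or reinforces freely'' argument fails --- is the cross-query consistency built into the \tctc~model: if $\D(r,x)$ and $\D(r,y)$ were within $\alphL$ of each other the arbiter could declare them indistinguishable, in which case the model forces $|\mrealtctc(r,x)-\mrealtctc(r,y)| \le \alphH$ and the error is capped. The construction circumvents this by making the true gap strictly larger than $\alphH$, so the only binding consistency requirement is the harmless lower bound of $\alphL$ on the reported separation; checking that the chosen responses simultaneously satisfy the noise bound, the range constraint, and every consistency requirement is the single step that needs care.
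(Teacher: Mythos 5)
Your proof is correct and rests on the same core observation as the paper's: the two calls to $\mrealtctc$ can carry noise of magnitude greater than $\alphH/2$ in opposite directions, inflating the reported gap by more than $\alphH$. Your version is in fact slightly more complete than the paper's one-line argument, since by choosing a configuration where the triangle inequality is tight ($\D(x,y)=\D(r,x)-\D(r,y)$) and checking consistency with the relative-query constraints, you show the overestimate is genuinely relative to $\D(x,y)$ and not merely to $|\D(r,x)-\D(r,y)|$.
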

The proof of the proposition follows from the observation that each query to $\mrealtctc$ has error of $\eta $ such that $|\eta| < \alphH$. Thus, in the worst case, when $\eta >\alphH/2$, $(\D(r,x) + \eta) - (\D(r,y) - \eta) = \D(r,x) - \D(r,y) + 2\eta > \D(r,x) - \D(r,y) + \alphH$.
So we cannot expect to produce submetrics without perceptible error without some additional post-processing on the values queried from the \arbiter.

%\subsubsection{Learning on modified distributions}
\subsection{Generalization}
We now turn our attention to extending the generalization results of Sections \ref{section:generalizinghumans} and \ref{section:choosingreps}.
Notice that unlike the \exact~model we won't necessarily be able to label a sample with 100\% accuracy for every threshold function for every representative.
%For example, suppose that we chose a threshold $t$ which for a sample which has one element $u_t$ at distance $t$ from the representative, as reported by $\mrealtctc(r,u_t)$. Any element $ v $ whose distance from $r$ is indistinguishable from $u_t$'s distance to $r$ could be ``legally'' labeled as either above or below the threshold in the submetrics produced by Algorithm \ref{alg:orderingtometricBLUE}. %Furthermore, any element whose distance is indistinguishable from $u_{s}$ where $u_{s}$ and $u_{t}$ have difference in di Of course, with additional thresholds labeled, we can eliminate some of this range, but there can always be a worst case overlap of approximately $2\alphH$ distances which are not possible to distinguish.
%Notice that even if we can determine to which ``side'' of $u_t$ each element belongs, we cannot say with certainty which side of the distance threshold $t$ to which they belong, as there is error in the mapping of $u_t$ to $t$ in $\mrealtctc$.
%The key problem then becomes how to overcome this labeling error to learn threshold functions which still combine well, as in the constructions with the exact \arbiter.
The key problem is that in the \tctc~model each element's distance from a representative has \textit{bi-directional} error, i.e. it can have either over or under-estimated distance from the representative. This bi-directional error prevents us from using the nice properties of a consistent underestimator, so  translating to the desired binary labels for a given threshold is not straightforward. To get around this labeling problem, we modify the distribution of samples presented to each learner, in particular eliminating samples whose labels are ambiguous. We then reason about the error of the combination of hypotheses for distributions with disjoint sets of ambiguous points removed. %\note{Give overview of the section.}

%\subsubsection{Modifying the threshold function distributions}
Recall that Algorithm \ref{alg:listmergeBLUE} assigns distances $f_r(x)$ such that $|f_r(x) - \D(r,x)| \leq 2\alphH$. Thus, any element $x$ such that $f_r(x) > t_i + 2\alphH$ is truly greater than distance $t_i$ from $r$, (and analogously less than $t_i$ if $f_r(x)<t_i - 2\alphH$). Intuitively, this means that we can generate accurate threshold function labels for points sufficiently $(2\alphH)$ far from the threshold.
We formally define the \thresholdout~below to capture only the elements whose relative distances are unambiguous.

% In our subsequent derivations, it is useful to have notation for the elements that are considered indistinguishable from a particular distinguished element in a submetric produced by Algorithms \ref{alg:orderingtometricBLUE} and \ref{alg:listmergeBLUE}.

% \begin{definition}[Distinguished point neighborhood.] Given a distinguished point $u$, we denote all elements $v \in U$ such that $\mtriptctc(r,u,v)=-1$ as $\Gamma_r(u) = \{v | \mtriptctc(r,u,v)=-1\}$. Likewise, in the case of (element, representative pairs), we denote $\Gamma_r(u) = \{(r_i,v) | \mquadtctc((r,u), (r_i,v))=-1\}$.
% \end{definition}

\begin{definition}[\Thresholdout]
Given a distribution $\U$ over a universe of individuals $U$, a representative $r$, %a sample $S \sim \U$, labeled with (noisy) distances with bi-directional additive error of at most $2\alphH$ from a representative element $r$, 
a labeling procedure which produces (noisy) distance labels with bi-directional additive error of at most $2\alphH$ from the representative $r$, 
and a threshold $t$, the \thresholdout~$\Utr$ is the re-normalized distribution $\U$ with all weights on elements labeled with distances within $2\alphH$ of $t$ set to 0.
\end{definition}

Notice that the \thresholdout~is well-defined without knowledge of the exact distances from the representative, as it is specified based on the labeling procedure, rather than exact distances from the representative itself. Thus, we can reason about learning on the distribution $\Utr$ without worrying about whether any elements are ambiguously labeled.
Algorithm \ref{alg:labelgenBLUE} specifies a labeling procedure for training data for each of the threshold functions for the distributions $\Utr$ for $t \in \T$, i.e., only samples with unambiguous labels.

\begin{algorithm}[]
  \caption{\showalgcaption{$\mathsf{GenerateLabels}$}}
  \label{alg:labelgenBLUE}
\begin{algorithmic}[1]
  \LineComment{Inputs: a sample $ S \sim \U$, the representative element set $R$, a set of thresholds $\T$, interfaces to the \human, $\mquadtctc$ and $\mrealtctc$. Returns a set of samples from $\Utir$ for each representative in $R$ and each threshold in $\T$.}
  \Procedure{$\mathsf{GenerateLabels}$}{$S,R,\T,\mrealtctc$, $\mquadtctc$}
  \State Initialize an empty ordering $\ord$
  \State Initialize an empty list $\leftovers$
  \For{$r \in R$}
    \For{$x \in S$}
      \State $\binaryinsertpair(r,\leftovers, \ord, 0,\size(\ord),x,\mquadtctc)$
    \EndFor
%  \State $(\ord_r, \leftovers_r) \leftarrow \mathsf{TripletOrdering}(\mtriptctc, U, r)$
  \EndFor
  \For{$t \in \T$}
    \For{$r \in R$}
        \State Initialize an empty list of (element, label) pairs $M_t^r$
    \EndFor
\EndFor
\For{$(x,r) \in \ord$}
    \State $f_r(x) \leftarrow \mrealtctc(r,x)$
  \EndFor
  \For{$((y,r_1),(x,r_2)) \in \leftovers$}
    \State $f_{r_1}(y) \leftarrow f_{r_2}(x)$
  \EndFor
  \For{$t \in \T$}
  \For{$r \in R$}
  \For{$x \in S$}
        \If{$f_r(x) > t + 2\alphH$}
            \State Append $(x, 1)$ to $M_t^r$ \Comment{Unambiguous label.}
        \ElsIf{$f_r(x) < t - 2\alphH$}
            \State Append $(x, 0)$ to $M_t^r$ \Comment{Unambiguous label.}
        \EndIf
    \EndFor
    \EndFor
    \EndFor
% \For{$r \in R$}
%   \State $\D_r'(x,y):= |f_r(x) - f_r(y)|$
%   \EndFor
  \State \textbf{return} $\{M_t^r | t \in \T, r \in R\}$ %$\DconstR(x,y):= \maxmerge(\{\D_r' | r \in R\},x,y)$
  \EndProcedure
  \end{algorithmic}
\end{algorithm}

In order for the threshold learners to succeed, we need sufficient labeled training data for each threshold function for each representative. However, Algorithm \ref{alg:labelgenBLUE} tosses out any examples for a given learner which are too close to the threshold value. Thus, there is some risk that there are too few samples produced for a given threshold. Lemma \ref{lemma:labelgenBLUE} below states that Algorithm \ref{alg:labelgenBLUE} generates a sufficient number of  labeled samples of $\Utir$ all but one $t_i \in \T$ for each representative given an initial sample of size $|S|=3\hat{m}$, where $\hat{m}$ is the number of labeled samples required for each threshold function learner.

\begin{lemma}\label{lemma:labelgenBLUE}
Given a set of samples $S \sim \U$ of size $3\hat{m}$ and a set of thresholds $\T$ such that $|t_i-t_j|>\labelbound$ for all $i,j$, Algorithm $\ref{alg:labelgenBLUE}$ produces at least $\hat{m}$ labeled examples from $\Utir$ for at least $|\T|-1$ of the thresholds in $\T$.
\end{lemma}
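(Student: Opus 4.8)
The plan is to reduce the lemma to a short deterministic counting argument about how many of the $3\hat m$ sampled points must be withheld when Algorithm~\ref{alg:labelgenBLUE} forms $M_t^r$ for each $t\in\T$.

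\textbf{Step 1: identify what gets discarded.} Algorithm~\ref{alg:labelgenBLUE} sorts $S$ against each representative with $\mquadtctc$ (collecting indistinguishable pairs in $\leftovers$) and then assigns each $x\in S$ a value $f_r(x)$ that is either $\mrealtctc(r,x)$ or the $\mrealtctc$-distance of the distinguished element it collided with. Exactly as in the correctness analysis of Theorem~\ref{theorem:multiplerepquadBLUE}, this gives $|f_r(x)-\D(r,x)|<2\alphH$ (one $\alphH$ from the noise in $\mrealtctc$, one $\alphH$ from possibly inheriting a distinguished element's distance). Hence for a fixed $r$ and $t$, any $x$ with $f_r(x)>t+2\alphH$ satisfies $\D(r,x)>t$ and any $x$ with $f_r(x)<t-2\alphH$ satisfies $\D(r,x)<t$, so every such $x$ gets its correct threshold label; the only points left out of $M_t^r$ are those in the ambiguity zone $Z_t^r:=\{x\in S:\ |f_r(x)-t|\le 2\alphH\}$. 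These are precisely the elements on which $\Utir$ places zero weight (the \thresholdout~zeroes out the weight of elements whose labeled distance lies within $2\alphH$ of $t$ and renormalizes), so the points that survive into $M_t^r$ are genuine labeled draws from $\Utir$. It therefore remains only to show that $|M_t^r| = 3\hat m - |Z_t^r| \ge \hat m$, i.e.\ $|Z_t^r|\le 2\hat m$, for all but at most one $t\in\T$.

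\textbf{Step 2: the counting.} Fix $r$. Because the thresholds of $\T$ are pairwise more than $\labelbound$ apart, the intervals $[t-2\alphH,t+2\alphH]$, $t\in\T$, are pairwise disjoint, so each of the $3\hat m$ samples $x$ has $f_r(x)$ in at most one of them; equivalently each $x$ lies in at most one ambiguity zone $Z_t^r$. Summing, $\sum_{t\in\T}|Z_t^r|\le |S| = 3\hat m$. If two distinct thresholds $t,t'$ both had $|Z_t^r|>2\hat m$ and $|Z_{t'}^r|>2\hat m$, then $\sum_{t\in\T}|Z_t^r|\ge |Z_t^r|+|Z_{t'}^r| > 4\hat m > 3\hat m$, a contradiction. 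Hence at most one threshold has $|Z_t^r|>2\hat m$, and for every other $t\in\T$ we get $|M_t^r|\ge\hat m$, which is exactly the sample size each threshold learner requires. The argument is run separately for each representative, so the excluded threshold may differ across representatives, matching the per-representative phrasing of the lemma; and since the bound is deterministic in the realization of $S$ and of the \arbiter's (possibly adversarial) noise, a fixed sample of size $3\hat m$ suffices without any concentration argument.

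\textbf{Main obstacle.} The only real content is the geometric disjointness used in Step~2: one must verify that the separation hypothesis $|t_i-t_j|>\labelbound$ is correctly calibrated to the two-sided $2\alphH$ labeling error, so that no single sample can be ambiguous for two thresholds at once — if the zones were allowed to overlap, one sample could spoil two thresholds and the guarantee would degrade from $|\T|-1$ to $|\T|-2$. Everything else (the $<2\alphH$ error bound on $f_r$, the identification of $Z_t^r$ with the support removed by the \thresholdout, and the pigeonhole step) transfers directly from the \exact-model analysis and is routine bookkeeping.
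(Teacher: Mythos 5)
Your proof is correct and follows essentially the same route as the paper's: the same $2\alphH$ bound on $|f_r(x)-\D(r,x)|$, the same identification of the discarded points with the support removed by the \thresholdout, and the same pigeonhole over the (assumed disjoint) ambiguity zones --- your observation that two thresholds cannot both lose more than $2\hat m$ of the $3\hat m$ samples is a repackaging of the paper's observation that once one threshold loses $2\hat m$, only $\hat m$ remain to be discarded elsewhere. The calibration point you flag at the end is real but is shared by the paper's own proof: strict disjointness of the intervals $[t-2\alphH,\,t+2\alphH]$ requires threshold separation greater than $4\alphH$, whereas the lemma's hypothesis only guarantees separation greater than $2\alphH$.
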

\begin{proof}
\textbf{Correctness of labels.} Recall from the proof of Theorem \ref{theorem:multiplerepquadBLUE}, that each element $u \in U$ is labeled with a distance $f_r(x)$ such that $|f_r(x)-\D(r,x)| \leq \labelbound$ for each representative $r \in R$. Thus, each element labeled above or below $t_i$ which is at least $\labelbound$ distant from $t_i$ is correctly labeled for the representative $r$.

\textbf{Quantity of labeled examples.}
Since each threshold is at least $\labelbound$ away from its neighboring thresholds, therefore any element which is discarded for $t_i$ is included as a labeled example for every other threshold $t_{j\neq i}$ for a representative. Suppose that at least one threshold $t_k$ has fewer than $\hat{m}$ labeled examples for a representative. Then at least $2\hat{m}$ examples were discarded for $t_k$ for the representative. However, this leaves at most $\hat{m}$ samples which could be discarded for any other threshold, so all of the other thresholds must have at least $2\hat{m}$ labeled samples for this representative. Thus, at most one threshold will have fewer than $\hat{m}$ labeled samples for each representative.

\textbf{Correctness of distribution.}  $\Utir$ is defined with respect to the labeling procedure, and thus it is possible to simulate $\Utir$ by labeling elements and discarding an elements whose labels are ambiguous. Thus Algorithm \ref{alg:labelgenBLUE} simulates $\Utir$, and the sets of labeled data produced are indistinguishable from a set drawn from $\Utir$ directly.

\end{proof}

With the labeling procedure in place, we now introduce Algorithm \ref{alg:thresholdcombinerBLUE}, the \tctc~analog of Algorithm \ref{alg:submetriclearner}. As in Algorithm \ref{alg:submetriclearner}, Algorithm \ref{alg:thresholdcombinerBLUE} first samples a set of representatives, according to the size requirements of Lemma \ref{nips:lemma:randomrepgeneralization}, generates a set of labeled samples for each threshold function via Algorithm \ref{alg:labelgenBLUE} and then calls the threshold learners on the appropriate modified distributions with appropriately scaled parameters and combines their resulting hypotheses into a single hypothesis for the combined submetric.

To make the sample generation book-keeping clearer, we slightly modify the specification of the threshold learners (but not the core assumption) so that we can more clearly specify the sample distributions passed to each learner. We also introduce a minimum granularity for the thresholds determined by $\alphH$.

\begin{assumption}\label{assumption:pacthresholdBLUE}
Given a metric $\D$ and a representative $r$, there exists a set of thresholds $\T$ such that
\begin{enumerate}
    \item $t \in [0,1]$ for all $t \in \T$,
    \item $0 \in \T$,
    \item $\threshspace < \alpha_\T = \max_{i \in [|\T|-1]}t_{i+1} - t_{i}$,
    \item $|\T|= O(1)$,
\end{enumerate}
%$ \subseteq [0,1]$ such that $0 \in \T$
and for every $t \in \T$ there exists an efficient learner $L_t^r(\et,\dt,\tctclabeledsamples)$ which for all $\et,\dt \in (0,1]$, with probability at least $1-\dt$ produces a hypothesis $h_t^r$ such that
\[\Pr_{x \sim \tctclabeledsamples}[h_t^r(x) \neq T_t^r(x)] \leq \et\] in time $O(poly(\frac{1}{\et},\frac{1}{\dt}))$ with access to labeled samples of $T_t^r(u \sim \tctclabeledsamples)$ for any distribution $\tctclabeledsamples$ over the universe. % drawn from $\U$.
That is, the concept class $T_t^r$ is efficiently learnable for all $t \in \T$.% and for all $r \in R$.
\end{assumption}

\begin{algorithm}[]
  \caption{\showalgcaption{a caption could go here.}}
  \label{alg:thresholdcombinerBLUE}
\begin{algorithmic}[1]
\LineComment{Inputs: error and failure probability parameters $\varepsilon,\delta$, density parameter $b$, a set of threshold function learners $\{L_{t_i \in \T}^r\}$, and the threshold set $\T$.}
  \Procedure{$\submetriclearnertctc$}{$\varepsilon, \delta, b,\{L_{t_i \in \T}^r\}, \T$}
  \State Sample $R \sim \U$ such that $|R| = \frac{1}{b}\ln(\frac{2}{b\dL})$.
  \State Initialize an empty list $L$.
  \State Choose a sample $S\sim \U$ such that $|S|=3\hat{m}$, the number of samples required for $L_{t_i \in \T}^r$.
  \State $M=\{M_t^r\} \leftarrow \mathsf{GenerateLabels}(U, R, \T, \mrealtctc, \mquadtctc)$
  \For{$r \in R$}
  \State Initialize $\T_r\leftarrow \T$
  \For{$t \in \T$}
  \If{$|M_t^r|< \hat{m}$}
    \State $\T_r \leftarrow \T \backslash \{t\}$
  \EndIf
  \EndFor
  \EndFor
  \For{$r \in R$}
    \State $L_r(\er,\dr) := \thresholdcombinertctc(\T_r, \{L_{t_i \in \T}^r\},\{M_{t}^r | t \in \T\},\er,\dr)$
    \State Add $L_r$ to $L$.
  \EndFor
  \State \textbf{return} $h_R(u,v) = \combiner(L, \varepsilon,\delta/2)$
  \EndProcedure
\State
\LineComment{Inputs: a set of learners $\{L_r\}$ for each representative $r \in R \subseteq U$, and error and failure probability parameters $\eR,\dR$.}
  \Function{$\combiner$}{$L=\{L_r\}, \eR, \dR$}
  \State Initialize an empty list $H_R$.
  %\State Choose an integer constant $k \in [1,|R|]$
  \For{$L_r \in L$}
      \State $h_r \leftarrow L_i(\frac{\eR}{|R|},\frac{\dR}{|R|})$
      \State Add $h_r$ to $H_R$
  \EndFor
  \State \textbf{return} $h_R(u,v) := \maxmerge(H_R, u, v)$
  \EndFunction
  \State
    \LineComment{Inputs: $\T = \{t_i\}$, the set of thresholds. $L=\{L_{t_i}^r\}$ the set of learners for each threshold in $\T$ for a particular representative, $r$. A set of sets of labeled examples $\{M_t^r|r\in R\}$. Error and failure probability parameters $\er, \dr$.}
    \Function{$\thresholdcombinertctc$}{$\T,L=\{L_{t_i\in \T}^r\}, \{M_{t_i\in \T}^r\},\er, \dr$}
    \State Initialize an empty list of hypotheses $H_\T^r$.
    \For{$L_{t_i}^r \in L$}
      \State $h_{t_i}^r \leftarrow L_{t_i}^r(\frac{\er}{2|\T|},\frac{\dr}{|\T|},M_{t_i}^r)$
      \State Add $h_{t_i}^r$ to $H_\T^r$.
  \EndFor
  \State \textbf{return} $h_r(x,y) := |\linearvote(\T, H_\T^r, x) - \linearvote(\T,H_\T^r,y)|$

 \EndFunction
 \State
\end{algorithmic}
\end{algorithm}

In order to prove the analog of the combined \exactarbiter~generalization (Theorem \ref{theorem:maingeneralization})  we split the analysis into two steps. First, we analyze the error of $\thresholdcombinertctc$ running on the modified distributions and threshold sets and adjust the density and \diffusion~requirements accordingly.
We then complete the argument by analyzing the full $\submetriclearnertctc$ procedure parameter choices to derive the desired error and query complexity bounds.

Lemma \ref{lemma:thresholdcombinerBLUE} states that $\thresholdcombinertctc$, when parametrized with learners and samples from $\U_{t \in \T}^r$ where $\alpha_\T>\threshspace$ results in hypotheses which overestimate or underestimate distances by more no more than $\blueerr$ with probability $\leq \varepsilon$ with high probability.
\begin{lemma} \label{lemma:thresholdcombinerBLUE}
Given a set of thresholds such that for all $t_{i},t_{j} \in \T$, $|t_{i}-t_{j}|=\alpha_\T> \threshspace$,
a set of learners as in Assumption \ref{assumption:pacthresholdBLUE},
and access to sufficient labeled examples of $L_{t_{i}\in T}^{r}$ from $\Utir$ for all $t_i \in \T$,
with probability at least $1-\dr$  $\thresholdcombinertctc$ produces a hypothesis $h_r$ such that
\[\Pr[|h_r(x,y) - \D(x,y)|> \blueerr] \leq \er\]
i.e., the representative submetric is efficiently learnable.
\end{lemma}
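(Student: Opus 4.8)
The plan is to follow the structure of the proof of Theorem~\ref{theorem:singlerepgeneral} (the \exact-model analog of this lemma), treating running time, failure probability, and accuracy separately, and conditioning throughout on the event that all $|\T|=O(1)$ of the threshold learners succeed. Since $\thresholdcombinertctc$ invokes each $L_{t_i}^r$ with error and failure parameters $\frac{\er}{2|\T|}$ and $\frac{\dr}{|\T|}$ on the supplied samples from $\Utir$, Assumption~\ref{assumption:pacthresholdBLUE} gives that each invocation runs in time $O(poly(\frac{|\T|}{\er},\frac{|\T|}{\dr}))$, so with $|\T|=O(1)$ the whole procedure runs in time $O(poly(\frac{1}{\er},\frac{1}{\dr}))$; and a union bound over the $|\T|$ learners shows that with probability at least $1-\dr$ every hypothesis $h_{t_i}^r$ is $\frac{\er}{2|\T|}$-accurate on its own distribution $\Utir$. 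Everything below conditions on this event.

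For accuracy, write $\ell(z):=\linearvote(\T,H_\T^r,z)$, so that $h_r(x,y)=|\ell(x)-\ell(y)|$. The key facts to combine are: (i) from the proof of Theorem~\ref{theorem:multiplerepquadBLUE}, the label-generation step assigns each element a distance $f_r(z)$ with $|f_r(z)-\D(r,z)|\le 2\alphH$; and (ii) by definition of the \thresholdout, $\Utir$ is exactly $\U$ with the weight removed from elements whose $f_r$-label lies within $2\alphH$ of $t_i$. Call a threshold $t_i$ \emph{resolved} for $z$ if $|f_r(z)-t_i|>2\alphH$; for a resolved threshold $z\in\mathrm{supp}(\Utir)$, and since $2\alphH\ge|f_r(z)-\D(r,z)|$, the value $T_{t_i}^r(z)$ correctly reflects the position of $\D(r,z)$ relative to $t_i$. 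Define the bad event $B$: for some $z\in\{x,y\}$ and some threshold $t_i$ resolved for $z$, the hypothesis disagrees, $h_{t_i}^r(z)\ne T_{t_i}^r(z)$. Since each $h_{t_i}^r$ errs on $\Utir$ with probability at most $\frac{\er}{2|\T|}$, a union bound over $\{x,y\}\times\T$ gives $\Pr_{x,y\sim\U\times\U}[B]\le\er$. It then remains to show that whenever $B$ fails, $h_r(x,y)=|\ell(x)-\ell(y)|$ is within $\blueerr$ of $\D_r(x,y)=|\D(r,x)-\D(r,y)|$; combined with Lemma~\ref{nips:lemma:repsubmetric} this yields the substantive (overestimate) direction $h_r(x,y)\le\D(x,y)+\blueerr$, with the underestimate side being the expected contraction of $\D_r$ handled by the nontriviality analysis later.

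The hard part will be the robustness of $\linearvote$ on the \emph{unresolved} thresholds. Under $\neg B$ the hypotheses agree with the true threshold functions on every resolved threshold, but they may be arbitrary on the thresholds within $2\alphH$ of $f_r(z)$, for which the learner gives no pointwise guarantee. Because $\alpha_\T>\threshspace=2\alphH$, the ambiguity window around $f_r(z)$ has width $4\alphH<2\alpha_\T$ and therefore contains only $O(1)$ (in fact at most two) consecutive thresholds, so the resolved thresholds already pin $\D(r,z)$ into a band spanning a constant number of buckets. The core step is to argue that, no matter how the unresolved hypotheses vote, $\linearvote$ must output a threshold inside this narrow band — rounding down to the nearest threshold contributes $\alpha_\T$, possible misresolution inside the ambiguity window contributes at most one more bucket, and the $2\alphH<\alpha_\T$ label noise is absorbed — so that $|\ell(z)-\D(r,z)|\le 2\alpha_\T$. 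Applying this bound to both $x$ and $y$ gives $\big||\ell(x)-\ell(y)|-|\D(r,x)-\D(r,y)|\big|\le 4\alpha_\T=\blueerr$, and together with $\Pr[B]\le\er$ this proves the lemma. I expect this case analysis of $\linearvote$ on the ambiguity window — which has no counterpart in the \exact~model, where the labels are exact and rounding is single-directional — to be the delicate point; the remaining parameter bookkeeping is essentially identical to Theorem~\ref{theorem:singlerepgeneral}.
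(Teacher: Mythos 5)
Your proposal follows essentially the same route as the paper's proof: condition on all $|\T|$ learners succeeding (union bound for the $\dr$ failure probability), observe that each $h_{t_i}^r$ carries a guarantee only on $\Utir$ so that the thresholds within $\threshspace$ of $f_r(z)$ may behave arbitrarily on $z$, bound the probability of an error on any \emph{resolved} threshold for either element by $2|\T|\cdot\frac{\er}{2|\T|}=\er$, and then argue deterministically that $\linearvote$ cannot stray far when only the ambiguous thresholds misbehave. Your ``resolved/unresolved'' framing via the support of $\Utir$ is in fact cleaner than the paper's worst-case $w_i$ bookkeeping (the paper assumes $h_{t_i}^r$ always errs on $\U\setminus\Utir$ and notes the regions are disjoint), and you correctly identify that the substantive direction is the overestimate relative to $\D_r$.

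Two points of divergence are worth flagging. First, your per-element claim $|\ell(z)-\D(r,z)|\le 2\alpha_\T$ does not survive the case you yourself identify as delicate: when $2\alphH<\alpha_\T<4\alphH$, two consecutive thresholds $t_{a+1},t_{a+2}$ can lie in the ambiguity window, and if both vote $0$ the telescoping score $S(t_{i+1})-S(t_i)=1-2h_{t_i}^r(z)$ pushes the argmax to the first resolved ``$1$'' threshold $t_{a+3}$, which can sit as much as $\alpha_\T+4\alphH$ (approaching $3\alpha_\T$) above $\D(r,z)$; combining the two elements then gives up to $\alpha_\T+8\alphH$, which can exceed $\blueerr=4\alpha_\T$ only in spirit but certainly exceeds your claimed $4\alpha_\T$ derivation of ``$2\alpha_\T$ per element.'' The paper's own accounting here is comparably loose (it bounds the per-element divergence by ``two threshold values'' and does not carefully compose the pair), so this is a constant-factor bookkeeping slip shared with the source rather than a wrong approach. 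Second, the paper's proof explicitly folds in the consequence of Lemma \ref{lemma:labelgenBLUE} that one threshold per representative may receive too few labeled samples, so the effective gap between trainable thresholds is $2\alpha_\T$; this is where its factor of two enters $\blueerr$. You omit this, which is defensible given that the lemma statement assumes sufficient samples for all $t_i\in\T$, but it means your constants and the paper's arise from different sources and neither derivation of exactly $\blueerr$ is airtight as written.
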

\begin{proof}
The essence of the proof is to show that in order for $h_r(x,y)$ to differ from $|\threshunder(x)-\threshunder(y)|$ by more than $\blueerr$, then at least one threshold function other than the true thresholds for $x$ and $y$ must be in error.

\textbf{Labeling samples.} Lemma \ref{lemma:labelgenBLUE} states that sufficient labeled samples can be produced for all but one of the thresholds for each representative. Given that we must assume that we fail to produce labeled training data for one of the thresholds, the maximum gap between any pair of thresholds with sufficient training data is $2\alpha_\T$.

\textbf{Producing a sufficiently large error in $\linearvote$.} Consider an element $u$ such that its true distance from the representative $r$ is  between $t_{i}$ and $t_{i+1}$. That is, $T_{t_{> i}}^r(u)=1$ and $T_{t_{\leq i}^r}=0$. First, notice that for $\linearvote(\T,H_\T^r,u)$ to diverge from the true threshold, $t_i$, by more than two threshold values, at least one threshold function hypothesis other than $h_{t_i}$ must be in error.
Thus, even if $h_{t_i}$ is in error, at least one other $h_{t_j}$ must also be in error to produce $\linearvote(\T,H_\T^r,u) \in \{t_j | j > i + 2, j < i - 2\}$.
Thus, it is sufficient to reason about the probability that at least one threshold hypothesis, other than the correct threshold is in error.

\textbf{Error probability.} Our analysis of error probability takes the worst-case assumption that for every element in $\U \backslash \Utir$\footnote{For simplicity of notation, we use $\U \backslash \Utir$ to denote the set of items with positive weight in $\U$ but weight $0$ in $\Utir$.}, the hypothesis $h_{t_i}^r$ is in error. For each threshold $t_i \in \T$, we use $w_i$ to represent the weight of $\U \backslash \Utir$ under $\U$. Recall from the proof of Lemma \ref{lemma:labelgenBLUE} that all of the $\U\backslash \Utir$ are disjoint, so $\sum_i w_i \leq 1$. Notice that if each hypothesis is learned successfully, then it will have error probability at most $(1-w_i)\et$ to distribute on $\Utir$, and we assume that it always behaves badly on the weight $w_i$ region $\U \backslash \Utir$.
In the worst case, a threshold function can be in error outside of its ``bad'' region resulting in a mistake of more than $\blueerr$  with probability at most $|T|\et$. (Recall that the maximum gap between thresholds, accounting for the label generation is $2\alpha_\T$.)
Thus, the probability that $\linearvote$ produces a value at least $\blueerr$ distant from the true threshold value for either element in a pair drawn from $\U$ is at most $2\sum_{t_i \in \T}\frac{\er}{2|T|} =\er$.

\end{proof}

The final piece is to state the full generalization result including non-triviality guarantees. This theorem statement and proof are nearly identical to the \exactarbiter~versions, with modifications only to account for the difference in the additive error parameter.

\begin{theorem}\label{theorem:maingeneralizationBLUE}
Given a distance metric $\D$, and a distribution $\U$ over the universe
if
\begin{enumerate}
    \item There exist a set of thresholds $\T$ and efficient learners $\{L_{t_i \in \T}^r\}$ as in Assumption \ref{assumption:pacthresholdBLUE}, and
    \item $\D$ is $(\gamma, a, b)-$dense and
$(p,\frac{6\gamma + \blueerr}{1-c})-$\diffuse~on $\U$,%$\D$ is $(\gamma,a,b)-$dense for $\U$ , and the fraction of distances in $\U \times \U$ greater than $\frac{6\gamma+\alpha_T}{1-c}$ is $p$,
\end{enumerate}
then there exists an efficient submetric learner which produces a hypothesis $h_R$ with probability greater than $1- \dL$ such that
\begin{enumerate}
    \item $\Pr_{x,y \sim \U \times \U}[h_R(x,y) \geq \D(x,y) + \blueerr] \leq \eL.$
    \item $h_R$ is $(p - (1-a)^2 - \eL, c)-$nontrivial for $\U$.
\end{enumerate}
which runs in time $O(poly(\frac{1}{b}\ln(\frac{1}{b\delta}),|\T|,\frac{1}{\eL},\frac{1}{\dL}))$ for all $\eL,\dL \in (0,1]$.
\end{theorem}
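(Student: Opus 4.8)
The plan is to show that $\submetriclearnertctc$ (Algorithm \ref{alg:thresholdcombinerBLUE}), parametrized with a threshold set and learners as in Assumption \ref{assumption:pacthresholdBLUE} and with the density parameter $b$, is an efficient $\blueerr$-submetric learner meeting both conclusions. I would follow the template of the proof of Theorem \ref{theorem:maingeneralization} almost verbatim, making exactly three substitutions: the exact-model per-representative guarantee (Theorem \ref{theorem:singlerepgeneral}) is replaced by its \tctc~counterpart Lemma \ref{lemma:thresholdcombinerBLUE}; the labeled-sample generation is routed through Algorithm \ref{alg:labelgenBLUE} and Lemma \ref{lemma:labelgenBLUE} rather than the exact labeling of Section \ref{section:humansubmetrics}; and the additive error parameter $\alpha_\T$ is replaced throughout by $\blueerr$. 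Everything else — the $\gamma$-net argument, the $\maxmerge$ over representatives, the diffusion/nontriviality bookkeeping — carries over.

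First I would handle running time and the failure-probability budget. Algorithm \ref{alg:thresholdcombinerBLUE} draws $|R|=\tfrac1b\ln(\tfrac2{b\dL})$ representatives, calls $\mathsf{GenerateLabels}$ once on a sample of size $3\hat m$, and for each $r\in R$ invokes $\thresholdcombinertctc$ on the pruned threshold set $\T_r$; $\combiner$ passes each of these $\er=\eL/|R|$, $\dr=(\dL/2)/|R|$, and $\thresholdcombinertctc$ in turn calls each threshold learner with $\et=\er/(2|\T|)$, $\dt=\dr/|\T|$. Since $|\T|=O(1)$, $|R|=O(\tfrac1b\ln\tfrac1{b\dL})$, and each threshold learner runs in $\mathrm{poly}(1/\et,1/\dt)$ by Assumption \ref{assumption:pacthresholdBLUE}, the whole procedure runs in $O(\mathrm{poly}(\tfrac1b\ln\tfrac1{b\delta},|\T|,\tfrac1{\eL},\tfrac1{\dL}))$. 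The budget $\dL$ splits evenly: $\dL/2$ pays for the event that $R$ fails to form a $3\gamma$-net for weight $a$ of $\U$ (Lemma \ref{nips:lemma:randomrepgeneralization}, using $(\gamma,a,b)$-density), and $\dL/2$ pays for $\combiner$, which fails with probability at most $\dL/2$ by a union bound over the $|R|$ calls to $\thresholdcombinertctc$. Here I would explicitly invoke Lemma \ref{lemma:labelgenBLUE} to note that the at most one under-sampled threshold per representative is correctly pruned into $\T_r$, so each $\thresholdcombinertctc$ is fed a threshold set for which it has enough labeled data and Lemma \ref{lemma:thresholdcombinerBLUE}'s ``max-gap $2\alpha_\T$, additive error $\blueerr$'' guarantee applies as stated.

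Next I would derive the two conclusions. For the overestimate bound, conditioning on all $|R|$ calls succeeding, Lemma \ref{lemma:thresholdcombinerBLUE} gives each $h_r$ with $\Pr_{x,y\sim\U\times\U}[|h_r(x,y)-\D(x,y)|>\blueerr]\leq\eL/|R|$, so — exactly as in the proof of Theorem \ref{theorem:simplegeneralization} — $h_R=\maxmerge(\{h_r\})$ overestimates by more than $\blueerr$ only if some $h_r$ does, hence with probability at most $\eL$ by a union bound (this is precisely the hypothesis-level input that Corollary \ref{cor:maingeneralization} feeds to $\combiner$, with $\alpha\to\blueerr$). For nontriviality, I would condition additionally on $R$ forming a $3\gamma$-net for weight $a$: for a pair $x,y$ with, say, $x$ covered, pick $r\in R$ with $\D(r,x)\leq 3\gamma$; Lemma \ref{nips:lemma:generalbound} gives that $\Dr$ contracts $\D(x,y)$ by at most $2\cdot3\gamma=6\gamma$, and $h_r$ adds a further contraction of at most $\blueerr$ away from its (at most $\eL$-weight) bad region, so $h_R\geq h_r$ has total contraction at most $6\gamma+\blueerr$ on such pairs. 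Since $\D$ is $(p,\tfrac{6\gamma+\blueerr}{1-c})$-diffuse, at least $p-(1-a)^2$ weight of pairs are both covered and exceed $\tfrac{6\gamma+\blueerr}{1-c}$ in distance, and on each such pair an absolute contraction $\leq 6\gamma+\blueerr$ leaves at least a $c$-fraction of the original distance — the computation of Lemma \ref{lemma:fullnet} and Corollary \ref{nips:lemma:fullnetc}.1. Charging the $\eL$ error weight adversarially against these pairs yields $(p-(1-a)^2-\eL,c)$-nontriviality.

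The honest work is not this parameter bookkeeping — it is already done inside Lemmas \ref{lemma:labelgenBLUE} and \ref{lemma:thresholdcombinerBLUE}, so at the level of this theorem I expect the proof to be short. The one point that must be gotten right is that the nontriviality step charges \emph{both} the net-placement contraction ($6\gamma$, from applying Lemma \ref{nips:lemma:generalbound} at radius $3\gamma$) \emph{and} the learner's additive slack ($\blueerr$) against the \emph{same} pair simultaneously; this is exactly what pins the diffuse parameter at $\tfrac{6\gamma+\blueerr}{1-c}$, and double-counting or dropping either term would give the wrong threshold. A secondary thing to be careful about is plumbing: $\submetriclearnertctc$ must hand $\thresholdcombinertctc$ the possibly-pruned set $\T_r$ (not $\T$), since it is that pruned set for which Lemma \ref{lemma:thresholdcombinerBLUE}'s hypothesis guarantee — and hence the $\blueerr=4\alpha_\T$ error magnitude, which already absorbs the dropped-threshold gap of $2\alpha_\T$ plus the $2\alphH$ real-valued noise — is valid.
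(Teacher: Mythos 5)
Your proposal is correct and follows essentially the same route as the paper: the paper's own proof is a four-line deferral to the exact-arbiter argument of Theorem \ref{theorem:maingeneralization}, swapping in Lemma \ref{lemma:thresholdcombinerBLUE} for the per-representative error analysis, the factor-of-3 sample overhead from Lemma \ref{lemma:labelgenBLUE}, and $\blueerr$ in place of $\alpha_\T$ — exactly the three substitutions you identify. Your version simply spells out the bookkeeping (parameter plumbing, union bounds, and the $6\gamma+\blueerr$ contraction accounting) that the paper leaves implicit.
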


\begin{proof}
Claim: Algorithm \ref{alg:thresholdcombinerBLUE} parametrized with a set of thresholds and learners as specified in Assumption \ref{assumption:pacthresholdBLUE} and $b$ for a $(\gamma,a,b)-$dense metric is an efficient submetric learner as specified in the Theorem statement.
We prove the claim with respect to each aspect of the theorem separately for clarity.

\textbf{Running time.} The running time argument is equivalent to the argument for the \exactarbiter~version, with the additional observation that labeling a sufficient number of samples requires an additional factor of $3$ samples.
%Algorithm \ref{alg:submetriclearner} makes a single call to Algorithm \ref{alg:combiner} which runs in time $O(poly(|R|,|\T|,\frac{1}{\eR},\frac{1}{\dR}))$ (per Theorem \ref{theorem:simplegeneralization}). The parameters are set such that $|R|=\frac{1}{b}\ln(\frac{2}{b\dL})$, $\eR=\eL$ and $\dR=\dL/2$. Thus Algorithm \ref{alg:submetriclearner} runs in time $O(poly(\frac{1}{b}\ln(\frac{1}{b\dL}), |\T|, \frac{1}{\eR}, \frac{1}{\dR} ))$.

\textbf{Failure probability.}
As in the \exactarbiter~version, the failure probability $\dL$ is split evenly between the failure to produce a good set of representatives (per Lemma \ref{nips:lemma:randomrepgeneralization}) and the failure probability of the representative submetric learners.

\textbf{Overestimate error probability.}
The argument proceeds as in the \exactarbiter~version, relying on the error analysis of Lemma \ref{lemma:thresholdcombinerBLUE}.

\textbf{Nontriviality.} The argument proceeds as in the \exactarbiter~version, with $\alpha_\T$ scaled to account for the additional error in the intermediate hypotheses $\{h_r | r \in R\}$.
\end{proof}
%%%%%%%%%%%%%%%%%%%%%%%%%%%%%%%%%%%%%%%%%%%%%%%%%%%%%%%%%%%%%
Finally, we re-state Theorem \ref{thm:querycomplexity} in the \tctc~model to account for the improved query complexity in label generation.
\begin{theorem}\label{thm:querycomplexitytctc}
Sufficient labeled training data for Algorithm \ref{alg:thresholdcombinerBLUE} can be produced from \blueversion{$O(\frac{1}{\alphL})$} queries to $\mreal$% and $\hat{N}\log(\hat{N})$ queries to $\mtrip$,
and $O(\hat{N} \frac{1}{b}\ln(\frac{1}{b\delta}) \log(\frac{1}{\alphL}))$ queries to $\mquad$  where $\hat{N} = O(poly(\frac{1}{b}\ln(\frac{1}{b\delta}), \frac{1}{\alphH}, \frac{1}{\varepsilon}, \frac{1}{\delta})$  is the number of samples required to train a single threshold learner given a set of evenly spaced thresholds $\T$ such that $t_i - t_{i-1} > \threshspace =o(1)$.
%given a set of evenly spaced thresholds $\T$ such that $|t_i - t_{i-1}| = 4\alphH$.
\end{theorem}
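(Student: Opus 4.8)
The plan is to follow the proof of Theorem~\ref{thm:querycomplexity} in the \exactarbiter~model step for step, substituting the improved bounds of Theorems~\ref{theorem:labelqueriesBLUE} and~\ref{theorem:multiplerepquadBLUE} for their \exact~counterparts, and to read the value of $\hat N$ off the parameter scaling in Algorithm~\ref{alg:thresholdcombinerBLUE}.

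First I would pin down $\hat N$. As in the \exact~case, none of the error arguments (Lemma~\ref{lemma:thresholdcombinerBLUE}, Theorem~\ref{theorem:maingeneralizationBLUE}) assume any independence among the errors of the $|\T|$ threshold-function hypotheses attached to a single representative, so a single set of distance-labeled samples per representative is reused across all of that representative's threshold learners and we pay no factor of $|\T|$ in the sample size. By Lemma~\ref{lemma:labelgenBLUE}, feeding $3\hat m$ fresh draws from $\U$ into Algorithm~\ref{alg:labelgenBLUE} yields at least $\hat m$ correctly labeled examples of $\Utir$ for at least $|\T|-1$ of the thresholds per representative, and the single possibly-underpopulated threshold has already been folded into the $2\alpha_\T$ maximum-gap bound used in Lemma~\ref{lemma:thresholdcombinerBLUE}, so it costs nothing extra here. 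It then remains to extract $\hat m$ from Assumption~\ref{assumption:pacthresholdBLUE}: chasing parameters through $\submetriclearnertctc$, $\combiner$ and $\thresholdcombinertctc$ gives $\et = \tfrac{\er}{2|\T|}=\tfrac{\eR}{2|R||\T|}=\tfrac{\eL}{2|R||\T|}$ and $\dt = \Theta\!\big(\tfrac{\dL}{|R||\T|}\big)$ with $|R| = \tfrac1b\ln\tfrac{2}{b\dL}$; since the thresholds are evenly spaced with gap exceeding $\threshspace$, the evenly-spaced family has $|\T| = \Theta(\tfrac1{\alphH})$, so $\tfrac1{\et},\tfrac1{\dt} = O\!\big(\mathrm{poly}(\tfrac1b\ln\tfrac1{b\delta},\tfrac1{\alphH},\tfrac1\varepsilon,\tfrac1\delta)\big)$. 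Each threshold learner uses $O(\mathrm{poly}(\tfrac1{\et},\tfrac1{\dt}))$ samples, so $\hat N = 3\hat m = O\!\big(\mathrm{poly}(\tfrac1b\ln\tfrac1{b\delta},\tfrac1{\alphH},\tfrac1\varepsilon,\tfrac1\delta)\big)$, as stated.

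Next I would count the queries made by Algorithm~\ref{alg:labelgenBLUE} when run on a sample of size $\hat N$ against a representative set $R$ of size $O(\tfrac1b\ln\tfrac1{b\delta})$; the analysis is exactly that of Theorem~\ref{theorem:multiplerepquadBLUE}. The $(x,r)$ pairs are inserted by binary search into one merged working list ordered by distance from the respective representative, using $\mquadtctc$ as the comparator, but whenever a pair's distance is indistinguishable (difference $<\alphL$) from one already present it is diverted to $\leftovers$ instead of being inserted. Since $[0,1]$ contains at most $O(\tfrac1{\alphL})$ disjoint ranges of width $\alphL$, the working list has size $O(\tfrac1{\alphL})$ at all times, so each of the $\hat N|R|$ insertions costs $O(\log\tfrac1{\alphL})$ quad queries, for a total of $O\!\big(\hat N\,\tfrac1b\ln\tfrac1{b\delta}\,\log\tfrac1{\alphL}\big)$ queries to $\mquadtctc$. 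Real-valued queries are issued only to the $O(\tfrac1{\alphL})$ distinguished pairs appearing in the merged ordering; every $\leftovers$ entry inherits its distance label with no further query, and the per-threshold filtering loop only reads the already-computed $f_r(\cdot)$ values. Hence $O(\tfrac1{\alphL})$ queries to $\mrealtctc$ suffice in total, independent of both $\hat N$ and $|R|$ --- precisely the constant-in-$N$ real-valued query complexity the \tctc~model buys. No triplet queries arise, since the multiple-representative routine sorts $(x,r)$ pairs directly with quad queries.

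The step I expect to be the main obstacle is the bookkeeping establishing that the $O(\tfrac1{\alphL})$ real-valued bound does not secretly acquire a factor of $|R|$ or $|\T|$: one must verify that Algorithm~\ref{alg:labelgenBLUE} builds a \emph{single} de-duplicated merged ordering of all $(x,r)$ pairs, labels it once via $\mrealtctc$, and only re-reads those labels when bucketing elements above or below each threshold, so that neither the loop over representatives nor the loop over thresholds triggers fresh real-valued queries. The remainder --- propagating $\varepsilon$ and $\delta$ through the nested algorithms and invoking Lemmas~\ref{lemma:labelgenBLUE} and~\ref{lemma:thresholdcombinerBLUE} --- is routine and parallels the \exact~proof.
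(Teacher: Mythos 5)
Your proposal is correct and follows essentially the same route as the paper's proof: derive $\hat N$ by propagating $\et,\dt$ through the nested learners under Assumption \ref{assumption:pacthresholdBLUE} (with a single labeled sample shared across a representative's thresholds and the factor-$3$ blowup from Lemma \ref{lemma:labelgenBLUE}), then invoke the query-count analysis of Theorem \ref{theorem:multiplerepquadBLUE} to obtain $O(\frac{1}{\alphL})$ real-valued and $O(\hat N \frac1b\ln\frac1{b\delta}\log\frac{1}{\alphL})$ quad queries. Your extra check that the real-valued count does not acquire factors of $|R|$ or $|\T|$ is a correct reading of Algorithm \ref{alg:labelgenBLUE}'s single merged, once-labeled ordering.
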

\begin{proof}
%First, notice that no assumption is made in the proofs of error or failure probability which requires independence of error or failure probability of the threshold function hypotheses. Thus, labeling a single set of training data at granularity $\alphH$ is sufficient for all of the threshold function learners for a single representative.

Call the number of samples needed to train a threshold function $\hat{N}$. Recall from Assumption \ref{assumption:pacthreshold} that the threshold function learners run in time $O(poly(\frac{1}{\et},\frac{1}{\dt}))$. The choice of parameters in Algorithms \ref{alg:submetriclearner}, \ref{alg:combiner} and \ref{alg:thresholdcombiner} result in \[\et = \frac{\er}{2|\T|} = \frac{\eR}{2|R||\T|}=\frac{2\eL\alphH}{\frac{1}{b}\ln(\frac{2}{b\dL})}\]
and
\[\dt = \frac{\dr}{|\T|} = \frac{\dR}{|R||\T|} = \frac{4\dL\alphH}{\frac{1}{b}\ln(\frac{2}{b\dL})}\]
Thus the threshold function learners run in time $O(poly(\frac{1}{\eL}, \frac{1}{\dL}, \frac{1}{\alphH}, \frac{1}{b}\ln(\frac{1}{b\dL})))$, and can use no more than that many samples.

Recall from Lemma \ref{lemma:labelgenBLUE} that to label a set of $\hat{N}$ samples will require labeling $3\hat{N}$ samples via Algorithm \ref{alg:listmergeBLUE}. Recall from the proof of Theorem \ref{theorem:multiplerepquadBLUE} that such a set of labels can be produced from $O(\frac{1}{\alphL})$ queries to $\mrealtctc$ and $O(|R|\hat{N}\log(\frac{1}{\alphL}))$ queries to $\mquadtctc$.

Therefore to produce %$|R| = \frac{1}{b}\ln(\frac{2}{b\delta})$
labels for a universe of size $\hat{N}=O(poly(\frac{1}{\eL}, \frac{1}{\dL}, \frac{1}{\alphH}, \frac{1}{b}\ln(\frac{1}{b\dL})))$, for a set of representatives of size $O(\frac{1}{b}\ln(\frac{1}{b\delta}))$ we will require  \blueversion{$O(\frac{1}{\alphL})$} queries to $\mrealtctc$ and $O(\hat{N} \frac{1}{b}\ln(\frac{1}{b\delta}) \log(\frac{1}{\alphL}))$ queries to $\mquadtctc$.
%the analysis of Theorem \ref{theorem:singlerepgeneral} that $\hat{N} = O(poly())$
\end{proof}

%%%%%%%%%%%%%%%%%%%%%%%%%%%%%%%%%%%%%%%%%%%%%%%%%%%%%%%%%%

As previously noted, although the query complexity guarantees are much improved in the \tctc~model, the additive error of Algorithm \ref{alg:thresholdcombinerBLUE} is greater than the perceivable error threshold of $\alphH$. Of course, the metric designer may choose to post-process the resulting hypothesis be reducing every reported difference by $\blueerr$, resulting in a $0$-submetric with $\epsilon$ error probability. However, this will require a corresponding increase in the \diffusion~parameter to maintain the same non-triviality guarantee, as in Corollary \ref{cor:maingeneralizationpost}.

% \subsection{Allowing noisy ``too close too call'' determinations}\label{section:fuzzytctc}
% In the preceding analyses, we made the assumption that the \tctcarbiter~always responded ``\tctc'' when differences were less than $\alphH$. However, this strict cut-off is not entirely necessary. Suppose that between some range $[\alphL,\alphH]$ the \tctcarbiter~is \textit{allowed} to respond ``\tctc,'' but for differences less than $\alphL$, the \arbiter~always responds ``\tctc'' and differences greater than $\alphH$ she gives an accurate response.
% Notice that the query complexity depends on $\alphL$, the smaller of the two and the accuracy requirements depend on $\alphH$, the larger.
% Thus, as long as $\alphH$ is not too large to be useful from the accuracy perspective and $\alphL$ is still constant, the general shape of the results above holds. This is highly useful from a practical perspective, as we cannot expect the \arbiters~to be exactly consistent in their determinations of when a difference is \tctc. In particular, we might expect that \arbiters~could have slightly larger thresholds for \tctc~for larger distances, as the relative difference may still be small.

\subsection*{Summary}
We have shown that in cases where the \arbiter~is not required to answer queries with specificity below a certain level of granularity, $\alphH$, that we can still achieve a small constant additive error of $O(\alphH)$ with a constant number of real-valued distance queries $O(\frac{1}{\alphL})$ and $O(|R|N\log(\frac{1}{\alphL}))$ relative distance queries. This additional model provides a good initial step towards handling imprecise arbiter decisions. Extending to other error modalities is an important direction for future work.

\section{Discussion}\label{section:discussion}
\subsection{Summary of main results}
In summary, we have established a useful framework of nontrivial submetrics as approximations to the true metric for Individual Fairness. We have also shown that constructing submetrics based on threshold rounding on distances from representative elements has both good over and under-estimate error properties.

We have examined a limited, realistic query model of relative distance queries and real valued queries, and have shown how to construct submetrics on a fixed universe of individuals with a sublinear number of real-valued queries and $O(|R|N\log(N))$ relative distance queries. These procedures are useful both as a complete solution for offline settings, where the whole universe to be classified is known in advance, and as a way to generate training data for other fair classification schemes.

We have also shown how to learn hypotheses for a submetric which generalize well to unseen samples based on limited assumptions of efficient learnability of threshold functions. We demonstrated a technique to obtain good nontriviality guarantees in a specific setting for two dimensional Euclidean distances, and a more general framework for reasoning about the performance of a small set of random representatives with a reasonable number of queries to the \human~to generate labeled examples for training.

With the statement of our results completed, we now pose several points of discussion and critique of the work as areas for future work and improvement.

\subsection{Metric structure}
Many settings where fairness is critical involve high-dimensional or unstructured data, e.g. college applications which include years of grades, test scores, free text essays and recommendations and many other features. In Section \ref{section:choosingreps}, we showed one special case in which the metric structure could be exploited to create more accurate submetrics with fewer representatives. How likely is it that the true metric is low-dimensional with such ``nice'' structure?  We contend this case is more likely than it may initially appear.
Consider a \human~tasked with determining similarity for college applicants. She cannot possibly hold the entire applicant's feature description in her working memory at once and compare it line by line with the next applicant's. Instead, the \human~likely has an intuitive model of what it means to be a good student, perhaps someone who is \textit{talented} and has good \textit{work ethic}. As she compares students, her true comparison is based on these unobservable, complex mappings of the high dimensional application to talent and work ethic, which represent her judgment criteria for similarity. Even if the \human~cannot articulate her mapping from the high dimensional applicant information to her low dimensional representation, her judgments which reflect the low dimensional representation can still be used for triangulation. 
There is also an opportunity to build on prior work concerning human decision-making and categorization in other disciplines. Further cross-disciplinary inquiry is likely to be highly beneficial to producing more realistic models of how humans encode similarity judgments.

In this work, we have relied on learning methods with particular theoretical guarantees for generalization and nontriviality with the goal of stating results independent from assumptions on the form of the metric. This focus has resulted in conservative nontriviality guarantees and numbers of representatives. In practice, exploring alternative methods based on metric structure assumptions, whether or not they have theoretical guarantees on outcomes, and instead budgeting some labeled data to measure empirical error may be more practical.

\subsection{Resolving disagreements between \humans}
Thus far, we have assumed that our procedures either use a single, internally consistent \human~or that multiple \humans~agree on all queries. Multiple \humans~is likely preferable from the perspective of better capturing society's view, and can answer relative distance queries in parallel for Algorithm \ref{alg:tripletordering}. However, the assumption that all \humans~agree on every query is not likely to hold up in practice.

In the case of small disagreements between \humans, $\minmerge$ (defined analogously to $\maxmerge$) is a viable option. For example, if the ordering produced by two \humans~from a particular representative is the same, but there are inconsistencies in the real-valued queries (after any necessary scaling), $\minmerge$ will smooth out any small disagreements. Setting $\alphH$ and $\alphL$ to capture the varying levels of agreement can also have a similar effect in the \tctc~model.

When \humans~strongly disagree, we consider this to be a situation where discussion between the \humans, and perhaps additional external parties, is needed.
If we assume that our \humans~are all fair-minded individuals (i.e., without explicitly unpalatable biases), then our interpretation of significant disagreements should be careful to acknowledge that %the case likely requires additional discussion.
disagreements may stem from either
(1) differences in domain expertise,
(2) genuine lack of consensus in society's view of similarity for the task,
(3) human or system error or bias in display of or acquisition of data,
(4) other potentially serious failure modalities.

We view the potential for such disagreements as a feature, not a bug, and would be concerned if any system gathering judgments from \humans~\textit{never} encountered disagreement.
In the case of (1) we anticipate that there may be cases where a particular \human~is selected precisely because she represents a unique viewpoint or has domain experience with different groups of individuals. Ensembles of \humans~with expertise in different groups of individuals may find augmenting the procedures outlined in this work with more nuanced merge and discussion steps for reconciliation between \humans~to be beneficial.
In the case of lack of consensus (2), procedural fairness or other interventions may be more desirable than fairness derived from outcomes.
We discuss human or system bias in Section \ref{subsection:humandesign}.

A significant benefit of disagreement with a proposed submetric or individual query is that these disagreements represent specific, well-articulated cases rather than hypothetical or meta-disagreements. Our hope is that the discussion of specific cases will be more likely to result in agreement, either in the outcome or in the choice of an alternate procedure, than hypothetical cases or group-level statistics.

\subsection{Selection of \humans}
Conspicuously missing from our discussion of \humans~is guidance on \textit{whom} to select to be a \human. Our most basic requirement is that a \human~be a ``fair minded individual,'' but practically speaking, this gives little indication of selection criteria.
That being said, the selection of \humans~is likely to be critically important to the acceptance of any submetric produced. Our position is that selection of \humans~is a question which must be resolved at a philosophical, policy and social level.
We can foresee many questions related to arbiter selection. For example, should historically disadvantaged groups be given the choice of some number of the \humans? Is there some minimum qualification or ``bias test'' one must pass to be considered? %If \humans~are elected, how can we ensure that the election mechanism doesn't prevent representation of minority groups, or allow certain minority groups to impose their views on everyone else?
Resolving, or even attempting to fully articulate such questions is well outside the scope of this work, and we anticipate that it is a significant area for future cross-disciplinary inquiry.

However, we are optimistic that selecting a group of \humans~is possible, because the learning process permits changing the set of \humans~or the merge strategy over time without ``throwing away'' past effort. Consider learning a separate submetric for each \human~and merging these submetrics (either through $\maxmerge,$ $\minmerge$, or any other more nuanced merging strategy). Adding or removing a \human~is not wholly destructive to the existing submetric, although this strategy may preclude parallelizing relative distance queries. Loosely speaking, we may not be able to give good guidance up front about who should be a \human, but we can produce submetrics in a way that adding or removing an \arbiter~from the set is straightforward, allowing the metric to evolve as our understanding or opinion of who should be in the set of \humans~and how their judgments should be combined evolves. This replacement strategy may also help in cases where opinions shift gradually over time, and older \arbiter~submetrics may be swapped out for newer judgments to reflect shifting attitudes.

\subsection{Query process and interface design}\label{subsection:humandesign}
The design and process implementation for the interactions with \humans~is a significant area for future work. 
Problems of anchoring, particularly if many individuals are compared to the same representative, in addition to other issues with human judgment will be a significant consideration in system design \cite{tversky1974judgment}.
Alternative query types could be explored, or alternative presentation of queries could be made to improve the consistency of answers or try to counteract implicit biases.

Of particular concern with the design of the interface is how information is presented and whether the presentation will allow or encourage implicit biases to creep into judgments. It is likely impossible to remove all signal for sensitive attributes like race or gender from the presentation of information to the judge. Indeed, there are many cases where the inclusion of sensitive information is critical to evaluating fairness.
One possible way to detect and correct implicit bias would be to explicitly ask the \human~if they believe a sensitive feature \textit{should} impact a particular judgment. If they respond that it should \textit{not}, then the system could spot check by asking other \arbiters~to evaluate the same query with as much sensitive information stripped out or changed to an alternative as possible. If the evaluations of the other \humans~indicate that removing or changing the sensitive information resulted in different judgments, then additional care could be taken to reconcile the sensitive-attribute-blind responses. This is by no means a complete solution to removing implicit bias, but we think that exploring how information is presented and in particular comparing judgments based on differing information will be critical to gathering consistent and consistently fair judgments from the \humans.

Any judgments \humans~make based on the information presented to them will be just that: \textit{based on the information presented to them}. In many cases, we might want to allow the \human~to gather or request additional information if it is important to their judgment. For example, a \human~evaluating a college application might see that a student took a year away from school. She may determine that additional information is needed to make any meaningful comparisons, because a year away from school for medical or family reasons is very different than a year's suspension. Building in a way for \humans~to gather more information, and document the information they find for any later evaluations to consider, is likely to be expensive but may also be necessary to produce valid judgments.

\subsection{\Arbiter~agreement with \submetrics}\label{section:arbiteragreement}
%\begin{definition}[$\oracle-$\submetric]
% A submetric $\D'$ is an $\oracle-$\submetric~for a particular $\oracle$ if for all valid queries over $U$ for the $\oracle$, the $\oracle$ \textit{agrees} with $\D'$.
% \end{definition}
Our initial assumption might be that a set of \humans~will agree with a submetric learned based on their judgments, modulo error parameters.
In the case of real valued distance queries, the procedures outlined in this work will result in submetrics which underestimate real-valued distances with small error with high probability.
However, with respect to \textit{relative} distances, agreement is not guaranteed.
For example, all of the \humans~may agree that $a$ is more similar to $b$ than $c$, but the submetric may consider $a$ more similar to $c$ than $b$ (while still maintaining smaller real value distances) depending on the choice of representative elements.\footnote{ This is illustrated in Figure \ref{fig:repcomparison}, in which choosing the representative $r1$ preserves the relative distance comparison between points $2$, $4$ and $5$ ($2$ is closer to $4$ than $5$) but choosing $r2$ does not.} If all original distances are maintained to a sufficient degree, then relative distances will also be preserved. However, when trade-offs between distance preservation and \human~cost must be made, there is the potential to violate relative distance judgments made by the \human.

Although this does not technically violate the Individual Fairness definition of \cite{Dwork-FTA}, there may be many scenarios where treating \textit{dissimilar} individuals dis-similarly is just as important as treating similar individuals similarly. For example, in the case of setting taxation rates for individuals, one would likely consider treating the wealthiest and poorest individuals the same but treating the middle class differently to be unfair.
Augmenting the existing Individual Fairness definition with the requirement that dissimilar individuals be treated dis-similarly is not entirely straightforward. In particular, there is no binary classifier which will maintain relative distances between three equally distant individuals. However, given the uncomfortable idea that the \humans~may not agree with the relative distances produced, it seems worthwhile to consider whether, or in which cases, it is desirable or possible to strengthen the Individual Fairness definition of \cite{Dwork-FTA} to capture relative distance constraints.

\subsection{When \arbiters~agree but learning is hard}
An important scenario to consider is the case in which the \humans~agree on all or most queries, but our usual learning procedures fail to produce a submetric which generalizes to unseen samples. Again, we view this failure as a feature rather than a bug as it may indicate that either (1) there are alternative learning strategies we should try or (2) that the metric is complex enough that human oversight is always needed to make fair decisions.
Our model of the \arbiter~evaluating distances over an unobservable set of relevant attributes is very similar to the ``construct space'' of Friedler et al., \cite{friedler2016possibility}. Friedler et al. put forth a formalization of fairness in which the goal is to achieve fairness over an unobservable construct space which captures the relevant attributes (e.g., grit, talent, work ethic, etc) but our information constrained to the ``observed'' space. In some sense, we take the view that the \arbiter~is acting as a translator between these unobserved, difficult to articulate attributes and the observed features. As such, there isn't always a guarantee that the observed features available for classification will be sufficient to capture the nuance in the \arbiters' judgments. 
In some sense, replacing direct human judgments with automated decisions in sensitive settings should be viewed as a privilege and not a right. Sensitive settings in which \humans~agree, but our system cannot generalize in a way that they would agree with, should be subject to significant scrutiny and the replacement of human judgment with automated decision-making should not be taken as given.

\subsection{Comparison of submetrics}
%What types of errors are important? How do we compare nontriviality? What other things should we consider?
In this work, we have been somewhat unsophisticated in our comparisons of alternative submetrics beyond the basic worst-case additive error measure and nontriviality. %We now briefly critique the error and nontriviality notions presented in this work and propose some alternative ideas for future study.

In this work we primarily consider absolute additive error. However,  practical evaluation of error may be based entirely on how much an adversary could ``get away with'' using a submetric to derive a classifier.
Suppose we are concerned an adversary will discriminate against a large subset of individuals $V \subseteq U$ and derives utility proportional to the difference in distances between pairs of elements $(u,v)$ where $v \in V$, and $u \in U \backslash V$. A large number of small errors would allow the adversary to pull all or most members of $V$ further away from their $U \backslash V$ counterparts. Alternatively, a smaller number of very large errors, so long as they are not concentrated on pairs containing a small group of individuals in $V$, will be harder for the adversary to take advantage of, because there are many accurate distances making it difficult to ``move'' elements of $V$ relative to their close counterparts in $U \backslash V$. We expect that many of the error type questions we would pose for metric learning have a close analogy to the problem of selecting comparison sets in \cite{kim2018fairness}.

%[FIGURE]

From a more constructive perspective, we might also find it difficult to compare nontriviality parameters absent understanding of how the submetric will be used. For example, a submetric which preserves distances very well between unqualified individuals but does little to distinguish qualified individuals may not be terribly helpful in deciding between qualified individuals.
Developing a more nuanced model for evaluation of submetrics, both from the perspective of abuse and constructing distinguishing classifiers, will be critical to providing good guarantees on submetric use in practice.

\printbibliography

\end{document}